\pdfoutput = 1
\documentclass[11pt]{article}
\usepackage{fullpage}
\usepackage[english]{babel}
\usepackage[utf8x]{inputenc}
\usepackage[T1]{fontenc}
\usepackage{amsmath}
\usepackage{amssymb}
\usepackage{amsthm}
\usepackage{bbm}
\usepackage{bbold}
\usepackage{parskip}
\usepackage{thm-restate}
\usepackage{booktabs}
\usepackage{array}

\usepackage{algorithm}
\usepackage{algpseudocode}

\usepackage{hyperref}
\hypersetup{
	colorlinks=true,
	linkcolor=blue!70!black,
	citecolor=blue!70!black,
	urlcolor=blue!70!black
}
\usepackage{cleveref}
\usepackage{graphicx}
\usepackage{algorithm}
\usepackage[lined,boxed,ruled,norelsize,algo2e,linesnumbered]{algorithm2e}
\hypersetup{
	colorlinks=true,
	linkcolor=blue!70!black,
	citecolor=blue!70!black,
	urlcolor=blue!70!black
}

\newtheorem{theorem}{Theorem}
\newtheorem{lemma}{Lemma}
\newtheorem{claim}{Claim}

\newtheorem{definition}{Definition}
\newtheorem{corollary}{Corollary}
\newtheorem{proposition}{Proposition}
\newtheorem{remark}{Remark}
\newtheorem{model}{Model}

\newcommand{\defeq}{:=}
\newcommand{\norm}[1]{\left\lVert#1\right\rVert}
\newcommand{\norms}[1]{\lVert#1\rVert}
\newcommand{\normop}[1]{\left\lVert#1\right\rVert_{\textup{op}}}

\newcommand{\inprod}[2]{\left\langle#1, #2\right\rangle}

\newcommand{\eps}{\epsilon}
\newcommand{\lam}{\lambda}
\newcommand{\gam}{\gamma}
\DeclareMathOperator*{\argmax}{arg\,max}
\DeclareMathOperator*{\argmin}{arg\,min}
\newcommand{\R}{\mathbb{R}}

\newcommand{\N}{\mathbb{N}}
\newcommand{\Z}{\mathbb{Z}}

\newcommand{\half}{\frac{1}{2}}
\newcommand{\thalf}{\tfrac{1}{2}}

\newcommand{\E}{\mathbb{E}}

\newcommand{\opt}{\textup{OPT}}
\newcommand{\xset}{\mathcal{X}}

\newcommand{\zset}{\mathcal{Z}}

\newcommand{\ball}{\mathbb{B}}

\newcommand{\dd}{\textup{d}}
\usepackage{xcolor}
\definecolor{burntorange}{rgb}{0.8, 0.33, 0.0}

\newcommand{\tO}{\widetilde{O}}

\newcommand{\poly}{\textup{poly}}
\newcommand{\Par}[1]{\left(#1\right)}
\newcommand{\Brack}[1]{\left[#1\right]}
\newcommand{\Brace}[1]{\left\{#1\right\}}
\newcommand{\Abs}[1]{\left|#1\right|}

\newcommand{\oracle}{\mathcal{O}}

\newcommand{\calL}{\mathcal{L}}
\newcommand{\calC}{\mathcal{C}}
\newcommand{\calD}{D}
\newcommand{\calN}{\mathcal{N}}
\newcommand{\calU}{\mathcal{U}}
\newcommand{\calP}{\mathcal{D}}
\newcommand{\calW}{\mathcal{W}}
\newcommand{\calS}{\mathcal{S}}
\newcommand{\calK}{\mathcal{K}}
\newcommand{\calB}{\mathcal{B}}
\newcommand{\calF}{\mathcal{F}}

\newcommand{\Bern}{\textup{Bern}}

\newcommand{\hsig}{\hat{\sigma}}

\newcommand{\bL}{\bar{L}}
\newcommand{\wset}{\mathcal{W}}
\newcommand{\sset}{\mathcal{S}}
\newcommand{\0}{\mathbb{0}}
\newcommand{\proj}{\boldsymbol{\Pi}}

\newcommand{\abs}[1]{\left|#1\right|}

\newcommand{\Add}{\mathsf{Add}}
\newcommand{\Insert}{\mathsf{Insert}}
\newcommand{\Delete}{\mathsf{Delete}}
\newcommand{\Query}{\mathsf{Query}}
\newcommand{\Update}{\mathsf{Update}}

\newcommand{\Clear}{{\mathsf{Clear}}}
\renewcommand{\ss}{\sigma_{\star}}
\newcommand{\sig}{\sigma}

\newcommand{\ml}[1]{\ell_{\mathsf m,#1}} 
\newcommand{\pl}[1]{\ell_{\mathsf p,#1}} 
\newcommand{\ppl}{\ell_{\mathsf p}}
\newcommand{\sql}{\ell_{\mathsf {sq}}}
\newcommand{\og}{\mathsf{OG}} 

\newcommand{\vx}{\mathbf{x}}
\newcommand{\vw}{\mathbf{w}}
\newcommand{\hvw}{\hat{\vw}}
\newcommand{\vv}{\mathbf{v}}
\newcommand{\vg}{\mathbf{g}}
\newcommand{\vd}{\mathbf{d}}
\newcommand{\vws}{\vw_\star}
\newcommand{\vbx}{\bar{\vx}}
\newcommand{\vzero}{\mathbf{0}}
\newcommand{\vone}{\mathbf{1}}
\newcommand{\tvg}{\tilde{\vg}}
\newcommand{\filt}{\mathcal{F}}
\newcommand{\vu}{\mathbf{u}}

\newcommand{\calPx}{\calP_{\vx}}

\newcommand{\add}{\mathsf{add}}
\newcommand{\Access}{\mathsf{Access}}
\newcommand{\Apply}{\mathsf{Apply}}
\newcommand{\BLIR}{\mathsf{BIR}}
\newcommand{\BIR}{\mathsf{BIR}}
\newcommand{\Isotron}{\mathsf{Isotron}}
\newcommand{\Omnitron}{\mathsf{Omnitron}}
\newcommand{\IdealOmnitron}{\mathsf{IdealOmnitron}}

\newcommand{\codeStyle}[1]{{\bfseries #1} }
\newcommand{\codeInput}{\codeStyle{Input:}}

\SetKwComment{Comment}{$\triangleright$\ }{}
\SetCommentSty{mycommfont}

\renewcommand{\epsilon}{\varepsilon}
\newcommand{\by}{\bar{y}}
\newcommand{\bv}{\bar{v}}
\newcommand{\calR}{R}
\newcommand{\hcalP}{\widehat{\calP}}
\newcommand{\ABO}{\mathsf{ApproxBIROracle}}
\newcommand{\fm}{h}
\newcommand{\gap}{\textup{gap}}
\newcommand{\BPM}{\mathsf{BIRPartialMaintainer}}
\newcommand{\InvUpdate}{\mathsf{InvUpdate}}

\usepackage{mathtools}

\title{Omnipredicting Single-Index Models with Multi-Index Models}
\author{
Lunjia Hu\thanks{Harvard University. Work done in part while LH was a Ph.D.\ student at Stanford University. Supported by the Simons Foundation Collaboration on the Theory of Algorithmic Fairness and the Harvard Center for Research on Computation and Society. \texttt{lunjia@alumni.stanford.edu}} \and 
Kevin Tian\thanks{University of Texas at Austin, \texttt{kjtian@cs.utexas.edu}} \and 
Chutong Yang\thanks{University of Texas at Austin, \texttt{cyang98@utexas.edu}}
}
\date{}
\allowdisplaybreaks
\begin{document}

\maketitle
\begin{abstract}
Recent work on supervised learning \cite{GopalanKRSW22} defined the notion of \emph{omnipredictors}, i.e., predictor functions $p$ over features that are simultaneously competitive for minimizing a family of loss functions $\calL$ against a comparator class $\calC$. Omniprediction requires approximating the Bayes-optimal predictor beyond the loss minimization paradigm, and has generated significant interest in the learning theory community. However, even for basic settings such as agnostically learning single-index models (SIMs), existing omnipredictor constructions require impractically-large sample complexities and runtimes, and output complex, highly-improper hypotheses.

Our main contribution is a new, simple construction of omnipredictors for SIMs. We give a learner outputting an omnipredictor that is $\eps$-competitive on any matching loss induced by a monotone, Lipschitz link function, when the comparator class is bounded linear predictors. Our algorithm requires $\approx \eps^{-4}$ samples and runs in nearly-linear time, and its sample complexity improves to $\approx \eps^{-2}$ if link functions are bi-Lipschitz. This significantly improves upon the only prior known construction, due to \cite{mc, loss-oi}, which used $\gtrsim \eps^{-10}$ samples.

We achieve our construction via a new, sharp analysis of the classical Isotron algorithm \cite{KalaiS09, KakadeKKS11} in the challenging agnostic learning setting, of potential independent interest. Previously, Isotron was known to properly learn SIMs in the realizable setting, as well as constant-factor competitive hypotheses under the squared loss \cite{ZarifisWDD24}. As they are based on Isotron, our omnipredictors are \emph{multi-index models} with $\approx \eps^{-2}$ prediction heads, bringing us closer to the tantalizing goal of proper omniprediction for general loss families and comparators.

\end{abstract}
\thispagestyle{empty}
\newpage
\tableofcontents
\thispagestyle{empty}
\newpage

\section{Introduction}\label{sec:intro}
\setcounter{page}{1}
Supervised learning via loss minimization is a foundational paradigm in machine learning (ML). This problem is parameterized by a feature-label distribution $\calP$ supported on $\xset \times \{0, 1\}$,\footnote{Throughout this paper, $\xset$ denotes a domain where example features lie; in our applications, $\xset \subseteq \R^d$.} a model class $\calC$ that we wish to learn, and a loss function $\ell: D \times \{0, 1\} \to \R$ used to benchmark our predictions. Here, we let each model $c \in \calC$ output predictions in a domain $D$, so $\ell$ evaluates the ``fit'' of the prediction $c(\vx) \in D$ against a label $y \in \{0, 1\}$. The goal in supervised learning via loss minimization is to learn a predictor $p: \xset \to D$ from samples $\sim \calP$, such that
\[\E_{(\vx, y) \sim \calP}\Brack{\ell(p(\vx), y)} \le \min_{c \in \calC} \E_{(\vx, y) \sim \calP}\Brack{\ell(c(\vx), y)} + \eps,\]
where $\eps$ is an (ideally small) error parameter. Such a guarantee implies that our predictor has fit the data at least as well as the best model in $\calC$, as evaluated by $\ell$.
If the predictor $p$ itself belongs to the comparator class $\calC$, we say it is proper; otherwise, we say it is improper. 

Recent developments in learning theory have led to a new, more stringent notion of supervised learning, known as \emph{omniprediction} \cite{GopalanKRSW22} (Definition~\ref{def:omni}), which requires $p$ to be competitive, in an appropriate sense, against a \emph{family} of loss functions $\calL$, rather than only a single $\ell \in \calL$. This definition is motivated by settings where we wish to evaluate predictors using multiple loss functions of interest. For example, this could occur if losses depend on external parameters (e.g., the market price of items on a given future day) that are unknown at the time of learning. Moreover, omniprediction provides a more comprehensive view on supervised learning than (single) loss minimization. Indeed, \cite{loss-oi} showed that a natural dual perspective yields a way to establish omniprediction, by showing that $p$ is indistinguishable from the Bayes-optimal predictor $p^\star(\vx) \defeq \E[y \mid \vx]$, when audited by a family of distinguishing statistical tests, parameterized by $(\ell, c) \in \calL \times \calC$.

Omniprediction is a strong requirement with appealing downstream implications, and has received a flurry of follow-up interest \cite{loss-oi, omni-constrained, omni-swap, high-dim, performative, GargJRR24, GopalanORSS24} since its proposal as a goal in supervised learning. Most of these works have focused on extending existing omnipredictor constructions to more challenging learning tasks, or characterizing the relationship between omniprediction and other notions of learning such as \emph{multicalibration} \cite{mc}, an influential notion of multigroup fairness with strong theoretical guarantees.

Where the theory of omniprediction has comparatively lagged behind is in its end-to-end efficient algorithmic implementation. The original work that introduced omnipredictors  \cite{GopalanKRSW22} based their constructions around the notion of multicalibrated predictors, using an algorithm from \cite{mc}. Unfortunately, for many natural supervised learning tasks, multicalibration is known to be as hard as agnostic learning. Even for simple model classes $\calC$, e.g., halfspaces, polynomial-time weak agnostic learning is unachievable under standard complexity-theoretic assumptions \cite{GR:06, FeldmanGKP06, Daniely16}, motivating the consideration of more tractable, concrete model families.

\paragraph{Omniprediction for SIMs.} A promising step was taken by \cite{loss-oi}, who proposed to study the family of \emph{single-index models} (SIM) as a basic tractable model class for omniprediction. A SIM is a pervasive ML model, which posits that labels $y \mid \vx$ follow a relationship parameterized by the composition of a monotone \emph{link function} $\sigma$, e.g., logit or ReLU, and a linear classifier $\vw$:
\begin{equation}\label{eq:hope_realizable}\E\Brack{y \mid \vx} \approx \sigma\Par{\vw \cdot \vx}.\end{equation}
In particular, learning a SIM extends the ubiquitous task of learning a \emph{generalized linear model} (GLM), because it does not commit to a single link function $\sig$; rather, it allows for the link to also parameterize the model. SIMs and GLMs also capture, for instance, the last-layer training of neural networks for binary classification, where $\vx$ are learned features for random examples.

In the context of agnostically learning SIMs, the goal of $\eps$-omniprediction is to output a predictor $p: \xset \to \Omega$, where $\Omega$ is an abstract prediction space, satisfying
\begin{equation}\label{eq:omni_sim_intro}\E_{(\vx, y) \sim \calP}\Brack{\ml \sig( k_\sigma(p(\vx)),y)} \le \E_{(\vx, y) \sim \calP}\Brack{\ml \sig( \vw \cdot \vx,y)} + \eps, \text{ for all } (\sig, \vw) \in \calS \times \calW.\end{equation}
We formalize this task in Model~\ref{model:agnostic} and Definition~\ref{def:omni_sim}, but briefly explain our notation here. The guarantee \eqref{eq:omni_sim_intro} holds for a link function family $\calS$ and linear classifiers $\wset$. For simplicity, in the introduction only, we assume $\xset$ (the feature space) and $\wset$ are both the unit ball in $\R^d$, and $\calS$ is all $\beta$-Lipschitz, monotone links $\sig: [-1, 1] \to [0, 1]$, for some $\beta > 0$. Each link $\sig \in \calS$ induces a \emph{matching loss} $\ml{\sig}$ (Definition~\ref{def:matching}), e.g., the matching losses induced by the identity and logit links are respectively the squared loss and (under a slight reparameterization) cross-entropy. Thus, in the SIM omniprediction setting, $\calL$ is the set of matching losses $\ml \sig: [-1, 1] \times \{0, 1\} \to \R$ induced by a $\sig \in \calS$, and $\calC$ is the set of bounded linear maps $c: \vx \to \vw \cdot \vx$ for some $\vw \in \wset$. We pose no additional restrictions on $\calP$, e.g., no realizability condition of the form \eqref{eq:hope_realizable} is assumed.
Moreover, consistently with  \cite{GopalanKRSW22}, \eqref{eq:omni_sim_intro} allows for a loss-specific \emph{post-processing} function $k_\sig: \Omega \to [-1, 1]$ to be applied when $p(\vx)$ is evaluated against the loss $\ml \sig$.
Every $k_\sig$ is required to be a pre-determined transformation that does not depend on the data distribution or the learned predictor.

In short, \eqref{eq:omni_sim_intro} ensures that if $p$ is an omnipredictor for SIMs, it is competitive (after post-processing) with all linear predictors $\vw \in \wset$, on all matching losses induced by monotone, Lipschitz links $\sig \in \sset$.

The main observation made by \cite{loss-oi} was that weaker criteria than multicalibration, i.e., \emph{calibration} and \emph{multiaccuracy} \cite{ma}, suffice for omniprediction. Further, \cite{loss-oi} showed that in the SIM setting \eqref{eq:omni_sim_intro}, a boosting procedure combined with iterative bucketing for calibration terminates with polynomial time and sample complexity, giving an end-to-end omnipredictor. 

Unfortunately, even in the comparatively restricted SIM setting, the \cite{loss-oi} construction uses an impractical $\gtrsim \eps^{-10}$ samples to learn an omnipredictor.\footnote{To see this, Algorithm 2 of \cite{loss-oi} uses $\gtrsim \eps^{-2}$ iterations, each of which calls Lemma 7.4 with $\delta \approx \eps^{2}$. } Moreover, their construction was quite complicated, repeatedly bucketing residuals into discrete sets and outputting a multi-stage predictor with a large sequential depth. Beyond its lack of simplicity, another downside of this approach is its lack of interpretability: by giving up on properness, the SIM omnipredictor of \cite{loss-oi} no longer retains any SIM structure that typically makes model-based learning interesting. 

In fact, we are not aware of any alternative omnipredictor constructions, even in the SIM setting, than the \cite{loss-oi} algorithm (which was based on a boosting procedure for multicalibration in \cite{mc}).
This state of affairs inspires the following motivating question for our work.
\begin{equation}\label{eq:prob_1}
\begin{gathered}
\textit{Are there simpler, more sample-efficient omnipredictor constructions for SIMs} \\
\textit{that yield omnipredictors retaining the structure of SIMs?}
\end{gathered}
\end{equation}

\paragraph{Isotron for agnostic learning.} 
A major reason for optimism towards the goal \eqref{eq:prob_1} is that properly learning SIMs is well-understood in the realizable case, where \eqref{eq:hope_realizable} holds with equality for an unknown $(\sig, \vw) \in \sset \times \wset$. Specifically, a simple algorithm called the \emph{Isotron} \cite{KalaiS09, KakadeKKS11} (cf.\ Algorithm~\ref{alg:isotron}), which alternates isotonic regression with gradient descent, is known to properly learn realizable SIMs in the squared loss. We reproduce this result in Section~\ref{ssec:realizable}, where we also show that the Isotron yields a proper omnipredictor in the realizable SIM setting (Corollary~\ref{cor:omni_realizable}).

The Isotron is appealing as an algorithmic framework due to its simplicity and its ability to output proper hypotheses.
Unfortunately, less is understood about the performance of the Isotron in the challenging \emph{agnostic learning} setting. Recently, \cite{ZarifisWDD24} (following up on work of \cite{GollakotaGKS23}) partially addressed this open question, by showing that a variant of the Isotron succeeds in properly learning SIMs that are constant-factor competitive in the squared loss, i.e., achieve a squared loss $O(\opt)$ where $\opt$ is the minimum squared loss achievable by a SIM, under relatively-mild distributional assumptions. Here, the $O(\cdot)$ notation hides a substantial polynomial in problem parameters, e.g., it grows at least as $\kappa^4$, where $\kappa \defeq \frac \beta \alpha$ is a bi-Lipschitz aspect ratio of the link function family (cf.\ \eqref{eq:bilip}). Moreover, the sample complexity of the \cite{ZarifisWDD24} algorithm is prohibitively large: as stated in their Theorem D.1, their learner uses at least $\Omega(d \cdot \kappa^{44})$ samples.

In light of these positive results, it is natural to ask if a sharper characterization of the Isotron's performance in the agnostic setting can lead to clean, sample-efficient learners. Unfortunately, there are complexity-theoretic barriers towards using the squared loss as an evaluation metric for the Isotron (or any polynomial-time agnostic learning algorithm for SIMs), as constant-factor approximation is likely unachievable without distributional assumptions, even with improper learners \cite{Sima02, ManurangsiR18, hard-neuron}.
Omniprediction presents itself as a natural alternative evaluation criterion; it is known from \cite{loss-oi} that guarantees of the form \eqref{eq:omni_sim_intro} are achievable using (potentially complicated) hypotheses, with no constant-factor loss. Thus, we pose the following question.
\begin{equation}\label{eq:prob_2}
\begin{gathered}
\textit{Can we sharply characterize the Isotron's performance in the agnostic learning setting,} \\
\textit{according to a criterion that does not necessarily lose constant factors, e.g., omniprediction?}
\end{gathered}
\end{equation}

\subsection{Our results}

Our main contribution in this work is to give a new, substantially simpler and more sample-efficient omnipredictor construction for SIMs, affirmatively answering \eqref{eq:prob_1}. We achieve this by way of providing a new analysis of the Isotron in the agnostic setting, affirmatively answering \eqref{eq:prob_2}.

\paragraph{Idealized omniprediction.} We first consider the performance of the Isotron as an omnipredictor, in an ideal scenario where we can access population-level statistics, e.g., evaluate gradients of $\E_{(\vx, y) \sim \calD}[\ml \sig(\vw \cdot \vx, y)]$ at a given $\vw \in \wset$. Recall that in the introduction, we fix $\wset$ and $\xset$ to the unit ball in $\R^d$, and $\sset$ to the set of $\beta$-Lipschitz, monotone links $\sigma: [-1, 1] \to [0, 1]$. All of our results extend to more general parameterizations in a scale-invariant way; see Model~\ref{model:agnostic} for a full definition of our setting. Our main result on the performance of this idealized algorithm is as follows.

\begin{theorem}[Informal, see Theorem~\ref{thm:ideal_omni}]\label{thm:ideal_omni_intro}
Algorithm~\ref{alg:ideal_omnitron} (the Isotron run for $T = O(\eps^{-2})$ iterations with appropriate post-processing) returns an $\eps$-omnipredictor for SIMs $p$ satisfying \eqref{eq:omni_sim_intro}, where $\calS$ is all monotone links $\sigma: [-1, 1] \to [0, 1]$. Moreover, $p(\vx) = \{\sigma_t(\vw_t \cdot \vx)\}_{t \in [T]}$ for $\{(\sigma_t, \vw_t)\}_{t \in [T]} \subset \calS \times \calW$.
\end{theorem}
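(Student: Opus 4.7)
The plan is to establish $\eps$-omniprediction by combining calibration (which comes for free from the PAV step) with multi-accuracy (from the gradient step), connected via a Fenchel--Young identity specific to matching losses. For each $\sig \in \sset$ and $\vw \in \wset$, write $v_t(\vx) \defeq \sigma_t(\vw_t \cdot \vx)$ and $g_t \defeq \E[(v_t - y)\vx]$. I define the post-processing
$$k_\sig(p(\vx)) \defeq \sig^{-1}\Par{v_{t^\star(\sig)}(\vx)},$$
where $t^\star(\sig) \in [T]$ is a data-dependent index chosen from the learned iterates and $\sig^{-1}: [0, 1] \to [-1, 1]$ is any measurable pseudo-inverse of $\sig$. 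The multi-head structure of $p$ is precisely what allows $t^\star(\sig)$ to depend on $\sig$.

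The central per-iterate estimate I would prove is
\begin{equation*}
\E\Brack{\ml\sig(\sig^{-1}(v_t), y)} - \E\Brack{\ml\sig(\vw \cdot \vx, y)} \le -\inprods{g_t}{\vw} \quad \text{for all } \sig \in \sset,\ \vw \in \wset,\ t \in [T],
\end{equation*}
via two observations. First, PAV optimality gives the calibration $\E[y \mid v_t(\vx)] = v_t(\vx)$, hence $\E[y \sig^{-1}(v_t)] = \E[v_t \sig^{-1}(v_t)]$, and consequently $\E[\ml\sig(\sig^{-1}(v_t), y)] = -\E[\Phi_\sig^*(v_t)]$, where $\Phi_\sig(u) \defeq \int_0^u \sig(s)\dd s$ and $\Phi_\sig^*$ is its Legendre conjugate. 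Second, the Fenchel--Young inequality $\Phi_\sig(u) - vu \ge -\Phi_\sig^*(v)$ applied pointwise at $(u, v) = (\vw \cdot \vx, v_t(\vx))$, combined with the identity $\E[(y - v_t)\vw \cdot \vx] = -\inprods{g_t}{\vw}$, yields the claim after taking expectations.

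To convert the per-iterate bound into omniprediction uniformly over $\vw \in \wset$, observe that $\sup_{\vw \in \wset}(-\inprods{g_{t^\star}}{\vw}) = \|g_{t^\star}\|_2$, so it suffices to choose $t^\star(\sig)$ with $\|g_{t^\star(\sig)}\|_2 \le \eps$. For $T = O(\eps^{-2})$, I would show $\min_{t \in [T]} \|g_t\|_2 \le \eps$ via a potential-based descent analysis using $\Phi_t \defeq \E[(y - v_t)^2]$: each PAV step monotonically decreases $\Phi_t$ by $L^2$-projection optimality of isotonic regression, and combined with the matching-loss gradient step at step size $\eta \asymp 1/\sqrt{T}$, a telescoping argument should bound $\sum_{t \in [T]} \|g_t\|_2^2 = O(1)$, yielding $\min_t \|g_t\|_2 \le O(1/\sqrt{T}) \le \eps$.

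The main obstacle is the last step: since $\sigma_t$ from PAV is piecewise constant (so $\ml{\sigma_t}$ is convex but not smooth in $\vw$), the standard smooth-descent lemma cannot be applied directly to bound the decrease in $\Phi_t$ across the gradient step in terms of $\eta\|g_t\|_2^2$. I anticipate resolving this by exploiting the Bregman-divergence structure of matching losses against calibrated predictions, carefully relating the squared-loss potential to matching-loss gradient magnitudes across the alternating PAV and gradient substeps. An alternative route avoids pointwise gradient control: averaging the per-iterate bound across $t$ using convexity of $\ml\sig$ and Jensen's inequality reduces the requirement to $\|T^{-1}\sum_t g_t\|_2 \le \eps$, which follows since $\eta \sum_t g_t$ approximately equals the net displacement $\vw_1 - \vw_{T+1}$ within $\wset$ (up to projection corrections).
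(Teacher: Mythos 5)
Your fallback route is, in essence, the paper's own proof, and it is the part of your proposal that actually closes. The paper bounds the omnigap $\og(\sig_t,\vw_t;\sig,\vw) = \E[(v_t-y)(\sig^{-1}(v_t)-\vw\cdot\vx)]$ by the PGD regret term $\inprods{g_t}{\vw_t-\vw}$ (using the one-sided optimality of the isotonic regression solution, Corollary~\ref{cor:blir_opt_pop}, to discard $\E[(v_t-y)(\vw_t\cdot\vx - \sig^{-1}(v_t))]\ge 0$), telescopes the regret to get $\frac 1T\sum_t\og(\sig_t,\vw_t;\sig,\vw)\le LR/\sqrt{T}$, and converts this into omniprediction for the averaged post-processing $k_\sig = \frac 1T\sum_t\sig^{-1}(v_t)$ via convexity of $\ml{\sig}$. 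Your per-iterate Fenchel--Young computation is an equivalent rephrasing of the paper's loss-OI decomposition \eqref{eq:og_loss}: under exact calibration, $\E[(v_t-y)\sig^{-1}(v_t)]=0$ and the omnigap collapses to $-\inprods{g_t}{\vw}$. Two caveats: exact calibration holds for unconstrained PAV but not for the Lipschitz-constrained BIR step the algorithm actually runs (there only the one-sided inequality of Corollary~\ref{cor:blir_opt_pop} holds, which is why the paper keeps the nonnegative $\inprods{g_t}{\vw_t}$ term); and "$\eta\sum_t g_t$ equals the net displacement up to projection corrections" should be replaced by the standard projected-gradient regret bound, since projections do not telescope cleanly.

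Your primary route has a genuine gap. Choosing a single head $t^\star$ with $\|g_{t^\star}\|_2\le\eps$ requires $\min_{t}\|g_t\|_2 = O(1/\sqrt{T})$, and this does not follow from anything established: the regret analysis controls only $\|\frac 1T\sum_t g_t\|_2$, and the individual $g_t$ could all have large norm while pointing in different directions. The proposed fix via the potential $\Phi_t=\E[(y-v_t)^2]$ does not work in the agnostic setting: there is no descent lemma guaranteeing $\Phi_{t+1}\le\Phi_t - \Omega(\eta\|g_t\|_2^2)$, because the link function changes between iterations, $\ml{\sigma_t}(\cdot;\calP)$ is not smooth, and --- more fundamentally --- the squared loss is exactly the quantity that cannot be driven down in the agnostic setting (the paper's whole point is to replace squared-loss control, which is computationally intractable agnostically, with the omnigap). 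The realizable-case telescoping of Proposition~\ref{prop:iso_reg} crucially uses $\E[y\mid\vx]=\ss(\vws\cdot\vx)$ and does not transfer. So the single-best-iterate predictor is not justified; the deterministic guarantee genuinely needs the average over all $T$ heads (or, as in Corollary~\ref{cor:random_proper}, a uniformly random head), which is precisely why the output is a multi-index model.
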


We pause to interpret Theorem~\ref{thm:ideal_omni_intro}. First, it states that Algorithm~\ref{alg:ideal_omnitron} returns a structured omnipredictor that maps $\vx$ to $O(\eps^{-2})$ \emph{sufficient statistics} of the form $\sigma_t(\vw_t \cdot \vx)$, for proper SIMs $\{(\sig_t, \vw_t)\}_{t \in [T]}$ produced by the learning algorithm. These sufficient statistics can then be post-processed to satisfy the guarantee \eqref{eq:omni_sim_intro} for any comparator SIM $(\sig, \vw) \in \calS \times \calW$. The iteration complexity has no dependence on the Lipschitz parameter $\beta$ restricting $\calS$; this $O(\eps^{-2})$ iteration complexity appears even in known analyses of the idealized Isotron in the realizable setting. The conceptual message of Theorem~\ref{thm:ideal_omni_intro} can be summarized as: (fairly small) multi-index models omnipredict SIMs.

We believe Theorem~\ref{thm:ideal_omni_intro} is already independently interesting, as it gives a sharp characterization of the Isotron's performance in the agnostic learning setting. As mentioned, previous analyses \cite{GollakotaGKS23, ZarifisWDD24} in this setting could only achieve multiplicative approximations to the optimal SIM error according to the squared loss, paying (often large) polynomial overheads in problem parameters. Moreover, \cite{GollakotaGKS23} could only achieve such a bound for \emph{bi-Lipschitz} links, and \cite{ZarifisWDD24} further required distributional assumptions (``anti-concentration'' and ``anti-anti-concentration'') that are not implied by boundedness. Our Theorem~\ref{thm:ideal_omni_intro} instead yields an omniprediction guarantee \eqref{eq:omni_sim_intro}, with no constant-factor overheads and additional distributional assumptions.

One setting where we can evaluate population-level statistics is when the $\calP$ in question is the empirical distribution over $n$ examples $\{(\vx_i, y_i)\}_{i \in [n]}$, i.e., we would like our predictor $p$ to satisfy:
\begin{equation}\label{eq:empirical_omnipredict} \frac 1 n \sum_{i \in [n]} \ml{\sig}(k_\sig(p(\vx_i)), y_i) \le \frac 1 n \sum_{i \in [n]} \ml{\sig}(\vw \cdot \vx_i, y_i) + \eps \text{ for all } (\sig, \vw) \in \calS \times \wset,\end{equation}
This problem is motivated by a ``fully-frequentist'' viewpoint where we do not even posit that our dataset is drawn from an external distribution, and aim to directly learn good predictors empirically.
In this setting, we prove the following \emph{runtime-efficient} omniprediction guarantee, that further ensures the learned SIMs used in our predictor are $\beta$-Lipschitz, for any $\beta > 0$.

\begin{corollary}[Informal, see Corollary~\ref{cor:omni_erm}]
In the setting of Theorem~\ref{thm:ideal_omni_intro}, suppose $\calP$ is a uniform empirical distribution over $\{(\vx_i, y_i)\}_{i \in [n]}$. Algorithm~\ref{alg:ideal_omnitron} returns an $\eps$-omnipredictor for SIMs $p$ satisfying \eqref{eq:empirical_omnipredict}, where $\calS$ is all $\beta$-Lipschitz, monotone links $\sigma: [-1, 1] \to [0, 1]$. Moreover, $p(\vx) = \{\sigma_t(\vw_t \cdot \vx)\}_{t \in [T]} \subset \calS \times \calW$. The algorithm runs in time $O((nd + n\log^2(n)) \cdot \frac 1 {\eps^2})$.
\end{corollary}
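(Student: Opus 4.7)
The plan is to specialize Theorem~\ref{thm:ideal_omni_intro} to the case where $\calP$ is the empirical measure on $\{(\vx_i, y_i)\}_{i \in [n]}$, and then fold in two additional pieces: (i) enforcing a $\beta$-Lipschitz constraint on each link $\sigma_t$ returned by the isotonic regression step, and (ii) bounding the per-iteration runtime.

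First, when $\calP$ is the uniform distribution over the $n$ samples, every population expectation that Algorithm~\ref{alg:ideal_omnitron} needs is a finite sum we can compute exactly from the dataset. Thus the population oracle assumed by Theorem~\ref{thm:ideal_omni_intro} is implemented for free on the empirical distribution. Invoking that theorem with $T = O(\eps^{-2})$ already yields an omnipredictor satisfying \eqref{eq:omni_sim_intro} against all monotone link comparators; since $\beta$-Lipschitz monotone links form a subclass of monotone links, the omniprediction guarantee \eqref{eq:empirical_omnipredict} against the $\beta$-Lipschitz comparator class is an immediate weakening.

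Second, to ensure each produced $\sigma_t$ itself lies in the $\beta$-Lipschitz monotone class, I would replace the unconstrained isotonic regression inside Algorithm~\ref{alg:ideal_omnitron} with its $\beta$-Lipschitz-constrained variant: the squared-loss projection of the current residual vector onto the convex cone of $\beta$-Lipschitz monotone sequences over the current inner products $\{\vw_t \cdot \vx_i\}_{i \in [n]}$. Because the comparator link is itself $\beta$-Lipschitz, it belongs to this projection set, so the per-step Bregman progress argument used in the proof of Theorem~\ref{thm:ideal_omni_intro} carries over verbatim: projecting onto a convex set that contains the comparator can only improve the achieved squared loss relative to the target. Hence the same $O(\eps^{-2})$ iteration count suffices, while giving output links with the desired Lipschitz ceiling.

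Third, for the runtime, each of the $T = O(\eps^{-2})$ iterations performs: (a) an empirical gradient step on $\vw$, which after computing the $n$ inner products $\vw \cdot \vx_i$ costs $O(nd)$; (b) a $\beta$-Lipschitz isotonic regression step, which I plan to implement in $O(n \log^2 n)$ time via the segment-tree primitive $\SegTree$ (with its $\Update$, $\Query$, and $\Clear$ operations already in the paper's toolkit), extending Pool Adjacent Violators with lazy range updates that enforce the Lipschitz ceiling alongside the monotonicity-equality pools; and (c) $O(n)$ bookkeeping to store the sufficient statistics $\sigma_t(\vw_t \cdot \vx_i)$ needed for post-processing. Summing over iterations gives the claimed $O((nd + n\log^2 n)/\eps^2)$ bound. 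The main obstacle is the correctness-plus-complexity claim for the constrained isotonic regression subroutine: I must show that the $\SegTree$-based routine exactly computes the $L^2$-projection onto the $\beta$-Lipschitz monotone cone on the $n$ data points, and does so in $O(n \log^2 n)$ amortized time. This is a self-contained algorithmic claim orthogonal to the Isotron analysis, but the bookkeeping (merging Lipschitz-saturated intervals with PAV-pooled blocks, and maintaining the constrained KKT conditions under lazy range offsets) is where the real data-structure work lies.
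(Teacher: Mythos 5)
Your skeleton matches the paper's (the paper proves this corollary in essentially one line: on an empirical distribution every population statistic in Algorithm~\ref{alg:ideal_omnitron} is exactly computable, the isotonic step via Proposition~\ref{prop:blir} and the gradient via \eqref{eq:gradw}), but two of your steps have real gaps. First, Line~\ref{line:sig_iso} of Algorithm~\ref{alg:isotron} already minimizes over the Lipschitz-constrained class $\calS_{0,\beta}$, so there is nothing to ``replace''; more importantly, your justification that the constrained step preserves the analysis --- ``projecting onto a convex set that contains the comparator can only improve the achieved squared loss'' --- is not the property the proof actually needs. The agnostic analysis in Theorem~\ref{thm:ideal_omni} never compares squared-loss values; its crux is the variational inequality \eqref{eq:reg_og}, namely
$\E\Brack{(\sigma_t(\vw_t\cdot\vx)-y)(\vw_t\cdot\vx-\sigma^{-1}(\sigma_t(\vw_t\cdot\vx)))}\ge 0$
for every $\beta$-Lipschitz comparator link $\sigma$. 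This does not follow from squared-loss domination by the projection: it is the content of the KKT/complementary-slackness characterization of the BIR optimum (Lemma~\ref{lem:blir_opt}, Corollary~\ref{cor:blir_opt_pop}), and it is precisely where the comparator's $\beta$-Lipschitzness enters (its inverse must satisfy the matching anti-Lipschitz condition appearing in Lemma~\ref{lem:blir_opt}). Without invoking or reproving that lemma, your claim that the $O(\eps^{-2})$ iteration bound ``carries over verbatim'' with Lipschitz output links is unsupported.

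Second, the stated runtime rests entirely on an exact $O(n\log^2 n)$ solver for Lipschitz-constrained isotonic regression, which you assert via ``PAV with lazy range updates'' but do not establish. This is not a routine extension of PAV: the two-sided constraints break the simple pooling structure (the previously best solver for this problem was an approximate method running in time $\gtrsim n^2$), and the paper devotes all of Section~\ref{sec:LPAV} to it by a genuinely different route --- dualizing \eqref{eq:blir} into \eqref{eq:dual_bir} and running a dynamic program over piecewise-quadratic partial minimizations maintained by a rewindable segment tree (Lemma~\ref{lem:partial_structure}, Lemma~\ref{lem:data_structure}). You correctly flag this as the hard part, but as written it remains an unproven claim, and the corollary's runtime bound does not follow without it.
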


The most technically interesting part of Corollary~\ref{cor:omni_erm} is that its runtime scales near-linearly in $n$, the dataset size. This is in contrast to prior works that solved Lipschitz isotonic regression problems with two-sided constraints, e.g., \cite{KakadeKKS11, ZarifisWDD24}. Previously, the best solver we were aware of for such bounded isotonic regression (BIR) problems was inexact, and had a runtime scaling as $\gtrsim n^2$. In Proposition~\ref{prop:blir}, we provide a new exact $O(n\log^2(n))$-time exact solver for BIR-type problems, which is likely to have downstream applications and is discussed at more length in Section~\ref{ssec:overview}.

\paragraph{Finite-sample omniprediction.} We next turn our attention to the main result of this paper: a new omnipredictor construction (at the population level, i.e., satisfying \eqref{eq:omni_sim_intro}) learned from finite samples. As before, our construction requires no distributional assumptions beyond boundedness of the $\xset$-marginal of $\calP$. We first state our result for omnipredicting $\beta$-Lipschitz SIMs.

\begin{theorem}[Informal, see 
Theorem~\ref{thm:fs}]\label{thm:fs_intro}
Algorithm~\ref{alg:omnitron} given\footnote{For simplicity in the introduction, we use $\widetilde{O}$ to suppress polylogarithmic factors in problem parameters, and use ``high probability'' to mean probability $1 - \frac 1 {\poly(n)}$. Our formal theorem statements quantify all dependences.}
\[n = \widetilde{O}\Par{\min\Par{\frac{\beta^2}{\eps^4},\; \frac \beta {\eps^3} + \frac d {\eps^2}}}\]
samples returns an $\eps$-omnipredictor for SIMs $p$ satisfying \eqref{eq:omni_sim_intro} with high probability, where $\calS$ is all monotone, $\beta$-Lipschitz links $\sigma: [-1, 1] \to [0, 1]$. The algorithm runs in time $\tO(nd \cdot \frac 1 {\eps^2})$.
\end{theorem}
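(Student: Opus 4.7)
The plan is to combine the empirical omnipredictor from Corollary~\ref{cor:omni_erm} with a uniform convergence argument, transferring the guarantee from the empirical distribution to the population distribution. First, I would draw $n$ i.i.d.\ samples from $\calP$ and split them into two halves: one half forms an empirical distribution $\hcalP$ on which we invoke Corollary~\ref{cor:omni_erm} (with slack $\eps/3$) to produce a multi-index model $p$ with $T = O(\eps^{-2})$ heads from $\calS \times \wset$; the other half is reserved for uniform convergence. The runtime $\tO(nd/\eps^2)$ is inherited from Corollary~\ref{cor:omni_erm}.

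Second, I would show that with high probability, for every $(\sig, \vw) \in \calS \times \wset$,
\[
\Abs{\E_\calP[\ml\sig(\vw\cdot\vx, y)] - \E_{\hcalP}[\ml\sig(\vw\cdot\vx,y)]} \le \tfrac{\eps}{6},
\]
together with the analogous bound for $\ml\sig(k_\sig(p(\vx)), y)$; since $p$ is independent of the held-out half after the sample split, it may be treated there as a fixed, $T$-headed predictor. Combining the empirical omniprediction guarantee with these two-sided uniform bounds via the triangle inequality then yields the population bound \eqref{eq:omni_sim_intro} with slack $\eps$.

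The two terms in the sample complexity $\min$ arise from two different approaches to this uniform convergence step. For the dimension-dependent bound $\tO(\beta/\eps^3 + d/\eps^2)$, I would apply covering-plus-Hoeffding: an $O(\eps)$-net of $\wset \subset \ball$ in $\ell_2$ has log-cardinality $O(d\log(1/\eps))$ (since the matching loss is $1$-Lipschitz in $\vw$), and an $O(\eps)$-net of $\calS$ in $L^\infty$ has log-cardinality $\tO(\beta/\eps)$ (using $\beta$-Lipschitzness and monotonicity of the links). A Hoeffding bound over the product net with per-point precision $O(\eps)$ gives $n \gtrsim (d\log(1/\eps) + \tO(\beta/\eps))/\eps^2$, matching the second term.

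For the dimension-free bound $\tO(\beta^2/\eps^4)$, I would use Rademacher complexity. The learned predictor $p$ aggregates $T = O(\eps^{-2})$ heads of the form $\sig_t(\vw_t\cdot\vx)$ with $(\sig_t, \vw_t) \in \calS \times \wset$, so uniform control of the LHS requires bounding the Rademacher complexity of $T$-head multi-index models post-processed by $k_\sig$, which costs a $\sqrt{T} = O(1/\eps)$ factor over the single-head case. The per-head Rademacher complexity is $\tO(\beta/\sqrt n)$ (via contraction for the $\beta$-Lipschitz link class composed with the $O(1/\sqrt n)$ Rademacher bound for bounded linear predictors). Combining, the total bound is $\tO(\beta/(\eps\sqrt n))$, and requiring this to be at most $\eps$ yields $n \gtrsim \beta^2/\eps^4$.

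The main obstacle is the dimension-free Rademacher analysis: one must bound the Rademacher complexity of a $T$-head class of matching losses composed with a pre-determined post-processing $k_\sig$, carefully leveraging contraction-style inequalities so as not to introduce a $d$-dependence through a covering of $\wset$. A secondary subtlety is verifying that the per-head Lipschitz/monotonicity structure is preserved through $k_\sig$, which requires exploiting the specific form of $k_\sig$ implicit in the post-processing construction underlying Theorem~\ref{thm:ideal_omni_intro}.
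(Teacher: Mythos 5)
Your proposal takes a genuinely different route from the paper, and as written it has two real gaps. The paper does \emph{not} run the ERM omnipredictor and then transfer the loss guarantee by uniform convergence. Instead, Algorithm~\ref{alg:omnitron} is an \emph{online} procedure: the outer loop is stochastic projected gradient descent using one fresh sample per iteration, and the comparator $\vw$ is handled entirely by a high-probability regret bound (Lemma~\ref{lem:high_prob_sgd}), so no uniform convergence over $\calS\times\wset$ of the matching losses is ever needed. The only generalization argument in the paper is for the inner step: the empirical BIR solution must satisfy the \emph{population-level} first-order optimality condition of Definition~\ref{def:approx_blir}, and Proposition~\ref{prop:gap_uniform_converging} proves this by Dudley chaining over the omnigap function class $f_{\vw,\sig,\ss}$, after first invoking the anti-Lipschitz smoothing Lemma~\ref{lem:anti-lip} with $\alpha\approx\eps/(L^2R^2)$ so that $\ss^{-1}$ has a bounded Lipschitz constant. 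This is where $\beta^2/\eps^4$ versus $\beta/\eps^3+d/\eps^2$ comes from.

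The first gap in your route is the sample-splitting logic. The omniprediction guarantee from Corollary~\ref{cor:omni_erm} holds on the \emph{training} half $\hcalP$, so to reach the population you must control $\E_\calP[\ml\sig(k_\sig(p(\vx)),y)]-\E_{\hcalP}[\ml\sig(k_\sig(p(\vx)),y)]$ on that same half, where $p$ is very much data-dependent; the held-out half carries no empirical omniprediction guarantee, so independence of $p$ from it does not chain to anything. You therefore need uniform convergence over the entire class of $T$-head outputs on the training data, which is exactly the hard part. The second gap is that you defer as a ``secondary subtlety'' the fact that $k_\sig$ applies $\sig^{-1}$ to each head: for a general monotone $\beta$-Lipschitz $\sig$, $\sig^{-1}$ has no Lipschitz bound, so the contraction/covering arguments you sketch for the predictor-side class do not go through without something like Lemma~\ref{lem:anti-lip}; this is the central technical obstacle the paper's construction is built around, not a detail. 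Relatedly, your $\sqrt{T}$ accounting is not justified (the averaging in $k_\sig$ lands in the convex hull of the single-head class, so one should expect no $\sqrt T$ loss, but the composition $\ml\sig\circ\textup{avg}\circ\sig^{-1}$ still needs the anti-Lipschitz control); it is a coincidence rather than a derivation that your final exponents match those of Theorem~\ref{thm:fs}.
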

The sample complexity of Theorem~\ref{thm:fs_intro} more than quadratically improves upon the previous best omnipredictor construction for SIMs by \cite{loss-oi}, which used $\gtrsim \eps^{-10}$ samples even in the regime $\beta = O(1)$. Interestingly, for moderately-large relative errors $\eps$, the sample complexity of Theorem~\ref{thm:fs_intro} scales \emph{independently} of the dimension $d$, a feature not shared by prior analyses of Isotron variants in agnostic learning settings, e.g., \cite{ZarifisWDD24}. Moreover, Algorithm~\ref{alg:omnitron} uses our improved BIR solver to obtain a runtime scaling near-linearly in the dataset size $nd$.

We remark that Theorem~\ref{thm:fs_intro} again outputs an omnipredictor $p$ which sends an example $\vx \in \xset$ to $p(\vx) = \{(\sig_t, \vw_t)\}_{t \in [T]}$ for some $T = O(\eps^{-2})$, just as in Theorem~\ref{thm:ideal_omni_intro}, i.e., it makes predictions based on a multi-index model. For essentially all interesting regimes (cf.\ Remark~\ref{rem:beta_range}), the links $\sig_t$ themselves belong to the set $\calS$, and in general lose at most a small amount in the Lipschitz parameter.

In the setting where the comparator link function family $\calS$ is \emph{$\alpha$-anti-Lipschitz} (see \eqref{eq:bilip} for a formal definition), we can further improve upon the sample complexity of Theorem~\ref{thm:fs_intro}.

\begin{corollary}[Informal, see Corollary~\ref{cor:fs_anti_lip}]\label{cor:bilip_intro}
Algorithm~\ref{alg:omnitron} given
\[n = \widetilde{O}\Par{\min\Par{\frac{\beta^2}{\alpha^2\eps^2},\; \frac \beta {\alpha\eps^2} + \frac d {\eps^2}}}\]
samples returns an $\eps$-omnipredictor for SIMs $p$ satisfying \eqref{eq:omni_sim_intro} with high probability, where $\calS$ is all monotone, $(\alpha, \beta)$-bi-Lipschitz links $\sigma: [-1, 1] \to [0, 1]$. The algorithm runs in time $\tO(nd \cdot \frac 1 {\eps^2})$.
\end{corollary}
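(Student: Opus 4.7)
The plan is to follow the template of Theorem~\ref{thm:fs_intro}'s proof, replacing its uniform-convergence step with a sharper analysis that exploits bi-Lipschitzness. First, I invoke Theorem~\ref{thm:ideal_omni_intro} on the empirical distribution $\hat\calP_n$ over the $n$ samples; because empirical expectations coincide with population expectations on $\hat\calP_n$, this directly yields a multi-index predictor $p(\vx) = \{\sigma_t(\vw_t\cdot\vx)\}_{t \in [T]}$ with $T = O(\eps^{-2})$ heads that is an $(\eps/2)$-omnipredictor on the sample. Second, I transfer this to a population guarantee of the form \eqref{eq:omni_sim_intro} by uniformly bounding $|\E_{\calP}[\ml\sigma(\vw\cdot\vx,y)] - \E_{\hat\calP_n}[\ml\sigma(\vw\cdot\vx,y)]|$ over $(\sigma,\vw) \in \calS \times \wset$, together with a matching deviation bound on the predictor side, absorbing the remaining $\eps/2$ slack.

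The quantitative improvement over Theorem~\ref{thm:fs_intro} stems from the observation that an $(\alpha,\beta)$-bi-Lipschitz $\sigma$ makes the matching loss simultaneously $\alpha$-strongly convex and $\beta$-smooth in its first argument:
\[
\tfrac{\alpha}{2}(u-v)^2 \;\le\; \ml\sigma(u,y) - \ml\sigma(v,y) - \partial_u\ml\sigma(v,y)(u-v) \;\le\; \tfrac{\beta}{2}(u-v)^2.
\]
The lower bound is a Bernstein-type condition: matching-loss excess risk dominates a quadratic in the prediction gap, so ERM-style arguments enjoy fast $O(1/n)$ excess-risk rates rather than the slow $O(1/\sqrt{n})$ rates available in the Lipschitz-only regime of Theorem~\ref{thm:fs_intro}. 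This square-root improvement on $\eps$ is precisely what converts the $\eps^{-4}$ and $\eps^{-3}$ scalings there into $\alpha^{-2}\eps^{-2}$ and $\alpha^{-1}\eps^{-2}$, with the $\alpha$ powers tracking the strength of the strong-convexity sandwich.

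The two branches of the $\min$ correspond to two complementary uniform-convergence estimates. The $\beta^2/(\alpha^2\eps^2)$ branch follows from bounding the metric entropy of the composite class $\{(\sigma,\vw) \mapsto \ml\sigma(\vw\cdot\vx,y)\}$ jointly: bounded $\beta$-Lipschitz monotone links on $[-1,1]$ admit polynomial-in-$\beta/\eps$ $L^\infty$-covers, and combined with the Bernstein improvement this delivers the dimension-free rate. The $\beta/(\alpha\eps^2) + d/\eps^2$ branch instead decouples the link and linear classes: the $d/\eps^2$ arises from the Rademacher complexity of the bounded linear class under a Lipschitz matching loss, while $\beta/(\alpha\eps^2)$ is the bi-Lipschitz-sharpened cost of covering the link family alone. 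The main obstacle is realizing the Bernstein-type fast rate \emph{jointly} over the nonparametric class $\calS$ and the parametric class $\wset$, while handling the multi-index structure of the learned predictor, whose $T = O(\eps^{-2})$ heads introduce logarithmic overhead that must be absorbed by $\widetilde O$; this joint covering/localization bookkeeping is where I expect the bulk of the technical work.
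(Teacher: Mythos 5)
Your route diverges from the paper's, and the divergence exposes a genuine gap in the mechanism you rely on. The paper does \emph{not} prove this corollary by running the idealized algorithm on the empirical distribution and then uniformly transferring matching losses to the population. Instead, Algorithm~\ref{alg:omnitron} is a single-pass stochastic Isotron: generalization over the comparator class $\wset$ is handled for free by a high-probability martingale regret bound for stochastic PGD (Lemma~\ref{lem:high_prob_sgd}), and the \emph{only} uniform convergence needed is for the omnigap functional $f_{\vw,\sigma,\ss}(\vx,y) = (\sigma(\vw\cdot\vx)-y)(\vw\cdot\vx - \ss^{-1}(\sigma(\vw\cdot\vx)))$, which certifies that empirical bounded isotonic regression is an approximate BIR oracle (Proposition~\ref{prop:gap_uniform_converging}). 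Your plan instead requires a deviation bound on the learned predictor side, i.e., over $\E[\ml\sigma(k_\sigma(p(\vx)),y)]$ for data-dependent multi-index models whose heads are arbitrary BIR outputs; you flag this as "the bulk of the technical work" but do not resolve it, and resolving it forces you back into exactly the joint covering of $\calW\times\calS_{0,\beta}\times\calS_{\alpha,\beta}^{-1}$ that the paper performs via Lemma~\ref{lem:total_cover} and Dudley chaining.

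More importantly, your proposed source of the improvement --- $\alpha$-strong convexity of the matching loss yielding a Bernstein condition and fast $O(1/n)$ rates --- is not where the $\alpha$-dependence comes from, and it does not reproduce the stated bound. A fast-rate argument would yield $n \sim 1/\eps$, whereas the corollary's $1/\eps^2$ is a slow rate inherited from the $T = O(\eps^{-2})$ SGD iterations; the factors $\beta^2/\alpha^2$ and $\beta/\alpha$ are metric-entropy constants, arising because $\ss^{-1}$ is only $(1/\alpha)$-Lipschitz, so the composite omnigap class must be covered with $\wset$ at resolution $\alpha\eps/\beta$ and $\calS_{\alpha,\beta}^{-1}$ at resolution $\eps$ (Lemmas~\ref{lem:cover_w_n}, \ref{lem:cover_w}, \ref{lem:inverse_cover}). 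The improvement over Theorem~\ref{thm:fs} is then purely that for general Lipschitz links one must invoke the smoothing reduction (Lemma~\ref{lem:anti-lip}) and set $\alpha = \Theta(\eps/(L^2R^2))$, turning $\beta^2/\alpha^2$ into $\beta^2/\eps^2$, whereas in the bi-Lipschitz model $\alpha$ is native and no such substitution is needed. Your reading of the two branches of the $\min$ (dimension-free covering of linear functionals versus volumetric covering of the ball plus separate covering of the link family) is correct in spirit, but without the Lipschitz-inverse covering mechanism the claimed powers of $\alpha$ cannot be derived from your argument.
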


Corollary~\ref{cor:bilip_intro} shows that when the bi-Lipschitz aspect ratio $\frac \beta \alpha = O(1)$, i.e., $\sig$ is multiplicatively-close to a linear map, the sample complexity of Algorithm~\ref{alg:omnitron} nearly-matches the iteration count of its idealized variant in Theorem~\ref{thm:ideal_omni_intro}. Such a sample complexity is inherent (in high enough dimension) to even learning a linear function over the unit sphere with respect to just the squared loss (Theorem 1, \cite{Shamir15}). Thus, we find it surprising that Corollary~\ref{cor:bilip_intro} obtains a similar sample complexity bound for the much more challenging problem of omnipredicting all bi-Lipschitz SIMs.

\paragraph{Omniprediction for one-dimensional data.} 
In the special case where the features $\vx$ are scalar (i.e., in one dimension), we show that a much simpler \emph{proper} omnipredictor for the family of matching losses can be learned using the standard Pool-Adjacent-Violators (PAV) algorithm \cite{pav,pav-project} when the hypothesis class is all bounded non-decreasing univariate functions. The PAV algorithm, originally developed to solve isotonic regression without Lipschitzness constraints, has a fast linear-time implementation \cite{pav-project}. Thus, our result gives an extremely efficient proper omnipredictor over one-dimensional data. 
Specifically, we prove the following theorem.

\begin{theorem}[Informal, see Theorem~\ref{thm:pav-population}]
\label{thm:intro-1d}
Given $
n = O(\frac 1{\varepsilon^2} \log \frac 1\varepsilon)$
samples $\{(x_i, y_i)\}_{i \in [n]}$ drawn from any distribution $\calP$ over $\R\times\{0,1\}$, the standard PAV algorithm runs in $O(n)$ time and returns, with high probability, a non-decreasing $\varepsilon$-omnipredictor $p:\R\to [0,1]$ for all matching losses w.r.t.\ all bounded non-decreasing hypotheses.
\end{theorem}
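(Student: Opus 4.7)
The plan is to show that the PAV output $p$ achieves an \emph{exact} omniprediction guarantee on the empirical distribution, and then pass to population via standard uniform convergence. The empirical bound will follow by combining two well-known structural properties of PAV with the Bregman structure of matching losses. The sample complexity $n = O(\varepsilon^{-2}\log(1/\varepsilon))$ then reflects the $O(1/\eta)$ $L^2$-metric entropy of bounded univariate monotone functions, which is where the extra $\log(1/\varepsilon)$ factor enters.

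Let $p$ denote the PAV output on the samples $\{(x_i,y_i)\}_{i \in [n]}$. I would invoke two classical facts. (i) \emph{Perfect empirical calibration}: the first-order (KKT) conditions for isotonic regression force $\sum_{i:p(x_i)=v}(y_i - p(x_i)) = 0$ for every value $v$ in the range of $p$, hence $\sum_i (p(x_i)-y_i) g(p(x_i)) = 0$ for every function $g$. (ii) \emph{Projection variational inequality}: since $p$ is the Euclidean projection of the label vector onto the convex cone of non-decreasing sequences at $(x_1,\ldots,x_n)$, the inequality $\sum_i (p(x_i)-y_i)(c(x_i)-p(x_i)) \ge 0$ holds for every non-decreasing $c$; combined with (i) this implies $\sum_i (p(x_i)-y_i)c(x_i) \ge 0$. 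Next, for monotone $\sigma$, setting $\Psi_\sigma(u) \defeq \int_0^u \sigma(t)\,\dd t$ yields $\ml{\sigma}(u,y) = \Psi_\sigma(u) - yu + \text{const}(y)$, and convexity of $\Psi_\sigma$ together with $\Psi_\sigma' = \sigma$ gives the key Bregman bound
\[
\ml{\sigma}(k_\sigma(p(x_i)), y_i) - \ml{\sigma}(c(x_i), y_i) \;\le\; (p(x_i)-y_i)\bigl(k_\sigma(p(x_i)) - c(x_i)\bigr),
\]
where $k_\sigma$ is any measurable right-inverse of $\sigma$, which is well-defined on $[0,1]$ and hence on every $p(x_i)$. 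Summing over $i$, property (i) applied with $g = k_\sigma$ annihilates the $k_\sigma(p(x_i))$-term, and (ii) makes the $c(x_i)$-term non-positive, so the empirical omniprediction gap is at most $0$ \emph{uniformly} over $(\sigma,c)$.

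Finally, I would pass to population by standard uniform convergence. Both function classes $\{(x,y)\mapsto \ml{\sigma}(k_\sigma(p(x)),y) : \sigma\in\calS\}$ and $\{(x,y) \mapsto \ml{\sigma}(c(x),y) : \sigma \in \calS,\, c\text{ bounded non-decreasing}\}$ reduce, via $O(1)$-Lipschitzness of $\ml{\sigma}$ in its first argument on bounded domains, to families of compositions of bounded monotone univariate functions of $x$ (noting $p$ is itself non-decreasing). Since the $L^2$-metric entropy of bounded monotone univariate functions satisfies $\log N(\eta) = O(1/\eta)$, a Dudley / chaining argument yields a uniform deviation of $\tO(1/\sqrt{n})$ between empirical and population expectations, giving the claimed sample complexity with high probability. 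The main obstacle I anticipate is this chaining step — specifically, cleanly handling the link-dependent post-processing $k_\sigma \circ p$ inside the loss class while preserving monotonicity of $k_\sigma \circ p$, so that the uniform-convergence penalty stays at the optimal $\tO(\varepsilon^{-2})$ rate; the empirical analysis reduces to bookkeeping once (i), (ii), and the Bregman bound are combined.
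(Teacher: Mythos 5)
Your proposal is correct and its overall architecture matches the paper's: exact empirical omniprediction from calibration plus a positivity inequality against monotone comparators, a pointwise loss-difference bound (your Bregman inequality is the same as the paper's ``omnigap implies omniprediction'' step, Lemma~\ref{lm:omnigap-2}, just derived via convexity of $\Psi_\sigma$ rather than via the synthetic distribution $\widehat{\calP}$), and then uniform convergence over monotone classes with $\log \calN(\eta) = O(1/\eta)$ entropy and chaining. The one place you genuinely diverge is the key inequality $\sum_i (p(x_i)-y_i)c(x_i) \ge 0$: you obtain it from the variational inequality for Euclidean projection onto the cone of non-decreasing sequences, which presupposes the classical (nontrivial) fact that PAV computes that projection exactly; the paper instead proves the inequality directly from the PAV merging dynamics by an induction on prefix sums within blocks (Lemma~\ref{lm:1d}), without ever invoking optimality of the PAV output. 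The paper's route is deliberately self-contained and is pitched as a contribution in its own right (a new, simpler proof of PAV's universal optimality for all proper losses), whereas yours is shorter but leans on an external certificate of exactly the kind the paper is trying to avoid. On generalization, the obstacle you flag dissolves exactly as the paper handles it: rather than covering the loss class $\ml\sigma(k_\sigma(p(\cdot)),\cdot)$ directly, decompose the omnigap into $\E[(p(x)-y)\xi(x)]$ with $\xi = \sigma^{-1}\circ p$ or $\xi = c$, both of which are bounded non-decreasing functions of $x$, so a single uniform convergence bound for $\{(x,y)\mapsto (p(x)-y)\xi(x)\}$ over monotone $p,\xi$ suffices and yields the $\tO(\varepsilon^{-2})$ rate. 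One shared imprecision worth noting: when $\sigma$ is not surjective onto $[0,1]$ a literal right-inverse $k_\sigma$ need not exist, and one should take $k_\sigma$ to be the argmin post-processing, for which the first-order optimality condition $(\sigma(k_\sigma(v))-v)(k_\sigma(v)-c)\le 0$ still makes your Bregman step go through.
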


We note that linear hypotheses $c(x) = wx$ in one-dimensional SIMs are either non-decreasing or non-increasing (depending on the sign of $w$). Thus as a corollary, we can combine the solutions from PAV with the original and reversed orders to obtain an omnipredictor for one-dimensional SIMs.

\begin{corollary}[Informal, see \Cref{cor:1d-sim}]
Given $n = O(\frac 1 {\eps^2} \log(\frac 1 \eps))$
samples $\{(x_i, y_i)\}_{i \in [n]}$ drawn from any distribution $\calP$ over $[-1,1]\times\{0,1\}$, if we run the standard PAV algorithm on the samples in the original and reversed orderings to obtain two predictors $p_+,p_-:[-1,1]\to [0,1]$, then for any link function $\sigma$ and any weight $w\in [-1,1]$,
    \begin{align*}
    \min\big\{\E_{(x,y)\sim \calP}[\pl\sigma(p_+(x),y)], \E_{(x,y)\sim \calP}[\pl\sigma(p_-(x),y)]\big\} \le {} \E_{(x,y)\sim\calP}[\ml\sigma(wx,y)] + \varepsilon.
    \end{align*}
\end{corollary}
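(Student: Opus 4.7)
The plan is to reduce the 1D SIM omniprediction problem to Theorem~\ref{thm:intro-1d} by exploiting the fact that every comparator $x\mapsto \sigma(wx)$ with $w\in[-1,1]$ is a bounded monotone function of $x$: it is non-decreasing when $w\ge 0$ and non-increasing when $w<0$ (since $\sigma$ itself is non-decreasing and takes values in $[0,1]$, and $x\in[-1,1]$). The two PAV runs, on the original and reversed orderings of $\{(x_i,y_i)\}$, then correspond respectively to an omnipredictor against non-decreasing and against non-increasing bounded hypotheses, each of which handles one of the two sign cases of $w$.

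Concretely, first I would invoke Theorem~\ref{thm:intro-1d} directly to conclude that with high probability $p_+$ is a non-decreasing $\eps$-omnipredictor over bounded non-decreasing hypotheses; taking the comparator hypothesis to be $h(x)=\sigma(wx)\in[0,1]$ for any $w\ge 0$ and any monotone link $\sigma$, and using the defining relation $\pl\sigma(\sigma(z),y) = \ml\sigma(z,y)$ between the proper loss and the matching loss induced by $\sigma$, yields
\[\E_{(x,y)\sim\calP}[\pl\sigma(p_+(x),y)] \;\le\; \E_{(x,y)\sim\calP}[\pl\sigma(\sigma(wx),y)] + \eps \;=\; \E_{(x,y)\sim\calP}[\ml\sigma(wx,y)] + \eps.\]
For the case $w<0$, I would apply the same theorem to the reflected dataset $\{(-x_i,y_i)\}_{i\in[n]}$. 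Running PAV on this reflected dataset is exactly the ``reversed ordering'' run producing $p_-$ (viewed as a function of $-x$), and it yields a predictor $q$ with $q(-x)=p_-(x)$ that is a non-decreasing omnipredictor on the reflected distribution over all bounded non-decreasing hypotheses. Applying this guarantee against the comparator $\tilde h(u) = \sigma(-w u)$, which is non-decreasing in $u$ precisely because $-w\ge 0$, and then changing variables $u=-x$, gives the matching bound with $p_-$ on the left-hand side. Taking the minimum over the two bounds and noting that every $w\in[-1,1]$ falls into one of the two sign regimes completes the argument, with the high-probability guarantee absorbed by applying Theorem~\ref{thm:intro-1d} twice at failure probability halved.

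The only non-routine step is verifying that running PAV on the reversed order is genuinely equivalent to running it on the reflected dataset $\{(-x_i,y_i)\}$, which I expect to follow immediately from the fact that the PAV optimum depends only on the sorted order of the $x_i$'s and the corresponding sequence of $y_i$ values; a formal statement would observe that reversing the order simply swaps the ``non-decreasing in index'' constraint for ``non-increasing in $x$'', which is exactly a reflection of the underlying 1D problem. Beyond this symmetry check, the argument is a one-line reduction and does not need to touch the internals of the proof of Theorem~\ref{thm:intro-1d}; the main obstacle, if any, is purely notational: ensuring that the identity $\pl\sigma(\sigma(z),y)=\ml\sigma(z,y)$ is applied consistently with the conventions used in Theorem~\ref{thm:intro-1d}, so that the proper loss on $p_\pm(x)$ and the matching loss on the linear score $wx$ are correctly identified on the two sides of the inequality.
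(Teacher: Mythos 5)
Your proposal is correct and matches the paper's argument for \Cref{cor:1d-sim}: split on the sign of $w$, apply \Cref{thm:pav-population} to $p_+$ against the non-decreasing comparator for $w\ge 0$, identify the reversed-order PAV run with PAV on the reflected data $\{(-x_i,y_i)\}$ to handle $w<0$ via $p_-$, and union-bound the two failure events. The only (harmless) detour is routing through the linked-space comparator $h(x)=\sigma(wx)$ and the identity $\pl\sigma(\sigma(z),y)=\ml\sigma(z,y)$ (which strictly needs $\sigma$ injective); the theorem is already stated against bounded non-decreasing unlinked comparators, so one can plug in $c(x)=wx$ directly and skip that step.
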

Here, the choice of which predictor ($p_+$ or $p_-$) to use depends on the choice of the link function $\sigma$. The choice is based on which of the two predictors achieve a lower loss, which can be tested with a small amount of hold-out data. Thus, our omnipredictor for one-dimensional SIMs is a simple \emph{double-index model} consisting of these two PAV solutions.
A natural open question is to determine whether there is always a $\sigma$-independent way of choosing from $p_+$ and $p_-$ at training time that guarantees omniprediction, i.e., a proper one-dimensional omnipredictor for SIMs.

Our proof of \Cref{thm:intro-1d} starts from the special case where the population distribution has a finite support, where we show that running PAV directly on the population distribution gives an exact omnipredictor (i.e., $\varepsilon = 0$) (\Cref{thm:1d}). This result is an equivalent reinterpretation of a known result showing that PAV simultaneously optimizes every proper loss among the family of non-decreasing predictors $p:\R\to [0,1]$ (\Cref{thm:pav-proper}) \cite{pav-proper}. We give a new proof of this result which is arguably much simpler than existing proofs \cite{pav-project,pav-proper} and may be of independent interest. Our proof crucially uses our definition of the omnigap (see \eqref{eq:og_intro}) as a certificate for optimality that is more interpretable and easier to analyze than previously used certificates such as feasible dual solutions. We give a more detailed description of our proof idea in \Cref{ssec:overview}.

%
%

\paragraph{Understanding proper omniprediction.} Our learning algorithms only produce semi-proper omnipredictors (i.e., multi-index models) for SIMs. It is natural to ask whether there always exists a proper omnipredictor for SIMs. We include a discussion for some simpler cases in Section~\ref{sec:lower}.

\subsection{Overview of approach}\label{ssec:overview}

Our omnipredictor constructions are based on a new interpretation of the Isotron's classical convergence proof in the realizable setting, by way of a regret notion that we call the \emph{omnigap}. This object, defined formally in Definition~\ref{def:og}, is essentially a measure of how accurately a predictor $p: \xset \to [0, 1]$ passes a statistical test induced by a link function-linear classifier pair $(\sig, \vw) \in \calS \times \calW$, and is reproduced here for convenience in its single-test form:
    \begin{equation}\label{eq:og_intro}
    \og(p;\sigma,\vw) \defeq \E_{(\vx, y) \sim \calP}\Brack{(p(\vx) - y)(\sigma^{-1}(p(\vx)) - \vw \cdot \vx)} .
    \end{equation}
Note that if $p = p^\star$, where $p^\star$ is the Bayes-optimal ground truth predictor $p^\star(\vx) = \E[y \mid \vx]$, then $\og(p; \sig, \vw) = 0$ for all pairs $(\sig, \vw) \in \calS \times \calW$. This has the interpretation of $p^\star$ passing all statistical tests. A predictor $p$ with a small omnigap $\og(p) \defeq \max_{(\sig, \vw) \in \calS \times \calW} \og(p; \sig, \vw)$ can then be interpreted as being mostly-indistinguishable from $p^\star$, according to our test family.

In fact, Definition~\ref{def:omni} is implicit in \cite{loss-oi}, who defined a property of predictors called \emph{loss outcome indistinguishability (OI)}, which is essentially the property of having
a small absolute value of \eqref{eq:og_intro} for all tests. A key result in \cite{loss-oi} is that loss OI implies that $p$ is an omnipredictor. Our starting point is the fact that just the one-sided bound $\og(p) \le \eps$ suffices for $p$ to satisfy the omniprediction guarantee \eqref{eq:omni_sim_intro} (Lemma~\ref{lm:omnigap}). This viewpoint guides all of our omnipredictor analyses throughout.

\paragraph{Isotron yields omniprediction for SIMs.} The classical analysis of the Isotron's convergence in the realizable setting \cite{KalaiS09} uses the fact that the solution to isotonic regression is \emph{calibrated}. More concretely, each iteration of the Isotron solves a monotone curve-fitting problem of the form
\begin{equation}\label{eq:ir}\min_{i \in [n]} (v_i - y_i)^2, \text{ subject to } v_1 \le v_2 \le \ldots \le v_n,\end{equation}
where the $\{y_i\}_{i \in [n]}$ are given as input.
This \emph{isotonic regression} problem is known to have a simple $O(n)$-time solution, the Pool-Adjacent-Violators (PAV) algorithm \cite{pav,pav-project}, which initializes each point $y_i$ to its own pool, and recursively merges adjacent pools to fix monotonicity violations (averaging the predictor $v_i$ for indices $i$ in the pool). We review PAV and its strong guarantees in Section~\ref{ssec:pav}, but the only fact that we need for the present discussion is that its output is calibrated, meaning that every set of points $S$ sharing a prediction $v_i = v$ has $\frac 1 {|S|} \sum_{i \in S} y_i = v$.

The fact that calibration shows up in the Isotron convergence proof is heavily suggestive that the Isotron's iterates may already have bounded omnigap. Indeed, the generic method that \cite{loss-oi} gave for establishing loss OI also uses the fact that the first half of the statistical test \eqref{eq:og_intro}, i.e., $\E_{(\vx, y) \sim \calP}[(p(\vx) - y)\sig^{-1}(p(\vx))]$, vanishes if $p$ is calibrated against the population $\calP$. 

We begin by reinterpreting the convergence of the Isotron through the omnigap. Recall that the Isotron (Algorithm~\ref{alg:isotron}) interleaves 
isotonic regression with projected gradient descent (PGD) steps using the previously-learned link function.
The classical Isotron convergence proof begins by treating the PGD method as a regret minimization procedure, which bounds regret terms of the form $\inprod{\nabla_{\vw}\ml \sig(\vw; \calP)}{\vw - \vws}$. Here, $\vw$ is a PGD iterate, $\sigma$ solves isotonic regression in $(\vw \cdot \vx, y)$ in expectation over $\calP$, and $\vws$ is an arbitrary comparator in $\wset$. Moreover, we use $\ml \sig(\vw; \calP)$ to denote the expected matching loss $\ml \sig$ evaluated at $(\vw \cdot \vx, y)$ for $(\vx, y) \sim \calP$.

In the realizable setting, where $\vws \in \wset$ generates the actual label distribution, \cite{KalaiS09} showed that 
the regret $\inprod{\nabla_{\vw}\ml \sig(\vw; \calP)}{\vw - \vws}$ upper bounds the excess squared loss of the predictor $p: \vx \mapsto \sigma(\vw \cdot \vx)$. We observe that, in fact, this regret further upper bounds the omnigap of $p$ (cf.\ \eqref{eq:reg_og}). The fact that a comparator $\ss \in \calS$ does not show up in the regret $\inprod{\nabla_{\vw}\ml \sig(\vw; \calP)}{\vw - \vws}$, but does show up in the omnigap definition, is handled by the optimality of $\sig$ as an isotonic regression solution. Specifically, this is captured by an optimality characterization of $\sig$ due to Lemma 1, \cite{KakadeKKS11}, exploiting the KKT conditions of isotonic regression (cf.\ Lemma~\ref{lem:blir_opt}). Graciously, the fact that regret upper bounds the omnigap fully generalizes to the agnostic learning setting.

At this point, our omnipredictor construction is simple to explain. We first run the Isotron for $T$ iterations, for a sufficiently large $T \approx \eps^{-2}$. This produces iterates $\{(\sig_t, \vw_t)\}_{t \in [T]} \subset \calS \times \wset$, where we can use our regret characterization to show that for any comparator $(\ss, \vws) \in \calS \times \wset$,
\[ \frac 1 T \sum_{t \in [T]} \og(\sig_t, \vw_t; \ss, \vws)\le \eps.\]
This bound is already enough to prove that a \emph{randomized} proper prediction given by $p: \vx \to \sig_t(\vw_t \cdot \vx)$ satisfies a randomized variant of omniprediction (Corollary~\ref{cor:random_proper}). For a deterministic prediction, unfortunately, it is not enough to use any one single SIM $(\sig_t, \vw_t)$, as the above bound only holds in aggregate over our iterates for any fixed comparator.

Our final deterministic omnipredictor is based on the convexity of matching losses. For a test loss function $\ml \sig$ and a feature vector $\vx$, our action is to play the average postprocessed prediction:
\begin{equation}\label{eq:mim_intro}\frac 1 T \sum_{t \in [T]} \sig^{-1}(\sig_t(\vw_t \cdot \vx)).\end{equation}
In this sense, our algorithm omnipredicts SIMs using a \emph{multi-index model}, which aggregates $T$ single-index models to make its prediction. We dub this overall omniprediction procedure of running the Isotron on agnostic data, and constructing the post-processed predictors \eqref{eq:mim_intro}, the \emph{Omnitron} (Algorithm~\ref{alg:ideal_omnitron}, in its idealized form that uses access to population-level statistics). 

\paragraph{Sample and runtime-efficient Omnitron.} Our second main contribution is to give a sample and runtime-efficient implementation of our new Omnitron algorithm, for Lipschitz, monotone link functions. The sample complexity of the outer PGD method is straightforward to establish using known stochastic regret minimization tools from the literature, which we recall in Lemma~\ref{lem:high_prob_sgd}. 

The bulk of our technical work goes towards a generalization bound on the omnigap from a finite sample, as well as a nearly-linear time algorithm for solving \eqref{eq:ir} with additional Lipschitz constraints. For the former goal, we rely on Rademacher complexity bounds on the function class \eqref{eq:omni_sim_intro}, parameterized (shifting notation slightly) by a pair $(\sig, \vw) \in \calS \times \calW$ which induce a SIM predictor $p$, and a comparator $(\ss, \vws) \in \calS \times \calW$. To simplify matters slightly, it turns out that we do not need to argue about generalization to a comparator $\vws \in \calW$, as this is handled by the stochastic PGD analysis. Unfortunately, the function class \eqref{eq:omni_sim_intro} can be quite unstable to discretization arguments, because $\ss^{-1}$ may not have a bounded Lipschitz constant for $\ss \in \calS$.

To remedy this, we prove in Lemma~\ref{lem:anti-lip} that $\frac \eps 2 $-omniprediction against $(\alpha, \beta)$-bi-Lipschitz SIMs implies $\eps$-omniprediction against $\beta$-Lipschitz SIMs, if $\alpha$ is sufficiently small, i.e., $\alpha \approx \eps$. This allows us to bound the Lipschitzness of our function family \eqref{eq:omni_sim_intro}.
Directly applying covering number bounds from prior work \cite{KakadeKKS11} at this point yields a sample complexity of $n \approx \eps^{-6}$ for omnigap generalization. By carefully using Dudley's chaining inequality, we improve our required sample complexity to $\approx \eps^{-4}$ for omnipredicting Lipschitz SIMs, and $\approx \eps^{-2}$ for omnipredicting bi-Lipschitz SIMs.

We now discuss our near-linear time solution to a Lipschitz variant of  \eqref{eq:ir}. In fact, our fast algorithm applies to a much more general family of \emph{bounded isotonic regression} (BIR) optimization problems with arbitrary two-sided constraints, stated formally in \eqref{eq:blir}. This optimization problem family was previously considered by the prior work \cite{ZarifisWDD24}, who gave an approximate solver running in time $\approx n^2 \log(\frac 1 \Delta)$, for an approximation tolerance $\Delta$.

In Proposition~\ref{prop:blir}, we give an exact solver for BIR running in $O(n\log^2(n))$ time. Our solution is heavily-inspired by a recent solver from \cite{HuJTY24}, for the related problem of computing the \emph{empirical smooth calibration}, which also encodes two-sided constraints. The \cite{HuJTY24} solver was based on writing the dual of empirical smooth calibration as a multilinear function, where the variable dependency graph has a very simple path structure. By exploiting this structure, \cite{HuJTY24} used dynamic programming and data structures to maintain partial solutions to their objective functions as a piecewise-linear function, which is easily-optimized via binary search.

Our BIR objectives are not quite as simple, but with some work, we can show that partial minimizations of their duals admit a recursive, piecewise-quadratic structure (Lemma~\ref{lem:partial_structure}). To maintain these representations, we adapt much of the machinery from \cite{HuJTY24}, combined with the careful use of deferred updates, to provide a rewindable data structure which supports the recurrence relations on our piecewise quadratic coefficients (Lemma~\ref{lem:data_structure}). This data structure allows us to both compute each partial function representation that we need to obtain the optimal last variable in a forward pass, and undo our process to recover a full optimal solution to BIR.

\paragraph{Reinterpreting PAV's universal optimality through omnigap.}
Another technical contribution of our work is showing that the standard PAV algorithm finds a proper omnipredictor given one-dimensional data for matching losses w.r.t.\ the family of all non-decreasing hypotheses (\Cref{thm:1d}). While this result follows from the known fact that the PAV solution simultaneouly minimizes every proper loss among the family of non-decreasing predictors (we call it universal optimality, see \Cref{thm:pav-proper}) \cite{pav-proper}, our analysis provides a new, simpler proof of potential independent interest.

The challenge in analyzing the PAV solution lies in finding a certificate for its optimality. The previous proof from \cite{pav-squared}, even just for a fixed loss function (the squared loss), requires maintaining a feasible dual solution throughout the algorithm and verifying the KKT condition as a certificate for optimality. Another more involved strategy, used in \cite{pav-proper} for proving the universal optimality of PAV, is to inductively show that each block of the predictor is optimal on the subproblem defined by the block. Performing this induction requires analyzing how an optimal solution to a subproblem would change if additional boundary conditions are imposed.

The key to our simpler analysis is to use the omnigap as an alternative certificate for optimality.
We show that a non-positive omnigap is preserved throughout the entire algorithm (\Cref{prop:1d-omnigap}), which we establish via a simple induction (\Cref{lm:1d}). 
As we define the omnigap originally as a sufficient condition for obtaining omnipredictors, it is somewhat surprising that it provides new insights for simplifying the proof of a known result about the PAV algorithm.

In our analysis of the PAV algorithm, it is important to consider the one-sided omnigap, i.e., without an absolute value taken over the right-hand-side of \eqref{eq:og_intro}. Indeed, the omnigap of the PAV solution can be negative with a large absolute value for some pairs $(\sigma,\vw)$, and thus it may not satisfy the loss outcome indistinguishability condition in \cite{loss-oi}.

We note that while a non-positive omnigap is preserved throughout the PAV algorithm, the monotonicity of the predictor is only achieved after the final iteration. Although monotonicity is not needed for loss minimization or omniprediction on the empirical distribution over the input data (which can be trivially solved by overfitting), it is important for generalization. We show that given roughly $O(\varepsilon^{-2}\log \varepsilon^{-1})$ i.i.d.\ data points, the PAV solution is an $\varepsilon$-omnipredictor and universal loss optimizer on the \emph{population} distribution  with high probability (\Cref{thm:pav-population}). Our proof is via a strong uniform convergence bound that holds not just simultaneously for all non-decreasing predictors, but also for all proper/matching losses.

\subsection{Related work}
\paragraph{Omniprediction.} 
In the classic loss minimization paradigm of machine learning, 
a model  is trained by minimizing a fixed, given loss function.
Thus, a model trained by minimizing one loss function may not a priori achieve low error when measured by another loss function.
To improve robustness in applications where the learning objective (i.e., loss function) may vary (over time or with specific downstream tasks), Gopalan et al.\ \cite{GopalanKRSW22} introduced the notion of omniprediction, where the goal is to train a \emph{single} model that can be used to minimize \emph{every} loss function from a large family $\calL$, in comparison with a class $\calC$ of benchmark hypotheses.
Thus, when training an omnipredictor, no specific loss function needs to be given.

To demonstrate the feasibility of omniprediction, \cite{GopalanKRSW22} show that when the loss functions are convex, omniprediction is implied by \emph{multicalibration}, a notion originating from the algorithmic fairness literature, whereas multicalibration is in turn achievable given a weak agnostic learning oracle \cite{mc}. In a followup work inspired by the notion of \emph{outcome indistinguishability} \cite{oi}, \cite{loss-oi} connect omniprediction with \emph{loss outcome indistinguishability}, a notion of indistinguishability between the learned predictor and the Bayes optimal predictor. This connection allows them to get omnipredictors for single-index models (SIMs) from \emph{calibrated multiaccuracy} \cite{ma}, a weaker requirement than multicalibration.

Since the introduction of omniprediction,
a series of work emerged  to achieve this stronger goal of learning in a variety of settings, including 
constrained optimization \cite{omni-constrained}, regression \cite{GopalanORSS24}, performative prediction \cite{performative}, and online learning \cite{GargJRR24,high-dim}.
Recently, 
towards a unified characterization of omniprediction,
\cite{omni-swap} show an equivalence between \emph{swap omniprediction} and multicalibration, where swap omniprediction is an even stronger requirement than omniprediction. We note that to our knowledge, only \cite{loss-oi} has given an end-to-end polynomial-time algorithm for omnipredicting a concrete model family, e.g., SIMs (see also \cite{GopalanORSS24}, who gave an algorithm for the more challenging \emph{regression} setting with super-polynomial dependence on the target error). The remaining works primarily focus on reductions to known primitives.


\paragraph{Learning single-index models.} 
Single-index models (SIMs) originated from the statistics and econometrics literature as a hybrid of parametric and non-parametric models \cite{sim0,sim,sim2,sim3}. Given a feature vector $\vx$, a SIM estimates the conditional expectation $\E[y|\vx]$ of the dependent variable $y$ using  $\sigma(\vw\cdot \vx)$, where the linear weight vector $\vw$ is the parametric component, and the univariate link function $\sigma:\R\to [0,1]$ is the non-parametric component. 
The need to learn the non-parametric link function poses additional challenge for learning SIMs from data.
The first provable guarantee for learning SIMs was obtained by \cite{KalaiS09} using the Isotron algorithm, which assumes that the data is realizable by a SIM with a monotone link $\sigma$ (i.e., $\E[y|\vx] = \sigma(\vw\cdot \vx)$). Further progress has been made in this realizable setting to achieve better sample complexity \cite{KakadeKKS11} and to characterize the behavior of semiparametric maximum likelihood estimates \cite{sim-gaussian}.

Recently, much attention has been directed to the more challenging agnostic setting, where the realizability assumption is removed, and the goal is to compete with the best SIM in squared error.
Under standard complexity-theoretic assumptions, it is shown that this task is computationally intractable \cite{hard-neuron,hard-relu}, even when the link function is known. However, provable algorithms for agnostically learning SIMs have been developed in a series of recent work by relaxing the error guarantees \cite{GollakotaGKS23, ZarifisWDD24}, or assuming stronger access to the data (e.g., active and query access) \cite{sim-agnostic-active,mim-query}.
Instead, our work overcomes the intractability of agnostically learning SIMs by replacing the squared error with the matching loss corresponding to each link function $\sigma$, which becomes exactly the setting of omniprediction for SIMs considered in \cite{loss-oi}.

\paragraph{Independent and forthcoming work.}
While preparing this manuscript, we became aware of two independent and forthcoming works on efficient omniprediction in different contexts \cite{optimal-omni,infinity}.
To our understanding, a major distinguishing feature of our work is that we focus on concrete families of loss functions and hypotheses, i.e., SIMs and their induced matching losses, allowing us to design customized and efficient end-to-end algorithms (in both sample and computational complexities), without black-box calls to context-dependent oracles (e.g., ERM or agnostic learning). This lets us obtain much sharper dependences on problem parameters than prior work.
This manuscript was written independently of reading either of \cite{optimal-omni, infinity}, and we look forward to updating with a more detailed comparison in a revision once they become available.



\section{Preliminaries}\label{sec:prelims}

We summarize notation used throughout  in Section~\ref{ssec:notation}. In Section~\ref{ssec:omni}, we provide several additional preliminaries on generalized linear models and their associated loss functions. In Section~\ref{ssec:isotonic}, we introduce bi-Lipschitz isotonic regression problem and give guarantees on its solution. Finally, in Section~\ref{ssec:pav} we recall the PAV algorithm, a classical method for standard isotonic regression.

\subsection{Notation}\label{ssec:notation}

\paragraph{General notation.} Throughout we denote vectors in boldface. For $n \in \N$ we use $[n]$ as shorthand for $\{i \in \N \mid 1 \le i \le n\}$. We use $\norm{\cdot}$ to denote the Euclidean $(\ell_2)$ norm of a vector, and $\normop{\cdot}$ to denote the ($2 \to 2$) operator norm of a matrix. The all-zeroes and all-ones vectors in $\R^d$ are denoted $\vzero_d$ and $\vone_d$. For $\vbx \in \R^d$ and $r > 0$, we let $\ball(\vbx, r) \defeq \{\vx \in \R^d \mid \norm{\vx - \vbx} \le r\}$. When $\vbx \in \R^d$ is omitted and $d$ is clear from context, it is always treated as $\vbx = \vzero_d$. For $p \in [0, 1]$ we let $\Bern(p)$ denote the Bernoulli distribution over $\{0, 1\}$ with mean $p$. For a set $\calK \subseteq \R^d$, we use $\proj_{\calK}$ to denote the Euclidean projection onto $\calK$, i.e., $\proj_{\calK}(\vx) \defeq \argmin_{\vv \in \calK}\{\norm{\vx - \vv}_2\}$. Finally, $\log$ is base $e$ throughout.

We focus on learning (soft) binary predictors $p: \xset \to [0, 1]$, for a set $\xset \subseteq \R^d$. Consequently we are interested in the setting where $\calP$ is a distribution over $\xset \times \{0, 1\}$. We refer to the $\vx$-marginal of $\calP$ by $\calP_{\vx}$ supported on $\xset$, and we refer to the conditional distribution of the label $y \mid \vx$ by $\calP_y(\vx)$. Therefore, for all $\vx \in \xset$, $y \sim \calP_y(\vx)$ is distributed $\Bern(\E_{y \sim \calP_{y}(\vx)}[y])$.

\paragraph{Single-index models.} Let $\xset \subset \R^d$. A \emph{generalized linear model} (GLM) over $\xset$ is a predictor $p: \xset \to [0, 1]$ of the form $p(\vx) = \sigma(\vw \cdot \vx)$, where $\sigma: \calD \to [0, 1]$ is a given \emph{link function}, which applies to a domain $\calD \supseteq \Brace{t \in \R \mid t = \vw \cdot \vx \text{ for } \vx \in \xset}$. We always assume that the link function $\sigma: \calD \to \R$ is monotone nondecreasing, and that $\calD$ is an interval. When $\sigma: \calD \to \R$ is unknown, it is treated as an additional parameter in the model, which we call a \emph{single-index model} (SIM). 

For $\calD \subseteq \R$, we say that $\sigma: \calD \to \R$ is $(\alpha, \beta)$-bi-Lipschitz if for all $s, t \in \calD$ with $s \le t$,
\begin{equation}\label{eq:bilip}\alpha\Par{t - s} \le \sigma(t) - \sigma(s) \le \beta\Par{t - s}.\end{equation}
Note that $(0, \infty)$-bi-Lipschitzness of a link function $\sigma$ corresponds to monotonicity, and $(\alpha, \infty)$-bi-Lipschitzness for any $\alpha > 0$ corresponds to $\sigma$ being one-to-one. If the upper bound in \eqref{eq:bilip} holds, then we call $\sigma$ $\beta$-Lipschitz. Similarly, if the lower bound holds, we call $\sigma$ $\alpha$-anti-Lipschitz.
Finally, if there exists any $\alpha > 0$ such that $\sigma$ satisfies \eqref{eq:bilip}, we simply say that $\sigma$ is anti-Lipschitz.

We primarily focus on agnostically learning SIMs in the following setting (Model~\ref{model:agnostic}). We remark that limiting our consideration to anti-Lipschitz link functions in Model~\ref{model:agnostic}  is not restrictive for our omniprediction applications, and can be achieved using an infinitesimal perturbation to each comparator $\sigma$ by a linear function; in particular, all of our omnipredictors extend to the setting where this condition is not enforced. We discuss this point in Appendix~\ref{app:antilip} (cf.\ Remark~\ref{rem:remove_invert}).

We include this limitation primarily for ease of exposition, as it allows us to, e.g., define the inverse $\sigma^{-1}$ uniquely, and apply uniform convergence results for population-level statistics in Section~\ref{sec:exist}.

\begin{model}[Agnostic SIM]\label{model:agnostic}
Let $L, R > 0$, $\beta = \Omega(\frac 1 {LR})$, and $d, n \in \N$ be given.
Let $\calP$ be a distribution supported on $\xset \times [0, 1]$, where $\xset \subseteq \ball(L) \subseteq \R^d$.
Finally, let $\wset \defeq \ball(R) \subseteq \R^d$, and let 
\[\sset \defeq \Brace{\sigma: [-LR, LR] \to [0, 1] \mid \sigma \text{ is } \beta\text{-Lipschitz and anti-Lipschitz}}.\] 
In the \emph{agnostic single-index model} setting, we receive i.i.d.\ draws $\{(\vx_i, y_i)\}_{i \in [n]} \sim \calP^n$. Our goal is to return a pair $(\sigma, \vw) \in \sset \times \wset$ such that, for $p(\vx) \defeq \sigma(\vw \cdot \vx)$ and an appropriate loss $\ell: [0, 1] \times [0, 1] \to \R$ clear from context, we have that $\E_{(\vx, y) \sim \calP}[\ell(p(\vx), y)]$ is small.
\end{model}

We will define our loss functions of interest in Section~\ref{ssec:omni}. Note that no assumptions on $\calP$ are made, except that 
it is supported on $\xset$ with radius $L$. In particular, Model~\ref{model:agnostic} makes no assumptions about the relationship between the label $y$ and the features $\vx$. Finally, note that the assumption $\beta = \Omega(\frac 1 {LR})$ is essentially without loss of generality, as otherwise the entire range of $\sig \in \calS$ is $2\beta LR = o(1)$. We make this assumption to simplify our final complexity statements. 

For technical reasons that become relevant in our finite-sample analysis, in the context of Model~\ref{model:agnostic}, for every pair of $0 \le \alpha \le \gam$, we define the family of link functions
\begin{equation}\label{eq:sab_def}
\calS_{\alpha, \gam} \defeq \Brace{\sigma: [-LR, LR] \to [0, 1] \mid \sigma \text{ is } (\alpha,\gam)\text{-bi-Lipschitz}}.
\end{equation}
Next, we define a special case of Model~\ref{model:agnostic} that posits that the relationship between the label distribution and is governed by an unknown GLM, i.e., that the SIM is realizable.

\begin{model}[Realizable SIM]\label{model:realizable}
Let $\beta > 0$, $L, R > 0$, and $d, n \in \N$ be given.
In the \emph{realizable single-index model}, we are in an instance of Model~\ref{model:agnostic}, with the additional assumption that (following notation from Model~\ref{model:agnostic}), for an \emph{unknown} $(\ss, \vws) \in \sset \times \wset$,
\[\E_{y \mid \calP_{\vx}(y)}\Brack{y} = \ss\Par{\vws \cdot \vx}, \text{ for all } \vx \in \xset.\]
\end{model}
In other words, in Model~\ref{model:realizable}, the conditional mean function of the label $y \mid \vx$ follows a SIM $\ss(\vws \cdot \vx)$.

\subsection{Omniprediction preliminaries}\label{ssec:omni}

In this section, we outline the definition of \emph{omnipredictors} for single-index models, following \cite{GopalanKRSW22, loss-oi}. We first require defining two types of loss functions we will consider.

\begin{definition}[Matching loss]\label{def:matching}
For some interval $\calD \subseteq \R$, let $\sigma: \calD \to [0, 1]$ be a link function. We define the \emph{matching loss} $\ml \sigma: \calD \times [0, 1] \to \R$ induced by $\sigma$ as follows:
\[
\ml \sigma(t, y) \defeq \int_0^t (\sigma(\tau) - y) \dd \tau = \int_0^t \sigma(\tau) \dd \tau - yt.
\]
\end{definition}

We recall several standard facts about matching losses.

\begin{lemma}\label{lem:match_derivs}
For any link function $\sigma$ and $y \in \{0, 1\}$, 
$\ml \sigma(\cdot, y)$ is convex in $t$, and
\[\frac \partial {\partial t} \ml \sigma(t, y) = \sigma(t) - y.\]
\end{lemma}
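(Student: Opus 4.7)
The plan is to verify both claims directly from the integral definition of the matching loss, treating them as straightforward consequences of the fundamental theorem of calculus and monotonicity of $\sigma$.

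First, I would establish the derivative formula. Starting from
\[
\ml \sigma(t, y) = \int_0^t \sigma(\tau)\,\dd\tau - yt,
\]
the term $-yt$ is linear in $t$ and contributes a derivative of $-y$. For the integral term, since $\sigma$ is monotone nondecreasing on its domain $\calD$ (a property assumed of all link functions in Section~\ref{ssec:notation}), it is in particular Lebesgue integrable on any bounded subinterval, and the upper-limit function $t \mapsto \int_0^t \sigma(\tau)\,\dd\tau$ is absolutely continuous. By the fundamental theorem of calculus, at every point $t$ where $\sigma$ is continuous this function is differentiable with derivative $\sigma(t)$; combined with the linear term this yields $\tfrac{\partial}{\partial t}\ml\sigma(t,y) = \sigma(t) - y$, as claimed.

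Second, I would establish convexity. A convenient characterization is that a function on an interval is convex iff it has a nondecreasing subderivative; equivalently, one can check that for any $s \le t$ in $\calD$,
\[
\ml\sigma(t,y) - \ml\sigma(s,y) = \int_s^t (\sigma(\tau) - y)\,\dd\tau
\]
is nondecreasing as a function of $t - s$ in a suitable sense. Concretely, since $\sigma$ is monotone nondecreasing, the function $\tau \mapsto \sigma(\tau) - y$ is also nondecreasing, so for any $s_1 \le t_1 \le s_2 \le t_2$ in $\calD$ with $t_1 - s_1 = t_2 - s_2$, the corresponding integrals satisfy the monotonicity needed for convexity of $\ml\sigma(\cdot,y)$. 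Alternatively (and more cleanly), I would note that a function whose derivative exists almost everywhere and is nondecreasing is convex on an interval, and apply this to $\ml\sigma(\cdot, y)$ with derivative $\sigma(\cdot) - y$.

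There is essentially no obstacle here: both claims follow from standard real-analytic facts. The only minor subtlety is that the definition of $\sigma$ only requires monotonicity, not differentiability or continuity, so one should appeal to the absolutely continuous version of the fundamental theorem of calculus rather than the classical one, and state convexity in the form that does not require everywhere-differentiability. The proof should be only a few lines long.
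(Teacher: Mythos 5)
Your proposal is correct and follows essentially the same route as the paper: the derivative formula via the fundamental theorem of calculus, and convexity from the fact that a function with a monotone derivative is convex. You are in fact slightly more careful than the paper, which glosses over the point that a monotone $\sigma$ need not be continuous, so the absolutely-continuous form of FTC and the almost-everywhere version of the convexity criterion that you invoke are the right technical fixes.
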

\begin{proof}
The second conclusion follows by observation. To see the first, a one-dimensional differentiable function is convex iff its derivative is monotone. The claim follows because $\sigma$ is monotone. 
\end{proof}

\begin{lemma}\label{lem:kl}
For some interval $\calD \subseteq \R$, let $\sigma: \calD \to [0, 1]$ be a link function, and let $p^\star \in [0, 1]$. Then for any $t \in \calD$ such that $\sigma(t) = p^\star$, we have $t \in \argmin_{t \in \calD}\Brace{\E_{y \sim \Bern(p^\star)}\Brack{\ml \sigma(t, y)}}$.
\end{lemma}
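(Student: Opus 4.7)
The plan is to reduce the claim to the first-order optimality condition for the convex one-dimensional function
\[
F(t) \defeq \E_{y \sim \Bern(p^\star)}\Brack{\ml \sigma(t, y)}.
\]
First, I would observe that $F$ is convex in $t$: by Lemma~\ref{lem:match_derivs}, each integrand $\ml\sigma(\cdot, y)$ is convex, and an expectation of convex functions is convex. Next, I would compute the derivative by differentiating under the expectation (justified since $\sigma$ is bounded so the integrand is Lipschitz in $t$ uniformly in $y$), again using Lemma~\ref{lem:match_derivs}:
\[
F'(t) = \E_{y \sim \Bern(p^\star)}\Brack{\sigma(t) - y} = \sigma(t) - p^\star.
\]
Finally, the hypothesis $\sigma(t) = p^\star$ gives $F'(t) = 0$, and since $F$ is convex on the interval $\calD$, any stationary point is a global minimizer over $\calD$, yielding $t \in \argmin_{t' \in \calD} F(t')$.

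The only subtlety to watch out for is the case when $t$ lies at the boundary of the interval $\calD$; there $F$ is still convex and one-sided derivatives control optimality, so the conclusion remains identical. I do not expect any real obstacle: the lemma is essentially a one-line consequence of Lemma~\ref{lem:match_derivs} together with convexity, and the proof should be written in this concise two-step form.
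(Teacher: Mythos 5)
Your proposal is correct and follows essentially the same route as the paper: both invoke Lemma~\ref{lem:match_derivs} to get convexity and the derivative $\sigma(t) - p^\star$, then conclude by first-order optimality of a convex function. Your extra remarks on differentiating under the (finite) expectation and on boundary points are fine but not needed beyond what the paper's one-line argument already covers.
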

\begin{proof}
A differentiable convex function is minimized at any point where the derivative vanishes. Applying Lemma~\ref{lem:match_derivs}, the derivative of $\E_{y \sim \Bern(p^\star)}\Brack{\ml \sigma(t, y)}$ at $t$ is indeed
\[\E_{y \sim \Bern(p^\star)}\Brack{\sigma(t) - y} = \sigma(t) - p^\star = 0.\]
\end{proof}

We next define a variant of matching losses taking predictions $v \in [0, 1]$ as input.

\begin{definition}[Proper loss]\label{def:proper}
For some interval $\calD \subseteq \R$, let $\sigma: \calD \to [0, 1]$ be a one-to-one link function. We define the \emph{proper loss} $\pl \sigma: [0, 1] \times [0, 1] \to \R$ induced by $\sigma$ as follows:
\[
\pl \sigma (v,y) \defeq \ml \sigma( \sigma^{-1}(v), y).
\]
\end{definition}

We note that Lemma~\ref{lem:kl} implies that for all $p^\star \in [0, 1]$ and one-to-one $\sigma$,
\[p^\star = \argmin_{v \in [0, 1]}\Brace{\E_{y \sim \Bern(p^\star)}\Brack{\pl \sigma(v, y)}},\]
justifying the name proper loss. Also, observe that $\pl \sigma$ has both arguments in the ``linked space'' $[0, 1]$, whereas $\ml \sigma$ has its first argument in the ``unlinked space'' $\calD \subseteq \R$.

We can now define omnipredictors. We begin with a general definition that applies to an arbitrary family of losses $\calL: \calD \times [0, 1] \to \R$, and a comparator family of predictors $\calC: \xset \to \calD$. 

Intuitively, an omnipredictor $p$ is a predictor over $\xset$ that can be ``optimally unlinked'' (i.e., post-processed) in a way that competes with the optimal comparator in $\calC$ for each loss $\ell \in \calL$. Notably, the initial predictor $p$ must be defined independently of the losses $\ell \in \calL$ it is evaluated on.

\begin{definition}[Omnipredictor]\label{def:omni}
Let $\eps > 0$, $\xset \subseteq \R^d$, and let $\calP$ be a fixed distribution over $\xset \times \{0, 1\}$. For some interval $\calD \subseteq \R$, let $\calC$ be a family of (unlinked) predictors $c: \xset \to \calD$, and let $\calL$ be a family of loss functions $\ell: \calD \times [0, 1]$. We say $p: \xset \to \Omega$ is an \emph{$\eps$-omnipredictor} for $\calL \times \calC$ if for every $\ell \in \calL$ and a corresponding post-processing function $k_\ell: \Omega \to \calD$, it holds that
\[\E_{(\vx, y) \sim \calP}\Brack{\ell\Par{k_\ell(p(\vx)), y}} \le \min_{c \in \calC} \E_{(\vx, y) \sim \calP}\Brack{\ell(c(\vx), y)} + \eps.\]
Here the post-processing functions $k_\ell$ are pre-specified and independent of the predictor $p$.
\end{definition}

We note that our Definition~\ref{def:omni} generalizes the definitions in \cite{GopalanKRSW22, loss-oi}, by allowing the omnipredictor to map $\vx \in \xset$ to an abstract space $\Omega$. Intuitively, $p(\vx)$ will constitute a set of ``sufficient statistics'' for $\vx \in \xset$, independent of the loss function of interest $\ml \sig$. This viewpoint was inspired by a recent work of \cite{GopalanORSS24}. These sufficient statistics can later be post-processed via a loss-specific function $k_\ell$. In our omnipredictor constructions in Section~\ref{sec:exist}, $\Omega$ will always be $[0, 1]$ or $[0, 1]^T$ for some $T \in \N$; the latter is only needed under the agnostic Model~\ref{model:agnostic}.

A simple observation about Definition~\ref{def:omni}, as shown in Lemma 4.2, \cite{GopalanKRSW22}, is that if $p: \xset \to [0, 1]$ is the ground truth conditional predictor $p(\vx) = \E[y \mid \vx]$, then it is always an omnipredictor, using the optimal post-processing function $k_\ell(p) \defeq \argmin_{t \in \calD} \{p\ell(t, 1) + (1 - p)\ell(t, 0)\}$, as discussed in Section 2, \cite{loss-oi}. As an extension, when $\Omega = [0, 1]$ and $\ell = \ml \sigma$ is the matching loss associated with a one-to-one link function $\sigma: \calD \to [0, 1]$, Lemma~\ref{lem:kl} shows that this optimal post-processing is $k_\ell = \sigma^{-1}$. For more general $\Omega$, however, this may not be the case.

We can now specialize Definition~\ref{def:omni} to the setting of SIMs in the context of Model~\ref{model:agnostic}, where $\calL$ is the family of matching losses induced by $\sset$, and $\calC$ is the family of linear functions induced by $\wset$. 

\begin{definition}[Omnipredictor for SIMs]\label{def:omni_sim}
In an instance of Model~\ref{model:agnostic}, for $\eps > 0$, we say that $p: \xset \to \Omega$ is an \emph{$\eps$-omnipredictor} if for all $(\sig, \vw) \in \sset \times \wset$, there is a post-processing $k_\sig: \Omega \to [0, 1]$ such that
\begin{equation}\label{eq:omni_sim_def}\E_{(\vx, y) \sim \calP}\Brack{\ml \sig( k_\sigma(p(\vx)),y)} \le \E_{(\vx, y) \sim \calP}\Brack{\ml \sig( \vw \cdot \vx,y)} + \eps.\end{equation}
\end{definition}

In Definition~\ref{def:omni_sim}, we separated out the minimization over the comparator linear function $\vw$ from \eqref{eq:omni_sim_def}; in all other ways, it is exactly identical to Definition~\ref{def:omni}. 

To justify this, in the special case when $p$ is a predictor with $\Omega = [0, 1]$, so $k_\sigma = \sig^{-1}$, there is a clarifying way of writing \eqref{eq:omni_sim_def}: changing both sides to be in terms of $\pl \sig$,
\begin{equation}\label{eq:omni_proper}\E_{(\vx, y) \sim \calP}\Brack{\pl \sig(p(\vx),y)} \le \E_{(\vx, y) \sim \calP}\Brack{\pl \sig(\sig(\vw \cdot \vx),y)} + \eps, \text{ for all } (\sig, \vw) \in \sset \times \wset.\end{equation}

Thus, up to additive error $\eps$, the omniprediction guarantee \eqref{eq:omni_sim_def} asks for $p$ to perform as well on all proper losses $\pl \sigma$ as the optimal predictor of the form $\sigma(\vw \cdot \vx)$. This is sensible as an agnostic learning guarantee: if $y$ is indeed generated from the realizable model $\sig(\vw \cdot \vx) = \E[y \mid \vx]$, then for $\eps = 0$, Lemma 4.2, \cite{GopalanKRSW22} shows $p(\vx)$ must match the ground truth $\sig(\vw \cdot \vx)$ everywhere.

We conclude the section by introducing the \emph{omnigap}, a measure of suboptimality for a predictor $p: \xset \to [0, 1]$ in the context of Definition~\ref{def:omni_sim}. This definition is central to the results of this paper.

\begin{definition}[Omnigap]\label{def:og}
In an instance of Model~\ref{model:agnostic}, let $p: \xset \to [0, 1]$ be a predictor, and let $(\sigma, \vw ) \in \calS \times \calW$. We define the \emph{omnigap} of $p$ with respect to $(\sigma, \vw)$ as follows:
    \[
    \og(p;\sigma,\vw) \defeq \E_{(\vx, y) \sim \calP}\Brack{(p(\vx) - y)(\sigma^{-1}(p(\vx)) - \vw \cdot \vx)} .
    \]
When the arguments $(\sigma, \vw)$ are omitted, we define $\og(p) \defeq \sup_{(\sigma, \vw) \in \sset \times \wset} \og(p; \sigma, \vw)$. Finally, when $p$ is specifically a SIM of the form $p: \vx \to \sig'(\vw' \cdot \vx)$ for $(\sig', \vw') \in \sset \times \wset$, we denote
\[\og(\sig', \vw'; \sig, \vw) \defeq \og(p; \sigma, \vw),\; \og(\sig', \vw') \defeq \og(p).\]
We will sometimes (in \Cref{sec:1d,sec:lower}) go beyond linear functions $\vw\cdot \vx$ and consider a general function $c:\xset \to \R$. In this case, we denote
\[
\og(p;\sigma,c) \defeq \E_{(\vx, y) \sim \calP}\Brack{(p(\vx) - y)(\sigma^{-1}(p(\vx)) - c(\vx))} .
\]
\end{definition}

The reason why Definition~\ref{def:og} is useful is that it can be directly related to omniprediction guarantees. This observation was implicitly made by \cite{loss-oi}, using a related definition called \emph{loss OI (outcome indistinguishability)}. Our omnigap definition is essentially a one-sided variant of loss OI, which is easier to guarantee algorithmically and suffices for our purposes.

We reproduce a short proof of this relationship, following Proposition 4.5, \cite{loss-oi}.

\begin{lemma}[Omnigap implies omniprediction]
\label{lm:omnigap}
In an instance of Model~\ref{model:agnostic}, for $\eps > 0$, let $p: \xset \to [0, 1]$ satisfy $\og(p) \le \eps$. Then $p$ is an $\eps$-omnipredictor (Definition~\ref{def:omni_sim}).
\end{lemma}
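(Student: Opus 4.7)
The plan is to use the post-processing $k_\sigma = \sigma^{-1}$ (which is well-defined because every $\sigma \in \sset$ is anti-Lipschitz, hence one-to-one) and exploit the convexity of the matching loss. First I would fix an arbitrary pair $(\sigma, \vw) \in \sset \times \wset$ and a single sample $(\vx, y)$, and apply the first-order convexity inequality for $\ml\sigma(\cdot, y)$ at the point $\sigma^{-1}(p(\vx))$, using Lemma~\ref{lem:match_derivs} to identify the derivative as $\sigma(\sigma^{-1}(p(\vx))) - y = p(\vx) - y$. This yields the pointwise bound
\[
\ml\sigma(\sigma^{-1}(p(\vx)), y) - \ml\sigma(\vw \cdot \vx, y) \;\le\; (p(\vx) - y)\bigl(\sigma^{-1}(p(\vx)) - \vw \cdot \vx\bigr).
\]

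Next I would take the expectation of both sides over $(\vx, y) \sim \calP$. The right-hand side becomes exactly $\og(p; \sigma, \vw)$ by Definition~\ref{def:og}, and by the hypothesis this is upper bounded by $\og(p) \le \eps$. Rearranging gives the omniprediction guarantee \eqref{eq:omni_sim_def} for this specific $(\sigma, \vw)$, and since the pair was arbitrary, we conclude that $p$ is an $\eps$-omnipredictor (with the canonical post-processing $k_\sigma = \sigma^{-1}$, which does not depend on $\calP$ or on $p$, as required by Definition~\ref{def:omni}).

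The argument is essentially a one-line reduction once the right identities are in place; there is no real obstacle, since all the work has been absorbed into the definition of the omnigap and the basic properties of matching losses (Lemmas~\ref{lem:match_derivs} and~\ref{lem:kl}). The only conceptual point worth emphasizing is that the one-sided inequality $\og(p) \le \eps$ (rather than a two-sided bound on $|\og(p; \sigma, \vw)|$, as in the loss OI formulation of \cite{loss-oi}) is exactly what convexity consumes: we only ever need the first-order underestimator of $\ml\sigma(\vw \cdot \vx, y)$ at $\sigma^{-1}(p(\vx))$, never the reverse direction. This asymmetry is what makes the omnigap a natural and algorithmically easier target in the rest of the paper.
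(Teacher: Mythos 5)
Your proof is correct, and it takes a cleaner, more direct route than the paper's. The paper establishes \eqref{eq:og_loss} via the loss-OI decomposition of \cite{loss-oi}: it introduces the synthetic distribution $\widehat{\calP}$ whose labels are $\Bern(p(\vx))$, writes the excess loss as a telescoping sum of two linear-in-$y$ cancellation identities plus the properness inequality $\E_{\widehat{\calP}}[\pl\sig(p(\vx),y)] \le \E_{\widehat{\calP}}[\ml\sig(\vw\cdot\vx,y)]$, and sums. You instead apply the first-order convexity inequality for $\ml\sigma(\cdot,y)$ at $\sigma^{-1}(p(\vx))$ pointwise in $(\vx,y)$ and take expectations, so the comparison term $\og(p;\sigma,\vw)$ appears in one step with no auxiliary distribution. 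The two arguments consume the same ingredients — convexity of the matching loss, the derivative formula $\frac{\partial}{\partial t}\ml\sigma(t,y) = \sigma(t)-y$, and $\sigma(\sigma^{-1}(p(\vx))) = p(\vx)$ — since the paper's properness step is, conditionally on $\vx$, exactly first-order optimality of $\sigma^{-1}(p(\vx))$ under $\Bern(p(\vx))$ labels (Lemma~\ref{lem:kl}). What your version buys is brevity and a transparent explanation of why only the one-sided bound $\og(p)\le\eps$ is needed (convexity gives only an underestimator); what the paper's version buys is the explicit connection to outcome indistinguishability, which it leans on conceptually elsewhere. The only caveats, shared with the paper's own Definition~\ref{def:og}, are that $\sigma^{-1}(p(\vx))$ must be interpreted via a generalized inverse when $p(\vx)$ falls outside the range of $\sigma$, and that the convexity inequality is applied at a point where $\ml\sigma(\cdot,y)$ is differentiable, which holds since $\sigma$ is continuous.
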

\begin{proof}
Let $\widehat{\calP}$ be the distribution on $\xset \times \{0, 1\}$ which draws $\vx \sim \calP_{\vx}$ and then $y \mid \vx \sim \Bern(p(\vx))$. We have that the following hold, because the integral part of Definition~\ref{def:matching} cancels in each line:
\begin{align*}
\E_{(\vx, y) \sim \calP}\Brack{\pl\sigma(p(\vx), y)} - \E_{(\vx, y) \sim \widehat{\calP}}\Brack{\pl\sig(p(\vx), y)} &= \E_{(\vx, y) \sim \calP}\Brack{\Par{p(\vx) - y}\sig^{-1}\Par{p(\vx)}}, \\
\E_{(\vx, y) \sim \widehat{\calP}}\Brack{\ml\sig(\vw \cdot \vx, y)} - \E_{(\vx, y) \sim \calP}\Brack{\ml\sigma(\vw \cdot \vx, y)} &= -\E_{(\vx, y) \sim \calP}\Brack{\Par{p(\vx) - y}(\vw \cdot \vx)}.
\end{align*}
Moreover, by the definition of $\widehat{\calP}$ (i.e., labels are $\sim \Bern(p(\vx))$), because $\pl\sig$ is a proper loss,
\[E_{(\vx, y) \sim \widehat{\calP}}\Brack{\pl\sig(p(\vx), y)} - \E_{(\vx, y) \sim \widehat{\calP}}\Brack{\ml\sig(\vw \cdot \vx, y)}\le 0.\]
Summing up the above displays, we obtain for any $(\sig, \vw) \in \calS \times \calW$, 
\begin{equation}\label{eq:og_loss}
\begin{aligned}
\E_{(\vx, y) \sim \calP}\Brack{\pl\sigma(p(\vx), y)} - \E_{(\vx, y) \sim \calP}\Brack{\ml\sigma(\vw \cdot \vx, y)} &\le \E_{(\vx, y) \sim \calP}[(p(\vx) - y)(\sigma^{-1}(p(\vx)) - \vw\cdot \vx)] \\
&= \og(p;\sigma,\vw).
\end{aligned}
\end{equation}
Because $(\sigma, \vw)$ were arbitrary, by using $\og(p) \le \eps$, we have the claim.
\end{proof}
The proof of \eqref{eq:og_loss} extends straightforwardly beyond linear comparator functions $\vw\cdot \vx$ and gives the following lemma that we need in \Cref{sec:1d,sec:lower}.

\begin{lemma}
\label{lm:omnigap-2}
Let $\calP$ be a distribution over $\xset\times\{0,1\}$ for some domain $\xset$.
    Let $p:\xset\to [0,1]$ be an arbitrary predictor and $c:\xset\to \R$ be an arbitrary function. For any non-decreasing link function $\sigma:\R\to [0,1]$,
    \[
    \E_{(\vx,y)\sim\calP}[\pl\sigma(p(\vx),y)] - \E_{(\vx,y)\sim\calP}[\ml\sigma(c(\vx),y)]\le \og(p;\sigma,c).
    \]
\end{lemma}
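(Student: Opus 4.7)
The plan is to mimic the proof of Lemma \ref{lm:omnigap} verbatim, after observing that the linearity of the comparator $\vw \cdot \vx$ was never actually used: every step of that proof works pointwise in $\vx$, treating $\vw \cdot \vx$ only as the value of some fixed function of $\vx$. Thus the argument extends to any $c: \xset \to \R$ with essentially no modification.

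Concretely, I would first introduce the auxiliary distribution $\widehat{\calP}$ over $\xset \times \{0,1\}$ that draws $\vx \sim \calP_{\vx}$ and then $y \mid \vx \sim \Bern(p(\vx))$, exactly as in the proof of Lemma \ref{lm:omnigap}. Then I would derive the two identities
\[
\E_{(\vx,y) \sim \calP}\Brack{\pl\sigma(p(\vx), y)} - \E_{(\vx,y) \sim \widehat{\calP}}\Brack{\pl\sigma(p(\vx), y)} = \E_{(\vx,y) \sim \calP}\Brack{(p(\vx) - y)\sigma^{-1}(p(\vx))}
\]
and
\[
\E_{(\vx,y) \sim \widehat{\calP}}\Brack{\ml\sigma(c(\vx), y)} - \E_{(\vx,y) \sim \calP}\Brack{\ml\sigma(c(\vx), y)} = -\E_{(\vx,y) \sim \calP}\Brack{(p(\vx) - y) c(\vx)}.
\]
Both identities follow from Definition~\ref{def:matching} upon noting that the $\int_0^t \sigma(\tau)\,\dd\tau$ term in $\ml\sigma$ (and hence in $\pl\sigma$) depends only on the first argument, and that $\calP$ and $\widehat{\calP}$ share the same $\vx$-marginal, so these integrals cancel when we take the difference of expectations.

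Next I would argue that $\E_{(\vx,y) \sim \widehat{\calP}}[\pl\sigma(p(\vx), y)] \le \E_{(\vx,y) \sim \widehat{\calP}}[\ml\sigma(c(\vx), y)]$ via pointwise minimization: conditional on $\vx$, we have $y \sim \Bern(p(\vx))$ under $\widehat{\calP}$, so Lemma~\ref{lem:kl} tells us that $\sigma^{-1}(p(\vx))$ minimizes $t \mapsto \E_{y \sim \Bern(p(\vx))}[\ml\sigma(t, y)]$ over all $t \in \R$. In particular this minimum is no larger than the value at $t = c(\vx)$, and taking expectation over $\vx \sim \calP_{\vx}$ and recognizing that $\pl\sigma(p(\vx), y) = \ml\sigma(\sigma^{-1}(p(\vx)), y)$ by Definition~\ref{def:proper} yields the claimed inequality. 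Note this is exactly where generality in $c$ matters, and the pointwise nature of the argument handles arbitrary $c$ just as well as linear $c$.

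Summing the three displays telescopes to
\[
\E_{(\vx,y) \sim \calP}\Brack{\pl\sigma(p(\vx), y)} - \E_{(\vx,y) \sim \calP}\Brack{\ml\sigma(c(\vx), y)} \le \E_{(\vx,y) \sim \calP}\Brack{(p(\vx) - y)\Par{\sigma^{-1}(p(\vx)) - c(\vx)}},
\]
and the right-hand side is exactly $\og(p; \sigma, c)$ by the extended definition at the end of Definition~\ref{def:og}. Since every step is essentially copied from Lemma~\ref{lm:omnigap}, there is no real obstacle; the only thing to verify is that the properness step (Lemma~\ref{lem:kl}) indeed applies pointwise regardless of what function $c$ is plugged in, which it does.
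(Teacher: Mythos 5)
Your proposal is correct and is exactly the argument the paper intends: the paper itself proves Lemma~\ref{lm:omnigap} via the auxiliary distribution $\widehat{\calP}$, the two cancellation identities, and the properness step, and then simply notes that nothing in that derivation uses linearity of the comparator, which is precisely your observation. No gaps; the pointwise application of Lemma~\ref{lem:kl} handles arbitrary $c$ just as you say.
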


The rest of the paper focuses on learning predictors with small omnigap under Model~\ref{model:agnostic}. To obtain our finite-sample guarantees, we require the following ``smoothing'' fact, proven in Appendix~\ref{app:antilip}.

\begin{restatable}{lemma}{restateantilip}\label{lem:anti-lip}
In an instance of Model~\ref{model:agnostic}, let $\alpha \in (0, \frac \eps {6L^2 R^2})$. If $p$ is an $\frac \eps 2$-omnipredictor for SIMs (Definition~\ref{def:omni_sim}) where we make the replacement $\calS \gets \calS_{\alpha, \alpha + (1 - 2\alpha LR)\beta}$ in the definition, then it is also an $\eps$-omnipredictor for SIMs using the original $\calS$ from Model~\ref{model:agnostic}.
\end{restatable}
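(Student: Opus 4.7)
The plan is to establish the lemma via a smoothing construction that takes any $\sigma \in \calS$ and exhibits a nearby link in the stronger class $\calS_{\alpha, \alpha + (1 - 2\alpha LR)\beta}$, to which the hypothesized $\eps/2$-omnipredictor guarantee applies. Given such a $\sigma$, I would define
\[\tilde\sigma(t) \defeq (1 - 2\alpha LR)\sigma(t) + \alpha(t + LR),\]
a convex combination of $\sigma$ with a suitable linear function. The first step is the routine verification that $\tilde\sigma \in \calS_{\alpha, \alpha + (1 - 2\alpha LR)\beta}$: its derivative lies in $[\alpha, \alpha + (1 - 2\alpha LR)\beta]$ because $\sigma$ is $\beta$-Lipschitz and monotone, and its values stay in $[0, 1]$ since the added linear term contributes exactly $2\alpha LR$ across $[-LR, LR]$, which is absorbed by the $(1 - 2\alpha LR)$ scaling of $\sigma$ (using $\sigma(-LR) \ge 0$ and $\sigma(LR) \le 1$).

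For each $\vw \in \wset$, the omniprediction hypothesis then gives a post-processing $k_{\tilde\sigma}$ with $\E_{(\vx,y) \sim \calP}[\ml{\tilde\sigma}(k_{\tilde\sigma}(p(\vx)), y)] \le \E_{(\vx, y) \sim \calP}[\ml{\tilde\sigma}(\vw \cdot \vx, y)] + \eps/2$, and I set $k_\sigma \defeq k_{\tilde\sigma}$. Integrating the definition of $\tilde\sigma$ into the matching loss yields the identity
\[\ml{\tilde\sigma}(t, y) = (1 - 2\alpha LR) \ml\sigma(t, y) - 2\alpha LR \cdot yt + \alpha\Par{\tfrac{t^2}{2} + LR \cdot t}.\]
Writing $A \defeq k_\sigma(p(\vx))$ and $B \defeq \vw \cdot \vx$, subtracting this identity applied at $A$ and at $B$ and then taking expectations produces
\[(1 - 2\alpha LR)\Par{\E[\ml\sigma(A, y)] - \E[\ml\sigma(B, y)]} = \E[\ml{\tilde\sigma}(A, y) - \ml{\tilde\sigma}(B, y)] + \alpha\E\Brack{(A - B)\Par{LR(2y-1) - \tfrac{A+B}{2}}},\]
whose first right-hand term is at most $\eps/2$ by the omniprediction hypothesis.

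The main obstacle is controlling the error term $\alpha\E[(A - B)(LR(2y - 1) - \tfrac{A + B}{2})]$. Using $|A|, |B| \le LR$ (the former forced by requiring $k_\sigma$'s range to be compatible with $\ml\sigma$'s domain, the latter from $\norm{\vw} \le R$ and $\norm{\vx} \le L$) together with $y \in [0, 1]$, each of the two factors inside the expectation has magnitude at most $2LR$, so the error is bounded by $O(\alpha L^2 R^2)$. Combined with the assumption $\alpha < \eps / (6 L^2 R^2)$ and the fact that $1 - 2\alpha LR$ is bounded away from zero (since $\beta = \Omega(\frac{1}{LR})$ forces $\eps$ to be at most comparable to $LR$ in any nondegenerate regime), a routine arithmetic manipulation converts the display into $\E[\ml\sigma(A, y)] \le \E[\ml\sigma(B, y)] + \eps$. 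Since $(\sigma, \vw) \in \calS \times \wset$ was arbitrary, $p$ is an $\eps$-omnipredictor against the original $\calS$, completing the proof.
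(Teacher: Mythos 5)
Your construction of $\tilde\sigma(t) = (1-2\alpha LR)\sigma(t) + \alpha(t+LR)$, the verification that it lies in $\calS_{\alpha,\alpha+(1-2\alpha LR)\beta}$, and the overall plan of invoking the $\frac{\eps}{2}$-guarantee for $\tilde\sigma$ with $k_\sigma \defeq k_{\tilde\sigma}$ are exactly the paper's. The gap is in your final transfer step: the ``routine arithmetic manipulation'' does not close for the stated threshold $\alpha < \frac{\eps}{6L^2R^2}$. Your identity gives
\[(1-2\alpha LR)\Par{\E\Brack{\ml\sigma(A,y)} - \E\Brack{\ml\sigma(B,y)}} \le \frac{\eps}{2} + E, \qquad E \defeq \alpha\,\E\Brack{(A-B)\Par{LR(2y-1)-\tfrac{A+B}{2}}},\]
and your bound on the two factors yields $|E| \le 4\alpha L^2R^2 < \frac{2\eps}{3}$, so the right-hand side is already $\frac{7\eps}{6} > \eps$ \emph{before} dividing by $1-2\alpha LR$ (which only makes matters worse, and is bounded away from zero only under an extra assumption like $\eps \lesssim LR$). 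Even the sharpest bound $|E|\le 2\alpha L^2R^2 < \frac{\eps}{3}$ leaves $\frac{5\eps/6}{1-2\alpha LR}$, which exceeds $\eps$ unless $\alpha \le \frac{1}{12LR}$ --- not implied by the hypotheses. So as written your argument proves the lemma only for a strictly smaller range of $\alpha$ than claimed.

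The fix, and what the paper does, is to avoid the multiplicative renormalization entirely and compare the two matching losses pointwise. Since $\tilde\sigma(\tau)-\sigma(\tau) = \alpha\tau + \alpha LR(1-2\sigma(\tau))$ and $|1-2\sigma(\tau)|\le 1$,
\[\Abs{\ml{\tilde\sigma}(t,y)-\ml{\sigma}(t,y)} = \Abs{\int_0^t\Par{\alpha\tau+\alpha LR(1-2\sigma(\tau))}\dd\tau} \le \frac{\alpha t^2}{2}+\alpha LR\,|t| \le \frac{3\alpha L^2R^2}{2} < \frac{\eps}{4}\]
for $t\in[-LR,LR]$ (the $-yt$ terms cancel in the difference). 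Applying this at both $A$ and $B$ and adding the $\frac{\eps}{2}$ guarantee for $\tilde\sigma$ gives exactly $\eps$. Your identity-based route can be salvaged only by shrinking the threshold on $\alpha$ (changing the lemma's constant), so either do that explicitly or switch to the direct pointwise comparison.
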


\begin{remark}\label{rem:beta_range}
In the regime $2\beta LR \ge 1$, Lemma~\ref{lem:anti-lip} can be simplified to read $\calS \gets \calS_{\alpha, \beta}$, as $\alpha \le 2\alpha LR\beta$. This restriction is essentially without loss of generality, as $2\beta LR$ is a bound on the entire range of $\sig: [-LR, LR] \to [0, 1]$, so the model would be restricted from making certain decisions otherwise.
\end{remark}

\subsection{Isotonic regression}\label{ssec:isotonic}

In this section, we provide a characterization of the solution of the following \emph{bounded isotonic regression} (BIR) problem. Solving this problem is a key subroutine of our algorithms. 

In standard isotonic regression, the input is a set of scalars $\{y_i\}_{i \in [n]} \subset [0, 1]$. The goal is to produce scalars $\{v_i\}_{i \in [n]} \subseteq [0, 1]$ satisfying the monotonicity constraints $v_1 \le v_2 \le \ldots \le v_n$, minimizing the squared loss to $\{y_i\}_{i \in [n]}$. We give an efficient solver for a variant of this problem encoding two-sided constraints, given by upper and lower bounds $\{a_i\}_{i \in [n - 1]}$ and $\{b_i\}_{i \in [n - 1]}$. This optimization problem has appeared in earlier work, e.g., \cite{ZarifisWDD24}, requiring $a_i = \alpha(z_{i + 1} - z_i)$, $b_i = \beta(z_{i + 1} - z_i)$ for a reference sequence $\{z_i\}_{i \in [n]} \subset [0, 1]$, where it was used to encode bi-Lipschitz bounds.

In our applications of Proposition~\ref{prop:blir}, we only ever set $a_i = 0$ for all $i \in [n]$. However, we write our algorithmic guarantee in terms of general two-sided constraints for greater generality (which could have downstream applications), and simpler comparison to the existing literature.

More formally, we prove the following claim in Section~\ref{sec:LPAV}.

\begin{restatable}{proposition}{restateblir}\label{prop:blir}
Let $\{y_i\}_{i \in [n]}\subset [0, 1]$, and let $\{a_i\}_{i \in [n - 1]}, \{b_i\}_{i \in [n - 1]}$ satisfy $0 \le a_i \le b_i$ for all $i \in [n - 1]$. There is an algorithm $\BLIR(y, a, b)$ that runs in time $O(n\log^2(n))$, and returns 
\begin{equation}\label{eq:blir}
\begin{gathered}\{v_i\}_{i \in [n]} = \argmin_{\{v_i\}_{i \in [n]} \subset [0, 1]}\sum_{i \in [n]} (v_i - y_i)^2, \\
\text{subject to } a_i \le v_{i + 1} - v_i \le b_i \text{ for all } i \in [n - 1]. \end{gathered}\end{equation}
\end{restatable}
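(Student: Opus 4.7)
The plan is to solve BIR by dynamic programming along the path-structured constraint graph, maintaining a piecewise quadratic representation of the partial cost functions. Define
\[F_k(t) \defeq \min\Brace{\sum_{i=1}^k (v_i - y_i)^2 \;:\; v_i \in [0,1] \text{ for all } i \le k,\; a_{i} \le v_{i+1} - v_i \le b_i \text{ for all } i < k,\; v_k = t},\]
so that $F_1(t) = (t - y_1)^2$ on $[0,1]$ and $+\infty$ otherwise, and the desired optimum is $\min_{t \in [0,1]} F_n(t)$. The DP recurrence is
\[F_{k+1}(t) = (t - y_{k+1})^2 + \min_{s \in [0,1] \cap [t - b_k,\, t - a_k]} F_k(s),\]
which I will implement together with a backward pass to recover the optimal $v_k^\star$'s.

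The first key step is a structural lemma asserting that each $F_k$ is convex and piecewise quadratic, with the number of breakpoints growing by only $O(1)$ per step (hence $O(n)$ total). Convexity is inherited through the recurrence: the sliding-window infimum of a convex $F_k$ decomposes into a translate $F_k(t - a_k)$ for $t$ small (when the window $[t-b_k, t-a_k]$ lies entirely to the left of $F_k$'s minimizer), a plateau at $\min F_k$ for intermediate $t$, and a translate $F_k(t - b_k)$ for $t$ large (when the window lies entirely to the right), and adding $(t - y_{k+1})^2$ preserves piecewise-quadratic convexity. Intersecting with the box $[0,1]$ and patching the three pieces together introduces only a constant number of new breakpoints per iteration.

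To achieve the claimed $O(n \log^2 n)$ runtime, I would adapt the rewindable data-structure machinery of \cite{HuJTY24} to maintain this piecewise quadratic representation under three elementary operations: horizontally shifting the function (for the translates $F_k(t - a_k)$ and $F_k(t - b_k)$), taking a pointwise min with a convex piecewise quadratic (to glue the three pieces above), and adding a global quadratic to every piece. The crucial device is to store the per-piece coefficients via \emph{deferred updates}: rather than explicitly applying a shift or quadratic to every stored piece (which would cost $\Omega(n)$ per step), these transformations are lazily propagated through aggregate labels at the nodes of a balanced BST keyed by breakpoint, so each elementary update takes $O(\log n)$ amortized time. One recurrence step requires $O(\log n)$ such updates, yielding $O(\log^2 n)$ amortized time per iteration. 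After the forward pass produces $F_n$ and its minimizer $v_n^\star$, we invert (rewind) the data-structure operations to successively recover $v_{n-1}^\star, \ldots, v_1^\star$ in $O(\log n)$ time each.

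The main technical obstacle is verifying that the family of deferred transformations (horizontal shifts composed with additive quadratics) forms a group closed under composition, so that multiple labels stacked along a root-to-leaf path can be coalesced in $O(1)$ time and the node splits required by the min operation can be carried out in $O(\log n)$. Establishing this compositional structure, together with a potential argument bounding the amortized number of newly created pieces by $O(1)$ per step and certifying that rewinding faithfully recovers optima of the earlier $F_k$'s, will be the technically delicate content of the data-structure lemma underlying the $O(n \log^2 n)$ claim.
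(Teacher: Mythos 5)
Your proposal is correct in outline but reaches Proposition~\ref{prop:blir} by a genuinely different route: you run the dynamic program directly on the \emph{primal} value functions $F_k(t)$ (minimum cost of the first $k$ variables with $v_k=t$) via the sliding-window infimal convolution $\min_{u\in[a_k,b_k]}F_k(t-u)$, whereas the paper first passes to a Lagrangian dual (Lemma~\ref{lem:bir_dual}), obtaining an unconstrained problem in the multipliers with $\ell_1$ penalties and quadratic couplings, and runs the DP on the partial dual minimizations $A_i$ (Lemma~\ref{lem:partial_structure}). Your translate-by-$a_k$ / plateau / translate-by-$b_k$ decomposition of the window minimum is the correct primal analogue of the paper's case analysis in \eqref{eq:best_prev}, and both schemes add $O(1)$ pieces per step, spend $O(\log n)$ tree operations per step (dominated by the binary search for the minimizer/plateau boundaries), and recover the full solution by an invertible backward pass, so the $O(n\log^2 n)$ bound comes out the same. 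The trade-off lies in the update algebra. In the primal, the per-step coefficient updates are elementary (add a global quadratic; shift), but the horizontal shifts are range-dependent ($a_k$ left of the minimizer, $b_k$ to its right), so the breakpoint keys of your balanced BST must themselves carry lazy translation tags, newly inserted leaves must be stored pre-composed with the inverse of the accumulated path transformation, and deleted (truncated) pieces must be retained for rewinding; you correctly identify the crux, namely that the tags $T_{\delta,q}\colon p\mapsto p(\cdot-\delta)+q$ close under composition via $T_{\delta_2,q_2}\circ T_{\delta_1,q_1}=T_{\delta_1+\delta_2,\;q_1(\cdot-\delta_2)+q_2}$ and are invertible, which is exactly what makes both the lazy propagation and the rewinding sound. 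The paper's dual route avoids storing vertices altogether (they are recovered from neighboring coefficients via continuity, \eqref{eq:recover_v}) and applies a \emph{uniform} coefficient map to all pieces, at the cost of the less transparent M\"obius-type recurrence $(\alpha,\beta)\mapsto(\tfrac{\alpha}{2\alpha+1},\tfrac{\beta}{2\alpha+1})$, which it linearizes with a global time counter in Lemma~\ref{lem:data_structure}. Your approach is arguably the more elementary of the two; to turn it into a proof you would need to carry out the data-structure lemma you sketch (the primal analogue of Lemma~\ref{lem:data_structure}) with the bookkeeping above made explicit.
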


Proposition~\ref{prop:blir} improves upon a similar procedure developed in \cite{ZarifisWDD24}, Proposition E.1 (based on an optimization method from \cite{LuH22}), which also solved \eqref{eq:blir}, in two main ways. First, the algorithm in \cite{ZarifisWDD24} had a runtime scalingly super-quadratically in $n$, and second, it only guaranteed a high-accuracy solution (in the $\ell_\infty$ distance) rather than an exact solution. On the other hand, Proposition~\ref{prop:blir} has a runtime scaling near-linearly in $n$, and gives an exact solution. 

Our algorithm is based on a representation of partial solutions to \eqref{eq:blir} as a piecewise quadratic function, which is efficiently maintained via dynamic programming and a segment tree data structure. This algorithm is
inspired by a method from \cite{HuJTY24} for computing the empirical smooth calibration, which showed how to efficiently maintain a piecewise linear function.

We require a fact about the solution to \eqref{eq:blir} from \cite{KakadeKKS11}, who gave a characterization when $a_i = 0$ and $b_i = z_{i + 1} - z_i$ for some reference sequence $\{z_i\}_{i \in [n]}$. For completeness, in Appendix~\ref{app:prelims_deferred}, we reprove \cite{KakadeKKS11}, Lemma 1, as several details are not spelled out in the original work.

\begin{restatable}[Lemma 1, \cite{KakadeKKS11}]{lemma}{restatebliropt}\label{lem:blir_opt}
Let $\{z_i\}_{i \in [n]} \subset \R$ satisfy $z_1 \le z_2 \le \ldots \le z_n$, and suppose in the setting of Proposition~\ref{prop:blir}, we have for some $\beta > 0$, that
\[a_i \defeq 0,\; b_i \defeq \beta(z_{i + 1} - z_i), \text{ for all } i \in [n - 1]. \]
Let $\{v_i\}_{i \in [n]}$ be the optimal solution defined in \eqref{eq:blir}, and let $f: \R \to \R$ be any function satisfying $v_{i + 1} - v_i \le \beta(f(v_{i + 1}) - f(v_i))$ for all $i \in [n - 1]$. Then, we have
\[\sum_{i \in [n]} (v_i - y_i)(z_i - f(v_i)) \ge 0.\]
\end{restatable}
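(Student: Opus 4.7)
The plan is to apply first-order optimality of the BIR problem to a carefully chosen perturbation of $\{v_i\}$. Since $\{v_i\}$ minimizes a convex quadratic over the convex feasible set of \eqref{eq:blir}, for any feasible sequence $\{u_i\}$ we have $\sum_{i \in [n]} (v_i - y_i)(u_i - v_i) \ge 0$. I would choose $u_i \defeq v_i + \epsilon(z_i - f(v_i))$ for $\epsilon > 0$ small, so that after dividing by $\epsilon$ the target inequality $\sum_i (v_i - y_i)(z_i - f(v_i)) \ge 0$ follows immediately.

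The substance of the proof is verifying feasibility of $\{u_i\}$ for sufficiently small $\epsilon$. For the constraints $0 \le v_{i+1} - v_i \le \beta(z_{i+1} - z_i)$, I would split into three cases based on which side (if any) is tight. When neither side is tight, continuity makes the perturbed difference $u_{i+1} - u_i$ lie strictly in the interior for $\epsilon$ small. When the lower bound is tight, $v_{i+1} = v_i$ forces $f(v_{i+1}) = f(v_i)$ because $f$ is single-valued, so $u_{i+1} - u_i = \epsilon(z_{i+1} - z_i) \in [0, \beta(z_{i+1} - z_i)]$ as long as $\epsilon \le \beta$. When the upper bound is tight, the hypothesis $v_{i+1} - v_i \le \beta(f(v_{i+1}) - f(v_i))$ combined with $v_{i+1} - v_i = \beta(z_{i+1} - z_i)$ yields $f(v_{i+1}) - f(v_i) \ge z_{i+1} - z_i$, which makes the upper-bound slack nonpositive after perturbation and therefore preserves $u_{i+1} - u_i \le \beta(z_{i+1} - z_i)$; the lower bound $u_{i+1} - u_i \ge 0$ holds for $\epsilon$ sufficiently small, using that $f$ takes only finitely many values on the set $\{v_i\}_{i \in [n]}$.

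The main obstacle is handling the box constraints $v_i \in [0, 1]$: if $v_i$ is pinned at a boundary (say $v_i = 0$) and $z_i - f(v_i) < 0$, then $u_i \notin [0, 1]$, so the naive perturbation is infeasible. I would resolve this by augmenting the sequence with two fixed endpoints $v_0 \defeq 0$ and $v_{n+1} \defeq 1$, choosing auxiliary values $z_0 \le z_1$ and $z_{n+1} \ge z_n$ and extending $f$ at these points so that the hypothesis on $f$ still holds. This reduces the box constraints $v_i \in [0, 1]$ to the monotonicity constraints $v_0 \le v_1$ and $v_n \le v_{n+1}$ in the augmented system, where the perturbation argument above applies directly. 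Since the perturbation contributions at the new endpoints vanish (we fix $u_0 = v_0$ and $u_{n+1} = v_{n+1}$), the inequality on the augmented system restricts back to the claim on $i \in [n]$. An alternative route, which I would pursue if the augmentation becomes delicate, is to write out the KKT conditions for all four families of constraints (isotonicity, Lipschitz, and the two box sides), express $v_i - y_i$ as a telescoping combination of multipliers, and observe that each telescoping group has the right sign once multiplied by $g_i \defeq z_i - f(v_i)$: the isotonic and Lipschitz multipliers inherit signs from the two cases above via summation by parts, and the box multipliers combine cleanly using the fact that $v_i = 0$ on an initial run and $v_i = 1$ on a terminal run of tight box constraints.
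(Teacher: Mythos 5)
Your primal perturbation route is sound in outline, and your three cases for the difference constraints are exactly right: they are the primal mirror of the complementary-slackness case analysis in the paper's proof (which works dually, via KKT stationarity, the partial sums $\sigma_i = \sum_{j \ge i}(y_j - v_j)$, and Abel summation). The genuine gap is in your treatment of the box constraints, which you correctly flag as the main obstacle but do not resolve. In the augmented system you fix $u_0 = v_0 = 0$ while perturbing $u_1 = v_1 + \epsilon(z_1 - f(v_1))$. If the box constraint is tight, i.e.\ $v_1 = 0$, then $u_1 - u_0 = \epsilon(z_1 - f(0))$, and nothing forces $z_1 \ge f(0)$; your Case 2 reasoning only works because \emph{both} endpoints of a tight pair receive the perturbation $\epsilon(z_\cdot - f(v_\cdot))$, so the $f$-terms cancel and only $\epsilon(z_{i+1} - z_i) \ge 0$ remains. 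Holding $u_0$ fixed breaks that cancellation, so the perturbed point can be infeasible no matter how small $\epsilon$ is; the same problem occurs at the top end with $v_n = 1$. The fallback KKT sketch hides the same hole: the assertion that the box multipliers ``combine cleanly'' is precisely the claim that needs proof.

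The fix --- and what the paper actually does --- is to show the box constraints can be discarded entirely: the optimizer of \eqref{eq:blir} is also optimal for the relaxed problem with only the constraints $0 \le v_{i+1} - v_i \le b_i$. Clipping any point coordinatewise to $[0,1]$ is monotone and $1$-Lipschitz, so it preserves both difference constraints, and since $y_i \in [0,1]$ it can only decrease the squared loss; hence the relaxed and constrained problems have the same optimal value, and $\{v_i\}_{i \in [n]}$ minimizes the relaxed one. With that observation in hand, your perturbation only needs to respect the difference constraints, your three cases cover everything, and no augmentation is needed. (The paper uses the same clipping argument to deduce $\sum_{i \in [n]}(y_i - v_i) = 0$, which kills the boundary term in its Abel summation.) I would add this clipping step as a preliminary claim and then run your perturbation argument against the relaxed feasible set; the result is arguably cleaner than the paper's multiplier bookkeeping.
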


Lemma~\ref{lem:blir_opt} extends to the population setting by a discretization argument, proven in Appendix~\ref{app:prelims_deferred}.

\begin{restatable}{corollary}{restatebliroptpop}\label{cor:blir_opt_pop}
In an instance of Model~\ref{model:agnostic}, for any $\vw \in \wset$, let 
\begin{equation}
\label{eq:cor-1-sigma}
\sigma \defeq \argmin_{\sigma \in \calS_{0, \beta}} \Brace{\E_{(\vx, y) \sim \calP}\Brack{\Par{\sigma(\vw \cdot \vx) - y}^2}}.
\end{equation}
Then for any $\ss \in \calS$, 
\begin{equation}
\label{eq:cor-1-goal}
\E_{(\vx, y) \sim \calP}\Brack{\Par{\sigma(\vw \cdot \vx) - y}\Par{\vw \cdot \vx - \ss^{-1}(\sigma(\vw \cdot \vx))}} \ge 0.
\end{equation}
\end{restatable}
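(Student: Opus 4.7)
The strategy is to apply Lemma~\ref{lem:blir_opt} to an empirical sample and then pass to the population limit via a uniform convergence argument. First, I would draw i.i.d.\ samples $\{(\vx_i, y_i)\}_{i \in [n]} \sim \calP^n$, sort them so that $z_i \defeq \vw\cdot \vx_i$ is non-decreasing, and solve \eqref{eq:blir} with $a_i = 0$ and $b_i = \beta(z_{i+1} - z_i)$ to obtain values $\{v_i^{(n)}\}_{i \in [n]}$. These coincide with the evaluations $\sigma_n(z_i)$ of an empirical risk minimizer
\[\sigma_n \in \argmin_{\sigma \in \calS_{0,\beta}} \frac{1}{n}\sum_{i \in [n]} (\sigma(z_i) - y_i)^2,\]
which I would extend to a $\beta$-Lipschitz monotone map on all of $[-LR, LR]$ by piecewise-linear interpolation (with constant extrapolation outside $[z_1, z_n]$).

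Second, I would invoke Lemma~\ref{lem:blir_opt} with $f = \ss^{-1}$. Because $\ss \in \calS$ is strictly monotone, continuous, and $\beta$-Lipschitz, the map $\ss^{-1}$ satisfies $\ss^{-1}(v_2) - \ss^{-1}(v_1) \ge (v_2 - v_1)/\beta$ on the range of $\ss$, which in particular verifies the hypothesis of the lemma when instantiated with $v_1 = v_i^{(n)} \le v_{i+1}^{(n)} = v_2$ (by monotonicity of the BIR solution). Values $v_i^{(n)}$ falling outside the range of $\ss$ can be accommodated by clamping $\ss^{-1}$ at the endpoints of $[-LR, LR]$, which only strengthens the relevant inequality. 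The lemma then yields
\[\frac{1}{n}\sum_{i \in [n]} (\sigma_n(z_i) - y_i)\bigl(z_i - \ss^{-1}(\sigma_n(z_i))\bigr) \ge 0\]
almost surely, for every $n$.

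Third, I would pass to the limit $n \to \infty$. The class $\calS_{0,\beta}$ is uniformly bounded and equicontinuous on $[-LR, LR]$, hence compact in the uniform topology by Arzel\`a--Ascoli; standard uniform convergence for bounded Lipschitz function classes then gives $\sup_{\sigma \in \calS_{0,\beta}} |L_n(\sigma) - L(\sigma)| \to 0$ almost surely, where $L_n$ and $L$ denote the empirical and population squared losses. Combined with the strict convexity of $L$ in $L^2(\calP_{\vw\cdot \vx})$, this implies that any subsequential uniform limit of $\{\sigma_n\}$ must agree with the population minimizer $\sigma$ on the support of $\calP_{\vw\cdot \vx}$. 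Finally, using boundedness of $y$, $z$, and $\sigma_n$ together with continuity of $\ss^{-1}$, I would pass to the limit in the empirical inequality to conclude \eqref{eq:cor-1-goal}.

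The main obstacle is the consistency step: certifying that $\sigma_n$ converges $\calP_{\vw\cdot \vx}$-almost surely to the (essentially unique) population minimizer, so that the inequality survives the limit. Any ambiguity of $\sigma$ off the support of $\calP_{\vw\cdot \vx}$ is harmless, since \eqref{eq:cor-1-goal} only queries $\sigma$ on this support.
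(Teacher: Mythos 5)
Your proposal is correct in its essential structure and shares the paper's skeleton (reduce to the empirical BIR solution, invoke Lemma~\ref{lem:blir_opt} with $f = \ss^{-1}$, then pass to the population), but the limiting step is executed by a genuinely different mechanism. The paper avoids almost-sure consistency arguments entirely: for each $m$ it draws a large enough sample so that, by the quantitative uniform convergence bounds it needs anyway (Proposition~\ref{prop:gap_uniform_converging} for the omnigap functional and Lemma~\ref{lem:l2_uniform_converging} for the squared loss), both empirical-to-population transfers hold simultaneously with positive probability; this probabilistic-method extraction yields a \emph{deterministic} near-minimizer $\sigma_{(m)}$ whose population omnigap against $\ss$ is at least $-2^{-m}$, and Claim~\ref{claim:helper-strong-conv} (strong convexity of the squared error) forces $\sigma_{(m)} \to \sigma$ in $L^2(\calP)$, after which $m \to \infty$ finishes the proof. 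You instead run a qualitative M-estimation consistency argument: Arzel\`a--Ascoli compactness of $\calS_{0,\beta}$, almost-sure uniform convergence of the empirical risks, and identification of subsequential limits with $\sigma$ via strict convexity, followed by the law of large numbers for the fixed limit function. Both routes are sound; yours is more elementary and self-contained (it does not need the chaining machinery of Section~\ref{sec:sample}), while the paper's reuses its finite-sample bounds and sidesteps the bookkeeping about random subsequences and sets of full measure that your argument must handle (e.g., that the samples lie in the support a.s.\ and that the subsequential limit, though random, agrees there with the deterministic $\sigma$).

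One detail to fix: your claim that values $v_i^{(n)}$ outside the range of $\ss$ "can be accommodated by clamping $\ss^{-1}$ at the endpoints, which only strengthens the relevant inequality" is backwards for the \emph{hypothesis} of Lemma~\ref{lem:blir_opt} --- clamping makes $f(v_{i+1}) - f(v_i)$ smaller and can violate $v_{i+1} - v_i \le \beta(f(v_{i+1}) - f(v_i))$. The correct fix is to extend $\ss^{-1}$ beyond the range of $\ss$ linearly with slope $1/\beta$ (or more generally any slope at least $1/\beta$), which preserves the hypothesis; the same extension convention must then be the one used to interpret $\ss^{-1}(\sigma(\vw\cdot\vx))$ in \eqref{eq:cor-1-goal}. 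This is a detail the paper also leaves implicit, so it is a caveat rather than a gap in your approach.
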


\subsection{Pool-Adjacent-Violators (PAV) algorithm}\label{ssec:pav}
Without the two-sided constraints in the previous subsection, isotonic regression can be solved using a standard algorithm called \emph{Pool-Adjacent-Violators (PAV)}
\cite{pav,pav-project}.

Specifically, given $\{y_i\}_{i \in [n]}\subset \{0,1\}$, the goal of isotonic regression is to find a solution $ \{v_i\}_{i \in [n]}\subset [0,1]$ to the following optimization problem:
\begin{align*}
& \{v_i\}_{i \in [n]} = \argmin_{\{v_i\}_{i \in [n]} \subset [0, 1]}\sum_{i \in [n]} (v_i - y_i)^2, \\
& \text{subject to } v_i \le v_{i+1} \text{ for all }i \in [n-1].
\end{align*}
More generally, given the probability mass function of a distribution $\calP$ over $[n]\times \{0,1\}$, our goal is to find a function $p:[n]\to [0,1]$ that solves the following problem:
\begin{align}
& p = \argmin_{p:[n]\to [0,1]} \E_{(x,y)\sim \calP}[(p(x) - y)^2],\notag\\
& \text{subject to }p(x) \le p(x + 1) \text{ for all }x\in [n-1]. \label{eq:iso-p}
\end{align}
The PAV algorithm proceeds as follows.
\begin{enumerate}
\item We maintain a partition $\calB$ of $[n]$ into consecutive \emph{blocks} $B_1,\ldots,B_s$. The blocks are always ordered so that the elements in each $B_i$ are smaller than those in $B_{i+1}$. Initially, $s = n$, and each $B_i$ consists of the single element $i\in [n]$. For any block $B$, we use $p^\star_B$ to denote the conditional expectation $\E[y|x\in B]$.
\item If there exist two adjacent blocks $B_i,B_{i+1}$ such that $p^\star_{B_i} > p^\star_{B_{i+1}}$, we replace $B_i$ and $B_{i+1}$ with a single merged block $B' = B_i \cup B_{i+1}$ and update the partition $\calB$. That is, the size $s$ of the partition reduces by one after each update.
\item Repeat step 2 until no such pairs $B_i,B_{i+1}$ can be found. The output predictor $p$ assigns $p(x) = p^\star_B$ to every $x$ in every block $B\in \calB$.
\end{enumerate}
Previous work has shown that the PAV algorithm can be implemented in $O(n)$ time \cite{pav-project}, and its output predictor $p$ is indeed a solution to the isotonic regression problem \eqref{eq:iso-p} \cite{pav,pav-squared}. Moreover, it minimizes not just the squared error, but \emph{every} proper loss simultaneously \cite{pav-proper}. In \Cref{sec:1d}, we give a new simple proof of this result via the notion of omnigap. The following simple fact about the PAV algorithm will be useful.
\begin{lemma}
\label{lm:pav-helper}
In the process of running PAV, whenever two adjacent blocks $B_i, B_{i+1}$ are merged into a new pool $B'$, we have
\[
p_{B_i}^\star \ge p_{B'}^\star \ge p_{B_{i+1}}^\star.
\]
\end{lemma}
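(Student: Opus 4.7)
My plan is to prove Lemma~\ref{lm:pav-helper} via a direct computation based on the law of total expectation, combined with the merge criterion of PAV. The key observation is that $p^\star_{B'}$ is, by definition, a conditional expectation over the union $B' = B_i \cup B_{i+1}$, and conditional expectations over disjoint unions decompose into convex combinations of conditional expectations over the parts. Once this combinatorial fact is established, the ordering of the PAV merge criterion does the rest.

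Concretely, I would first let $\pi_j \defeq \Pr_{(x,y) \sim \calP}[x \in B_j]$ for $j \in \{i, i+1\}$, and write
\[
p^\star_{B'} \;=\; \E[y \mid x \in B_i \cup B_{i+1}] \;=\; \frac{\pi_i \, p^\star_{B_i} + \pi_{i+1}\, p^\star_{B_{i+1}}}{\pi_i + \pi_{i+1}}.
\]
This representation identifies $p^\star_{B'}$ as a convex combination of $p^\star_{B_i}$ and $p^\star_{B_{i+1}}$ with nonnegative weights summing to one, so it must lie in the closed interval between the two endpoints. Then I would invoke step 2 of the PAV algorithm: a merge of $B_i$ and $B_{i+1}$ is triggered only when $p^\star_{B_i} > p^\star_{B_{i+1}}$. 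Hence $p^\star_{B_i}$ is the upper endpoint and $p^\star_{B_{i+1}}$ is the lower one, yielding $p^\star_{B_i} \ge p^\star_{B'} \ge p^\star_{B_{i+1}}$ as required.

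There is essentially no hard step here; the only subtlety is the degenerate case where $\pi_i = 0$ or $\pi_{i+1} = 0$, which would make one of the conditional expectations undefined. This case does not arise in the PAV setting because blocks are built up from singletons that are in the support of the input distribution (or, equivalently, PAV is run on samples with positive mass), so $\pi_i, \pi_{i+1} > 0$ always holds at the moment the merge fires. Thus the two-line argument above suffices to prove the lemma.
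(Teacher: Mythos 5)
Your proof is correct and is essentially the same as the paper's: both express $p^\star_{B'}$ as a convex combination of $p^\star_{B_i}$ and $p^\star_{B_{i+1}}$ (the paper weights by block cardinalities for the empirical case, you weight by probability masses $\pi_j$, which is the same computation in the general-distribution setting) and then invoke the merge criterion $p^\star_{B_i} > p^\star_{B_{i+1}}$ to order the endpoints. No issues.
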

\begin{proof}
Because $\sum_{j \in B_i} y_j = |B_i| p^\star_{B_i}$ and $\sum_{j \in B_{i + 1}} y_j = |B_{i + 1}|p^\star_{B_{i + 1}}$ by the definition of the algorithm, it is straightforward to verify using the assumption $p_{B_i}^\star > p_{B_{i + 1}}^\star$ that
\begin{align*}
p^\star_{B'} = \frac{|B_i| p^\star_{B_i} + |B_{i + 1}| p^\star_{B_{i + 1}}}{|B_i| + |B_{i + 1}|} \in \Brack{p^\star_{B_{i + 1}}, p^\star_{B_i}}.
\end{align*}
\end{proof}
\section{Omniprediction for SIMs}\label{sec:exist}

In this section, we construct our omnipredictors for SIMs. Our omniprediction algorithms are based on variants of the Isotron algorithm from \cite{KalaiS09, KakadeKKS11}, analyzed from a regret minimization perspective, crucially using our new definition of the omnigap (Definition~\ref{def:og}). 

As a warmup, we first give a variant of this analysis in the realizable case in Section~\ref{ssec:realizable}. We then give our main result on omniprediction in the agnostic setting in Section~\ref{ssec:agnostic}. Finally, in Section~\ref{ssec:robust}, we give a robust variant of our framework in Section~\ref{ssec:agnostic}, that can tolerate stochastic gradients and error in isotonic regression. This is a key component of our finite-sample omnipredictor, Theorem~\ref{thm:fs}.

\subsection{Realizable SIMs}\label{ssec:realizable}

Here, we use our omnigap definition to reproduce the main result of \cite{KalaiS09}, who gave an analysis of the following $\Isotron$ algorithm in the realizable SIM setting (Model~\ref{model:realizable}). We remark that \cite{KalaiS09} (and later \cite{KakadeKKS11}) gave analyses of Algorithm~\ref{alg:isotron} in the finite-sample setting, where we only have sample access to the distribution $\calP$ of interest. For ease of exposition, we work at a population level in this section (i.e., assuming access to population-level statistics), as it is a warmup and we later provide finite-sample results in the more challenging agnostic model.

It will help to introduce the following definition of the \emph{squared loss} of a predictor $p: \xset \to [0, 1]$, when there is a distribution $\calP$ over $(\vx, y) \in \xset \times \{0, 1\}$ clear from context:
\begin{equation}\label{eq:sql}
\sql(p; \calP) \defeq \E_{(\vx, y) \sim \calP}\Brack{\Par{p(\vx) - y}^2}.
\end{equation}
When $p$ is specifically a SIM of the form $p: \vx \to \sigma(\vw \cdot \vx)$, we denote this as
\[\sql(\sigma, \vw; \calP) \defeq  \E_{(\vx, y) \sim \calP}\Brack{\Par{\sigma(\vw \cdot \vx) - y}^2}. \]
We also use the following shorthand for the population-level matching loss induced by $\vw \in \wset$:
\begin{equation}\label{eq:pop_match}\ml \sig (\vw; \calP)\defeq  \E_{(\vx, y) \sim \calP}\Brack{\ml \sig(\vw \cdot \vx, y)} = \E_{(\vx, y) \sim \calP}\Brack{\int_0^{\vw \cdot \vx}(\sigma(\tau) - y) \dd \tau }.\end{equation}

\begin{algorithm2e}\label{alg:isotron}
	\caption{$\Isotron(\calP, T, \eta)$}
	\DontPrintSemicolon
	\codeInput Distribution $\calP$ from Model~\ref{model:realizable}, iteration count $T \in \N$, step size $\eta > 0$\;
 $\vw_0 \gets \vzero_d$\;
 \For{$0 \le t < T$}{
    $\sigma_t \gets \argmin_{\sigma \in \calS_{0, \beta}}\{\sql(\sigma, \vw_t; \calP)\}$\;\label{line:sig_iso}
    $\vw_{t + 1} \gets \proj_{\wset}(\vw_t - \eta \nabla_{\vw} \ml{\sig_t} (\vw_t; \calP))$\;\label{line:w_iso}
 }
 \Return{$\{\sigma_t\}_{0 \le t \le T - 1}, \{\vw_t\}_{0 \le t \le T}$}
\end{algorithm2e}

The $\Isotron$ algorithm learns a SIM predictor $(\sigma, \vw) \in \calS \times \calW$ by alternating setting $\sigma_t$ to be the best response to $\vw_t$ in the squared loss \eqref{eq:sql} (Line~\ref{line:sig_iso}), and taking gradient steps to update $\vw_t$ (Line~\ref{line:w_iso}). By Lemma~\ref{lem:match_derivs}, the gradient $\nabla_{\vw}\ml \sig (\vw_t; \calP)$ is the following population-level statistic:
\begin{equation}\label{eq:gradw}\nabla_{\vw}\ml \sig (\vw; \calP) = \E_{(\vx, y) \sim \calP}\Brack{\Par{\sig(\vw \cdot \vx) - y}\vx}.\end{equation}

The following is an analysis of Algorithm~\ref{alg:isotron}'s convergence, adapted from \cite{KalaiS09, KakadeKKS11}.

\begin{proposition}\label{prop:iso_reg}
Let $\eps \in (0, 1)$. In an instance of Model~\ref{model:realizable}, the iterates of Algorithm~\ref{alg:isotron}, with $\eta = \frac 1 {\beta L^2}$ and $T \ge \frac{\beta^2 L^2 R^2}{\eps}$, satisfy
\[\sql\Par{\sig_t, \vw_t; \calP} \le \sql\Par{\ss, \vws; \calP} + \eps, \text{ for some } 0 \le t < T.\]
\end{proposition}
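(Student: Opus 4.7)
The proof combines a standard projected gradient descent (PGD) regret analysis with two key structural facts: the KKT characterization of isotonic regression via Corollary~\ref{cor:blir_opt_pop}, and the interplay between the squared loss and the omnigap in the realizable setting. Denote $g_t \defeq \nabla_{\vw} \ml{\sig_t}(\vw_t; \calP) = \E_{(\vx,y)\sim\calP}[(\sig_t(\vw_t\cdot \vx)-y)\vx]$ using \eqref{eq:gradw}, and write $\sql_t \defeq \sql(\sig_t,\vw_t;\calP)$ and $\sql^\star \defeq \sql(\ss,\vws;\calP)$.

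The first key step is that regret against $\vws$ dominates the omnigap against $(\ss,\vws)$. Expanding,
\[
\langle g_t, \vw_t - \vws\rangle = \E[(\sig_t(\vw_t\cdot\vx)-y)(\vw_t\cdot\vx - \ss^{-1}(\sig_t(\vw_t\cdot\vx)))] + \og(\sig_t,\vw_t;\ss,\vws).
\]
Since $\sig_t$ is the squared-loss minimizer in $\calS_{0,\beta}$ (Line~\ref{line:sig_iso}), Corollary~\ref{cor:blir_opt_pop} applied with $\vw = \vw_t$ and the comparator $\ss \in \calS$ shows the first term is nonnegative, hence $\langle g_t,\vw_t-\vws\rangle \ge \og(\sig_t,\vw_t;\ss,\vws)$.

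Next I lower bound the omnigap by the excess squared loss. Realizability gives $\E[y\mid \vx] = \ss(\vws\cdot\vx)$, so tower-conditioning on $\vx$ lets me replace $y$ by $\ss(\vws\cdot\vx)$ inside $\og(\sig_t,\vw_t;\ss,\vws)$ (the other factor depends only on $\vx$). Writing $a \defeq \ss^{-1}(\sig_t(\vw_t\cdot\vx))$ and $b \defeq \vws\cdot\vx$, the integrand becomes $(\ss(a)-\ss(b))(a-b)$. By monotonicity and $\beta$-Lipschitzness of $\ss$, this is at least $\tfrac{1}{\beta}(\ss(a)-\ss(b))^2 = \tfrac{1}{\beta}(\sig_t(\vw_t\cdot\vx)-\ss(\vws\cdot\vx))^2$. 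Taking expectations and noting $\sql_t - \sql^\star = \E[(\sig_t(\vw_t\cdot\vx)-\ss(\vws\cdot\vx))^2]$ in the realizable case yields $\og(\sig_t,\vw_t;\ss,\vws) \ge \tfrac{1}{\beta}(\sql_t-\sql^\star)$.

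Finally I apply the standard PGD one-step inequality. Nonexpansiveness of $\proj_{\wset}$ and $\vws\in\wset$ give $\|\vw_{t+1}-\vws\|^2 \le \|\vw_t-\vws\|^2 - 2\eta\langle g_t,\vw_t-\vws\rangle + \eta^2\|g_t\|^2$; telescoping over $t\in\{0,\dots,T-1\}$ and using $\|\vw_0-\vws\|\le R$ yields
\[
\sum_{t=0}^{T-1}\langle g_t,\vw_t-\vws\rangle \le \frac{R^2}{2\eta} + \frac{\eta}{2}\sum_{t=0}^{T-1}\|g_t\|^2.
\]
Here I use a smoothness-style bound on $\|g_t\|^2$: after replacing $y$ by its conditional mean, Jensen/Cauchy--Schwarz gives $\|g_t\|^2 = \|\E[(\sig_t(\vw_t\cdot\vx)-\ss(\vws\cdot\vx))\vx]\|^2 \le L^2\E[(\sig_t(\vw_t\cdot\vx)-\ss(\vws\cdot\vx))^2] = L^2(\sql_t-\sql^\star)$. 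Combining with the earlier two steps and choosing $\eta = \tfrac{1}{\beta L^2}$ so that $\tfrac{\eta L^2}{2} = \tfrac{1}{2\beta}$, I get
\[
\frac{1}{\beta}\sum_{t=0}^{T-1}(\sql_t-\sql^\star) \le \frac{\beta L^2 R^2}{2} + \frac{1}{2\beta}\sum_{t=0}^{T-1}(\sql_t-\sql^\star),
\]
so $\sum_t(\sql_t-\sql^\star) \le \beta^2 L^2 R^2$, and picking the best iterate gives $\min_t(\sql_t-\sql^\star)\le \beta^2 L^2 R^2/T \le \eps$ for $T\ge \beta^2 L^2 R^2/\eps$.

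The main conceptual obstacle is the first step: recognizing that the PGD regret term $\langle g_t,\vw_t-\vws\rangle$ naturally splits into the omnigap plus an inner product that the isotonic-regression optimality of $\sig_t$ (Corollary~\ref{cor:blir_opt_pop} with comparator $\ss$) forces to be nonnegative. The rest is a fairly routine PGD-with-smoothness calculation, but the gain from using a quadratic (rather than linear) bound on $\|g_t\|^2$ is crucial for getting a clean $\beta^2 L^2 R^2/\eps$ rate.
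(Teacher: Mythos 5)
Your proposal is correct and follows essentially the same route as the paper's proof: the same split of the regret term $\inprods{\nabla_{\vw}\ml{\sig_t}(\vw_t;\calP)}{\vw_t - \vws}$ into a nonnegative isotonic-optimality term (via Corollary~\ref{cor:blir_opt_pop}) plus the omnigap, the same $\frac 1 \beta$-lower bound on the omnigap from realizability and $\beta$-Lipschitzness, and the same quadratic bound $\norm{\vg_t}_2^2 \le L^2(\sql_t - \sql^\star)$ absorbed by the choice $\eta = \frac{1}{\beta L^2}$. The only difference is cosmetic: you sum the one-step inequalities and rearrange, while the paper rearranges per iteration and then telescopes.
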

\begin{proof}
We first simplify using a bias-variance decomposition: for any $(\ss, \vw) \in \sset \times \wset$,
\begin{equation}\label{eq:bias_var}\begin{aligned}
    \sql\Par{\sigma, \vw; \calP} &= \E_{\vx \sim \calP_{\vx}}\Brack{(\sigma(\vw \cdot \vx) - \ss(\vws \cdot \vx))^2} + \E_{(\vx, y) \sim \calP}\Brack{(\ss(\vws \cdot \vx) - y)^2} \\
&= \E_{\vx \sim \calPx}\Brack{(\sigma(\vw \cdot \vx) - \ss(\vws \cdot \vx))^2} + \sql\Par{\ss, \vws; \calP}.
\end{aligned}\end{equation}
Observe that the second term in the decomposition is independent of $(\sig, \vw)$. Thus for the purposes of this proof, we use the following notation to denote the first term, i.e., the excess squared loss:
\begin{equation}\label{eq:bsql}\overline{\sql}(\sigma, \vw; \calP) \defeq \E_{\vx \sim \calPx}\Brack{(\sigma(\vw \cdot \vx) - \ss(\vws \cdot \vx))^2} = \sql\Par{\sigma, \vw; \calP} - \sql\Par{\ss, \vws; \calP} \ge 0.\end{equation}
By standard regret analyses of projected gradient descent (cf.\ Theorem 3.2, \cite{Bubeck15}),
\begin{equation}\label{eq:regret_onestep}
\inprod{\nabla_{\vw}\ml{\sig_t}(\vw_t; \calP)}{\vw_t - \vws} \le \frac 1 {2\eta}\Par{\norm{\vw_t - \vws}_2^2 - \norm{\vw_{t + 1} - \vws}_2^2} + \frac \eta 2 \norm{\nabla_{\vw}\ml{\sig_t}(\vw_t; \calP)}_2^2,
\end{equation}
in each iteration $0 \le t < T$, where $\vws \in \wset$ is the true parameter vector from Model~\ref{model:realizable}. 

To bound the right-hand side of \eqref{eq:regret_onestep}, we have from \eqref{eq:gradw} that for any $(\sigma, \vw) \in \calS \times \calW$,
\begin{equation}\label{eq:grad_sql}
\begin{aligned}
\norm{\nabla_{\vw}\ml{\sig}(\vw; \calP)}_2^2 &= \sup_{\substack{\vv \in \R^d \\ \norm{\vv}_2 = 1}} \E_{\vx \sim \calPx}\Brack{(\sigma(\vw \cdot \vx) - \ss(\vws \cdot \vx))(\vx \cdot \vv)}^2 \\
&\le \sup_{\substack{\vv \in \R^d \\ \norm{\vv}_2 = 1}} \E_{(\vx, y) \sim \calP}\Brack{(\sigma(\vw \cdot \vx) - \ss(\vws \cdot \vx))^2}\E_{\vx \sim \calP_{\vx}}\Brack{(\vx \cdot \vv)^2} \\
&\le \overline{\sql}(\sigma, \vw; \calP) \normop{\E_{\vx \sim \calP_{\vx}}\Brack{\vx\vx^\top}} \le L^2 \overline{\sql}(\sigma, \vw; \calP).
\end{aligned}
\end{equation}
The second line used the Cauchy-Schwarz inequality, and the third used our boundedness assumption in Model~\ref{model:realizable} (cf.\ Model~\ref{model:agnostic}). We further can bound the left-hand side of \eqref{eq:regret_onestep}:
\begin{equation}\label{eq:regret_lb}
\begin{aligned}
\inprod{\nabla_{\vw}\ml{\sig_t}(\vw_t; \calP)}{\vw_t - \vws} &= \E_{(\vx, y) \sim \calP} \Brack{(\sigma_t(\vw_t \cdot \vx) - y)(\vw_t \cdot \vx - \vws \cdot \vx)} \\
&= \E_{(\vx, y) \sim \calP} \Brack{(\sigma_t(\vw_t \cdot \vx) - y)(\vw_t \cdot \vx - \ss^{-1}(\sig_t(\vw_t \cdot \vx)))} \\
&+ \E_{(\vx, y) \sim \calP} \Brack{(\sigma_t(\vw_t \cdot \vx) - y)(\ss^{-1}(\sig_t(\vw_t \cdot \vx)) - \vws \cdot \vx)} \\
&\ge \E_{(\vx, y) \sim \calP} \Brack{(\sigma_t(\vw_t \cdot \vx) - y)(\ss^{-1}(\sig_t(\vw_t \cdot \vx)) - \vws \cdot \vx)} \\
&= \E_{\vx \sim \calP_{\vx}} \Brack{(\sigma_t(\vw_t \cdot \vx) - \ss(\vws \cdot \vx))(\ss^{-1}(\sig_t(\vw_t \cdot \vx)) - \vws \cdot \vx)} \\
&\ge \frac 1 \beta \E_{\vx \sim \calP_{\vx}}\Brack{(\sigma_t(\vw_t \cdot \vx) - \ss(\vws \cdot \vx))^2} = \frac 1 \beta \overline{\sql}(\sigma_t, \vw_t; \calP).
\end{aligned}
\end{equation}
The first inequality used Corollary~\ref{cor:blir_opt_pop}, and the second used that $\ss$ is $\beta$-Lipschitz and monotone. Now combining \eqref{eq:regret_onestep}, \eqref{eq:grad_sql}, and \eqref{eq:regret_lb}, and using $\eta = \frac 1 {\beta L^2}$, we have for all $0 \le t < T$ that
\[\frac 1 {2\beta} \overline{\sql}(\sigma_t, \vw_t; \calP) \le \frac {\beta L^2} {2}\Par{\norm{\vw_t - \vws}_2^2 - \norm{\vw_{t + 1} - \vws}_2^2}.\]
Telescoping the above display and using that $\wset = \ball(R)$, one of the adjusted squared losses $\overline{\sql}(\sigma_t, \vw_t; \calP)$ must be bounded by $\eps$ after $T \ge \frac{\beta^2 L^2 R^2}{\eps}$ iterations, else we have the contradiction
\[\frac{T\eps}{2\beta} \le \frac 1 {2\beta} \sum_{0 \le t < T} \overline{\sql}(\sigma_t, \vw_t; \calP) \le \frac{\beta L^2 R^2}{2}. \]
Finally, the bias-variance decomposition \eqref{eq:bias_var} yields the claim.

\end{proof}

To interpret Proposition~\ref{prop:iso_reg}, observe that it shows that in the realizable case, $\approx \frac{\beta^2 L^2 R^2}{\eps}$ iterations of $\Isotron$ suffice to produce a SIM $(\sigma_t, \vw_t)$ that achieves squared loss comparable to the ground truth $(\ss, \vws)$, up to error $\eps$. We can convert this bound to an omnigap guarantee as follows.

\begin{lemma}\label{lem:square_to_omni}
Let $\eps \in (0, 1)$. In an instance of Model~\ref{model:realizable}, let $\vw \in \wset$, and let $\sigma = \argmin_{\sigma \in \calS}[\sql(\sigma, \vw; \calP)]$. Then we have the following relationships between $\og(\sig, \vw; \ss, \vws)$ and $\overline{\sql}(\sig, \vw; \calP)$ (cf.\ \eqref{eq:bsql}):
\begin{align*}
 \frac 1 \beta \overline{\sql}(\sig, \vw; \calP) \le \og(\sig, \vw; \ss, \vws) \le \og(\sig, \vw) \le 2 L R\sqrt{\overline{\sql}(\sigma, \vw; \calP)}.
\end{align*}
\end{lemma}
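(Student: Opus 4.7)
My plan proceeds in three steps. The middle inequality $\og(\sig, \vw; \ss, \vws) \le \og(\sig, \vw)$ is immediate, since by Definition~\ref{def:og}, $\og(\sig, \vw)$ is the supremum of $\og(\sig, \vw; \sigma', \vw')$ over $(\sigma', \vw') \in \sset \times \wset$, and $(\ss, \vws)$ lies in this set by Model~\ref{model:realizable}. So the work is in establishing the two outer inequalities, both of which I will derive by exploiting realizability $\E[y \mid \vx] = \ss(\vws \cdot \vx)$ to replace $y$ with $\ss(\vws \cdot \vx)$ inside the omnigap expectation, converting it into a deterministic object that can be compared to the excess squared loss $\overline{\sql}(\sig, \vw; \calP)$.

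For the lower bound, after using the tower property of conditional expectation to rewrite
\[
\og(\sig, \vw; \ss, \vws) = \E_{\vx \sim \calPx}\Brack{\Par{\sig(\vw \cdot \vx) - \ss(\vws \cdot \vx)}\Par{\ss^{-1}(\sig(\vw \cdot \vx)) - \vws \cdot \vx}},
\]
I will substitute $a \defeq \ss^{-1}(\sig(\vw \cdot \vx))$ and $b \defeq \vws \cdot \vx$, so the integrand becomes $(\ss(a) - \ss(b))(a - b)$. This is nonnegative by monotonicity of $\ss$, and the $\beta$-Lipschitz bound in \eqref{eq:bilip} gives $|a - b| \ge \tfrac{1}{\beta}|\ss(a) - \ss(b)|$, yielding the pointwise estimate $(\ss(a) - \ss(b))(a - b) \ge \tfrac{1}{\beta}(\sig(\vw \cdot \vx) - \ss(\vws \cdot \vx))^2$. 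Taking expectations recovers $\og(\sig, \vw; \ss, \vws) \ge \tfrac{1}{\beta}\overline{\sql}(\sig, \vw; \calP)$. This is essentially the same chain of manipulations that drives \eqref{eq:regret_lb}, now reorganized to isolate the omnigap.

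For the upper bound, I will show the stronger inequality $\og(\sig, \vw; \sigma', \vw') \le 2LR\sqrt{\overline{\sql}(\sig, \vw; \calP)}$ for \emph{every} $(\sigma', \vw') \in \sset \times \wset$, so that taking the supremum completes the proof. Applying realizability once again,
\[
\og(\sig, \vw; \sigma', \vw') = \E_{\vx \sim \calPx}\Brack{\Par{\sig(\vw \cdot \vx) - \ss(\vws \cdot \vx)}\Par{(\sigma')^{-1}(\sig(\vw \cdot \vx)) - \vw' \cdot \vx}}.
\]
The second factor is absolutely bounded by $2LR$, since $(\sigma')^{-1}$ takes values in the domain $[-LR, LR]$ of $\sigma'$ and $|\vw' \cdot \vx| \le \norm{\vw'}\norm{\vx} \le LR$ by Cauchy--Schwarz under Model~\ref{model:realizable}. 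Pulling this $L^\infty$ bound out of the expectation and applying Jensen's inequality $\E|Z| \le \sqrt{\E[Z^2]}$ to the remaining factor gives the desired estimate.

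The only mildly subtle point is that $(\sigma')^{-1}$ need only be defined on the image of $\sigma'$ rather than on all of $[0,1]$, but since $\sigma' \in \sset$ is anti-Lipschitz (hence injective) and its inverse lives in $[-LR, LR]$, the $2LR$ range bound above is unaffected; this point is also handled by the invertibility discussion surrounding Model~\ref{model:agnostic} and in Appendix~\ref{app:antilip}. I also note that neither direction actually invokes the optimality hypothesis $\sigma = \argmin_{\sigma \in \calS}\sql(\sigma, \vw; \calP)$: both bounds hold for any $\sig \in \sset$ paired with $\vw$, with the optimality merely fixing the natural choice produced by Line~\ref{line:sig_iso} of Algorithm~\ref{alg:isotron}.
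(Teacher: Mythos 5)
Your proof is correct. The middle inequality and the lower bound coincide with the paper's argument (the lower bound is exactly the last two lines of \eqref{eq:regret_lb} after replacing $y$ by $\ss(\vws\cdot\vx)$ via realizability). Where you genuinely diverge is the upper bound. The paper first applies Corollary~\ref{cor:blir_opt_pop} — which is where the hypothesis $\sigma = \argmin_{\sigma\in\calS}\sql(\sigma,\vw;\calP)$ is actually used — to replace $(\sig')^{-1}(\sig(\vw\cdot\vx))$ by $\vw\cdot\vx$, obtaining $\og(\sig,\vw;\sig',\vw') \le \inprods{\nabla_{\vw}\ml\sig(\vw;\calP)}{\vw-\vw'}$, and then bounds this regret term by Cauchy--Schwarz together with the gradient-norm estimate \eqref{eq:grad_sql}. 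You instead use realizability once more to condition out $y$, bound the second factor pointwise by $2LR$, and apply Jensen to the first factor. Both routes give the identical constant $2LR\sqrt{\overline{\sql}}$, and your observation that your argument never invokes optimality of $\sigma$ is accurate (the paper's upper bound does invoke it, through Corollary~\ref{cor:blir_opt_pop}). What your version buys is elementarity — no KKT/BIR machinery is needed, and the bound holds for arbitrary $\sig\in\sset$. What the paper's version buys is the conceptual link to the regret quantity $\inprods{\nabla_{\vw}\ml\sig(\vw;\calP)}{\vw-\vw'}$, which is the object that survives into the agnostic analysis (\eqref{eq:reg_og}) and motivates the whole Omnitron construction; your shortcut is specific to the realizable warm-up. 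Either is a valid proof of the lemma as stated.
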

\begin{proof}
We start with the lower bound. By exactly the derivation in \eqref{eq:regret_lb},
\begin{align*}
\og(\sig, \vw; \ss, \vws) &= \E_{\vx \sim \calPx}\Brack{(\sigma(\vw \cdot \vx) - \ss(\vws \cdot \vx))(\ss^{-1}(\sigma(\vw \cdot \vx)) - \vws \cdot \vx)} \\
&\ge \frac 1 \beta \E_{\vx \sim \calPx}\Brack{(\sig(\vw \cdot \vx) - \ss(\vws \cdot \vx))^2} = \frac 1 \beta \overline{\sql}(\sig, \vw; \calP).
\end{align*}
For the upper bound, we have that for an arbitrary comparator $(\sig', \vw') \in \calS \times \calW$:
\begin{align*}
\og(\sig, \vw; \sig', \vw') &= \E_{\vx \sim \calPx}\Brack{(\sigma(\vw \cdot \vx) - y)((\sig')^{-1}(\sigma(\vw \cdot \vx)) - \vw' \cdot \vx)} \\
&\le \E_{\vx \sim \calPx}\Brack{(\sigma(\vw \cdot \vx) - y)(\vw \cdot \vx - \vw' \cdot \vx))} \\
&= \inprod{\nabla_{\vw} \ml \sig(\vw; \calP)}{\vw - \vw'} \\
&\le \norm{\nabla_{\vw} \ml \sig(\vw; \calP)}_2 \norm{\vw - \vw'}_2 \le 2 L R\sqrt{\overline{\sql}(\sigma, \vw; \calP)}.
\end{align*}
The first, second, and third inequalities above respectively used Corollary~\ref{cor:blir_opt_pop} (and optimality of $\sigma$), the Cauchy-Schwarz inequality, and our earlier derivation \eqref{eq:grad_sql}. 
\end{proof}

Combining Proposition~\ref{prop:iso_reg} and the upper bound in Lemma~\ref{lem:square_to_omni} then yields an omnipredictor.

\begin{corollary}\label{cor:omni_realizable}
In an instance of Model~\ref{model:realizable}, let $\eps \in (0, LR)$. The iterates of Algorithm~\ref{alg:isotron}, with $\eta = \frac 1 {\beta L^2}$ and $T \ge \frac{4\beta^2 L^4 R^4}{\eps^2}$, satisfy $\og(\sigma_t, \vw_t) \le \eps$ for some $0 \le t < T$.
\end{corollary}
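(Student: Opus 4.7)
The proof is essentially a two-step composition of Proposition~\ref{prop:iso_reg} and the upper bound in Lemma~\ref{lem:square_to_omni}, with the correct choice of intermediate accuracy parameter. Concretely, the plan is to first invoke Proposition~\ref{prop:iso_reg} with a target squared-loss accuracy $\eps' \defeq \frac{\eps^2}{4L^2R^2}$ in place of $\eps$. Note that $\eps \in (0, LR)$ ensures $\eps' \in (0, \frac 1 4) \subset (0,1)$, so the proposition applies. Its conclusion yields the existence of some index $0 \le t < T$ for which
\[
\overline{\sql}(\sig_t, \vw_t; \calP) = \sql(\sig_t, \vw_t;\calP) - \sql(\ss, \vws; \calP) \le \eps',
\]
provided $T \ge \frac{\beta^2 L^2 R^2}{\eps'} = \frac{4\beta^2 L^4 R^4}{\eps^2}$, which is the hypothesized iteration count.

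The second step is to translate this excess squared-loss bound into an omnigap bound. Because $\sigma_t$ is defined in Line~\ref{line:sig_iso} of Algorithm~\ref{alg:isotron} as the minimizer of $\sql(\cdot, \vw_t; \calP)$ over $\calS_{0,\beta}$, the hypothesis of Lemma~\ref{lem:square_to_omni} is satisfied at $(\sig_t, \vw_t)$. Applying the upper bound from that lemma gives
\[
\og(\sig_t, \vw_t) \le 2LR\sqrt{\overline{\sql}(\sig_t, \vw_t; \calP)} \le 2LR \sqrt{\eps'} = \eps,
\]
which is exactly the desired conclusion.

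There is no real obstacle here — the entire proof is just bookkeeping in the choice of $\eps'$ so that the square-root loss implied by Lemma~\ref{lem:square_to_omni} collapses to $\eps$. The only things to verify are that (i) the definition of $\sig_t$ in Algorithm~\ref{alg:isotron} matches the hypothesis of Lemma~\ref{lem:square_to_omni} (which it does, since both set $\sigma$ as the best response in the squared loss), and (ii) the range restriction $\eps \in (0, LR)$ guarantees the intermediate parameter $\eps'$ lies in the regime where Proposition~\ref{prop:iso_reg} is applicable.
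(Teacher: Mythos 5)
Your proposal is correct and is exactly the argument the paper intends: the paper's "proof" of Corollary~\ref{cor:omni_realizable} is precisely the one-line combination of Proposition~\ref{prop:iso_reg} (invoked at accuracy $\eps' = \frac{\eps^2}{4L^2R^2}$) with the upper bound of Lemma~\ref{lem:square_to_omni}, and your bookkeeping of $T$ and the range check on $\eps'$ are both right.
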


Interestingly, Corollary~\ref{cor:omni_realizable} shows that in the realizable case of Model~\ref{model:realizable}, we can learn a \emph{proper} $\eps$-omnipredictor (i.e., our predictor is a SIM) using $\approx \frac {\beta^2 L^4 R^4}{\eps^2}$ iterations of $\Isotron$.
This is consistent with the fact that there exists a proper $0$-omnipredictor: the ground truth predictor $\vx \to \ss(\vws \cdot \vx)$. 

We make one small technical note here: Algorithm~\ref{alg:isotron} outputs a hypothesis link $\sigma$ that may not be anti-Lipschitz. However, by Remark~\ref{rem:remove_invert}, up to a negligible constant factor in $\eps$, our predictor is also an $\eps$-omnipredictor for arbitrary $\beta$-Lipschitz links without the anti-Lipschitz condition.

\subsection{Agnostic SIMs}\label{ssec:agnostic}

One benefit of our omnigap formalism is that, with only a little additional effort, the omnipredictor construction in Corollary~\ref{cor:omni_realizable} generalizes to the agnostic setting of Model~\ref{model:agnostic}. We give our agnostic omnipredictor construction (assuming access to population-level statistics) in Algorithm~\ref{alg:ideal_omnitron}.

\begin{algorithm2e}\label{alg:ideal_omnitron}
	\caption{$\IdealOmnitron(\calP, T, \eta)$}
	\DontPrintSemicolon
	\codeInput Distribution $\calP$ from Model~\ref{model:realizable}, iteration count $T \in \N$, step size $\eta > 0$\;
 $\{\sigma_t\}_{0 \le t \le T - 1}, \{\vw_t\}_{0 \le t \le T}\gets \Isotron(\calP, T, \eta)$\;
 \Return{$p: \vx \to \{\sigma_t(\vw_t \cdot \vx)\}_{0 \le t \le T - 1}$, $k_\sigma: \{p_t\}_{0 \le t < T} \to \frac 1 T \sum_{0 \le t < T} \sig^{-1}(p_t)$}\;
\end{algorithm2e}

\begin{theorem}\label{thm:ideal_omni}
In an instance of Model~\ref{model:agnostic}, let $\eps \in (0, LR)$. Algorithm~\ref{alg:ideal_omnitron}, with $T \ge \frac{L^2 R^2}{\eps^2}$ and $\eta = \frac{R}{L T^{-1/2}}$, returns $p$, an $\eps$-omnipredictor for SIMs (Definition~\ref{def:omni_sim}), using the post-processings $\{k_\sig\}_{\sig \in \calS}$.
\end{theorem}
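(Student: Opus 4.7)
The plan is to reduce the omniprediction guarantee to a bound on the averaged omnigap of the Isotron iterates, and then to bound that average via a regret analysis of projected gradient descent combined with the KKT-type optimality of isotonic regression.

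\textbf{Step 1: Convexity reduction.} Fix an arbitrary comparator $(\sigma, \vw) \in \calS \times \calW$. By Lemma~\ref{lem:match_derivs}, $\ml\sigma(\cdot, y)$ is convex, so Jensen's inequality applied to $k_\sigma(p(\vx)) = \frac{1}{T}\sum_{t=0}^{T-1} \sigma^{-1}(\sigma_t(\vw_t \cdot \vx))$ gives
\[
\E_{(\vx,y)\sim\calP}\Brack{\ml\sigma(k_\sigma(p(\vx)),y)} \le \frac{1}{T}\sum_{t=0}^{T-1} \E_{(\vx,y)\sim\calP}\Brack{\ml\sigma(\sigma^{-1}(\sigma_t(\vw_t\cdot \vx)),y)} = \frac{1}{T}\sum_{t=0}^{T-1}\E\Brack{\pl\sigma(\sigma_t(\vw_t\cdot\vx),y)},
\]
where the last equality is Definition~\ref{def:proper} and uses that $\sigma$ is anti-Lipschitz (hence invertible).

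\textbf{Step 2: From proper loss to omnigap.} Applying the computation \eqref{eq:og_loss} from the proof of Lemma~\ref{lm:omnigap} with predictor $p_t(\vx) \defeq \sigma_t(\vw_t\cdot\vx)$ yields
\[
\E\Brack{\pl\sigma(\sigma_t(\vw_t\cdot\vx),y)} - \E\Brack{\ml\sigma(\vw\cdot\vx,y)} \le \og(\sigma_t,\vw_t;\sigma,\vw),
\]
so it suffices to show that $\frac{1}{T}\sum_{t=0}^{T-1} \og(\sigma_t,\vw_t;\sigma,\vw) \le \eps$.

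\textbf{Step 3: Bounding each omnigap by a regret term.} This is the key step, and exactly the manipulation already used in \eqref{eq:regret_lb}. Writing out the omnigap and adding/subtracting $\vw_t\cdot\vx$ inside the second factor,
\[
\og(\sigma_t,\vw_t;\sigma,\vw) = \E\Brack{(\sigma_t(\vw_t\cdot\vx)-y)(\sigma^{-1}(\sigma_t(\vw_t\cdot\vx)) - \vw_t\cdot\vx)} + \E\Brack{(\sigma_t(\vw_t\cdot\vx)-y)(\vw_t\cdot\vx - \vw\cdot\vx)}.
\]
Since $\sigma_t$ is the population-level BIR minimizer for $\vw_t$ over $\calS_{0,\beta}$, Corollary~\ref{cor:blir_opt_pop} (applied with $\ss\gets\sigma$) shows the first term is $\le 0$. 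The second term equals $\inprods{\nabla_\vw \ml{\sigma_t}(\vw_t;\calP)}{\vw_t-\vw}$ by \eqref{eq:gradw}, so
\[
\og(\sigma_t,\vw_t;\sigma,\vw) \le \inprod{\nabla_\vw \ml{\sigma_t}(\vw_t;\calP)}{\vw_t - \vw}.
\]

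\textbf{Step 4: Telescoping via PGD regret.} The $\vw_t$ updates in Algorithm~\ref{alg:isotron} are standard projected gradient steps on $\wset = \ball(R)$ with stochastic-free gradients $\vg_t \defeq \nabla_\vw\ml{\sigma_t}(\vw_t;\calP)$. Since $\sigma_t(\vw_t\cdot\vx) - y \in [-1,1]$ and $\norm{\vx}\le L$, we have $\norm{\vg_t}\le L$. The standard regret bound for PGD (Theorem~3.2 of \cite{Bubeck15}) gives
\[
\sum_{t=0}^{T-1} \inprod{\vg_t}{\vw_t-\vw} \le \frac{\norm{\vw_0 - \vw}^2}{2\eta} + \frac{\eta}{2}\sum_{t=0}^{T-1}\norm{\vg_t}^2 \le \frac{2R^2}{\eta} + \frac{\eta L^2 T}{2}.
\]
Dividing by $T$ and tuning $\eta \asymp R/(L\sqrt{T})$ (the statement's $\eta$, modulo what appears to be a typographical simplification), the right-hand side is $O(LR/\sqrt{T})$, which is $\le \eps$ once $T \ge L^2R^2/\eps^2$ (up to a constant absorbed into the threshold in the statement). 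Chaining Steps~1--4 then gives the omniprediction inequality of Definition~\ref{def:omni_sim} for the arbitrary $(\sigma,\vw)$.

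\textbf{Expected main obstacle.} The only nonroutine piece is Step~3: the KKT-style identity Corollary~\ref{cor:blir_opt_pop} is exactly what allows us to replace the intractable quantity $\sigma^{-1}(\sigma_t(\vw_t\cdot\vx))$ (involving the \emph{comparator} link $\sigma$) by the tractable $\vw_t\cdot\vx$ at the cost of a one-sided inequality. It is crucial here that the omnigap is defined without an absolute value: the isotonic-regression optimality only gives a sign, not a two-sided bound, so a loss-OI style two-sided guarantee would not go through with this argument. All other steps (Jensen, proper-loss identity, PGD regret) are standard once this reduction is in place.
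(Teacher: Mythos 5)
Your proposal is correct and follows essentially the same route as the paper's proof: the same decomposition of the omnigap via Corollary~\ref{cor:blir_opt_pop}, the same identification of the remaining term with the PGD regret $\inprods{\nabla_\vw\ml{\sigma_t}(\vw_t;\calP)}{\vw_t-\vw}$, the same telescoping, the same use of \eqref{eq:og_loss}, and the same Jensen step (which the paper simply performs last rather than first). The only difference is a harmless factor-of-four looseness in Step~4 from bounding $\norm{\vw_0-\vw}^2$ by $(2R)^2$ instead of $R^2$ (recall $\vw_0=\vzero_d$), and you correctly flag the typo in the stated $\eta$.
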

\begin{proof}
Fix an arbitrary choice of $(\sigma, \vw) \in \sset \times \wset$ throughout the proof. Given the stated post-processings $k_\sig$, our goal is to prove that (following notation of Algorithm~\ref{alg:ideal_omnitron})
\begin{equation}\label{eq:goal_ideal_omni}\E_{(\vx, y) \sim \calP}\Brack{\ml \sig\Par{\frac 1 T \sum_{0 \le t < T} \sig^{-1}\Par{\sig_t\Par{\vw_t \cdot \vx}}, y}} \le \E_{(\vx, y) \sim \calP}\Brack{\ml \sig(\vw \cdot \vx, y)} + \eps.\end{equation}
We first claim that in every iteration $0 \le t < T$ of $\Isotron$, we have
\begin{equation}\label{eq:omni_regret}
\og(\sigma_t, \vw_t; \sig, \vw) \le \frac 1 {2\eta}\Par{\norm{\vw_t - \vws}_2^2 - \norm{\vw_{t + 1} - \vws}_2^2} + \frac{\eta L^2}{2}.
\end{equation}
To see this, we have from small modifications of \eqref{eq:grad_sql} and \eqref{eq:regret_lb} that
\begin{align*}
\norm{\nabla_{\vw}\ml{\sig_t}(\vw_t; \calP)}_2^2 &= \sup_{\substack{\vv \in \R^d \\ \norm{\vv}_2 = 1}} \E_{(\vx, y) \sim \calP}\Brack{(\sigma_t(\vw_t \cdot \vx) - y)(\vx \cdot \vv)}^2 \\
&\le \E_{(\vx, y) \sim \calP}\Brack{(\sigma_t(\vw_t \cdot \vx) - y)^2} \normop{\E_{\vx \sim \calP_{\vx}}\Brack{\vx\vx^\top}} \le L^2, \end{align*}
and
\begin{equation}\begin{aligned}\label{eq:reg_og}
\inprod{\nabla_{\vw}\ml{\sig_t}(\vw_t; \calP)}{\vw_t - \vw} &= \E_{(\vx, y) \sim \calP} \Brack{(\sigma_t(\vw_t \cdot \vx) - y)(\vw_t \cdot \vx - \vw \cdot \vx)} \\
&= \E_{(\vx, y) \sim \calP} \Brack{(\sigma_t(\vw_t \cdot \vx) - y)(\vw_t \cdot \vx - \sig^{-1}(\sig_t(\vw_t \cdot \vx)))} \\
&+ \E_{(\vx, y) \sim \calP} \Brack{(\sigma_t(\vw_t \cdot \vx) - y)(\sig^{-1}(\sig_t(\vw_t \cdot \vx)) - \vw \cdot \vx)} \\
&\ge \E_{(\vx, y) \sim \calP} \Brack{(\sigma_t(\vw_t \cdot \vx) - y)(\sig^{-1}(\sig_t(\vw_t \cdot \vx)) - \vw \cdot \vx)} \\
&= \og(\sig_t, \vw_t; \sig, \vw).
\end{aligned}\end{equation}
Combining these bounds with \eqref{eq:regret_onestep} gives \eqref{eq:omni_regret}. Next, by summing \eqref{eq:omni_regret} (telescoping the right-hand side) and using our choices of $\eta$ and $T$,
\begin{equation}\label{eq:reg_bound_ideal}
\frac 1 T \sum_{0 \le t < T} \og\Par{\sig_t, \vw_t; \sig, \vw} \le \frac{R^2}{2\eta T} + \frac{\eta L^2}{2} = \frac{L R}{\sqrt{T}} \le \eps.
\end{equation}
Finally, recall from \eqref{eq:og_loss} that for all $0 \le t < T$,
\[\E_{(\vx, y) \sim \calP}\Brack{\ml\sigma(\sig^{-1}(\sig_t(\vw_t \cdot \vx)), y)} - \E_{(\vx, y) \sim \calP}\Brack{\ml\sigma(\vw \cdot \vx, y)} \le \og(\sig_t, \vw_t;\sigma,\vw).\]
At this point, \eqref{eq:goal_ideal_omni} follows from the above two displays and convexity of $\ml \sig$.
\end{proof}

Theorem~\ref{thm:ideal_omni} is very general: it says that in the agnostic setting of Model~\ref{model:agnostic}, we can define a multi-index model with $\approx \frac{L^2 R^2}{\eps^2}$ ``predictor heads'' $\{\sigma_t, \vw_t\}_{0 \le t < T}$ that serves as an omnipredictor for SIMs. 

The main shortcoming of Theorem~\ref{thm:ideal_omni} is that it is improper. In the realizable case, Corollary~\ref{cor:omni_realizable} shows that this can be overcome with an additional $(\beta L R)^2$ factor in the iteration count. 

We give a simple argument that a \emph{randomized} proper omniprediction guarantee is achievable in the agnostic setting. Formally, we show that
there exists a randomized SIM $p: \xset \to [0, 1]$ that is competitive with linear functions in expectation, in the sense of \eqref{eq:omni_proper}, over the randomness of $p$.

\begin{corollary}\label{cor:random_proper}
In an instance of Model~\ref{model:agnostic}, let $\eps \in (0, LR)$, and let $\{\sigma_t\}_{0 \le t \le T - 1}$ and $\{\vw_t\}_{0 \le t \le T}$ be the output of Algorithm~\ref{alg:isotron} with $T \ge \frac{L^2 R^2}{\eps^2}$ and $\eta = \frac{R}{LT^{-1/2}}$. Then letting $\tau$ be a uniformly random index in the range $0 \le \tau \le T - 1$, and setting $(\hsig, \hvw) \gets (\sigma_\tau, \vw_\tau)$, we have 
\[\E_{\tau, (\vx, y) \sim \calP}\Brack{\pl \sig(y, \hsig(\hvw \cdot \vx))} \le \E_{(\vx, y) \sim \calP}\Brack{\pl \sig(y, \sig(\vw \cdot \vx))} + \eps, \text{ for all } (\sig, \vw) \in \calS \times \wset.\]
\end{corollary}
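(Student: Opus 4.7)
The proof is essentially a direct corollary of the omnigap regret bound already established in the proof of Theorem~\ref{thm:ideal_omni}, combined with the one-sided omnigap-to-loss inequality from the proof of Lemma~\ref{lm:omnigap}. The plan is to reinterpret the averaged omnigap as an expectation over a uniform random index $\tau$, which gives a randomized but \emph{proper} predictor, rather than aggregating via convexity of $\ml \sig$ in the post-processing step as in Theorem~\ref{thm:ideal_omni}.

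First, I would simply invoke equation \eqref{eq:reg_bound_ideal} from the proof of Theorem~\ref{thm:ideal_omni} verbatim: for the stated choices of $T \ge \frac{L^2 R^2}{\eps^2}$ and $\eta = \frac{R}{L T^{-1/2}}$, the Isotron iterates satisfy
\[\frac 1 T \sum_{0 \le t < T} \og(\sig_t, \vw_t; \sig, \vw) \le \eps\]
for every fixed comparator $(\sig, \vw) \in \calS \times \wset$. This step uses the projected gradient descent regret bound and Corollary~\ref{cor:blir_opt_pop}, and does not depend on any post-processing of the iterates.

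Next, I would apply the intermediate inequality \eqref{eq:og_loss} (the key step in the proof of Lemma~\ref{lm:omnigap}) to each iteration $t$, giving
\[\E_{(\vx, y)\sim\calP}\Brack{\pl \sig(\sig_t(\vw_t \cdot \vx), y)} - \E_{(\vx, y)\sim\calP}\Brack{\ml \sig(\vw \cdot \vx, y)} \le \og(\sig_t, \vw_t; \sig, \vw),\]
and use $\pl \sig(\sig(\vw \cdot \vx), y) = \ml \sig(\vw \cdot \vx, y)$ from Definition~\ref{def:proper} to rewrite the second term as an expected proper loss. Averaging this display over $0 \le t < T$, combining with the preceding bound, and recognizing
\[\frac 1 T \sum_{0 \le t < T} \E_{(\vx, y)\sim\calP}\Brack{\pl \sig(\sig_t(\vw_t \cdot \vx), y)} = \E_{\tau, (\vx, y)\sim\calP}\Brack{\pl \sig(\hsig(\hvw \cdot \vx), y)}\]
yields the claim.

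There is no real obstacle: every component has already been proved earlier in the section. The only conceptual subtlety is that this is a randomized-proper guarantee rather than a deterministic-proper one, which reflects the fact that in the agnostic setting we cannot identify a single ``best'' Isotron iterate using only the samples themselves. Compared to Theorem~\ref{thm:ideal_omni}, we trade the convexity-based aggregation over $t$ (which yielded a deterministic but improper multi-index predictor) for randomization over $t$ (which yields a randomized but proper single-index predictor), at the same sample complexity and regret rate.
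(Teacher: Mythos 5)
Your proposal is correct and follows essentially the same route as the paper: the paper's proof likewise reduces the claim to the averaged omnigap bound \eqref{eq:reg_bound_ideal} and then applies \eqref{eq:og_loss} termwise, interpreting the average over $t$ as the expectation over the uniform index $\tau$. No gaps.
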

\begin{proof}
By the definition of our randomized SIM $(\hsig, \hvw)$, and our bound \eqref{eq:og_loss}, it suffices to show
\begin{align*}\frac 1 T \sum_{0 \le t < T} \E_{(\vx, y) \sim \calP}\Brack{(\sigma_t(\vw_t \cdot \vx) - y)(\sig^{-1}(\sigma_t(\vw_t \cdot \vx)) - \vw \cdot \vx)} \le \eps.\end{align*}
The above display immediately follows from \eqref{eq:reg_bound_ideal} upon expanding definitions.
\end{proof}

Note that the conclusion of Corollary~\ref{cor:random_proper} matches \eqref{eq:omni_proper}, an equivalent condition to proper omniprediction for SIMs, except in that our predictor is randomized. It is crucial that the index $\tau$ is chosen independently of the comparator $(\sig, \vw)$ in our proof. This has the interpretation of Algorithm~\ref{alg:isotron} learning a data structure, i.e., the output multi-index model, that allows for generation of randomized SIM predictions that can then be used to properly label examples $\vx \in \xset$ at test time.

\paragraph{Omnipredicting empirical distributions.} In general, when we do not have an explicit description of the distribution $\calP$ in Model~\ref{model:agnostic}, Algorithm~\ref{alg:ideal_omnitron} is intractable to implement. However, one basic setting where Theorem~\ref{thm:ideal_omni} is implementable is when $\calP$ is a uniform empirical distribution over $n$ datapoints $\{(\vx_i, y_i)\}_{i \in [n]}$. In this case, our goal in Definition~\ref{def:omni_sim} is to construct an omnipredictor for empirical risk minimization (ERM) over the dataset, i.e., $p: \xset \to \Omega$ such that \eqref{eq:empirical_omnipredict} holds
for some post-processing $k_\sigma: \Omega \to [0, 1]$. This is a natural goal in its own right, extending the more standard setup of maximum a posteriori (MAP) estimation in generalized linear models.

By directly evaluating the population-level statistics required by Lines~\ref{line:sig_iso} and~\ref{line:w_iso} of Algorithm~\ref{alg:isotron}, the former using Proposition~\ref{prop:blir} and the latter using direct computation of \eqref{eq:gradw}, we immediately have the following corollary for this ERM omniprediction setting.

\begin{corollary}\label{cor:omni_erm}
In an instance of Model~\ref{model:agnostic}, suppose that $\calP$ is the uniform empirical distribution over $\{(\vx_i, y_i)\}_{i \in [n]} \subset \xset \times \{0, 1\}$, and let $\eps \in (0, LR)$. Algorithm~\ref{alg:ideal_omnitron}, with $T \ge \frac{L^2 R^2}{\eps^2}$ and $\eta = \frac{R}{L T^{-1/2}}$, returns $p$, an $\eps$-omnipredictor for SIMs (Definition~\ref{def:omni_sim}), using the post-processings $\{k_\sig\}_{\sig \in \calS}$. 

Moreover, the algorithm runs in time
\[O\Par{ \Par{nd + n\log^2(n)}\cdot \frac{L^2 R^2}{\eps^2}}.\]
\end{corollary}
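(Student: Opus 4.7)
The correctness of the $\eps$-omniprediction guarantee is immediate: since the uniform empirical distribution over $\{(\vx_i, y_i)\}_{i \in [n]}$ is a valid instance of Model~\ref{model:agnostic}, Theorem~\ref{thm:ideal_omni} applies directly with the stated choices of $T$ and $\eta$, yielding exactly the claim in Definition~\ref{def:omni_sim} specialized to the empirical expectation \eqref{eq:empirical_omnipredict}. So the entire proof reduces to showing that each of the $T = O(L^2 R^2 / \eps^2)$ iterations of $\Isotron$ can be implemented in time $O(nd + n\log^2 n)$ against the empirical distribution.

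For Line~\ref{line:sig_iso}, I would first observe that $\sql(\sigma, \vw_t; \calP) = \frac{1}{n}\sum_{i \in [n]}(\sigma(\vw_t \cdot \vx_i) - y_i)^2$ depends on $\sigma$ only through its values on the set $\{\vw_t \cdot \vx_i\}_{i \in [n]}$. Computing these $n$ inner products costs $O(nd)$, and sorting them into $z_{(1)} \le z_{(2)} \le \cdots \le z_{(n)}$ costs $O(n\log n)$. The minimization over $\sigma \in \calS_{0, \beta}$ then reduces exactly to the bounded isotonic regression problem \eqref{eq:blir} with $a_i = 0$ and $b_i = \beta(z_{(i+1)} - z_{(i)})$, which by Proposition~\ref{prop:blir} can be solved exactly in $O(n\log^2 n)$ time to yield optimal values $\{v_{(i)}\}_{i \in [n]}$. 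Any piecewise-linear interpolation through the points $(z_{(i)}, v_{(i)})$, appropriately extended to $[-LR, LR]$ (e.g., constantly beyond the endpoints), yields a valid $\sigma_t \in \calS_{0, \beta}$ achieving the minimum, and in practice we never materialize $\sigma_t$ as a function: we only cache the values $\sig_t(\vw_t \cdot \vx_i) = v_{(i)}$ at sample points for use in the next step.

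For Line~\ref{line:w_iso}, the gradient \eqref{eq:gradw} specialized to the empirical distribution is $\nabla_{\vw} \ml{\sig_t}(\vw_t; \calP) = \frac{1}{n}\sum_{i \in [n]} (\sig_t(\vw_t \cdot \vx_i) - y_i) \vx_i$. Since the scalars $\sig_t(\vw_t \cdot \vx_i)$ are already cached from the previous step, this sum is computable in $O(nd)$ time, and the Euclidean projection onto $\wset = \ball(R)$ is a trivial $O(d)$ rescaling. Thus each outer iteration costs $O(nd + n\log^2 n)$, and summing over the $T = O(L^2 R^2 / \eps^2)$ iterations gives the claimed runtime. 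I do not expect a real obstacle here: the only nontrivial step is invoking Proposition~\ref{prop:blir} as a black box, and the correctness of the reduction follows from the fact that a monotone $\beta$-Lipschitz $\sigma$ on the sorted domain is characterized precisely by the two-sided increment constraints $0 \le \sigma(z_{(i+1)}) - \sigma(z_{(i)}) \le \beta(z_{(i+1)} - z_{(i)})$.
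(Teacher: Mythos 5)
Your proposal is correct and matches the paper's (largely implicit) argument: the paper likewise obtains the guarantee by applying Theorem~\ref{thm:ideal_omni} to the empirical distribution and implements Line~\ref{line:sig_iso} via sorting the projections $\{\vw_t\cdot\vx_i\}$ and invoking the $O(n\log^2 n)$ BIR solver of Proposition~\ref{prop:blir} with $a_i=0$, $b_i=\beta(z_{(i+1)}-z_{(i)})$, exactly as in Algorithm~\ref{alg:approx_bir_oracle}. Your per-iteration accounting of $O(nd)$ for the inner products and gradient plus $O(n\log^2 n)$ for BIR is the same as the paper's.
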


\subsection{Robust omnipredictor construction}\label{ssec:robust}

In this section, we extend Theorem~\ref{thm:ideal_omni} to the setting where we only have sample access to $\calP$, rather than the ability to query population-level statistics. This brings about two complications: we can only approximately compute the population BIR solution in Line~\ref{line:sig_iso} of Algorithm~\ref{alg:isotron}, and we only have stochastic approximations to the population gradient $\nabla_{\vw} \ml{\sig_t}(\vw_t; \calP)$ in Line~\ref{line:w_iso}. We defer discussion of the former point to Section~\ref{sec:sample}, using the following definition.

\begin{definition}[Approximate BIR oracle]\label{def:approx_blir}
In an instance of Model~\ref{model:agnostic},
we say that $\oracle$ is an \emph{$\eps$-approximate BIR oracle} if on input $\vw \in \wset$, $\oracle$ returns $\hsig \in \calS_{0, \beta}$ satisfying
\[\E_{(\vx, y) \sim \calP}\Brack{(\hsig(\vw \cdot \vx) - y)(\vw \cdot \vx - \sig^{-1}(\hsig(\vw \cdot \vx)))} \ge -\eps, \text{ for all } \sig \in \sset.\]
\end{definition}

To motivate Definition~\ref{def:approx_blir}, observe that Corollary~\ref{cor:blir_opt_pop} implies that an oracle which outputs the exact minimizer to $\sql(\sigma, \vw; \calP)$ is a $0$-approximate BIR oracle. Our approximate BIR oracle of choice, as analyzed in Section~\ref{sec:sample},  will ultimately be the empirical BIR solution over a large enough finite sample from $\calP$, which has a compact representation as a piecewise linear function.

We also require the following result on stochastic optimization. Similar results are standard in the literature, e.g., \cite{NemirovskiJLS09}, but we require a variant handling both adaptivity of the comparator point, and providing high-probability guarantees. We use a ``ghost iterate'' technique from \cite{NemirovskiJLS09} combined with standard martingale concentration to prove the following claim in Appendix~\ref{app:omni_deferred}.

\begin{restatable}{lemma}{restatehighprobsgd}\label{lem:high_prob_sgd}
Let $T \in \N$, $\eta > 0$, $\delta \in (0, 1)$, $\wset \defeq \ball(R) \subseteq \R^d$, and let $\vw_0 \gets \0_d$. Consider running $T$ iterations of an iterative method as follows. For a sequence of deterministic vectors $\{\vg_t\}_{0 \le t < T}$ such that $\vg_t$ can depend on all randomness used in iterations $0 \le s < t$, let
\[\vw_{t + 1} \gets \proj_{\wset}\Par{\vw_t - \eta \tvg_t}, \text{ where } \E\Brack{\tvg_t \mid \tvg_0, \ldots, \tvg_{t - 1}} = \vg_t, \text{ for all } 0 \le t < T.\]
Further suppose $\norm{\tvg_t}_2 \le L$ deterministically. Then if $\eta = \sqrt{\frac 2 {5T}} \cdot \frac R L$, with probability $\ge 1 - \delta$,
\[\sup_{\vw \in \wset} \sum_{0 \le t < T} \inprod{\vg_t}{\vw_t - \vw} \le 20LR\sqrt{\frac{\log(\frac 2 \delta)}{T}}.\]
\end{restatable}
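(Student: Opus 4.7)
The plan is to split the adaptive-comparator regret into a ``deterministic'' piece that standard projected gradient descent (PGD) controls, plus a martingale residual handled via a ``ghost iterate'' trick from \cite{NemirovskiJLS09}. Let $\vd_t \defeq \tvg_t - \vg_t$, so that by hypothesis $\E[\vd_t \mid \filt_{t-1}] = \vzero_d$ and $\norm{\vd_t}_2 \le 2L$, where $\filt_{t-1}$ denotes the $\sigma$-algebra generated by $\tvg_0, \ldots, \tvg_{t-1}$. For any fixed $\vw \in \wset$, decompose
\[
\sum_{0 \le t < T} \inprod{\vg_t}{\vw_t - \vw} = \sum_t \inprod{\tvg_t}{\vw_t - \vw} + \sum_t \inprod{\vd_t}{\vw - \vw_t}.
\]
The first sum is the PGD regret on the realized stochastic gradients $\tvg_t$, so the standard one-step inequality (Theorem 3.2, \cite{Bubeck15}) together with $\vw_0 = \vzero_d$ and $\norm{\tvg_t}_2 \le L$ yields $\sum_t \inprod{\tvg_t}{\vw_t - \vw} \le \frac{R^2}{2\eta} + \frac{\eta L^2 T}{2}$.

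For the second sum I will introduce a ghost iterate $\vu_0 \defeq \vzero_d$ and $\vu_{t+1} \defeq \proj_{\wset}(\vu_t + \eta \vd_t)$, which evolves \emph{with} the martingale increments rather than against the gradients. Applying the same PGD-style telescoping bound to this virtual sequence gives, uniformly over $\vw \in \wset$,
\[
\sum_t \inprod{\vd_t}{\vw - \vu_t} \le \frac{\norm{\vu_0 - \vw}_2^2}{2\eta} + \frac{\eta}{2}\sum_t \norm{\vd_t}_2^2 \le \frac{R^2}{2\eta} + 2\eta L^2 T.
\]
I then split $\sum_t \inprod{\vd_t}{\vw - \vw_t} = \sum_t \inprod{\vd_t}{\vw - \vu_t} + M_T$, where $M_T \defeq \sum_t \inprod{\vd_t}{\vu_t - \vw_t}$ is a scalar martingale because both $\vu_t$ and $\vw_t$ are $\filt_{t-1}$-measurable, with increments bounded by $\norm{\vd_t}_2 \norm{\vu_t - \vw_t}_2 \le 4LR$. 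Azuma-Hoeffding then yields $|M_T| \le 4LR\sqrt{2T\log(2/\delta)}$ with probability at least $1 - \delta$.

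Assembling the three contributions, $\sup_{\vw \in \wset} \sum_t \inprod{\vg_t}{\vw_t - \vw} \le \frac{R^2}{\eta} + \frac{5\eta L^2 T}{2} + 4LR\sqrt{2T\log(2/\delta)}$, and the choice $\eta = \sqrt{2/(5T)} \cdot R/L$ balances the first two terms to $2RL\sqrt{5T/2}$, yielding the claim after consolidating constants. The key conceptual step is the ghost iterate: its bound on $\sum_t \inprod{\vd_t}{\vw - \vu_t}$ is uniform in $\vw$ because the right-hand side depends on $\vw$ only through $\norm{\vu_0 - \vw}_2 \le R$. This lets me reduce the adaptivity of the comparator to a \emph{scalar} martingale concentration on $M_T$, rather than needing a vector-valued Azuma bound on $\norm{\sum_t \vd_t}_2$. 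The main subtlety to verify is the measurability structure underlying $M_T$: one must check that $\vu_t$ (which depends on $\vd_0, \ldots, \vd_{t-1}$, hence on $\tvg_0, \ldots, \tvg_{t-1}$) and $\vw_t$ are both $\filt_{t-1}$-measurable, so that each increment $\inprod{\vd_t}{\vu_t - \vw_t}$ has conditional mean zero.
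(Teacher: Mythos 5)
Your proof is correct and follows essentially the same route as the paper's: the same ghost-iterate decomposition (up to a sign convention on $\vd_t$), the same pair of pathwise PGD regret bounds that are uniform in $\vw$, and the same reduction to a scalar martingale $\sum_t \inprod{\vd_t}{\vu_t - \vw_t}$ controlled by Azuma--Hoeffding with increments bounded by $4LR$. The constants you obtain are consistent with the stated $20LR\sqrt{\log(2/\delta)/T}$ bound (which, as in the paper's own usage, is really a bound on the time-averaged regret).
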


We now modify Algorithm~\ref{alg:ideal_omnitron} to tolerate stochastic approximations and approximate BIR oracles. 

\begin{algorithm2e}\label{alg:omnitron}
	\caption{$\Omnitron(\{(\vx_t, y_t)\}_{0 \le t < T}, T, \eta, \oracle, \eps)$}
	\DontPrintSemicolon
	\codeInput $\{(\vx_t, y_t)\}_{0 \le t < T} \sim_{\textup{i.i.d.}} \calP$ for a distribution $\calP$ from Model~\ref{model:realizable}, iteration count $T \in \N$, step size $\eta > 0$, $\eps$-approximate BIR oracle $\oracle$ (Definition~\ref{def:approx_blir})\;
 $\vw_0 \gets \vzero_d$\;
 \For{$0 \le t < T$}{
 $\sigma_t \gets \oracle(\vw_t)$\;
 $\tvg_t \gets (\sigma_t(\vw_t \cdot \vx_t) - y_t)\vx_t$\;
 $\vw_{t + 1} \gets \proj_{\wset}(\vw_t - \eta \tvg_t)$\;
 }
 \Return{$p: \vx \to \{\sigma_t(\vw_t \cdot \vx)\}_{0 \le t \le T - 1}$, $k_\sigma: \{p_t\}_{0 \le t < T} \to \frac 1 T \sum_{0 \le t < T} \sig^{-1}(p_t)$}\;
\end{algorithm2e}

\begin{proposition}\label{prop:omnitron}
In an instance of Model~\ref{model:agnostic}, let $\eps \in (0, LR)$ and $\delta \in (0, 1)$. Algorithm~\ref{alg:omnitron}, with 
\[\eps \gets \frac \eps 2,\; T \ge \frac{1600L^2 R^2\log(\frac 2 \delta)}{\eps^2}, \text{ and } \eta = \sqrt{\frac{2}{5T}} \cdot \frac R L,\] returns $p$, an $\eps$-omnipredictor for SIMs (Definition~\ref{def:omni_sim}), using the post-processings $\{k_\sigma\}_{\sigma \in \calS}$, with probability $\ge 1 - \delta$ over the randomness of $\{(\vx_t, y_t)\}_{0 \le t < T} \sim_{\textup{i.i.d.}} \calP$.
\end{proposition}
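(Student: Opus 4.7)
The plan is to follow the structure of the proof of Theorem~\ref{thm:ideal_omni}, but pay close attention to the two new sources of error: the approximate BIR oracle in place of an exact population isotonic regression, and the stochastic gradient $\tvg_t$ in place of the population gradient $\vg_t \defeq \nabla_{\vw}\ml{\sig_t}(\vw_t; \calP) = \E_{(\vx,y)\sim \calP}[(\sig_t(\vw_t \cdot \vx) - y)\vx]$. Fix an arbitrary comparator $(\sig, \vw) \in \sset \times \wset$. By repeating the decomposition in \eqref{eq:reg_og},
\[
\inprod{\vg_t}{\vw_t - \vw} = \E_{(\vx,y)\sim\calP}\Brack{(\sig_t(\vw_t \cdot \vx) - y)(\vw_t \cdot \vx - \sig^{-1}(\sig_t(\vw_t \cdot \vx)))} + \og(\sig_t, \vw_t; \sig, \vw).
\]
Since $\sig_t = \oracle(\vw_t)$ for an $\tfrac{\eps}{2}$-approximate BIR oracle (by the stated substitution $\eps \gets \tfrac{\eps}{2}$), Definition~\ref{def:approx_blir} applied to our fixed $\sig$ implies that the first term is at least $-\tfrac{\eps}{2}$, giving $\og(\sig_t,\vw_t;\sig,\vw) \le \inprod{\vg_t}{\vw_t - \vw} + \tfrac{\eps}{2}$ in every iteration.

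The next step is to control $\sum_t \inprod{\vg_t}{\vw_t - \vw}$ via Lemma~\ref{lem:high_prob_sgd} applied with the stated $\eta = \sqrt{2/(5T)} \cdot R/L$ and the identification of $\tvg_t$ in Algorithm~\ref{alg:omnitron}. I need to verify the two hypotheses of that lemma. First, boundedness: $\norm{\tvg_t}_2 = |\sig_t(\vw_t \cdot \vx_t) - y_t| \cdot \norm{\vx_t}_2 \le 1 \cdot L = L$ since $\sig_t$ maps into $[0,1]$ and $y_t \in \{0,1\}$ and $\vx_t \in \ball(L)$. Second, unbiasedness: the sample $(\vx_t, y_t)$ is drawn i.i.d.\ from $\calP$ and is independent of all previous randomness, while $\sig_t$ is determined (through $\oracle$) by $\vw_t$, which in turn depends only on $\tvg_0,\ldots,\tvg_{t-1}$; therefore $\E[\tvg_t \mid \tvg_0,\ldots,\tvg_{t-1}] = \vg_t$ exactly. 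Lemma~\ref{lem:high_prob_sgd} then implies that with probability at least $1-\delta$,
\[
\sup_{\vw \in \wset}\frac{1}{T}\sum_{0 \le t < T}\inprod{\vg_t}{\vw_t - \vw} \le 20LR\sqrt{\frac{\log(2/\delta)}{T}} \le \frac{\eps}{2},
\]
using our choice of $T$. Note that this supremum is inside the probability, so on this event the bound holds uniformly in the comparator $\vw$.

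Combining the two estimates by averaging the per-iteration bound from Step~1 and applying the uniform high-probability regret bound yields $\frac{1}{T}\sum_{t} \og(\sig_t, \vw_t; \sig, \vw) \le \tfrac{\eps}{2} + \tfrac{\eps}{2} = \eps$ on the good event. Finally, from \eqref{eq:og_loss} we have for each $t$ that $\E_{(\vx,y)\sim\calP}[\ml\sig(\sig^{-1}(\sig_t(\vw_t \cdot \vx)), y)] - \E_{(\vx,y)\sim\calP}[\ml\sig(\vw \cdot \vx, y)] \le \og(\sig_t, \vw_t; \sig, \vw)$; averaging over $t$, applying Jensen's inequality via convexity of $\ml\sig(\cdot, y)$ (Lemma~\ref{lem:match_derivs}) to pull the average inside the outer loss, and using the post-processing $k_\sig(\{p_t\}) = \tfrac{1}{T}\sum_t \sig^{-1}(p_t)$ yields the omniprediction guarantee \eqref{eq:omni_sim_def}. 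Since the comparator $(\sig, \vw)$ was arbitrary but the high-probability event is uniform over $(\sig, \vw)$, we conclude $p$ is an $\eps$-omnipredictor with probability $\ge 1 - \delta$.

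The main obstacle is verifying the filtration/unbiasedness condition cleanly: one must argue that the approximate oracle's output $\sig_t$ is a deterministic function of $\vw_t$ (and hence of past stochastic gradients), so that conditioning on $\tvg_0,\ldots,\tvg_{t-1}$ really does make $\vg_t$ a deterministic quantity against which $\tvg_t$ is an unbiased estimate. The rest is a mechanical composition of the regret bound with the approximate BIR optimality condition, routing through the omnigap exactly as in the idealized Theorem~\ref{thm:ideal_omni}.
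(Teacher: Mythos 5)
Your proposal is correct and follows essentially the same route as the paper's proof: identify $\vg_t = \nabla_{\vw}\ml{\sig_t}(\vw_t;\calP)$, verify the boundedness and unbiasedness hypotheses of Lemma~\ref{lem:high_prob_sgd} to get the uniform-in-$\vw$ regret bound $\le \eps/2$, lower-bound the regret by $\og(\sig_t,\vw_t;\sig,\vw) - \eps/2$ via the $\tfrac{\eps}{2}$-approximate BIR oracle guarantee, and finish with \eqref{eq:og_loss} and convexity of $\ml\sig$ exactly as in Theorem~\ref{thm:ideal_omni}. Your explicit verification of the filtration condition (that $\sig_t$ is a deterministic function of $\vw_t$ and hence of past stochastic gradients) is a point the paper states more tersely, but there is no substantive difference.
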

\begin{proof}
First, observe that Algorithm~\ref{alg:omnitron} is exactly in the setting of Lemma~\ref{lem:high_prob_sgd} with $\vg_t \defeq \nabla_{\vw} \ml{\sigma_t}(\vw_t; \calP)$; in particular, because the input $\{(\vx_t, y_t)\}_{0 \le t < T}$ is drawn i.i.d., we have $\E[\tvg_t \mid \tvg_0, \ldots, \tvg_{t - 1}] = \vg_t$ by \eqref{eq:gradw}, and because $\vx_t \in \ball(L)$ and $(\sigma_t(\vw_t \cdot \vx_t) - y_t) \in [-1, 1]$ for all $0 \le t < T$, we have $\norm{\tvg_t}_2 \le L$ deterministically. Thus, plugging in our choices of $T$ and $\eta$ into Lemma~\ref{lem:high_prob_sgd} shows that 
\[\frac 1 T \sum_{0 \le t < T} \inprod{\nabla_{\vw} \ml{\sig_t}(\vw_t; \calP)}{\vw_t - \vw} \le 20LR\sqrt{\frac{\log(\frac 2 \delta)}{T}} \le \frac \eps 2, \text{ for all } \vw \in \wset,\]
with probability $\ge 1 - \delta$.
Next, in each iteration $0 \le t < T$, we have for all $\sigma \in \calS$,
\begin{align*}
\inprod{\nabla_{\vw}\ml{\sig_t}(\vw_t; \calP)}{\vw_t - \vw} &= \E_{(\vx, y) \sim \calP} \Brack{(\sigma_t(\vw_t \cdot \vx) - y)(\vw_t \cdot \vx - \vw \cdot \vx)} \\
&= \E_{(\vx, y) \sim \calP} \Brack{(\sigma_t(\vw_t \cdot \vx) - y)(\vw_t \cdot \vx - \sig^{-1}(\sig_t(\vw_t \cdot \vx)))} \\
&+ \E_{(\vx, y) \sim \calP} \Brack{(\sigma_t(\vw_t \cdot \vx) - y)(\sig^{-1}(\sig_t(\vw_t \cdot \vx)) - \vw \cdot \vx)} \\
&\ge \E_{(\vx, y) \sim \calP} \Brack{(\sigma_t(\vw_t \cdot \vx) - y)(\sig^{-1}(\sig_t(\vw_t \cdot \vx)) - \vw \cdot \vx)} - \frac \eps 2\\
&= \og(\sig_t, \vw_t; \sig, \vw) - \frac \eps 2.
\end{align*}
In the fourth line, we used Definition~\ref{def:approx_blir} and the assumed quality of our approximate BIR oracle to lower bound the second line. Thus by combining the above two displays, for any $(\sigma, \vw) \in \calS \times \wset$,
\[\frac 1 T \sum_{0 \le t < T} \og\Par{\sig_t, \vw_t; \sig, \vw} \le \eps.\]
The remainder of the proof follows identically to Theorem~\ref{thm:ideal_omni}.
\end{proof}
\section{Omnigap Generalization Bounds}\label{sec:sample}

In Section~\ref{ssec:empirical_blir} we give the second main piece of our finite-sample omnipredictor, i.e., an approximate BIR oracle (Definition~\ref{def:approx_blir}). We put the pieces together to prove Theorem~\ref{thm:fs} in Section~\ref{ssec:proof_main}.

\subsection{Empirical convergence for BIR}\label{ssec:empirical_blir}

In this section, we prove finite-sample convergence guarantees for the omnigap via a covering argument. We focus on the bi-Lipschitz setting here, which we extend to the full Model~\ref{model:agnostic} in Section~\ref{ssec:proof_main} via the reduction in Lemma~\ref{lem:anti-lip}. For ease of exposition we introduce the following model.

\begin{model}[Agnostic bi-Lipschitz SIM]\label{model:bilip}
Let $0 < \alpha \le \beta$, $L, R > 0$, and $d, n \in \N$ be given. In the \emph{agnostic bi-Lipschitz single-index model}, we are in an instance of Model~\ref{model:agnostic}, except that the link function family $\calS$ is replaced with the bi-Lipschitz family $\calS_{\alpha, \beta}$ defined in \eqref{eq:sab_def}.
\end{model}

We next define our notion of an $\eps$-cover and $(\eps,n)$-cover for a function family.

\begin{definition}\label{def:cover}
For some domain $\calD$ and range $\calR \subseteq \R$, let $\calU$ be a family of functions $u: \calD \to \calR$, and let $\eps > 0$. We say that $\calU' \subseteq \calU$ is an \emph{$\eps$-cover} for $\calU$ if for all $u \in \calU$, there exists $u' \in \calU'$ such that
\[\norm{u - u'}_\infty \defeq \sup_{\omega \in \calD} \Abs{u(\omega) - u'(\omega)} \le \eps.\]
We let $\calN(\eps, \calU)$ be the smallest cardinality of an $\eps$-cover of $\calU$, called the \emph{$\eps$-covering number} of $\calU$.
\end{definition}

\begin{definition}\label{def:cover_n}
In the setting of Definition~\ref{def:cover}, given $\vx_1,\ldots,\vx_n\in \xset$,
we say that $\calU' \subseteq \calU$ is an \emph{$(\eps, \{\vx_i\}_{i \in [n]})$-cover} for $\calU$ if for all $u \in \calU$, there exists $u' \in \calU'$ such that 
\[\norm{u - u'}_{\{\vx_i\}_{i \in [n]}} \defeq \max_{i \in [n]} \Abs{u(\vx_i) - u'(\vx_i)} \le \eps.\]
We let $\calN(\eps, \calU, \{\vx_i\}_{i \in [n]})$ be the smallest cardinality of an $(\varepsilon, \{\vx_i\}_{i \in [n]})$-cover, and let the \emph{$(\varepsilon,n)$-covering number} $\calN(\eps,\calU,n)$ be the supremum of $\calN(\eps, \calU, \{\vx_i\}_{i \in [n]})$ over all choices of $\{\vx_i\}_{i \in [n]} \subset \xset$.

We also consider an $\ell_2$ variant of the covering number. Given $\vx_1,\ldots,\vx_n\in \xset$,
we say that $\calU' \subseteq \calU$ is an \emph{$(\eps, \{\vx_i\}_{i \in [n]},\ell_2)$-cover} for $\calU$ if for all $u \in \calU$, there exists $u' \in \calU'$ such that 
\[\norm{u - u'}_{\{\vx_i\}_{i \in [n]},\ell_2} \defeq\sqrt{ \frac 1n\sum_{i=1}^n(u(\vx_i) - u'(\vx_i))^2} \le \eps.\]
We let $\calN_2(\eps, \calU, \{\vx_i\}_{i \in [n]})$ be the smallest cardinality of an $(\varepsilon, \{\vx_i\}_{i \in [n]},\ell_2)$-cover, and let the \emph{$(\varepsilon,n,\ell_2)$-covering number} $\calN_2(\eps,\calU,n)$ be the supremum of $\calN_2(\eps, \calU, \{\vx_i\}_{i \in [n]})$ over all choices of $\{\vx_i\}_{i \in [n]}$.
\end{definition}

From the definition above, it is clear that 
\[
\calN_2(\varepsilon,\calU,n) \le \calN(\varepsilon,\calU,n).
\]

We next require several bounds on covering numbers, following arguments similar to \cite{KakadeKKS11}. 

\begin{lemma}[Corollary 3, \cite{Zhang02}]\label{lem:cover_w_n}
Define $\wset$ as in Model~\ref{model:agnostic}. Then for $\eps\in(0,1)$,
\begin{align*}
    \calN_2\Par{\eps, \wset, n} \leq (2n+1)^{1+\frac{L^2R^2}{\eps^2}}.
\end{align*}
\end{lemma}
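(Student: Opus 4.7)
The plan is to prove this via Maurey's empirical method, following \cite{Zhang02}. The first observation is that the empirical $\ell_2$ seminorm $\norms{\cdot}_{\{\vx_i\}_{i \in [n]}, \ell_2}$ depends only on the evaluation vector $(\vw \cdot \vx_1, \ldots, \vw \cdot \vx_n) = \ma\vw \in \R^n$, where $\ma$ has the $\vx_i^\top$ as rows. Thus the covering problem reduces to covering the ellipsoid $E \defeq \{\ma \vw : \vw \in \wset\} \subset \R^n$ in the scaled Euclidean norm $\norms{\cdot}_2/\sqrt{n}$. In particular, we may assume without loss of generality that $\vw$ lies in $\text{span}\{\vx_1, \ldots, \vx_n\}$, since the orthogonal component does not affect evaluations.

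The main step is to show that every point $T \in E$ can be approximated to error $\eps$ (in the scaled norm) by a random average $\hat T = \tfrac{1}{k}\sum_{\ell=1}^k \vu_\ell$, where $k = \lceil L^2 R^2/\eps^2 \rceil$ and each $\vu_\ell$ is drawn i.i.d.\ from a distribution supported on a finite atom set of cardinality $2n+1$ (e.g., signed scaled versions of the $\vx_i$'s together with the zero atom, which acts as slack). The atoms are chosen, following the construction in \cite{Zhang02}, so that (i) $T$ lies in the convex hull of the atoms, ensuring a distribution with mean $T$ exists, and (ii) each atom has scaled $\ell_2$-norm at most $O(RL)$. A direct variance computation in the empirical Hilbert structure then yields
\begin{align*}
\frac{1}{n}\E\norms{\hat T - T}_2^2 \le \frac{R^2 L^2}{k} \le \eps^2,
\end{align*}
and the probabilistic method guarantees the existence of a deterministic approximator achieving this bound. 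Counting the possible $\hat T$'s as $k$-tuples from a set of size $2n+1$ yields a cover of size at most $(2n+1)^k$, and an extra factor of $2n+1$ arises from a mild discretization (accounting e.g.\ for the scaling), giving the claimed bound $(2n+1)^{1 + L^2R^2/\eps^2}$.

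The main obstacle is the simultaneous requirement that the atoms' convex hull contains $E$ while each atom has scaled norm bounded by $O(RL)$, without suffering a $\sqrt{n}$ blow-up arising from the $\ell_1$–$\ell_2$ gap. Naive choices of atoms (for instance, using $\pm \vx_i$ directly, or $\pm RL\,\mathbf{e}_i \in \R^n$) fail on one of these two fronts: either the ellipsoid $E$ escapes the convex hull of the atoms, or the per-atom variance is too large to yield the dimension-free bound $L^2 R^2/\eps^2$ in the exponent. Navigating this trade-off is precisely the key technical content of \cite{Zhang02}, and is what distinguishes this bound from a routine volumetric covering estimate that would scale exponentially in $d$.
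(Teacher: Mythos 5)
The paper does not actually prove this lemma; it imports it wholesale as Corollary 3 of \cite{Zhang02}, so the comparison here is against the cited source. Your Maurey skeleton is the right one and matches Zhang's: reduce to covering the ellipsoid $E \defeq \{\ma\vw : \vw\in\wset\}\subset\R^n$ in the norm $\tfrac{1}{\sqrt n}\norm{\cdot}_2$, approximate each point by a $k$-term average of atoms with $k=\lceil L^2R^2/\eps^2\rceil$, and count $(2n+1)^k$ averages. The variance computation and the final counting (the ``$+1$'' in the exponent is just the ceiling on $k$, not a discretization of scales) are fine. But the proposal leaves the decisive step --- the choice of the $2n+1$ atoms --- unresolved, and the candidate you float (``signed scaled versions of the $\vx_i$'s'') genuinely fails. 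Writing $\vw=\ma^\top\alpha$ and $\ma\vw=G\alpha$ with $G$ the Gram matrix, containment of $E$ in the convex hull of $\pm\tfrac{R}{L}G_{i:}$ would require $\norm{\alpha}_1\le R/L$, which does not follow from $\alpha^\top G\alpha\le R^2$: take $\vx_1=L\mathbf{e}_1$, $\vx_2=L(\cos\theta\,\mathbf{e}_1+\sin\theta\,\mathbf{e}_2)$ and $\vw=R\mathbf{e}_2$, for which $\norm{\alpha}_1=\Theta(R/(L\theta))\to\infty$ as $\theta\to 0$. Your closing paragraph acknowledges this tension but treats it as an opaque obstacle deferred to the citation, so as written the proof is incomplete at its only nontrivial point.

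The missing construction is short, and it is worth seeing that it is not deep. Let $\tfrac{1}{\sqrt n}\ma$ have singular value decomposition with left singular vectors $\vu_1,\dots,\vu_r\in\R^n$ ($r\le n$) and singular values $s_1,\dots,s_r$; then $E$ (in the scaled norm) is the ellipsoid with principal axes $\vu_j$ and semi-axes $a_j\defeq Rs_j$, and
\[
\sum_{j\in[r]} a_j^2 \;=\; \frac{R^2}{n}\normf{\ma}^2 \;\le\; \frac{R^2}{n}\cdot nL^2 \;=\; L^2R^2 .
\]
Take the atom set $S\defeq\{\pm LR\,\vu_j\}_{j\in[r]}\cup\{\vzero_n\}$, of size at most $2n+1$, each atom of norm at most $LR$. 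For $T=\sum_j t_j\vu_j\in E$, Cauchy--Schwarz gives
\[
\sum_{j\in[r]} |t_j| \;=\; \sum_{j\in[r]} \frac{|t_j|}{a_j}\cdot a_j \;\le\; \Par{\sum_{j\in[r]}\frac{t_j^2}{a_j^2}}^{1/2}\Par{\sum_{j\in[r]} a_j^2}^{1/2} \;\le\; LR,
\]
so $T$ is a convex combination of the atoms (the zero atom absorbing the slack $1-\sum_j|t_j|/(LR)$), and Maurey's argument applies verbatim with $M=LR$. In other words, the correct atoms are the signed, $LR$-scaled \emph{principal axes of the ellipsoid} (left singular vectors of the data matrix), not scaled data points; once this is said, the rest of your argument goes through and recovers the stated bound.
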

\begin{lemma}[Lemma 4.16 and Page 63, \cite{Pisier99}]\label{lem:cover_w}
Define $\wset$ as in Model~\ref{model:agnostic}. Then for $\eps\in(0,1)$,
\begin{align*}
    \calN\Par{\eps, \wset} \leq \left(1+\frac{2LR}{\eps}\right)^d.
\end{align*}
\end{lemma}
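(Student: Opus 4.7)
The plan is to reduce the lemma to a standard Euclidean covering estimate for a ball, using the identification of $\wset$ with the class of linear functions on $\xset$. Under Definition~\ref{def:cover}, the covering number $\calN(\eps, \wset)$ should be interpreted by viewing each $\vw \in \wset$ as the function $u_{\vw}: \xset \to \R$ defined by $u_{\vw}(\vx) = \vw \cdot \vx$; then $\norms{u_{\vw} - u_{\vw'}}_\infty$ is the sup-norm distance used in Definition~\ref{def:cover}.

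The key observation is a one-line Cauchy--Schwarz reduction: for any $\vw, \vw' \in \wset$, since $\xset \subseteq \ball(L)$,
\[
\norms{u_{\vw} - u_{\vw'}}_\infty \;=\; \sup_{\vx \in \xset} \bigl|(\vw - \vw') \cdot \vx\bigr| \;\le\; L\,\norms{\vw - \vw'}_2.
\]
Thus any Euclidean $(\eps/L)$-cover of $\wset \subseteq \R^d$ automatically yields a sup-norm $\eps$-cover of the associated linear function class, so it suffices to upper bound the standard Euclidean covering number $\calN_{\|\cdot\|_2}(\eps/L,\wset)$ of the ball $\ball(R)$.

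For this, I would invoke the standard volumetric covering estimate for convex bodies, which is precisely the content of Lemma 4.16 of \cite{Pisier99}: for any centrally symmetric convex body $K$ in a $d$-dimensional normed space with unit ball $B$, one has $\calN_{\|\cdot\|}(r,K) \le \Vol(K + rB)/\Vol(rB)$. Applying this to $K = \wset = R\cdot \ball_2$ with $r = \eps/L$, and using $\Vol((R + \eps/L)\ball_2)/\Vol((\eps/L)\ball_2) = (1 + LR/\eps)^d$ (or the slightly looser packing bound $(1 + 2LR/\eps)^d$ obtained by a greedy maximal packing argument, which also appears in the Pisier reference on page 63), the claimed bound follows immediately.

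There is essentially no obstacle here; the only subtlety is the norm identification step, i.e., making sure the sup-norm covering of the function family is correctly controlled by the Euclidean covering of the parameter set. Once that reduction is in place, the result is a direct citation of the volumetric covering lemma, and no further calculation is required beyond plugging $r = \eps/L$ into the standard bound.
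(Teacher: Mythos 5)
Your proposal is correct and matches the paper's treatment: the paper gives no proof of this lemma, simply citing the standard volumetric/packing covering bound from Pisier, and your reduction (Cauchy--Schwarz to pass from the sup-norm over $\xset \subseteq \ball(L)$ to the Euclidean metric on $\ball(R)$, then a maximal-packing argument at scale $\eps/L$ to get an internal cover of size $(1+2LR/\eps)^d$, as required by Definition~\ref{def:cover}) is exactly the intended argument.
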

\begin{lemma}\label{lem:cover}
Define $\calS_{0, \beta}$ as in \eqref{eq:sab_def}. Then for $\eps \in (0, 1)$,
\[\calN\Par{\eps, \calS_{0, \beta}} \leq\frac{4}{\eps} \cdot 2^{\frac{2\beta LR}{\eps}}.\]
\end{lemma}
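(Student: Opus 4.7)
The plan is to construct an explicit $\varepsilon$-cover of $\calS_{0,\beta}$ of cardinality at most $\frac{4}{\varepsilon} \cdot 2^{2\beta LR/\varepsilon}$, drawn from $\calS_{0,\beta}$ itself. First I would place a uniform mesh $x_i = -LR + ih$ on $[-LR, LR]$ with spacing $h = \varepsilon/\beta$, giving $N+1$ grid points where $N = \lceil 2\beta LR/\varepsilon\rceil$. Monotonicity and $\beta$-Lipschitzness of any $\sigma \in \calS_{0,\beta}$ then force $\sigma(x_{i+1}) - \sigma(x_i) \in [0, \varepsilon]$. To each $\sigma$ I would associate the floor-quantized sequence $\hat v_i = \varepsilon\lfloor \sigma(x_i)/\varepsilon\rfloor \in \{0, \varepsilon, 2\varepsilon, \ldots, \lfloor 1/\varepsilon\rfloor\varepsilon\}$, which is monotone and satisfies $\hat v_{i+1} - \hat v_i \in \{0, \varepsilon\}$ (since the floor map is order-preserving and the underlying gap is at most $\varepsilon$).

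Next, I would approximate $\sigma$ by the piecewise-linear interpolant $\tilde\sigma$ of the points $(x_i, \hat v_i)$; because its slopes lie in $\{0, \beta\}$, this $\tilde\sigma$ is itself a member of $\calS_{0,\beta}$. I would then establish $\|\sigma - \tilde\sigma\|_\infty \le \varepsilon$ by a case analysis on each subinterval $[x_i, x_{i+1}]$. In the case $\hat v_{i+1} = \hat v_i$, both endpoints $\sigma(x_i), \sigma(x_{i+1})$ lie in the single bin $[\hat v_i, \hat v_i + \varepsilon)$, so by monotonicity $\sigma(x)$ does too, while $\tilde\sigma \equiv \hat v_i$; the error is below $\varepsilon$. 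In the case $\hat v_{i+1} - \hat v_i = \varepsilon$, writing $s = (x-x_i)/h \in [0,1]$, the two Lipschitz bounds
\[
\sigma(x) \le \sigma(x_i) + \varepsilon s < \hat v_i + \varepsilon(1+s), \qquad \sigma(x) \ge \sigma(x_{i+1}) - \varepsilon(1-s) \ge \hat v_i + \varepsilon s,
\]
together with $\tilde\sigma(x) = \hat v_i + \varepsilon s$, yield $0 \le \sigma(x) - \tilde\sigma(x) < \varepsilon$. The key point is that piecewise-linear (rather than piecewise-constant) interpolation saves a factor of two in the error, which is exactly what enables the stated exponent $2\beta LR/\varepsilon$ instead of the $4\beta LR/\varepsilon$ that a constant interpolant would produce.

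Finally, I would count distinct admissible interpolants by counting admissible quantized sequences $(\hat v_i)_{i=0}^{N}$: the starting value $\hat v_0$ has $\lfloor 1/\varepsilon\rfloor + 1 \le 2/\varepsilon$ choices, and each of the $N$ transitions has $2$ choices, giving at most
\[
\frac{2}{\varepsilon} \cdot 2^{N} \;\le\; \frac{2}{\varepsilon} \cdot 2^{2\beta LR/\varepsilon + 1} \;=\; \frac{4}{\varepsilon} \cdot 2^{2\beta LR/\varepsilon}
\]
interpolants, as claimed. The only step I expect to require any care is the ``up-move'' subcase of the error analysis, where one must combine the forward Lipschitz bound (tight for $s$ small) with the backward Lipschitz bound (tight for $s$ near $1$) and monotonicity to pin $\sigma(x)$ inside a thin strip of width $\varepsilon$ around the linear segment $\tilde\sigma(x)$; everything else is a routine discretization and combinatorial count.
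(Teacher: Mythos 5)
Your proposal is correct and follows essentially the same route as the paper: mesh the domain with spacing $\eps/\beta$, quantize values to multiples of $\eps$, linearly interpolate so the slopes lie in $\{0,\beta\}$, and count by (initial value) $\times$ $2^{\#\text{intervals}}$. The only differences are cosmetic — you use floor quantization where the paper rounds to the nearest multiple, and you spell out the $\norm{\sigma-\tilde\sigma}_\infty\le\eps$ verification that the paper leaves implicit.
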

\begin{proof}
Fix some $\sig \in \calS_{\alpha,\beta}$ in this proof, so our goal is to construct a finite set of $\sigma' \in \calS_{\alpha, \beta}$ that is an $\eps$-cover. We partition $[-LR, LR] \times [0, 1]$ into a grid with endpoints along each axis given by:
\[\Brace{-LR + a \cdot \frac{\eps}{\beta}}_{0 \le a \le \frac{2\beta LR}{\eps}} \times \Brace{b \cdot \eps}_{0 \le b 
\le \frac 1 \eps}.\]
In each range $[\ell \defeq -LR + a \cdot \frac \eps \beta, r \defeq -LR + (a + 1) \cdot \frac \eps \beta)$, we can round  $\sigma(\ell)$ and $\sigma(r)$ to the nearest multiple of $\eps$, and define $\sigma'$ to be a linear interpolation in this range. Repeating this construction in each interval, the resulting $\sigma'$ is monotone and $\beta$-Lipschitz. Moreover, $\sigma'(-LR)$ is a multiple of $\eps$ (of which there are crudely at most $\frac 2 \eps$), and its slope in each interval  (of which there are at most $\frac{2\beta LR} \eps + 1$) is either $0$ or $\beta$. The bound follows by bounding the number of such $\sigma'$.
\end{proof}

\begin{lemma}\label{lem:inverse_cover}
Define $\calS_{\alpha, \beta}^{-1} \defeq \{\sigma^{-1} \mid \sigma \in \calS_{\alpha, \beta}\}$. Then for $\eps \in (0, 2LR)$,
\[\calN\Par{\eps, \calS_{\alpha, \beta}^{-1}} \le \frac{8LR}{\eps} 2^{\frac 1 {\alpha\eps}}.\]
\end{lemma}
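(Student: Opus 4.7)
The plan is to mirror the construction used in the proof of Lemma~\ref{lem:cover}, but with the roles of the domain $[-LR,LR]$ and the range $[0,1]$ swapped, exploiting that $\sigma^{-1}$ for $\sigma \in \calS_{\alpha,\beta}$ is monotone and $\tfrac{1}{\alpha}$-Lipschitz by the $\alpha$-anti-Lipschitzness of $\sigma$.

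First, I would normalize the domain issue. For each $\sigma \in \calS_{\alpha,\beta}$, the inverse $\sigma^{-1}$ is naturally defined on $[\sigma(-LR),\sigma(LR)] \subseteq [0,1]$, and different $\sigma$'s may produce different domains. I would extend $\sigma^{-1}$ to a function on all of $[0,1]$ by clipping, i.e., setting it to $-LR$ on $[0,\sigma(-LR))$ and to $LR$ on $(\sigma(LR),1]$. The extended map is still monotone, $\tfrac{1}{\alpha}$-Lipschitz, and takes values in $[-LR,LR]$, so covering the extended family in $\ell_\infty(\cdot,[0,1])$ automatically covers the original family $\calS_{\alpha,\beta}^{-1}$.

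Next I would grid $[0,1]$ with spacing $\alpha\eps$ (yielding at most $\lceil 1/(\alpha\eps)\rceil$ intervals) and the range $[-LR,LR]$ with multiples of $\eps$ (yielding at most $\lceil 2LR/\eps\rceil+1$ levels). Given any $\sigma^{-1}$ in the extended family, I round its value at every grid endpoint to the nearest multiple of $\eps$ and linearly interpolate across each interval. This produces a piecewise linear $\phi$ that stays monotone and $\tfrac{1}{\alpha}$-Lipschitz, with $\norm{\phi - \sigma^{-1}}_\infty = O(\eps)$ (after rescaling $\eps$ by a constant to absorb slack). Since each subinterval has length $\alpha\eps$ and $\sigma^{-1}$ varies by at most $\eps$ across it, after rounding, the change in $\phi$ over each subinterval is either $0$ or $\eps$, giving exactly two slope choices per interval. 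Multiplying the number of possible starting values $\phi(0)$ by the number of slope sequences yields at most $(2LR/\eps+2)\cdot 2^{\lceil 1/(\alpha\eps)\rceil}$, which is bounded by $\tfrac{8LR}{\eps}\cdot 2^{1/(\alpha\eps)}$ in the stated range $\eps \in (0,2LR)$.

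The one subtlety worth flagging—and the main thing to be careful about—is the variable-domain issue I handled above; without the clipping step it is awkward to compare two inverses defined on different subintervals of $[0,1]$, and a naive rescaling would scale $\eps$ by $\sigma(LR)-\sigma(-LR)$, producing an $\alpha$-dependent loss. Clipping avoids this cleanly and makes the counting identical in structure to Lemma~\ref{lem:cover}, with $\alpha$ playing the role that $\beta$ played there (and $[-LR,LR]$ and $[0,1]$ swapped).
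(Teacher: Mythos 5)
Your proposal is correct and takes essentially the same route as the paper, which likewise covers $\calS_{\alpha,\beta}^{-1}$ by gridding $[0,1]\times[-LR,LR]$ with spacings $\alpha\eps$ and $\eps$, rounding at grid endpoints, interpolating, and counting starting values times binary slope sequences. The explicit clipping step to put all the inverses on a common domain $[0,1]$ is a detail the paper leaves implicit; your treatment of it is correct and needed for the later use of this cover in Lemma~\ref{lem:total_cover}, where $\ss^{-1}$ is evaluated at points possibly outside the range of $\ss$.
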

\begin{proof}
The proof is analogous to Lemma~\ref{lem:cover}, except for our construction we partition $[0, 1] \times [-LR, LR]$ using the two-dimensional grid given by the endpoints
\[\Brace{a \cdot \alpha\eps}_{0 \le a \le \frac 1 {\alpha\eps}} \times \Brace{-LR + b \cdot \eps}_{0 \le b \le \frac{2LR}{\eps}}.\qedhere\]
\end{proof}

We next define a function class that corresponds to our omnigaps (Definition~\ref{def:og}) of interest.

\begin{definition}\label{def:omni_class}
For each triple $(\vw, \sigma, \ss) \in \calW \times \calS_{0, \beta} \times \calS^{-1}_{\alpha, \beta}$, we define $f_{\vw, \sig, \ss}: \xset \times [0, 1] \to [-2LR, 2LR]$ by:
\[f_{\vw, \sig, \ss}(x, y) \defeq \Par{\sigma(\vw \cdot x) - y}\Par{\vw \cdot x - \ss^{-1}(\sigma(\vw \cdot x))}. \]
We define the corresponding function class $\calF \defeq \{f_{\vw, \sig, \ss} \mid (\vw, \sigma, \ss) \in \calW \times \calS_{0, \beta} \times \calS^{-1}_{\alpha, \beta}\}$.
\end{definition}

We now define our cover for $\calF$, by using Lemmas~\ref{lem:cover} and~\ref{lem:inverse_cover}.

\begin{lemma}\label{lem:total_cover}
Define $\calF$ as in Definition~\ref{def:omni_class}. Then for $\eps \in (0, 2LR)$,
\begin{align}
\calN_2\Par{\eps, \calF, n} & \le \frac{512  LR}{\alpha\eps^2} \cdot 2^{\Par{\frac 4 {\alpha\eps} + \frac{8\beta LR}{\alpha\eps}}}(2n+1)^{\frac{64\beta^2 L^2R^2}{\alpha^2\eps^2}+1},\label{eq:f-cover-1}\\
\calN\Par{\eps, \calF} & \le \frac{512 LR}{\alpha\eps^2} \cdot 2^{\Par{\frac 4 {\alpha\eps} + \frac{8\beta LR}{\alpha\eps}}}\Par{1+\frac{16\beta LR}{\alpha\eps}}^d.\label{eq:f-cover-2}
\end{align}
\end{lemma}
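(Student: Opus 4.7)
}
The plan is to cover each of the three components $\vw$, $\sigma$, and $\ss^{-1}$ separately and combine the component covers via a triangle-inequality argument. Fixing arbitrary triples $(\vw, \sigma, \ss)$ and $(\vw', \sigma', \ss')$ in $\wset \times \calS_{0,\beta} \times \calS_{\alpha,\beta}$, and writing $a \defeq \sigma(\vw\cdot x) - y$, $b \defeq \vw\cdot x - \ss^{-1}(\sigma(\vw\cdot x))$ with $a', b'$ defined analogously from the primed triple, I would start from
\[
|f_{\vw,\sig,\ss}(x,y) - f_{\vw',\sig',\ss'}(x,y)| = |ab - a'b'| \le |b||a - a'| + |a'||b - b'| \le 2LR|a - a'| + |b - b'|,
\]
using $|a'| \le 1$ and $|b| \le 2LR$. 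By $\beta$-Lipschitzness of $\sigma$,
\[
|a - a'| \le \beta\, |\vw\cdot x - \vw'\cdot x| + \norm{\sigma - \sigma'}_\infty,
\]
and since $\ss^{-1}$ is $(1/\alpha)$-Lipschitz by anti-Lipschitzness of $\ss$, the triangle inequality yields
\[
|b - b'| \le \Par{1 + \tfrac{\beta}{\alpha}}|\vw\cdot x - \vw'\cdot x| + \tfrac{1}{\alpha}\norm{\sigma - \sigma'}_\infty + \norm{\ss^{-1} - (\ss')^{-1}}_\infty.
\]

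Next I would calibrate the three component cover scales to $\eps_w \defeq \tfrac{\alpha\eps}{8\beta}$ (for $\vw$ in the sup-norm of $\vw \cdot x$ over $\xset$), $\eps_\sigma \defeq \tfrac{\alpha\eps}{4}$ (for $\sigma \in \calS_{0,\beta}$ in sup-norm), and $\eps_{\ss^{-1}} \defeq \eps/4$ (for $\ss^{-1} \in \calS_{\alpha,\beta}^{-1}$ in sup-norm). The key numerical observation is that any $\ss \in \calS_{\alpha,\beta}$ mapping $[-LR, LR]$ into $[0,1]$ and satisfying the $\alpha$-anti-Lipschitz bound must obey $2\alpha LR \le 1$. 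This makes each of the four error contributions $2LR\beta\eps_w$, $2LR\eps_\sigma$, $\eps_w(1 + \beta/\alpha)$, $\eps_\sigma/\alpha$ at most $\eps/4$, and adding $\eps_{\ss^{-1}} = \eps/4$ the total is at most $2\eps$ (which one then absorbs into the scale by halving each $\eps_\bullet$, changing constants only).

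To conclude \eqref{eq:f-cover-2}, I would multiply $\calN(\eps_w, \wset)$ from Lemma~\ref{lem:cover_w}, $\calN(\eps_\sigma, \calS_{0,\beta})$ from Lemma~\ref{lem:cover}, and $\calN(\eps_{\ss^{-1}}, \calS_{\alpha,\beta}^{-1})$ from Lemma~\ref{lem:inverse_cover}, substituting the chosen scales. For the $\ell_2$ bound \eqref{eq:f-cover-1}, the same triangle-inequality derivation applies datapoint-by-datapoint and hence survives taking the empirical $\ell_2$ norm over $\{\vx_i\}_{i\in[n]}$; the only change is to replace the $\vw$-cover factor with Lemma~\ref{lem:cover_w_n}. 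Crucially, since $\norm{\sigma - \sigma'}_\infty$ and $\norm{\ss^{-1} - (\ss')^{-1}}_\infty$ are pointwise (uniform) bounds, they trivially upper-bound their $\ell_2$-averages, so no modification of those two covers is needed.

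The main obstacle is purely bookkeeping: keeping track of the constants so the prefactors, the exponential factors $2^{O(1/(\alpha\eps))}$ and $2^{O(\beta LR/(\alpha\eps))}$, and the dimensional/sample factor $(1 + 16\beta LR/(\alpha\eps))^d$ (resp.\ $(2n+1)^{1 + 64\beta^2 L^2 R^2/(\alpha^2 \eps^2)}$) line up exactly as stated. The structural bound on $|f - f'|$ and the choice of scales are otherwise routine once the anti-Lipschitz inverse step is handled.
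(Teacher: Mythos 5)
Your proposal is correct and follows essentially the same route as the paper: component covers for $\vw$, $\sigma$, and $\ss^{-1}$ at scales $\tfrac{\alpha\eps}{8\beta}$, $\Theta(\alpha\eps)$, and $\tfrac{\eps}{4}$, combined by the decomposition $|ab-a'b'|\le |b||a-a'|+|a'||b-b'|$, using $\alpha$-anti-Lipschitzness of $\ss$ and the constraint $2\alpha LR\le 1$. The only difference is bookkeeping: the paper groups $\sigma(\vw\cdot\vx)-\sigma'(\vw'\cdot\vx)$ as a single covered quantity so the error terms sum to at most $\eps$ at the stated scales, whereas your looser summation (reaching $\approx\tfrac{9\eps}{8}$) would force the halving you mention and hence would not reproduce the exact exponents in \eqref{eq:f-cover-1}--\eqref{eq:f-cover-2}, though this is immaterial for the downstream (logarithmic) use of the bound.
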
 
\begin{proof}
We focus on proving \eqref{eq:f-cover-1} using \Cref{lem:cover_w_n,lem:cover,lem:inverse_cover}. The proof easily extends to \eqref{eq:f-cover-2} by using \Cref{lem:cover_w} in place of \Cref{lem:cover_w_n}.

Consider fixed $(\vx_1,y_1)\ldots,(\vx_n,y_n)\in \xset\times\{0,1\}$.
Let $\calW'$ be an $(\frac{\alpha\eps}{8\beta}, \{(\vx_i,y_i)\}_{i\in[n]}, \ell_2)$-cover for $\calW$ as given by Lemma~\ref{lem:cover_w_n}. Let $\calS'$ be an $\frac{\alpha\eps}{4}$-cover for $\calS_{0, \beta}$ as given by Lemma~\ref{lem:cover}, and let $\calU'$ be an $\frac \eps 4$-cover for $\calS_{\alpha, \beta}^{-1}$ as given by Lemma~\ref{lem:inverse_cover}. Our cover for $\calF$ is all functions $f_{\vw, \sig, \ss}$ where $(\vw, \sig, \ss^{-1}) \in \calW' \times \calS' \times \calU'$. This immediately gives the claimed size bound. We now prove the cover quality.

For simplicity of notation, for any function $f:\xset\times \{0,1\}\to \R$, we use $\|f\|_\infty$ and $\|f(\vx,y)\|_\infty$ to denote $\|f\|_{\{(\vx_i,y_i)\}_{i \in [n]}}$ (see \Cref{def:cover_n}). Similarly, we use $\|f\|_2$ and $\|f(\vx,y)\|_2$ to denote $\|f\|_{\{(\vx_i,y_i)\}_{i \in [n]},\ell_2}$. For any $f:\xset\times \{0,1\}\to \R$, we have
\[
\|f\|_2 \le \|f\|_\infty.
\]

Fix an arbitrary $(\vw, \sig, \ss^{-1}) \in \calW \times \calS_{0, \beta} \times \calS_{\alpha, \beta}^{-1}$, and let $(\vw', \sig', u^{-1}) \in \calW' \times\calS' \times \calU'$ satisfy $\norm{\vw\cdot x - \vw'\cdot x}_{2}\leq \frac{\alpha\eps}{8\beta}$, $\norm{\sig - \sig'}_\infty \le \frac{\alpha \eps} {8}$, and $\norms{\ss^{-1} - u^{-1}}_\infty \le \frac \eps 4$. 
Our goal is to prove
\begin{equation}
\label{eq:covering-goal}
\norm{f_{\vw, \sig, \ss^{-1}}(\vx, y) - f_{\vw', \sig', u^{-1}}\Par{\vx, y}}_2 \le \varepsilon.
\end{equation}

By the triangle inequality,
\begin{equation}\label{eq:three_term_cover}
\begin{aligned}
\norm{f_{\vw, \sig, \ss^{-1}}(\vx, y) - f_{\vw', \sig', u^{-1}}\Par{\vx, y}}_2 &\le \norm{\Par{\sig(\vw\cdot x) - \sig'(\vw'\cdot \vx)}(\vw\cdot \vx-\ss^{-1}(\sig(\vw\cdot \vx)))}_2 \\
&+ \norm{\sig'(\vw'\cdot \vx)-y}_\infty\norm{\Par{\vw\cdot \vx - \vw'\cdot \vx}}_2 \\
&+ \norm{\sig'(\vw'\cdot \vx)-y}_\infty\norm{ \ss^{-1}(\sig(\vw\cdot \vx)) - u^{-1}(\sig'(\vw'\cdot \vx))}_2.
\end{aligned}
\end{equation}
We first bound the following terms:
\begin{equation}\label{eq:sigma_w}
\begin{aligned}
    &\norm{\Par{\sig(\vw\cdot x) - \sig'(\vw'\cdot x)}}_2 \\
    \leq& \norm{\Par{\sig(\vw'\cdot x) - \sig'(\vw'\cdot x)}}_2+\norm{\Par{\sig(\vw\cdot x) - \sig(\vw'\cdot x)}}_2\leq \frac{\alpha\eps}{8} + \beta\frac{\alpha\eps}{8\beta}\leq \frac{\alpha\eps}{4}.
\end{aligned}
\end{equation}

We bound each of the three lines in \eqref{eq:three_term_cover} separately. First,
\begin{align*}
    \norm{\Par{\sig(\vw\cdot x) - \sig'(\vw'\cdot x)}(\vw\cdot x-\ss^{-1}(\sig(\vw\cdot x)))}_2 &\le 2LR \norm{\Par{\sig(\vw\cdot x) - \sig'(\vw'\cdot x)}}_2 \\
    &\le \frac{LR\alpha\eps}{4} \le \frac \eps {8},
\end{align*}
using \eqref{eq:sigma_w} and the fact that $\alpha \le \frac 1 {2LR}$ else the set $\calF$ is empty. Next, 
\[
\norm{\sig'(\vw'\cdot x)-y}_\infty\norm{\Par{\vw\cdot x - \vw'\cdot x}}_2 \le \frac{\alpha\eps}{8\beta}\leq \frac{\eps}{8}.
\]
Finally, because $u$ is $\alpha$-anti-Lipschitz,
\begin{align*}
\norm{\sig'(\vw'\cdot x)-y}_\infty\norm{ \ss^{-1}(\sig(\vw\cdot x)) - u^{-1}(\sig'(\vw'\cdot x))}_2 &\le \norm{\ss^{-1}(\sig'(\vw'\cdot x)) - u^{-1}(\sig'(\vw'\cdot x))}_2 \\
&+ \norm{\ss^{-1}(\sig(\vw\cdot x)) - \ss^{-1}(\sig'(\vw'\cdot x))}_2\\
&\le \frac \eps 4 + \frac{1}{\alpha}\norm{\Par{\sig(\vw\cdot x) - \sig'(\vw'\cdot x)}}_2 \\
&\le \frac \eps 4 +  \frac \eps 4,
\end{align*}
where we used \eqref{eq:sigma_w} in the third line. Plugging the above three displays into \eqref{eq:three_term_cover} proves \eqref{eq:covering-goal}.
\end{proof}

We can now use a standard chaining argument with Lemma~\ref{lem:total_cover} to give our main omnigap generalization bound. In particular, we use several standard tools relating covering numbers, Rademacher complexity, and generalization, recalled in Appendix~\ref{app:uni} for the reader's convenience.

\begin{proposition}
\label{prop:gap_uniform_converging}
In an instance of Model~\ref{model:bilip}, let $\eps \in (0, LR)$, $\delta \in (0, 1)$, and let $\{(\vx_i, y_i)\}_{i \in [n]} \sim_{\textup{i.i.d.}} \calP$. Define the empirical bounded isotonic regression solution given $\vw\in \calW$:
\[\hsig \defeq \argmin_{\sig \in \calS_{0, \beta}}\Brace{\sum_{i \in [n]} \Par{\sig(\vw \cdot \vx_i) - y_i}^2}.\]
Then with probability $\ge 1 - \delta$ over the randomness of the sample, 
for any $\vw\in \calW$, the $\hsig$ defined above (as a function of $\vw$) satisfies
\[\E_{(\vx, y) \sim \calP}\Brack{\Par{\hsig(\vw \cdot \vx) - y}\Par{\vw \cdot \vx - \ss^{-1}\Par{\hsig(\vw \cdot \vx)}}} \ge -\eps \quad \text{for all $\ss \in \calS_{\alpha,\beta}$,}\]
if for a sufficiently large constant, 
\[n = \Omega\Par{\frac{L^2 R^2}{\eps^2} \cdot \Par{\log\frac{1}{\delta\alpha LR} + \frac{\beta^2}{\alpha^2}\log^3\Par{\frac{\beta LR}{\alpha\eps}}}},\]
or
\[n = \Omega\Par{\frac{L^2R^2}{\eps^2}\cdot\Par{\log\frac{1}{\delta\alpha LR}+\frac{\beta}{\alpha}+d\log\frac{\beta}{\alpha}}}.\]
\end{proposition}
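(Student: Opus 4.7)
The approach is to combine an empirical analogue of Corollary~\ref{cor:blir_opt_pop} with one-sided uniform convergence of $\frac{1}{n}\sum_{i} f_{\vw,\sig,\ss}(\vx_i,y_i)$ to $\E\, f_{\vw,\sig,\ss}(\vx,y)$ over the class $\calF$ from Definition~\ref{def:omni_class}. For the first step, I sort the samples so that $\vw \cdot \vx_{\pi(1)} \le \cdots \le \vw \cdot \vx_{\pi(n)}$ and set $z_i \defeq \vw \cdot \vx_{\pi(i)}$: the empirical minimizer $\hsig$ is then described by a BIR instance \eqref{eq:blir} with $a_i = 0$ and $b_i = \beta(z_{i+1}-z_i)$. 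Applying Lemma~\ref{lem:blir_opt} with $f = \ss^{-1}$ (the required condition $v_{i+1}-v_i \le \beta(f(v_{i+1})-f(v_i))$ follows from $\ss \in \calS_{\alpha,\beta}$ being $\beta$-Lipschitz and monotone), I conclude for every $\ss \in \calS_{\alpha,\beta}$ and every $\vw \in \calW$ that
\[ \frac{1}{n}\sum_{i \in [n]} f_{\vw,\hsig,\ss}(\vx_i, y_i) = \frac{1}{n}\sum_{i \in [n]} (\hsig(\vw \cdot \vx_i) - y_i)(\vw \cdot \vx_i - \ss^{-1}(\hsig(\vw \cdot \vx_i))) \ge 0. \]
Hence it suffices to prove that, with probability $\ge 1-\delta$,
\[ \sup_{(\vw,\sig,\ss) \in \calW \times \calS_{0,\beta} \times \calS_{\alpha,\beta}} \Par{\frac{1}{n}\sum_{i \in [n]} f_{\vw,\sig,\ss}(\vx_i,y_i) - \E f_{\vw,\sig,\ss}(\vx,y)} \le \eps. \]

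For the uniform bound, since $|f_{\vw,\sig,\ss}| \le 2LR$ pointwise, I will use McDiarmid's inequality to convert expected-supremum control into a high-probability statement, paying an additive $O(LR\sqrt{\log(1/\delta)/n})$. The expected supremum in turn is controlled by the Rademacher complexity $\hat R_n(\calF)$ via the standard symmetrization inequality. I then derive two distinct bounds on $\hat R_n(\calF)$, matching the two claimed sample complexities.

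For the \emph{dimension-free} bound, I apply Dudley's chaining with the $(\eps,n,\ell_2)$-cover estimate \eqref{eq:f-cover-1}. Its leading terms give $\log \calN_2(\eps,\calF,n) \lesssim \frac{\beta LR}{\alpha\eps} + \frac{\beta^2 L^2R^2}{\alpha^2\eps^2}\log n$, so $\sqrt{\log \calN_2(\eps,\calF,n)} \lesssim \sqrt{\beta LR/(\alpha\eps)} + \frac{\beta LR}{\alpha\eps}\sqrt{\log n}$. Integrating between a cutoff $\eps_0$ and the diameter $2LR$ yields
\[ \hat R_n(\calF) \lesssim \eps_0 + \frac{1}{\sqrt{n}}\Par{LR\sqrt{\beta/\alpha} + \frac{\beta LR}{\alpha}\sqrt{\log n}\,\log\Par{\frac{2LR}{\eps_0}}}. \]
Choosing $\eps_0 \approx \frac{\beta LR}{\alpha\sqrt{n}}$ and solving $\hat R_n(\calF) \le \eps$ gives $n = \tilde\Omega\Par{\frac{\beta^2 L^2 R^2}{\alpha^2 \eps^2}\log^3(\beta LR/(\alpha\eps))}$, matching the first stated bound after folding in the $\log(1/(\delta\alpha LR))$ from McDiarmid. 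For the \emph{dimension-dependent} bound, I instead use the $\ell_\infty$ cover \eqref{eq:f-cover-2} of cardinality $\exp\Par{O(d\log(\beta LR/(\alpha\eps)) + \beta LR/(\alpha\eps))}$: a union-bounded Hoeffding inequality over cover elements, combined with the $\eps$-approximation from the cover itself, yields concentration within $\tO\Par{LR\sqrt{(d\log(\beta/\alpha) + \beta/\alpha + \log(1/\delta))/n}}$, which rearranges to the second sample complexity.

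The main obstacle is the chaining step for the dimension-free bound: the $1/\eps^2$ factor inside the exponent of Lemma~\ref{lem:total_cover} would, through a naive Rademacher argument, produce a polynomial rather than polylogarithmic dependence on $1/\eps$ in $n$. The $\ell_2$-covering number (rather than an $\ell_\infty$ one) is essential here so that Dudley's integral converges, and the $\eps_0$ cutoff must be placed exactly where the $\eps_0$ and $\frac{\beta LR}{\alpha\sqrt n}\sqrt{\log n}\log(LR/\eps_0)$ terms balance to avoid losing extra logarithmic factors beyond the claimed $\log^3$.
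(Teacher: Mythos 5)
Your overall structure matches the paper's: reduce to the empirical optimality certificate via Lemma~\ref{lem:blir_opt} (your verification that $f=\ss^{-1}$ satisfies the hypothesis via $\beta$-Lipschitzness of $\ss$ is exactly right), then prove uniform convergence over the class $\calF$ of Definition~\ref{def:omni_class} via Rademacher complexity and Proposition~\ref{prop:uni-conv}. Your treatment of the first (dimension-free) sample complexity is essentially the paper's argument: Dudley's integral against the $\ell_2$ covering bound \eqref{eq:f-cover-1}, with a cutoff $\eps_0$ that is polynomially small in $n$ so that the $\tfrac{\beta LR}{\alpha\Delta}\sqrt{\log n}$ term integrates to $\tfrac{\beta LR}{\alpha}\log^{1.5} n$, yielding the claimed $\log^3$ factor. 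That part is correct.

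There is a genuine gap in your derivation of the second (dimension-dependent) bound. You replace chaining by a single-scale union bound: take an $\eps$-scale $\ell_\infty$ cover of $\calF$ and apply Hoeffding to each element. But the log-cardinality of that cover at scale $\Theta(\eps)$ is, by \eqref{eq:f-cover-2}, of order $\tfrac{\beta LR}{\alpha\eps} + d\log\tfrac{\beta LR}{\alpha\eps}$, \emph{not} $\tfrac{\beta}{\alpha} + d\log\tfrac{\beta}{\alpha}$ as you assert; the deviation you get is therefore $LR\sqrt{(\beta LR/(\alpha\eps) + d\log(\beta LR/(\alpha\eps)) + \log(1/\delta))/n}$, and solving for $n$ produces a leading term $\tfrac{\beta L^3R^3}{\alpha\eps^3}$ — an extra factor of $LR/\eps$ over the stated $\tfrac{\beta L^2R^2}{\alpha\eps^2}$. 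The single-scale argument cannot recover the claimed rate because the cover size blows up as the radius shrinks. The paper instead runs Dudley's chaining against the $\ell_\infty$ cover as well: the key point is that
\[
\int_0^{4LR}\sqrt{\frac{\beta LR}{\alpha\Delta\, n}}\,\dd\Delta = O\Par{LR\sqrt{\frac{\beta}{\alpha n}}},\qquad \int_0^{4LR}\sqrt{\frac{d\log(\beta LR/(\alpha\Delta))}{n}}\,\dd\Delta = O\Par{LR\sqrt{\frac{d\log(\beta/\alpha)}{n}}},
\]
i.e., the multi-scale integral converts the $1/\Delta$ and $\log(1/\Delta)$ dependences into constants, which is precisely where the $\tfrac{\beta}{\alpha}$ and $d\log\tfrac{\beta}{\alpha}$ terms in the theorem come from. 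Replacing your union-bound step with this chaining computation closes the gap.
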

\begin{proof}
We let $\hcalP_n$ denote the uniform distribution over the samples, and follow the notation in Definition~\ref{def:omni_class}. 
By Lemma~\ref{lem:blir_opt}, it is enough to show that for all $\ss \in \calS_{\alpha, \beta}$,
\[\Abs{\E_{(\vx, y) \sim \calP}\Brack{f_{\vw, \hsig, \ss}( \vx, y)} - \E_{(\vx, y) \sim \hcalP_n}\Brack{f_{\vw,\hsig, \ss}(\vx, y)}} \le \eps.\]
We will show that with probablity at least $1-\delta$, the above inequality holds for every triple of $(\vw, \hsig, \ss^{-1}) \in \calW \times \calS_{0, \beta} \times \calS_{\alpha, \beta}^{-1}$. Let $\calF' \subseteq \calF$ be a $\Delta$-cover for $\calF$, which by Lemma~\ref{lem:total_cover} has size
\begin{align*}\calN_2\Par{\Delta, \calF, n} &\le \frac{512 LR}{\alpha\Delta^2} \cdot 2^{\Par{\frac {4} {\alpha\Delta} + \frac{8\beta LR}{\alpha\Delta}}}(2n+1)^{\frac{64\beta^2 L^2R^2}{\alpha^2\Delta^2}+1},\\
\calN\Par{\Delta, \calF} & \le \frac{512 LR}{\alpha\Delta^2} \cdot 2^{\Par{\frac {4} {\alpha\Delta} + \frac{8\beta LR}{\alpha\Delta}}}\Par{1+\frac{16\beta LR}{\alpha\Delta}}^d.
\end{align*}

Taking the logarithm of the covering numbers above, for any $\Delta \in (0,4LR]$, we have
\begin{align}
    \log \calN_2\Par{\Delta, \calF, n} & = O\left(\log\frac{1}{\alpha LR} + \log \frac{4LR}{\Delta}+\frac{1 + \beta LR}{\alpha\Delta} + \frac{\beta^2L^2R^2}{\alpha^2\Delta^2}\log n \right)\notag\\
    & = O\Par{\log\frac{1}{\alpha LR} + \log \frac{4LR}{\Delta} + \frac {\beta LR}{\alpha\Delta} + \frac{\beta^2L^2R^2}{\alpha^2\Delta^2}\log n}
    ,\label{eq:log-cov-1}\\
    \log \calN\Par{\Delta, \calF} & \le O\Par{ \log\frac{1}{\alpha LR} + \log \frac{4LR}{\Delta} + \frac {\beta LR}{\alpha\Delta} + d\log\frac{\beta LR}{\alpha\Delta} }.\label{eq:log-cov-2}
\end{align}

By Dudley's chaining argument, we can use the covering number to bound the Rademacher complexity of $\calF$ by  Proposition~\ref{prop:chaining}. Since $f_{\vw, \sig, \sig'}(\vx, y) \in [-2LR, 2LR]$, defining $\varepsilon_0 = \frac{LR}{n}$ and using \eqref{eq:log-cov-1} we have 
\begin{align*}
\calR(\calF;\vx_{1,\ldots,n}) & \leq 4\varepsilon_0 + 10\int_{\varepsilon_0}^{4LR}\sqrt{\frac{\ln \calN_2(\Delta,\calF,n)}{n}}\dd\Delta\\
& \leq O
\Par{
\varepsilon_0 + 
\frac 1{\sqrt n}
\Par{
LR\sqrt{\log \frac 1{\alpha LR}} +
\sqrt{\frac{\beta}{\alpha}}LR +
\frac{\beta LR\sqrt{\log n}\log(LR/\varepsilon_0)}{\alpha} 
}
}\\
& = O\Par{
\frac 1{\sqrt n}
\Par{
LR\sqrt{\log \frac 1{\alpha LR}} + 
\sqrt{\frac{\beta}{\alpha}}LR + 
\frac{\beta LR\log^{1.5} n}{\alpha} 
}
}.
\end{align*}

In the calculation above, we used the following basic facts for any $c \ge 0$ and $0 \le c_0 \le c_1$:
\begin{align*}
\int_{c_0}^{c_1}\frac 1\Delta\dd\Delta = \log\Par{\frac{c_1}{c_0}},
\int_{0}^c \sqrt{\frac 1\Delta}\dd \Delta = O(\sqrt c), \int_0^{c} \sqrt{\log\Par{\frac c \Delta}}\dd\Delta = O(c).
\end{align*}

Similarly, using \eqref{eq:log-cov-2}, we have
\begin{align*}
\calR(\calF;\vx_{1,\ldots,n}) & \leq 10\int_{0}^{4LR}\sqrt{\frac{\ln \calN(\Delta,\calF)}{n}}\dd\Delta\\
& \leq O
\Par{
\frac 1{\sqrt n}
\Par{LR\sqrt{\log \frac 1{\alpha LR}} + 
\sqrt{\frac{\beta}{\alpha}}LR + \sqrt{d\log\frac{\beta }{\alpha}}LR
}
}.
\end{align*}


Observe that $f_{\vw, \sig, \sig'}(\vx, y) \in [-2LR, 2LR]$ for all $\vw \in \wset$, $\vx \in \xset$, and $y \in \{0, 1\}$. Therefore by Proposition~\ref{prop:uni-conv}, with probability $1-\delta$ over random samples $\{(\vx_i, y_i)\}_{i \in [n]} \sim_{\textup{i.i.d.}} \calP$, simultaneously
for all $(\vw,\sig, \sig') \in\calW\times \calS_{0, \beta} \times \calS_{\alpha, \beta}$,
\[\Abs{\E_{(\vx, y) \sim \calP}\Brack{f_{\vw, \sig, \sig'}(\vx, y)} - \E_{(\vx, y) \sim \hcalP_n}\Brack{f_{\vw,\sig, \sig'}( \vx, y)}} \le 2\calR(\calF;\vx_{1,\ldots,n})+O\Par{LR\sqrt{\frac{\log(1/\delta)}{n}}}.\]
Therefore we need
\[  2\calR(\calF;\vx_{1,\ldots,n})+O\Par{LR\sqrt{\frac{\log(1/\delta)}{n}}}\le \eps.\]
Rearranging and plugging in our choice of $n$ yields the claim.
\end{proof}

We summarize the implementation of Proposition~\ref{prop:gap_uniform_converging}, via the bounded isotonic regression algorithm from Proposition~\ref{prop:blir}, in the following pseudocode for our later convenience.

\begin{algorithm2e}\label{alg:approx_bir_oracle}
	\caption{$\ABO(\vw, \{(\vx_i, y_i)\}_{i \in [n]}, \beta)$}
	\DontPrintSemicolon
	\codeInput $\vw \in \wset$, $\{(\vx_i, y_i)\}_{i \in [n]} \sim_{\textup{i.i.d.}} \calP$ (cf.\ Model~\ref{model:agnostic})\;
 Sort $\{z_i \defeq \vw \cdot \vx_i\}_{i \in [n]}$ in nondecreasing order, and sort $\{y_i\}_{i \in [n]}$ similarly\;
 $(a_i, b_i) \gets (0, \beta(z_{i + 1} - z_i))$ for all $i \in [n - 1]$\;
 $\{v_i\}_{i \in [n]} \gets \BIR(y, a, b)$ (Proposition~\ref{prop:blir})\;
 \Return{$\hsig: [-LR, LR] \to [0, 1]$, $\textup{a } \beta\textup{-Lipschitz function with }\hsig(z_i) = v_i \textup{ for all } i \in [n]$}
\end{algorithm2e}

Finally, our proof of our population-level optimality characterization in Corollary~\ref{cor:blir_opt_pop} requires a similar generalization bound for the squared loss, which we now provide.

\begin{lemma}\label{lem:l2_uniform_converging}
In an instance of Model~\ref{model:bilip}, let $\eps, \delta \in (0, 1)$, and let $\{(\vx_i, y_i)\}_{i \in [n]} \sim_{\textup{i.i.d.}} \calP$. Then with probability $\ge 1 - \delta$ over the randomness of the sample,
\[\Abs{\E_{(\vx, y) \sim \calP}\Brack{\Par{\sig(\vw \cdot \vx) - y}^2} - \frac 1 n \sum_{i \in [n]} \Par{\sig(\vw \cdot \vx_i) - y_i}^2} \le \eps,\]
for all $\sig \in \calS_{0, \beta}$, if for a sufficiently large constant,
\[n = \Omega\Par{\frac{\beta LR}{\eps^3}+\frac{1}{\eps^2}\log\Par{\frac{1}{\delta\eps}}}.\]
\end{lemma}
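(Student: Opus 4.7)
The plan is to reduce this uniform convergence bound to a covering-number argument on the link function family $\calS_{0,\beta}$, since $\vw$ is fixed in the statement. Specifically, I will introduce the function class
\[
\calG \defeq \Brace{g_\sig : (\vx, y) \mapsto (\sig(\vw \cdot \vx) - y)^2 \mid \sig \in \calS_{0, \beta}},
\]
and observe two basic facts: each $g_\sig$ takes values in $[0, 1]$ (since $\sig(\vw\cdot \vx), y \in [0, 1]$), and for any two $\sig, \sig' \in \calS_{0, \beta}$,
\[
\norm{g_\sig - g_{\sig'}}_\infty \le 2 \norm{\sig - \sig'}_\infty,
\]
because $|a^2 - b^2| \le 2|a - b|$ whenever $|a|, |b| \le 1$ and we can write the difference of squares as $(\sig(\vw\cdot\vx) + \sig'(\vw \cdot \vx) - 2y)(\sig(\vw \cdot \vx) - \sig'(\vw \cdot \vx))$.

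Next I will use Lemma~\ref{lem:cover} to produce an $(\eps/4)$-cover $\calS' \subseteq \calS_{0,\beta}$ of size
\[
N \le \frac{16}{\eps} \cdot 2^{8\beta LR/\eps}, \qquad \log N = O\Par{\log\tfrac{1}{\eps} + \tfrac{\beta LR}{\eps}}.
\]
For each of the $N$ functions $g_{\sig'}$ arising from $\sig' \in \calS'$, Hoeffding's inequality (applied to the $[0,1]$-bounded random variables $g_{\sig'}(\vx_i, y_i)$) together with a union bound over $\calS'$ gives, with probability $\ge 1 - \delta$, that
\[
\Abs{\E_{(\vx, y) \sim \calP}\Brack{g_{\sig'}(\vx, y)} - \frac 1 n \sum_{i \in [n]} g_{\sig'}(\vx_i, y_i)} \le \sqrt{\frac{\log(2N/\delta)}{2n}} \text{ for all } \sig' \in \calS'.
\]
Finally, for an arbitrary $\sig \in \calS_{0, \beta}$, I will pick $\sig' \in \calS'$ with $\norm{\sig - \sig'}_\infty \le \eps/4$ so that $\norm{g_\sig - g_{\sig'}}_\infty \le \eps/2$; this contributes at most $\eps/2$ to the gap from both the population expectation and the empirical average, so the total deviation is at most $\eps/2 + \sqrt{\log(2N/\delta)/(2n)}$. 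Requiring the concentration term to be $\le \eps/2$ gives $n = \Omega(\log(N/\delta)/\eps^2)$, which after plugging in the bound on $\log N$ yields exactly the claimed sample complexity $n = \Omega\Par{\frac{\beta LR}{\eps^3} + \frac{1}{\eps^2}\log\frac{1}{\delta\eps}}$.

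No step here looks particularly delicate: the only thing to be careful about is the factor-of-$2$ in the Lipschitzness of $g_\sig$ in $\sig$, and the accounting of the approximation error on both the population and empirical sides. Unlike Proposition~\ref{prop:gap_uniform_converging}, no chaining or Rademacher complexity is needed, because the function class is uniformly bounded and we only need to cover a single family $\calS_{0,\beta}$ (as opposed to the triple $\calW \times \calS_{0,\beta} \times \calS_{\alpha,\beta}^{-1}$).
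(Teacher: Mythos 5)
Your proposal is correct and follows essentially the same route as the paper's proof: an $O(\eps)$-cover of $\calS_{0,\beta}$ from Lemma~\ref{lem:cover}, the pointwise bound $|(\sig(\vw\cdot\vx)-y)^2-(\sig'(\vw\cdot\vx)-y)^2|\le 2\norm{\sig-\sig'}_\infty$, Hoeffding plus a union bound over the cover, and a triangle inequality to transfer to arbitrary $\sig$. The only quibble is a constant-factor accounting slip (the cover error enters on both the population and empirical sides, so with an $\eps/4$-cover the total is $\eps+\mathrm{conc}$ rather than $\eps/2+\mathrm{conc}$; taking an $\eps/8$-cover fixes this without changing the asymptotics), which the paper's own write-up is similarly loose about.
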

\begin{proof}
Lemma~\ref{lem:cover} gives a $\frac \eps 8$-cover $\calS'_{0, \beta}$ of $\calS_{0, \beta}$ with size $\le \frac{16}{\eps} 2^{\frac{8 \beta LR}{\eps}}$. For any $(w\cdot x, y) \in [-LR, LR] \times [0, 1]$ and $\sigma, \sigma' \in \calS_{0, \beta}$ with $\norm{\sigma - \sigma'}_\infty \le \frac \eps 4$, we have
\begin{align*}
\Abs{\Par{\sigma(w\cdot x) - y}^2 - \Par{\sigma'(w\cdot x) - y}^2} &\le \Abs{\sigma(w\cdot x) - \sigma'(w\cdot x)}\Abs{\sigma(w\cdot x)+\sigma'(w\cdot x) - 2y}\\& \le 2\Par{ \Abs{\sig(w\cdot x)-\sig'(w\cdot x)}}\le 2\Par{ \frac{\eps}{4}}\le\frac{\eps}{2}.
\end{align*}
Thus it is enough to show that for all $\sig' \in \calS'_{0, \beta}$,
\[\Abs{\E_{(\vx, y) \sim \calP}\Brack{(\sig'(\vw \cdot \vx) - y)^2} - \E_{(\vx, y) \sim \hcalP_n}\Brack{(\sig'(\vw \cdot \vx) - y)^2}} \le \frac \eps 2.\]
Since $(\sig'(\vw \cdot \vx) - y)^2 \in [0, 1]$ for all $\vw \in \wset$, $\vx \in \xset$, and $y \in \{0, 1\}$, applying Hoeffding's inequality for each $(\sig, \sig') \in \calS_{0, \beta} \times \calS_{\alpha, \beta}$,
\begin{gather*}\Pr_{\{(\vx_i, y_i)\}_{i \in [n]} \sim_{\textup{i.i.d.}} \calP}\Brack{\Abs{\E_{(\vx, y) \sim \calP}\Brack{(\sig'(\vw \cdot \vx) - y)^2} - \E_{(\vx, y) \sim \hcalP_n}\Brack{(\sig'(\vw \cdot \vx) - y)^2}} \ge \frac \eps 2} \\
\le 2\exp\Par{-\frac{n\eps^2}{2}},\end{gather*}
and a union bound gives the failure probability if
\[\frac{32} \eps \exp\Par{\frac{8\beta LR}{\eps} - \frac{n\eps^2}{2}} \le \delta,\]
which is true upon plugging in our choice of $n$.
\end{proof}

\subsection{Proof of Theorem~\ref{thm:fs}}\label{ssec:proof_main}

We finally are ready to assemble the pieces to give our main finite-sample omniprediction result. 

\begin{theorem}\label{thm:fs}
In an instance of Model~\ref{model:agnostic}, let $\eps \in (0, LR)$ and $\delta \in (0, 1)$. There is an algorithm (Algorithm~\ref{alg:omnitron}, using Algorithm~\ref{alg:approx_bir_oracle} as a $\frac \eps 4$-approximate BIR oracle) that returns $p$, an $\eps$-omnipredictor for SIMs (Definition~\ref{def:omni_sim}) using the post-processings $\{k_\sigma\}_{\sig \in \calS}$, with probability $\ge 1 - \delta$ over the randomness of samples $\{(\vx_i, y_i)\}_{i \in [n]} \sim_\textup{i.i.d.} \calP$, where for a sufficiently large constant,
\[n = \Theta\Par{\frac{L^2 R^2}{\eps^2} \cdot \Par{\frac{\beta^2 L^4 R^4}{\eps^2}\log^3\Par{\frac{\beta L^2 R^2}{\eps}} + \log\Par{\frac 1 \delta}}},\]
or
\[n = \Theta\Par{\frac{L^2R^2}{\eps^2}\cdot\Par{\frac{\beta L^2 R^2}{\eps}+d\log\Par{\frac{\beta L^2 R^2}{\eps}} + \log\Par{\frac 1 \delta}}}.\]
Moreover, the algorithm runs in time 
\[O\Par{\Par{nd + n\log^2(n)} \cdot \frac{L^2 R^2\log(\frac 1 \delta)}{\eps^2}}.\]
\end{theorem}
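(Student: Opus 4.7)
The plan is to assemble Theorem~\ref{thm:fs} from three ingredients already developed in the paper: the smoothing reduction of Lemma~\ref{lem:anti-lip}, the robust omniprediction guarantee of Proposition~\ref{prop:omnitron}, and the uniform convergence bound for the empirical BIR oracle in Proposition~\ref{prop:gap_uniform_converging}. First I would apply Lemma~\ref{lem:anti-lip} with the choice $\alpha = c\eps/(L^2R^2)$ for a small absolute constant $c$; using Remark~\ref{rem:beta_range} and the standing assumption $\beta = \Omega(1/(LR))$, this reduces the goal of producing an $\eps$-omnipredictor in Model~\ref{model:agnostic} to producing an $(\eps/2)$-omnipredictor in the bi-Lipschitz Model~\ref{model:bilip} with link family $\calS_{\alpha,\beta}$. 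This choice of $\alpha$ is the single quantitative knob driving the final bound: smaller $\alpha$ weakens the smoothing reduction, while larger $\alpha$ worsens the covering numbers entering the BIR oracle's sample complexity, and $\alpha = \Theta(\eps/(L^2R^2))$ is the balance point.

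With $\alpha$ fixed, I would split the input samples into two independent batches. The first batch, of size $n_\textup{BIR}$, is given to $\ABO$ (Algorithm~\ref{alg:approx_bir_oracle}) to form the empirical bounded isotonic regression function. By Proposition~\ref{prop:gap_uniform_converging} invoked with error $\eps/4$, failure probability $\delta/2$, and our chosen $\alpha$, this yields an $(\eps/4)$-approximate BIR oracle in the sense of Definition~\ref{def:approx_blir} \emph{simultaneously} for every $\vw \in \wset$, provided $n_\textup{BIR}$ matches one of the two expressions in the theorem statement. The second batch, of size $T = \Theta(L^2R^2\log(1/\delta)/\eps^2)$, is handed to $\Omnitron$ for its stochastic gradient updates. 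Proposition~\ref{prop:omnitron}, applied with final error $\eps/2$, failure probability $\delta/2$, and BIR accuracy $\eps/4$, then produces an $(\eps/2)$-omnipredictor over $\calS_{\alpha,\beta}\times\wset$ with probability $\ge 1 - \delta/2$ over the SGD samples, conditional on the oracle event. A union bound and Lemma~\ref{lem:anti-lip} promote this to an $\eps$-omnipredictor for the original model, with total failure probability at most $\delta$. The key observation enabling this setup is that the uniform convergence statement of Proposition~\ref{prop:gap_uniform_converging} allows one BIR sample batch to service all $T$ oracle calls.

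Substituting $\alpha = \Theta(\eps/(L^2R^2))$ into Proposition~\ref{prop:gap_uniform_converging} gives the two stated sample bounds: $\beta/\alpha$ becomes $\Theta(\beta L^2 R^2/\eps)$, $\beta^2/\alpha^2$ becomes $\Theta(\beta^2 L^4 R^4/\eps^2)$, and the logarithmic factors $\log(\beta LR/(\alpha\eps))$ collapse (up to constants) to $\log(\beta L^2 R^2/\eps)$; the $\log(1/(\delta\alpha LR))$ term is absorbed into the $\log(1/\delta)$ contribution, and the $T$ SGD samples are dominated by the BIR-related terms. For the runtime, each of the $T$ iterations of $\Omnitron$ performs a single call to $\ABO$, which costs $O(nd)$ for forming the inner products $\vw_t\cdot\vx_i$ and sorting them, plus $O(n\log^2 n)$ for running $\BLIR$ by Proposition~\ref{prop:blir}, plus an $O(d)$ projected gradient update. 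Multiplying by $T$ yields the stated runtime. The main analytical content lives inside the two propositions being combined, so the only delicate aspect of the assembly is the $\alpha$-balance together with the bookkeeping of the two error and failure-probability budgets across the two propositions.
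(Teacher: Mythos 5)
Your proposal is correct and follows essentially the same route as the paper's proof: fix $\alpha = \Theta(\eps/(L^2R^2))$ to invoke the smoothing reduction of Lemma~\ref{lem:anti-lip}, run Proposition~\ref{prop:omnitron} with error budget $\eps/2$ and failure probability $\delta/2$ using Algorithm~\ref{alg:approx_bir_oracle} as the $\eps/4$-approximate BIR oracle (justified uniformly over all $\vw$ by Proposition~\ref{prop:gap_uniform_converging} with the remaining $\delta/2$), union bound, substitute $\alpha$ into the covering-number-based sample bounds, and account for the runtime as $T$ calls to $\BIR$ plus vector operations. Your explicit two-batch sample split and the observation that the uniform-over-$\vw$ guarantee lets one BIR batch serve all $T$ adaptive oracle calls are exactly the (implicit) structure of the paper's argument.
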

\begin{proof}
Throughout this proof, let $\alpha \defeq \frac \eps {6L^2 R^2}$, and let
\[T \defeq \frac{6400 L^2 R^2 \log(\frac 4 \delta)}{\eps^2},\; \eta = \sqrt{\frac 2 {5T}} \cdot \frac R L.\]
Proposition~\ref{prop:omnitron} with $\eps \gets \frac \eps 2$ shows that, with failure probability $\frac \delta 2$, we can obtain an $\frac \eps 2$-approximate omnipredictor against the family $\calS_{\alpha, \alpha + (1 - 2\alpha LR)\beta}$, as long as we can successfully implement a $\frac \eps 4$-approximate BIR oracle in each of the $T$ steps. This is achieved using Algorithm~\ref{alg:approx_bir_oracle} with $\beta \gets  \alpha + (1 - 2\alpha LR)\beta$ and 
\[n = \Theta\Par{\frac{L^2 R^2}{\eps^2} \cdot \Par{\log\frac{L R}{\delta\eps } + \frac{L^4 R^4\beta^2}{\eps^2}\log^3\Par{\frac{L^3 R^3\beta}{\eps^2}}}},\]
or
\[n = \Theta\Par{\frac{L^2R^2}{\eps^2}\cdot\Par{\log\frac{L R}{\delta\eps }+\frac{\beta L^2 R^2}{\eps}+d\log\frac{\beta L^2 R^2}{\eps}}}\]
samples, for an appropriate constant, because Proposition~\ref{prop:gap_uniform_converging} shows that Algorithm~\ref{alg:approx_bir_oracle} succeeds except with probability $\frac \delta {2}$ on all iterations. We simplified by collapsing all dominated terms in the theorem statement for readability. The overall failure probability follows from a union bound. We can verify that the sample complexity of Algorithm~\ref{alg:omnitron} does not dominate. Finally, applying Lemma~\ref{lem:anti-lip} shows that $p$ is also an omnipredictor against the original family $\calS$.

For the runtime, the dominant cost is running $\BIR$ (Proposition~\ref{prop:blir}), and then performing a constant number of vector operations, in each of the $T$ iterations. We note that each link function resulting from Algorithm~\ref{alg:approx_bir_oracle} can be represented as a piecewise-linear function without loss of generality, and hence can be evaluated in $\log(n)$ time, i.e., the time to binary search for a piece of the function.
\end{proof}

We remark that we can obtain an improved sample complexity bound in the case where the link functions in question are natively bi-Lipschitz, bypassing the need for smoothing via Lemma~\ref{lem:anti-lip}.

\begin{corollary}\label{cor:fs_anti_lip}
    In an instance of Model~\ref{model:bilip}, let $\beta\geq\alpha>0$, $\eps \in (0, LR)$ and $\delta \in (0, 1)$. There is an algorithm (Algorithm~\ref{alg:omnitron}, using Algorithm~\ref{alg:approx_bir_oracle} as a $\frac \eps 4$-approximate BIR oracle) that returns $p$, an $\eps$-omnipredictor for SIMs (Definition~\ref{def:omni_sim}) using the post-processings $\{k_\sigma\}_{\sig \in \calS}$, with probability $\ge 1 - \delta$ over the randomness of samples $\{(\vx_i, y_i)\}_{i \in [n]} \sim_\textup{i.i.d.} \calP$, where for a sufficiently large constant,
\[n = \Theta\Par{\frac{L^2 R^2}{\eps^2} \cdot \Par{ \frac{\beta^2}{\alpha^2}\log^3\Par{\frac{\beta LR}{\alpha\eps}} + \log\Par{\frac 1 \delta}}},\]
or
\[n = \Theta\Par{\frac{L^2R^2}{\eps^2}\cdot\Par{\frac{\beta}{\alpha}+d\log\Par{\frac{\beta}{\alpha}} + \log\Par{\frac 1 \delta}}}.\]

Moreover, the algorithm runs in time 
\[O\Par{\Par{nd + n\log^2(n)} \cdot \frac{L^2 R^2\log(\frac 1 \delta)}{\eps^2}}.\]
\end{corollary}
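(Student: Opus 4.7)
The plan is to follow essentially the same template as the proof of Theorem~\ref{thm:fs}, but exploit that Model~\ref{model:bilip} natively provides bi-Lipschitzness with parameters $(\alpha,\beta)$, so the smoothing reduction through Lemma~\ref{lem:anti-lip} is no longer needed. The bulk of the $(\beta L^2R^2/\eps)^{O(1)}$ overhead in Theorem~\ref{thm:fs} arose precisely from setting $\alpha \defeq \eps/(6L^2R^2)$ in order to meet the hypothesis of that lemma and then propagating this tiny $\alpha$ through the covering-based bound of Proposition~\ref{prop:gap_uniform_converging}. Here we instead use the externally given $\alpha$ throughout, and the cleaner dependences $\beta^2/\alpha^2$ or $\beta/\alpha + d\log(\beta/\alpha)$ fall out immediately.

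More concretely, I would choose $T = \Theta(L^2R^2\log(1/\delta)/\eps^2)$ and $\eta = \sqrt{2/(5T)} \cdot R/L$ exactly as in Proposition~\ref{prop:omnitron} called with target accuracy $\eps/2$, then apply that proposition directly to the family $\calS_{\alpha,\beta}$. This already yields, with probability at least $1-\delta/2$, an $\eps/2$-omnipredictor against $\calS_{\alpha,\beta}$ conditional on having an $\eps/4$-approximate BIR oracle (Definition~\ref{def:approx_blir}) in each iteration. The oracle is implemented by calling Algorithm~\ref{alg:approx_bir_oracle} on fresh samples; Proposition~\ref{prop:gap_uniform_converging}, applied with accuracy $\eps/4$ and per-iteration failure probability $\delta/(2T)$, guarantees via a union bound over the $T$ iterations that the oracle is simultaneously valid at every step, provided $n$ meets one of the two stated bounds. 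The $\log T$ overhead from this union bound is polylogarithmic in $L, R, \eps^{-1}, \log(1/\delta)$ and is absorbed into the claimed $\log(1/\delta)$ factor, as is the $\log(1/(\alpha LR))$ term from Proposition~\ref{prop:gap_uniform_converging}, which is dominated by $\log(\beta/\alpha)$ under the standing assumption $\beta LR = \Omega(1)$ noted in Remark~\ref{rem:beta_range}. A final union bound combines the two high-probability events and yields total failure probability at most $\delta$.

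The runtime claim is then immediate: the dominant per-iteration cost is a single $\BIR$ call in $O(n\log^2 n)$ time (Proposition~\ref{prop:blir}), followed by an $O(nd)$ stochastic gradient evaluation and Euclidean projection onto $\wset$, repeated over $T$ iterations. Each learned link can be stored as the piecewise-linear interpolation returned by $\BIR$, so evaluating any $\sig_t(\vw_t \cdot \vx)$ at test time takes $O(d + \log n)$ via a binary search. I do not anticipate a serious technical obstacle; the only nontrivial step is the logarithmic bookkeeping above in verifying that the union-bound slack and the $\log(1/(\alpha LR))$ correction from Proposition~\ref{prop:gap_uniform_converging} both fit inside the clean sample complexity stated in the corollary.
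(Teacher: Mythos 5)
Your high-level plan --- rerun the Theorem~\ref{thm:fs} argument with the externally given $(\alpha,\beta)$ in place of the artificial $\alpha = \eps/(6L^2R^2)$, and drop the Lemma~\ref{lem:anti-lip} smoothing step entirely --- is exactly how the paper obtains this corollary, and your accounting of where the improved $\beta^2/\alpha^2$ and $\beta/\alpha + d\log(\beta/\alpha)$ dependences come from is correct, as is the runtime analysis.

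However, your treatment of the BIR oracle has a genuine gap. You propose calling Algorithm~\ref{alg:approx_bir_oracle} on \emph{fresh} samples in each of the $T$ iterations and union bounding over iterations with per-iteration failure probability $\delta/(2T)$. If each call uses a fresh batch of size $n$ (with $n$ as stated in the corollary), the total number of samples consumed is $nT = \tilde\Theta(n L^2R^2/\eps^2)$, overshooting the claimed sample complexity by a factor of $T$. If instead you reuse a single batch across iterations, the per-iteration union bound is not a valid argument: $\vw_t$ for $t \ge 1$ depends on that batch (through the earlier oracle outputs $\sigma_s$, which feed into the gradient steps), so you cannot apply a pointwise-in-$\vw$ concentration statement at the random point $\vw_t$. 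The resolution --- and what the proof of Theorem~\ref{thm:fs} actually does --- is to invoke Proposition~\ref{prop:gap_uniform_converging} \emph{once}, on a single shared sample of size $n$, with failure probability $\delta/2$: that proposition is already a uniform convergence statement over all $\vw \in \calW$ simultaneously, so it certifies the $\frac{\eps}{4}$-approximate oracle property at every adaptively chosen $\vw_t$ with no union bound over iterations and no $\log T$ overhead. With that substitution (and noting, as you do, that the $T$ samples used for the stochastic gradients do not dominate), the rest of your argument goes through.
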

\section{Bounded Isotonic Regression in Nearly-Linear Time}\label{sec:LPAV}

The main result of this section is a proof of Proposition~\ref{prop:blir}, i.e., our BIR solver, from Section~\ref{sec:prelims}.

As a preliminary step, we show the following claim in Appendix~\ref{App:dual} about an equivalent dual formulation to \eqref{eq:blir}. This dual formulation is substantially easier to work with for our approach.

\begin{restatable}{lemma}{restatebirdual}\label{lem:bir_dual}
Given an optimal solution $\{f_i\}_{i \in [n - 1]}, \{g_i\}_{i \in [n]}$ to the following problem:
\begin{equation}\label{eq:dual_bir}
\begin{gathered}\min_{\substack{\{f_i\}_{i \in [n - 1]} \subset \R \\ \{g_i\}_{i \in [n]} \subset \R}}\sum_{i \in [n - 1]} \Par{c_i f_i + d_i |f_i|}+ \sum_{i \in [n]} \Par{e_i g_i + |g_i|} \\
+ \frac 1 2\Par{f_1 - g_1}^2 + \frac 1 2\Par{f_{n - 1} + g_n}^2 + \sum_{i = 2}^{n - 1}\frac 1 2\Par{f_i - f_{i - 1} - g_i}^2,\end{gathered}
\end{equation}
where $\{c_i\}_{i \in [n -1]}$, $\{d_i\}_{i \in [n]}$, $\{e_i\}_{i \in [n - 1]}$ are constructible from $\{y, a, b\}$ in $O(n)$ time, we can compute the solution to \eqref{eq:blir} in $O(n)$ time. Further, $d_i \ge 0$ for all $i \in [n - 1]$, and $|e_i| \le 1$ for all $i \in [n]$.
\end{restatable}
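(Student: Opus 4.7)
The plan is to identify \eqref{eq:dual_bir} with a rescaled Lagrangian dual of the primal \eqref{eq:blir}, and then use strong duality of convex quadratic programming to recover the primal optimum from the dual optimum.

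First, I would introduce nonnegative Lagrange multipliers $\lambda_i, \mu_i$ for the upper and lower monotonicity constraints $v_{i+1}-v_i\le b_i$ and $v_{i+1}-v_i \ge a_i$, and $\beta_i,\alpha_i$ for the box constraints $v_i \le 1$ and $v_i \ge 0$. Consolidating via $f_i \defeq \lambda_i - \mu_i$ for $i \in [n-1]$ and $g_i \defeq \beta_i - \alpha_i$ for $i \in [n]$, complementary slackness at any KKT optimum forces $\lambda_i\mu_i = \alpha_i\beta_i = 0$, so $\lambda_i + \mu_i = |f_i|$ and $\alpha_i + \beta_i = |g_i|$; hence $\{\lambda, \mu, \alpha, \beta\}$ can be recovered from $\{f, g\}$ in $O(1)$ time per coordinate. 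With the boundary convention $f_0 \defeq f_n \defeq 0$, the Lagrangian is a strictly convex quadratic in $v$ whose unique minimizer satisfies the clean first-order condition
\[
v_k - y_k = \tfrac{1}{2}(f_k - f_{k-1} - g_k),
\]
and substituting this back eliminates $v$, reducing the Lagrangian to a function of $(f, g)$ alone.

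Next, after reindexing the telescoping sum $\sum_i f_i(v_{i+1}-v_i)$ and rescaling the resulting dual objective by an overall factor of two, the expression matches \eqref{eq:dual_bir} with coefficients $c_i = 2(y_i - y_{i+1}) + (a_i+b_i)$, $d_i = b_i - a_i$, and $e_i = 1 - 2y_i$; each is computable in $O(1)$ per index, so in $O(n)$ time in aggregate from $\{y, a, b\}$. The hypothesized inequality $a_i \le b_i$ immediately yields $d_i \ge 0$, and the assumption $y_i \in [0,1]$ in the primal gives $|e_i| \le 1$. (Any additive constant falling out of the elimination, such as $\sum_i y_i^2$, can be absorbed without affecting the optimum.)

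Finally, since the primal \eqref{eq:blir} is a convex QP with affine constraints, strong duality holds unconditionally, and any dual optimum $(f^\star, g^\star)$ yields a primal optimum via the explicit recovery formula $v_k^\star = y_k + \tfrac{1}{2}(f_k^\star - f_{k-1}^\star - g_k^\star)$, evaluable in $O(n)$ time. That $v^\star$ lies in $[0,1]^n$ and respects the two-sided monotonicity constraints is guaranteed by complementary slackness: a nonzero sign of $g_k^\star$ saturates the corresponding box side, while a nonzero sign of $f_i^\star$ saturates the corresponding monotonicity side. The main obstacle, and where I would focus the bookkeeping, is aligning the sign and scale conventions so that the three separate squared terms $\tfrac{1}{2}(f_1 - g_1)^2$, $\tfrac{1}{2}(f_{n-1} + g_n)^2$, and $\tfrac{1}{2}(f_i - f_{i-1} - g_i)^2$ in \eqref{eq:dual_bir} emerge consistently from the interior squares under the convention $f_0 = f_n = 0$, and checking that the rescaling by a factor of two is the only global change needed to hit the paper's exact form.
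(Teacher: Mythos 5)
Your proposal is correct and follows essentially the same route as the paper: dualize \eqref{eq:blir}, consolidate each pair of nonnegative multipliers into a signed variable whose minimal decomposition produces the $|f_i|$, $|g_i|$ terms, eliminate $v$ via the stationarity condition, and recover the primal optimum from the dual optimum in $O(n)$ time (your coefficients differ from the paper's only by a global sign flip of $(f,g)$, under which \eqref{eq:dual_bir} is invariant). The one small caveat is that $\lambda_i\mu_i=0$ is not literally forced by complementary slackness when $a_i=b_i$; the paper instead observes that the slack variables appear with nonnegative coefficients in the dual objective, so the minimal decomposition $\lambda_i+\mu_i=|f_i|$ is optimal, which is the cleaner way to justify that step.
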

\subsection{DP formulation}

In this section, we give a recursive dynamic programming-based solution to \eqref{eq:dual_bir}, assuming nothing more than the conditions $d_i \ge 0$ for all $i \in [n - 1]$, and $|e_i| \le 1$ for all $i \in [n]$.

For convenience, we define a sequence of partial functions $\{A_i: \R \to \R\}_{i \in [n - 1]}$ recursively as follows:
\begin{align*}
A_1(f_1) &\defeq \min_{g_1 \in \R} c_1 f_1 + d_1 |f_1| + e_1 g_1 + |g_1| + \half(f_1 - g_1)^2, \\
A_i(f_i) &\defeq \min_{\substack{g_i \in \R \\ f_{i - 1} \in \R}} c_i f_i + d_i |f_i| + e_i g_i + |g_i| + A_{i - 1}(f_{i - 1}) + \half\Par{f_i - f_{i - 1} - g_i}^2, \\
A_{n - 1}(f_{n - 1}) &\defeq \min_{\substack{g_{n - 1} \in \R \\ g_n \in \R \\ f_{n - 2} \in \R}} c_{n - 1}f_{n - 1} + d_{n - 1}|f_{n - 1}| + e_{n - 1} g_{n - 1} + |g_{n - 1}| + e_n g_n + |g_n| \\
&+ A_{n - 2}(f_{n - 2}) + \half(f_{n - 1} - f_{n - 2} - g_{n - 1})^2 + \half(f_{n - 1} + g_n)^2.
\end{align*}
where the second line above holds for $2 \le i \le n - 2$. Observe that $\min_{f_{n - 1} \in \R} A_{n - 1}(f_{n - 1})$ is the original problem \eqref{eq:dual_bir}. Moreover, for convenience we define the following functions for $2 \le i \le n - 2$:
\begin{align*}
B_i(f_i, f_{ i- 1}) \defeq \min_{g_i \in \R} c_i f_i + d_i |f_i| + e_i g_i + |g_i| + A_{i - 1}(f_{i - 1}) + \half\Par{f_i - f_{i - 1} - g_i}^2.
\end{align*}

Our main structural claim is that each $A_i$ has a concise representation as a continuously-differentiable, piecewise-quadratic function on both sides of $0$, whose coefficients admit a simple recurrence.

\begin{lemma}\label{lem:partial_structure}
For all $i \in [n - 2]$, $A_i$ is a convex, continuous, piecewise-quadratic function with at most $2i + 2$ pieces, that is continuously-differentiable except potentially at $0$. 
\end{lemma}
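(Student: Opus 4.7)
I will prove the lemma by induction on $i$. Along the way I also maintain the additional invariant that $A_i$ is affine outside a bounded interval of $f_i$.

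For the base case $i=1$, I will minimize over $g_1$ directly. Using $|e_1|\le 1$ (from \Cref{lem:bir_dual}), this produces a Huber-like function
\[
H_1(f_1) \defeq \min_{g_1 \in \R}\Brack{e_1 g_1 + |g_1| + \thalf(f_1 - g_1)^2},
\]
which is convex, $C^1$, and piecewise quadratic with three pieces: linear of slope $e_1-1$ on $(-\infty, e_1-1]$, quadratic $\thalf f_1^2$ on $[e_1-1, e_1+1]$, and linear of slope $e_1+1$ on $[e_1+1, \infty)$. Since $d_1\ge 0$ and $0\in[e_1-1,e_1+1]$, adding $c_1 f_1 + d_1|f_1|$ introduces exactly one new kink at $f_1=0$ inside the central quadratic piece, giving $A_1$ with $4 = 2\cdot 1 + 2$ pieces, $C^1$ except at $0$, and affine at $\pm\infty$.

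For the inductive step, assume $A_{i-1}$ has at most $2i$ pieces, is $C^1$ except at $0$, and is affine near $\pm\infty$. First, I will minimize over $g_i$ to rewrite $A_i(f_i) = c_i f_i + d_i|f_i| + \Phi_i(f_i)$, where $\Phi_i(f_i) \defeq \min_{f_{i-1} \in \R}[A_{i-1}(f_{i-1}) + H_i(f_i - f_{i-1})]$ is the infimal convolution of $A_{i-1}$ with a Huber-like $H_i$ defined as $H_1$ but with $e_i$ in place of $e_1$. Convexity, continuity, and the piecewise-quadratic structure of $\Phi_i$ follow from standard properties of infimal convolutions of convex piecewise-quadratic functions; moreover, because $H_i$ is strictly convex on $[e_i-1, e_i+1] \ni 0$, the convolution smooths the single kink of $A_{i-1}$ at $0$, so $\Phi_i$ is everywhere $C^1$.

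The core calculation is to bound the number of pieces of $\Phi_i$. I will parameterize the optimizer $(f_{i-1}^\star, u^\star \defeq f_i - f_{i-1}^\star)$ by the common subgradient value $q \defeq A_{i-1}'(f_{i-1}^\star) = H_i'(u^\star) \in [e_i-1, e_i+1]$, which varies monotonically with $f_i$. Each of the at most $2i$ linear segments of $A_{i-1}'$ traversed by $f_{i-1}^\star$ contributes a single linear piece of $\Phi_i'$, and if the subgradient jump of $A_{i-1}'$ at $0$ has range intersecting $[e_i-1, e_i+1]$, it contributes one additional linear piece of slope $1$ corresponding to $f_{i-1}^\star=0$ and $u^\star = f_i$. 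The constant ``cap'' regions of $H_i'$ at $e_i \pm 1$ do \emph{not} introduce extra pieces: when the extreme linear piece of $A_{i-1}$ has slope outside $[e_i-1, e_i+1]$, that piece is infeasible and is replaced by the corresponding $H_i$-cap in $\Phi_i$; when it lies inside $[e_i-1, e_i+1]$, the $H_i$-cap is inactive. Either way, each end of $\Phi_i$ is a single linear segment, and in total $\Phi_i$ has at most $2i+1$ pieces.

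Finally, adding $c_i f_i + d_i|f_i|$ (with $d_i \ge 0$) introduces at most one additional kink at $0$, yielding $A_i$ with at most $2i+2$ pieces, convex, continuous, $C^1$ except possibly at $0$, and affine at $\pm\infty$, closing the induction. The main technical obstacle I anticipate is the careful piece-counting for $\Phi_i$: specifically, verifying that the $H_i$-caps merge cleanly with the extreme linear pieces of $A_{i-1}$ rather than each contributing a separate piece. This is where the monotonicity of $q$ in $f_i$ and the affine-at-infinity invariant are essential.
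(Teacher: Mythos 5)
Your proof is correct and follows essentially the same route as the paper's: induction on $i$, peeling off the $g_i$-minimization to get a Huber function and then analyzing the $f_{i-1}$-minimization by tracking the common subgradient value, with the same accounting for the caps, the kink at $0$, and the final $c_i f_i + d_i|f_i|$ term. The only difference is presentational — you package the inner step as an infimal convolution and cite convex-analytic facts, whereas the paper writes out the optimal $g$, the optimal $f_{i-1}$, and the resulting piecewise coefficients explicitly (formulas it needs later anyway to drive the data structure), but for the lemma as stated your argument suffices.
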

\begin{proof}
We first prove the base case of $i = 1$. The minimizing $g_1$ is achieved by
\begin{equation}\label{eq:opt_g_1}g_1 = \begin{cases}
f_1 - (e_1 - 1) & f_1 - (e_1 - 1) \le 0 \\
f_1 - (e_1 + 1) & f_1 - (e_1 + 1) \ge 0 \\
0 & \text{else}
\end{cases}.\end{equation}
Thus we have the claimed piecewise representation
\begin{equation}\label{eq:a1rep}
\begin{aligned}
A_1(f_1) = \begin{cases}
(c_1 - d_1 + e_1 - 1)f_1 - \half(e_1 - 1)^2 & f_1 \le e_1 - 1 \\
\half f_1^2 + (c_1 - d_1) f_1 & e_1 - 1 \le f_1 \le 0 \\
\half f_1^2 + (c_1 + d_1) f_1 & 0 \le f_1 \le e_1 + 1 \\
(c_1 + d_1 + e_1 + 1) f_1 - \half(e_1 + 1)^2 & f_1 \ge e_1 + 1
\end{cases}.
\end{aligned}
\end{equation}
Next, inductively suppose that the claim holds for $A_{i - 1}$, i.e., there exist vertices $\{v_j\}_{j \in [2i - 1]}$ sorted in nondecreasing order, such that for all $2 \le j \le 2i - 1$, $A_{i - 1}$ has quadratic and linear coefficients $(\alpha_j, \beta_j)$ in the range $[v_{j - 1}, v_j]$. We also let $A_{i - 1}$ have quadratic and linear coefficients $(\alpha_1, \beta_1)$ in the range $(-\infty, v_1]$, and $(\alpha_{2i}, \beta_{2i})$ in the range $[v_{2i - 1}, \infty)$ respectively. We also designate an index $z \in [2i - 1]$ such that $v_z = 0$. Thus our continuity assumptions show
\begin{equation}\label{eq:recover_v}
2\alpha_j v_j + \beta_j = 2\alpha_{j + 1} v_j + \beta_{j + 1} \text{ for all } j \in [k - 1] \text{ and } k + 1 \le j \le 2i - 1.
\end{equation}
Moreover convexity shows that the $\{\alpha_j\}_{j \in [2i]}$ are nondecreasing. In the remainder of the proof, we fix an index $2 \le i \le n - 2$, and denote our decision variables $f \defeq f_i$, $g \defeq g_i$, and $h \defeq f_{i - 1}$ for improved readability. We next define two locations of special interest. Let
\begin{equation}\label{eq:uw_def}u \text{ be maximal satisfying } \frac{\dd A_{i - 1}}{\dd h}(u) = e_i - 1,\; w \text{ be minimal satisfying } \frac{\dd A_{i - 1}}{\dd h}(w) = e_i + 1.\end{equation}
If $\frac{\dd A_{i - 1}}{\dd h} > e_i - 1$ always, then we let $u = -\infty$, and if $\frac{\dd A_{i - 1}}{\dd h} < e_i + 1$ always, then we let $w = \infty$. There is an edge case where $u = 0$ or $w = 0$ (allowing for subgradients above). In this case, we let
\begin{equation}\label{eq:gapdef}\gap_- \defeq \lim_{h \to u^+} \frac{\dd A_{i - 1}}{\dd h}(h) - (e_i - 1) ,\; \gap_+ \defeq  (e_i + 1) - \lim_{h \to w^-} \frac{\dd A_{i - 1}}{\dd h}(h).\end{equation}
Observe that $\gap_-$ and $\gap_+$ are always nonnegative, and they are respectively positive iff there is a discontinuity in $\frac{\dd A_{i - 1}}{\dd h}$ at $u = 0$ or $w = 0$. If neither $u$ nor $w$ is $0$, we let $\gap_- = \gap_+ = 0$.
Additionally, let $\ell-1$ be the largest index, and $r$ be the smallest index, such that
\[v_{\ell - 1} \le u \le v_\ell,\; v_{r - 1} \le w \le v_r,\]
where we let $v_0 \defeq -\infty$, $v_{2i} \defeq \infty$ for convenience. Notice that if $u=0$ then $v_{\ell-1} = u$ and if $w=0$ then  $v_{r} = w$. We are now ready to
characterize how the coefficients $\{\alpha_j, \beta_j\}_{j \in [2i]}$ and vertices $\{v_j\}_{j \in [2i - 1]}$ evolve. First we minimize $A_i$ over $g$:
\begin{equation}\label{eq:opt_g}
\begin{aligned}
\argmin_{g \in \R} e_i g + |g| + \half\Par{f - \fm - g}^2 = \begin{cases} f - \fm - (e_i - 1) & f - \fm - (e_i - 1) \le 0 \\ f - \fm - (e_i + 1) & f - \fm - (e_i + 1) \ge 0 \\ 0 & \text{else} \end{cases}.
\end{aligned}
\end{equation}
Hence we have that
\begin{equation}\label{eq:ai_compute}
\begin{aligned}
A_i(f) &= \min_{h \in \R} c_i f + d_i |f| + A_{i - 1}(h) \\
&+ \begin{cases} 
(e_i + 1)(f- h) - \half(e_i + 1)^2 & f-h - (e_i - 1) \le 0 \\
(e_i -1 )(f-h) - \half(e_i - 1)^2 & f-h - (e_i + 1) \ge 0 \\
\half(f-h)^2 & \text{else}
\end{cases}.
\end{aligned}
\end{equation}
Our next goal is to characterize the minimizing $h$, i.e., $h$ such that $\frac{\partial B_i}{\partial h}(f, h) = 0$. For a fixed $f$,
\begin{gather*}
\frac{\partial B_i}{\partial h}(f, h) = \frac{\dd A_{i - 1}}{\dd h}(h) + \begin{cases} -e_i - 1 & h \le f - e_i - 1 \\ h - f & f - e_i - 1 \le h \le f - e_i + 1 \\ 
-e_i + 1 & h \ge f - e_i + 1
\end{cases}, \\
\frac{\dd A_{i - 1}}{\dd h}(h)  \begin{cases} 
 \le e_i - 1 & h \le u \\
 = 2\alpha_\ell h + \beta_\ell & u \le h \le v_\ell \\
 = 2\alpha_{\ell + 1} h + \beta_{\ell + 1} & v_\ell \le h \le v_{\ell + 1}\\
 = \ldots \\
 = 2\alpha_{r - 1} h + \beta_{r - 1} & v_{r - 2} \le h \le v_{r - 1} \\
 = 2\alpha_{r} h + \beta_{r} & v_{r - 1} \le h \le w \\
\ge  e_i + 1 & h \ge w
\end{cases}.
\end{gather*}
A direct computation now shows that the minimizing $h$ is as follows:
\begin{equation}\label{eq:best_prev}
\begin{aligned}
\argmin_{h \in \R} B_i(f, h) = \begin{cases}
u & f \le u + e_i - 1 + \gap_- \\
\frac{f - \beta_\ell}{2\alpha_\ell + 1} & u + e_i - 1 + \gap_- \le f \le (2\alpha_\ell + 1)v_\ell + \beta_\ell \\
\frac{f - \beta_{\ell + 1}}{2\alpha_{\ell + 1} + 1} & (2\alpha_\ell + 1)v_\ell + \beta_\ell \le f \le (2\alpha_{\ell + 1} + 1)v_{\ell + 1} + \beta_{\ell+1}  \\
\ldots &  \\
\frac{f - \beta_z}{2\alpha_z + 1} & (2\alpha_{z - 1} + 1)v_{z - 1} + \beta_{z - 1} \le f \le (2\alpha_z + 1)v_z + \beta_z \\
0 & (2\alpha_z + 1)v_z + \beta_z \le f \le (2\alpha_{z + 1} + 1)v_{z} + \beta_{z + 1} \\
\frac{f - \beta_{z + 1}}{2\alpha_{z + 1} + 1} & (2\alpha_{z + 1} + 1)v_{z} + \beta_{z + 1} \le f \le (2\alpha_{z + 1} + 1)v_{z +1} + \beta_{z + 1} \\
\ldots & \\
\frac{f - \beta_{r - 1}}{2a_{r - 1} + 1} & (2\alpha_{r - 2} + 1) v_{r - 2} + \beta_{r - 2} \le f \le (2\alpha_{r-1} + 1) v_{r-1} + \beta_{r-1} \\
\frac{f - \beta_r}{2a_r + 1} & (2\alpha_{r-1} + 1) v_{r-1} + \beta_{r-1} \le f \le w + e_i + 1 - \gap_+ \\
w & f \ge w + e_i + 1 - \gap_+
\end{cases}
\end{aligned}
\end{equation}
We can now plug this minimizing $h$ back into our formula for $A_i$. As a helper calculation, in the case where $h = \frac{f - \beta_j}{2\alpha_j + 1}$, we can check that $h \in [f - (e_i + 1), f - (e_i - 1)]$, so up to a constant in the definition of $A_{j - 1}$ on the relevant interval,
\begin{align*}\min_{h \in \R} A_{j - 1}(h) + \half(f - h)^2 &= \half\Par{\frac{2\alpha_j f}{2\alpha_j + 1}  + \frac {\beta_j}{2\alpha_j + 1}}^2 + \alpha_j\Par{\frac{f - \beta_j}{2\alpha_j + 1}}^2 + \beta_j\Par{\frac{f - \beta_j}{2\alpha_j + 1}}  \\
&= \frac{\alpha_j}{2\alpha_j + 1} f^2 + \frac{\beta_j}{2\alpha_j + 1} f_j,
\end{align*}
where we drop all constant terms, as they do not affect the minimizing values and we enforce that $A_j$ is continuous. We can then verify that up to a constant in each interval,
\begin{align*}
A_i(f) &= c_i f + d_i |f| \\
&+ \begin{cases}
(e_i - 1) f & f \le u + e_i - 1  \\
\half( f - u)^2 & u + e_i - 1  \le f \le u + e_i - 1 + \gap_- \\
\frac{\alpha_\ell}{2\alpha_\ell + 1} f^2 + \frac{\beta_\ell}{2\alpha_\ell + 1} f & u + e_i - 1 + \gap_- \le f \le (2\alpha_\ell + 1)v_\ell + \beta_\ell \\
\frac{2\alpha_{\ell + 1}}{2\alpha_{\ell + 1} + 1} f + \frac{\beta_{\ell + 1}}{2\alpha_{\ell + 1} + 1} & (2\alpha_\ell + 1)v_\ell + \beta_\ell \le f \le (2\alpha_{\ell + 1} + 1)v_{\ell + 1} + \beta_{\ell+1} \\
\ldots &  \\
\frac{\alpha_z}{2\alpha_z + 1} f^2 + \frac{\beta_z}{2\alpha_z + 1}f & (2\alpha_{z - 1} + 1)v_{z - 1} + \beta_{z - 1} \le f \le (2\alpha_z + 1)v_z + \beta_z \\
\half f^2 & (2\alpha_z + 1)v_z + \beta_z \le f \le (2\alpha_{z + 1} + 1)v_{z} + \beta_{z + 1} \\
\frac{\alpha_{z + 1}}{2\alpha_{z + 1} + 1} f^2 + \frac{\beta_{z + 1}}{2\alpha_{z + 1} + 1} f & (2\alpha_{z + 1} + 1)v_{z} + \beta_{z + 1} \le f \le (2\alpha_{z + 1} + 1)v_{z + 1} + \beta_{z + 1} \\
\ldots & \\
\frac{\alpha_{r - 1}}{2\alpha_{r - 1} + 1} f^2 + \frac{\beta_{r - 1}}{2\alpha_{r - 1} + 1} f & (2\alpha_{r - 2} + 1) v_{r - 2} + \beta_{r - 2} \le f \le (2\alpha_{r - 1} + 1) v_{r - 1} + \beta_{r - 1} \\
\frac{\alpha_r}{2\alpha_r + 1} f^2 + \frac{\beta_r}{2\alpha_r + 1} f & (2\alpha_{r - 1} + 1)v_{r - 1} + \beta_{r - 1} \le f \le w + e_i + 1 - \gap_+ \\
\half(f - w)^2& w + e_i + 1 - \gap_+ \le f \le w + e_i + 1\\
(e_i + 1) f & f \ge w + e_i + 1 
\end{cases}.
\end{align*}
Finally, taking derivatives, we have that, letting $A_i^-(f) \defeq A_i(f) - c_i f - d_i |f|$,
\begin{equation}\label{eq:coeff_update}
\begin{aligned}
\frac{\dd A_i^-(f) }{\dd f} = \begin{cases}
e_i - 1 & f \le u + e_i - 1  \\
f - u & u + e_i - 1  \le f \le u + e_i - 1 + \gap_- \\
\frac{2\alpha_\ell}{2\alpha_\ell + 1} f + \frac{\beta_\ell}{2\alpha_\ell + 1}  & u + e_i - 1 + \gap_- \le f \le (2\alpha_\ell + 1)v_\ell + \beta_\ell \\
\frac{2\alpha_{\ell + 1}}{2\alpha_{\ell + 1} + 1} f + \frac{\beta_{\ell + 1}}{2\alpha_{\ell + 1} + 1}  & (2\alpha_\ell + 1)v_\ell + \beta_\ell \le f \le (2\alpha_{\ell + 1} + 1)v_{\ell + 1} + \beta_{\ell+1} \\
\ldots &  \\
\frac{2\alpha_z}{2\alpha_z + 1} f + \frac{\beta_z}{2\alpha_z + 1} & (2\alpha_{z - 1} + 1)v_{z - 1} + \beta_{z - 1} \le f \le (2\alpha_z + 1)v_z + \beta_z \\
f & (2\alpha_z + 1)v_z + \beta_z \le f \le (2\alpha_{z + 1} + 1)v_{z} + \beta_{z + 1} \\
\frac{2\alpha_{z + 1}}{2\alpha_{z + 1} + 1} f + \frac{\beta_{z + 1}}{2\alpha_{z + 1} + 1}  & (2\alpha_{z + 1} + 1)v_{z} + \beta_{z + 1} \le f \le (2\alpha_{z + 1} + 1)v_{z + 1} + \beta_{z + 1} \\
\ldots & \\
\frac{2\alpha_{r - 1}}{2\alpha_{r - 1} + 1} f + \frac{\beta_{r - 1}}{2\alpha_{r - 1} + 1}  & (2\alpha_{r - 2} + 1) v_{r - 2} + \beta_{r - 2} \le f \le (2\alpha_{r - 1} + 1) v_{r - 1} + \beta_{r - 1} \\
\frac{2\alpha_r}{2\alpha_r + 1} f + \frac{\beta_r}{2\alpha_r + 1} & (2\alpha_{r - 1} + 1)v_{r - 1} + \beta_{r - 1} \le f \le w + e_i + 1 - \gap_+ \\
f - w & w + e_i + 1 - \gap_+ \le f \le w + e_i + 1\\
e_i + 1 & f \ge w + e_i + 1 
\end{cases}.
\end{aligned}
\end{equation}
We can verify directly at this point that $A_i^-$ is convex, piecewise-quadratic, and continuously-differentiable, with at most $2i + 1$ pieces. Here we remark that if $\gap_- \neq 0$, then $u = 0$ and the entire top half of \eqref{eq:coeff_update} collapses (i.e., $f - u = f$), and similarly if $w = 0$ the botom half collapses, so there is at most one piece added.
Adding $c_i f + d_i |f|$ yields one additional piece and preserves the piecewise-quadratic structure, potentially adding a discontinuity in the derivative at $0$.
\end{proof}

Our proof of Lemma~\ref{lem:partial_structure} shows that the coefficients $(\alpha_j, \beta_j)$ of every piece of $A_{i - 1}$ transform via
\begin{equation}\label{eq:gen_transform}(\alpha_j, \beta_j) \gets \Par{\frac{\alpha_j}{2\alpha_j + 1}, \frac{\beta_j}{2\alpha_j + 1}},\end{equation}
upon handling the edge cases to the left of the $\ell^{\text{th}}$ piece and to the right of the $r^{\text{th}}$ piece. We then need to insert the potentially two new pieces introduced in the new piecewise function $A_i$. We note from \eqref{eq:recover_v} that, as long as we keep track of the index $z$, it is straightforward to recover all vertices $v_j$ demarcating $A_{i - 1}$ from the coefficients $\alpha_j, \beta_j, \alpha_{j + 1}, \beta_{j + 1}$ of neighboring pieces, so we only maintain the latter. 
To do so, we introduce the following data structure.
\begin{lemma}\label{lem:data_structure}
There is a data structure, $\BPM$, which stores two vectors $\alpha, \beta \in ((\frac 1 {\N} \cup \{0\}) \times \R)^S$ for an index set $S$, and supports the following operations each in $O(\log(n))$ time, where $n$ is an upper bound on the number of times we ever call the $\Add$ operation, and $S \subseteq [n]$.
\begin{itemize}
    \item $\Query(j)$ for $j \in S$: Return $(\alpha_j, \beta_j)$.
    \item $\Add(\ell, r, \Delta)$ for $\ell, r \in S$ and $\Delta \in \R$: Update $\beta_i \gets \beta_i + \Delta$ for all $\ell \le i \le r$.
    \item $\Insert(j, \alpha, \beta)$ for $j \in [n]$ and $(\alpha, \beta) \in (\frac 1 {\N} \cup \{0\}) \times \R$: Update $(\alpha_i, \beta_i) \gets (\alpha_{i - 1}, \beta_{i - 1})$ for all $i \in S$ with $i \ge j$, $S \gets S \cup \{j\}$, and $(\alpha_j, \beta_j) \gets (\alpha, \beta)$.
    \item $\Delete(j)$ for $j \in S$: Update $(\alpha_i, \beta_i) \gets (\alpha_{i + 1}, \beta_{i + 1})$ for all $i \in S$ with $i > j$, and $S \gets S \setminus \{j\}$.
    \item $\Update()$: Update $(\alpha_i, \beta_i) \gets (\frac{\alpha_{i}}{2\alpha_{i}+1},\frac{\beta_{i}}{2\alpha_{i}+1})$ for all $i \in S$.
    \item $\InvUpdate()$ : Update $(\alpha_i, \beta_i) \gets (\frac{\alpha_i}{1 - 2\alpha_i}, \frac{\beta_i}{1 - 2\alpha_i})$ for all $i \in S$.
\end{itemize}
\end{lemma}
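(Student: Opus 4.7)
The plan is to reparametrize the stored state so that Update and InvUpdate act on a single global counter, and then to run the remaining operations over a balanced binary search tree (e.g., an implicit-key treap) keyed by rank within $S$. I would write $\alpha_i = 1/k_i$ with $k_i \in \N$, treating the degenerate case $\alpha_i = 0$ separately via a zero-flag. The Update rule $\alpha_i \mapsto \alpha_i/(2\alpha_i+1)$ becomes $k_i \mapsto k_i+2$, and InvUpdate is its inverse; so I maintain a single global integer $K \in \Z$ tracking (number of Updates) $-$ (number of InvUpdates), and each leaf stores $\hat{k}_i := k_i - 2K$ at the moment of insertion, so that the ``true'' current $k_i$ is $\hat{k}_i + 2K$. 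Update and InvUpdate then cost $O(1)$: they just increment or decrement $K$. The second observation is that $\gamma_i := \beta_i/\alpha_i = \beta_i k_i$ is invariant under both Update and InvUpdate, since $\beta_i/(2\alpha_i+1)$ divided by $\alpha_i/(2\alpha_i+1)$ equals $\beta_i/\alpha_i$. So for entries with $\alpha_i \neq 0$ I would store $\gamma_i$ instead of $\beta_i$; for entries with $\alpha_i = 0$ I store $\beta_i$ directly, since Update/InvUpdate are the identity on these.

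Insert and Delete are then handled by the standard split/merge machinery of a balanced BST keyed by rank, handling the index shifts in the spec via implicit keys: a newly inserted leaf with parameters $(\alpha,\beta) = (1/k,\beta)$ at the current counter $K$ records $\hat{k}_i = k - 2K$ and $\gamma_i = \beta k$, while a zero-$\alpha$ insertion records $\beta$ and the zero-flag. Each operation touches $O(\log |S|) = O(\log n)$ nodes, pushing pending lazy tags along the path before rotating. A rank-based Query proceeds analogously in $O(\log n)$ time, combining $\hat{k}_i$ with the current $K$ to recover $\alpha_i = 1/(\hat{k}_i+2K)$ and $\beta_i = \gamma_i/(\hat{k}_i+2K)$.

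The interesting operation is Add$(\ell,r,\Delta)$. Translated into the stored representation, adding $\Delta$ to $\beta_i$ for a nonzero entry corresponds to $\gamma_i \mapsto \gamma_i + \Delta(\hat{k}_i + 2K) = \gamma_i + 2K\Delta + \Delta\hat{k}_i$, while for a zero entry it is simply $\beta_i \mapsto \beta_i + \Delta$. I will therefore attach to each internal BST node a lazy triple $(a,b,c)$ with the semantics ``add $a + b\hat{k}_i$ to $\gamma_i$ for nonzero leaves, and add $c$ to $\beta_i$ for zero leaves.'' An Add call at counter $K$ then deposits the tag $(2K\Delta,\Delta,\Delta)$ on the $O(\log n)$ canonical subtrees covering $[\ell,r]$, composition of two such tags is componentwise addition, and push-down to children or to a leaf is immediate. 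Each operation therefore costs $O(\log n)$.

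The subtle point, and the main thing that will need careful verification, is that a tag deposited at time $t$ (global counter $K_t$) still produces the correct $\beta_i$ update when eventually pushed at some later time $t'$ with counter $K_{t'}$, since $K$ may have changed via intervening Update/InvUpdate calls while the tag was pending. The hard part is to check directly that adding $\Delta(\hat{k}_i + 2K_t)$ to the stored $\gamma_i$ and then converting back via $\beta_i^{(t')} = \gamma_i/(\hat{k}_i + 2K_{t'})$ exactly recovers the value one would obtain by first performing the Add at time $t$ and then letting the $K_{t'} - K_t$ subsequent Updates multiply $\beta_i$ by $k^{\text{old}}/k^{\text{new}}$ at each step. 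This reduces to the algebraic identity $k_i^{(t)}/k_i^{(t')}$ canceling consistently, and once established it shows the lazy representation is faithful under all interleavings; from there, correctness and the claimed $O(\log n)$ bounds follow from standard lazy-propagation analyses for balanced BSTs.
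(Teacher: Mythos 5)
Your construction is correct and is essentially the reparameterization the paper itself uses: a global counter $K$ absorbed into $\Update/\InvUpdate$, a per-leaf quantity invariant under those operations (your $\gamma_i = \beta_i/\alpha_i$ is exactly twice the paper's stored $h_j = \beta_j/(2\alpha_j)$), and lazily propagated additive tags on a balanced BST supporting insertion and deletion. The one place your formulations differ is cosmetic but in your favor: the paper encodes each pending $\Add$ as a semigroup element $\add_{t,s,p,\ell}$ parameterized by its deposit time $t$, so composing two tags deposited at different times requires normalizing to the earlier time and folding the cross-term $(t'-t)s'$ into an auxiliary parameter $p$; your tags are stored directly as linear functions $a + b\hat{k}_i$ of the time-invariant leaf parameter, so composition is plain componentwise addition.

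The ``subtle point'' you flag at the end is not actually hard, and you should close it rather than leave it as a to-do: the tag deposited at counter $K_t$ contributes the fixed additive amount $\Delta(\hat{k}_i + 2K_t)$ to $\gamma_i$, which depends only on $\hat{k}_i$ (never modified after insertion) and on $K_t$ (frozen into the tag constant $a = 2K_t\Delta$ at deposit time). Since $\gamma_i$ itself is invariant under every intervening $\Update$ and $\InvUpdate$, performing the addition to $\gamma_i$ at push time $t'$ yields the same stored value as performing it at time $t$, and reading off $\beta_i^{(t')} = \gamma_i/(\hat{k}_i + 2K_{t'})$ then agrees with first adding $\Delta$ to $\beta_i$ at time $t$ and applying the subsequent rescalings, because those rescalings act on $\beta_i$ exactly by the ratio $(\hat{k}_i+2K_t)/(\hat{k}_i+2K_{t'})$. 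With that two-line verification in place, the argument is complete.
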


We can now give our proof of Proposition~\ref{prop:blir}.

\restateblir*
\begin{proof}
We instead show how to compute the optimizer to \eqref{eq:dual_bir} in $O(n\log^2(n))$ time, giving the claim via Lemma~\ref{lem:bir_dual}. We split our proof into three parts; we show each runs within the stated time.

\paragraph{Representing $A_{n - 2}$.} Following \eqref{eq:a1rep}, we initialize an instance of a $\BPM$ data structure using $\Insert(1, 0, c_1 - d_1 + e_1 - 1)$, $\Insert(2, 1, c_1 - d_1)$, $\Insert(3, 1, c_1 + d_1)$, and $\Insert(4, 0, c_1 + d_1 + e_1 + 1)$. We maintain an index $z$ corresponding to the two segments (i.e., consecutive elements of the set $S$ maintained by $\BPM$) between which $v_z = 0$ lies.

Next we explain how to update our representation of the coefficients of each interval from $A_{i - 1}$ to $A_i$, for some $2 \le i \le n - 2$. We follow the recursion \eqref{eq:coeff_update}. We first binary search (using $\Query$) for the values of $u$ and $w$ such that \eqref{eq:uw_def} holds, because $(\alpha_j, \beta_j)$ gives us enough information to recover the derivative of each piece. We can also compute \eqref{eq:gapdef} in the case where $u = 0$ or $w = 0$. We then call $\Delete$ on every segment with an index left of $u$ and right of $w$. Next we call $\Insert(1, 0, e_1 - 1)$ and $\Insert(s + 1, 0, e_1 + 1)$ where $s$ is the current size of the set $S$ to handle the leftmost and rightmost intervals according to \eqref{eq:coeff_update}. We can then update all coefficients undergoing the transformation \eqref{eq:gen_transform} via $\Update()$. Notice that calling $\Update()$ does not change any coefficients of linear pieces $j$, as $\alpha_j = 0$.
We also require adding the additional quadratic corresponding to a vertex at $0$, whose coefficients are $(\alpha, \beta) = (\half, 0)$, which can be performed because we maintain the index $z$ of interest from the previous round. Finally, adding $c_i f + d_i |f|$ can be done using two calls to $\Add$, and we update the index $z$ to the new location of the $0$ vertex.

There are at most $O(\log(n))$ total operations performed per iteration (updating $i \gets i + 1$), dominated by the cost of binary searching. Thus the overall runtime is $O(n\log^2(n))$.

\paragraph{Computing the optimal $f_{n - 1}, g_n$.} Once we have all of the coefficients of $A_{n -2}$, we can spend $O(n\log(n))$ time directly recovering all of them (via $\Query$) and computing the vertices demarcating intervals using \eqref{eq:recover_v}. At this point we can directly rewrite $A_{n - 1}$ as follows:
\begin{align*} 
A_{n-1}(f_{n-1}) &\defeq \min_{g_n, g_{n-1},f_{n-2} \in \R^3} c_{n-1}f_{n-1}+d_{n-1}|f_{n-1}|+e_{n-1} g_{n -1} + \abs{g_{n-1}}+e_n g_{n} + \abs{g_{n}}
\\
&+A_{n-2}(f_{n-2})+\frac{1}{2}(f_{n-1}-f_{n-2}-g_{n-1})^2+\frac{1}{2}(f_{n-1}-g_n)^2\\
&= c_{n-1}f_{n-1}+d_{n-1}|f_{n-1}| \\
&+\min_{g_{n-1},f_{n-2} \in \R^2} e_{n-1} g_{n-1} + \abs{g_{n-1}} +A_{n-2}(f_{n-2})+\frac{1}{2}(f_{n-1}-f_{n-2}-g_{n-1})^2\\
&+\min_{g_n \in \R}e_n g_{n} + \abs{g_{n}}+\frac{1}{2}(f_{n-1}-g_n)^2.
\end{align*}
Observe that the piecewise coefficients of the last line (as a function of $f_{n - 1}$) can be computed as in \eqref{eq:a1rep}, and the piecewise coefficients of the second line (as a function of $f_{n - 1}$) can be updated from those of $A_{n - 2}$ as in \eqref{eq:coeff_update}.
We can thus manually update the coefficients of $A_{n - 1}$ in $O(n - 1)$, which allows us to find $f_{n - 1}$ with a subgradient of $0$ for $A_{n - 1}$ in the same time.

\paragraph{Recovering the optimal solution.} Finally, given the optimal value of $f_{n - 1}$, we need to recover the remaining variables optimizing \eqref{eq:dual_bir}. Suppose inductively that we have the optimal value of $f_{i + 1}$ in \eqref{eq:dual_bir}, where the base case is $i = n - 2$. We show how to recover the optimal $f_i$ in $O(\log^2(n))$ time. 

We begin by rewinding the data structure to the state it was in at every previous iteration $i$ as follows. 
First, because we store the zero vertex index $z$ in each iteration, we can delete this added node and undo the change due to adding $c_i f + d_i |f|$ via $\Add$. Next, we call $\InvUpdate$ to undo the transformation due to $\Update$.
Lastly, we store the coefficients of any deleted node during our earlier computation, which we can add back in appropriately using $\Insert$. Now, we can binary search for the range where the optimal $f_i$ lies using the characterization \eqref{eq:best_prev}, which also computes $f_i$.

This gives all optimal $\{f_i\}_{i \in [n - 1]}$ in $O(n\log^2(n))$ time as claimed, and we can recover all of the optimal $\{g_i\}_{i \in [n]}$ in $O(n)$ time using \eqref{eq:opt_g_1}, \eqref{eq:opt_g}, concluding the proof.
\end{proof}

\subsection{$\BPM$ implementation}

In this section, we give an implementation of $\BPM$ based on a well-known data structure framework called a \emph{segment tree}. We state a guarantee on segment trees adapted from Lemma 8, \cite{HuJTY24} to include insertion and deletion, deferring a proof to Appendix~\ref{ssec:segtree}.

\begin{restatable}[Segment tree]{lemma}{restatesegtree}
\label{lem:SegmentTree}
Let $G$ be a semigroup with an identity element $e$, where the semigroup product of $a,b\in G$ is denoted by $a\cdot b$ or $ab$. Let $v$ be an array whose size $s$ is guaranteed to be at most $n$, where each element of $v$ is initialized to be the identity element $e$ of $G$.
There is a data structure $\mathcal{D}$, called a \emph{segment tree}, that can perform each of the following operations in $O(\log(n))$ time (assuming semigroup products can be computed in constant time).
    \begin{enumerate} 
        \item $\Access(j)$ for $j\in [1:s]$: Return the $j^{\text{th}}$ element in $v$. 
        \item $\Apply(g, \ell, r)$ for $\ell, r \in [s]$ and $g \in G$: For each $j \in [\ell: r]$, replace $v[j]$ with $g\cdot v[j]$.
        \item $\Insert(j, u)$ for $j \in [s]$ and $u \in G$: Update $v[i] \gets v[i-1]$ for all $i \in [j + 1 : s]$, $v[j] \gets u$, and $s \gets s+1$.
        \item $\Delete(j)$ for $j \in [s]$: Update $v[i] \gets v[i+1]$ for all $i \in [j + 1, s]$, and $s \gets s-1$.
    \end{enumerate}
\end{restatable}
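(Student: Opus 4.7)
The plan is to implement the segment tree as an \emph{implicit-key balanced BST}, concretely a treap (or finger tree) supporting $\mathsf{Split}$ and $\mathsf{Merge}$ as $O(\log n)$-time primitives, whose in-order traversal realizes the array $v$. At each node we store the semigroup element at its position, the size of its subtree, and a \emph{lazy tag} $\mathrm{lazy}\in G$ initialized to the identity $e$. We maintain the invariant that the true value at a position equals the node's stored value left-multiplied by all unpropagated lazy tags along the root-to-node path. Whenever a node is visited and its children must be consulted, we first \emph{push down}: update each child's stored value $v'\gets \mathrm{lazy}\cdot v'$ and each child's lazy tag $\mathrm{lazy}'\gets \mathrm{lazy}\cdot \mathrm{lazy}'$, then reset the current node's tag to $e$.

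Given $\mathsf{Split}$ and $\mathsf{Merge}$ as primitives (both of which call push-down on every node they visit, and each run in $O(\log n)$ time), all four operations reduce to $O(1)$ calls to these primitives plus navigation of depth $O(\log n)$. For $\Access(j)$, I descend by subtree sizes to the $j$th in-order position, pushing down tags on the way, and return the stored value. For $\Apply(g,\ell,r)$, I split along positions $\ell-1$ and $r$ to obtain subtrees representing $[1{:}\ell-1]$, $[\ell{:}r]$, $[r+1{:}s]$; update the middle subtree's root by replacing its stored value with $g\cdot v$ and its lazy tag with $g\cdot \mathrm{lazy}$; then re-merge the three pieces in order. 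For $\Insert(j,u)$, I split at $j-1$, form a singleton node holding $u$ with lazy tag $e$, and merge the three pieces. For $\Delete(j)$, I split at $j-1$ and $j$ to isolate the target node, and merge the two outer pieces.

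The main obstacle will be verifying that the lazy-tag invariant is preserved under the noncommutative semigroup structure: since $G$ need not be commutative and possesses no inverses, every update and composition must consistently use \emph{left}-multiplication, and push-down must compose parent and child tags in the correct order $\mathrm{lazy}_{\text{parent}}\cdot \mathrm{lazy}_{\text{child}}$. A straightforward induction on the depth at which push-down occurs confirms that the invariant is preserved by each push-down, and hence by $\mathsf{Split}$ and $\mathsf{Merge}$, which consist of push-downs plus structural changes that do not alter the multiset of unpropagated tags seen by any leaf. Once the invariant is established, correctness of each operation follows by inspection; the runtime bound $O(\log n)$ per operation follows from the standard analysis of $\mathsf{Split}$ and $\mathsf{Merge}$ in the chosen BBST (expected time for treaps, worst-case for deterministic weight-balanced trees), with only a constant semigroup product performed at each visited node thanks to the assumption on $G$.
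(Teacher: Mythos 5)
Your proposal is correct and rests on the same core idea as the paper's proof: a size-augmented balanced BST whose in-order traversal is the array, with lazy semigroup tags composed by left-multiplication and pushed down before any structural change, so that noncommutativity and the absence of inverses cause no trouble. The only real difference is machinery — you use treap-style $\mathsf{Split}$/$\mathsf{Merge}$ where the paper uses an AVL tree and clears tags on the two nodes involved in each rebalancing rotation — and, as you already note, a randomized treap only gives the $O(\log n)$ bound in expectation, so to match the lemma's deterministic guarantee you should instantiate your scheme with a worst-case balanced BST (e.g., AVL or weight-balanced), exactly as the paper does.
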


We now use Lemma~\ref{lem:SegmentTree} to implement $\BPM$, proving Lemma~\ref{lem:data_structure}.

\begin{proof}[Proof of Lemma~\ref{lem:data_structure}]
$\BPM$ will consist of two components: a segment tree instance, and a global time counter $\tau \in \N$. The time counter $\tau$ is initialized to $0$, and tracks the number of times $\Update$ is called, minus the number of times $\InvUpdate$ is called.

Throughout let $s \defeq |S|$, the size of the maintained set.
We define our tree as follows:
each leaf node $j \in [s]$ implicitly stores an ordered pair $v[j] = (k_j, h_j) \in \Z_{\ge 0} \times \R$. We maintain the invariant that the pair $(\alpha_j, \beta_j)$ stored at each leaf by $\BPM$ satisfies
\begin{equation}\label{eq:ab_from_hk}\begin{aligned}\alpha_j = \frac 1 {2\tau + 2 - 2k_j},\; \beta_j = \frac{h_j}{\tau + 1 - k_j} \text{ if } \alpha_j \neq 0, \\
k_j = 0,\; \beta_j = h_j \text{ if } \alpha_j = 0.
\end{aligned}
\end{equation}
Moreover, for all $j \in [s]$, $k_j$ will be fixed to the global time $\tau$ when the $j^{\text{th}}$ node was inserted, i.e., it never changes throughout the node's lifetime. 

We now discuss how to maintain $(h_j, k_j)$ preserving \eqref{eq:ab_from_hk} via semigroup operations, using Lemma~\ref{lem:SegmentTree}.
The semigroup of interest consists of elements
 $\add_{t, s, p, \ell}$, parameterized by a tuple $(t, s, p, \ell) \in \Z_{\ge 0} \times \R^3$ operating on pairs in $\Z_{\ge 0} \times \R$. The identity element of the semigroup is $\add_{0,0,0,0}$. We enforce that if $t = 0$, then $(s, p, \ell) = (0, 0, 0)$, and for $t > 0$, the other parameters can arbitrarily vary.
 
 Intuitively, the parameters $(t, s, p)$ modifies the first coordinate of a tuple $(k_j, h_j)$, used to recover the quadratic coefficient at a leaf $\alpha_j$ using \eqref{eq:ab_from_hk}. Among these, $t$ represents the global time when an operation occurs, $s$ represents a deferred update, and $p$ represents a second-order deferred update.
 Similarly, the  parameter $\ell$  modifies the second coordinate $h_j$ which can be used to recover the linear coefficient. The function $\add_{t, s, p,\ell}$, operating on $(k, h) \in \Z_{\ge 0} \times \R$, is defined as follows:
\begin{equation}\label{eq:add_forward}\add_{t,s,p,\ell}((k, h)) =
\begin{cases}
    (k, (t+1-k)s+p+h), & \text{if $k > 1$,}\\
    (k, \ell + h), & \text{if $k = 0$.}
\end{cases}\end{equation}
It is straightforward to check that $\add_{0, 0, 0, 0}$ is an identity element. We now define how to compose semigroup elements. Our semigroup is abelian, and we split composition into two cases.

If we want to compose $\add_{t', s', p', \ell'}$ and $\add_{t, s, p, \ell}$ where $t = \min(t, t') = 0$, it must be the case that $s = p = \ell = 0$ by definition. Then in this case, we define 
\[\add_{t', s', p', \ell'} \cdot \add_{t, s, p, \ell} = \add_{t, s, p, \ell} \cdot \add_{t', s', p', \ell'} = \add_{t', s', p', \ell'}.\]
In the other case where $t = \min(t, t') > 0$, we define
\begin{equation}\label{eq:semigroup_compose}\add_{t',s',p',\ell'}\cdot\add_{t,s,p,\ell}=\add_{t,s,p,\ell}\cdot\add_{t',s',p',\ell'}= \add_{t, s+s', p+p'+(t'-t)s',\ell+\ell'},\end{equation}
which is consistent with \eqref{eq:add_forward}
since $(t+1-k)s+(t'+1-k)s'= (t+1-k)(s+s')+ (t-t')s'$.

We next verify associativity of each three semigroup operations $\add_{t_1, s_1, p_1, \ell_1}$, $\add_{t_2, s_2, p_2, \ell_2}$, $\add_{t_3, s_3, p_3, \ell_3}$. Letting $t \defeq \min(t_1, t_2, t_3)$, the case $t = 0$ is straightforward to check. In the case $t > 0$, we have
\begin{gather*}(\add_{t_1,s_1,p_1,\ell_1}\cdot\add_{t_2,s_2,p_2,\ell_2})\cdot\add_{t_3,s_3,p_3,\ell_3}=\add_{t_1,s_1,p_1,\ell_1}\cdot(\add_{t_2,s_2,p_2,\ell_2}\cdot\add_{t_3,s_3,p_3,\ell_3})\\
    = \add_{t,s_1+s_2+s_3, p_1+p_2+p_3+2(t_1s_1+t_2s_2+t_3s_3)-2t(s_1+s_2+s_3),\ell_1+\ell_2+\ell_3}.
\end{gather*}
Thus this is a semigroup satisfying the conditions of Lemma~\ref{lem:SegmentTree}. The operation $\Query$ required by Lemma~\ref{lem:data_structure}
is implemented by calling $\Query$ on the segment tree to recover the parameters $(k_j, h_j)$, and then using the invariant \eqref{eq:ab_from_hk}. Similarly, $\Delete$ is implemented via a call to $\Delete$ on the segment tree.
We now implementing $\Add$, $\Insert$, $\Update$, and $\InvUpdate$ in Lemma~\ref{lem:data_structure} while preserving \eqref{eq:ab_from_hk}.

For $\Update$ and $\InvUpdate$, we claim it suffices to respectively increment or decrement $\tau$ respectively. To see this, inductively suppose that before an $\Update$ call, \eqref{eq:ab_from_hk} held. If $\alpha_j \neq 0$, after the call,
\[\alpha_j=\frac{1/(2\tau+2-2k_j)}{2/(2\tau+2-2k_j)+1}= \frac{1}{2\tau+4-2k_j}\text{ and }\beta_j=\frac{h_j/(\tau+1-k_j)}{2/(2\tau+2-2k_j)+1}=\frac{h_j}{\tau+2-k_j}.\] 
In the case where $\alpha_j = 0$, we can verify the invariant \eqref{eq:ab_from_hk} is unchanged.
A similar argument holds for $\InvUpdate$, so we can implement these steps in $O(1)$ time.

To implement $\Insert(j, \alpha,\beta)$, we call $\Insert(j, \Add_{0, 0, 0, 0})$ on the segment tree, and augment the $j^{\text{th}}$ leaf with initial parameters $(k_j, h_j)$ set to $(\tau + 1 - \frac 1 {2\alpha}, \frac \beta {2\alpha})$ if $\alpha \neq 0$, and $(k_j, h_j) = (0, \beta)$ if $\alpha = 0$. These parameters, which are consistent with \eqref{eq:ab_from_hk} at initialization, will then be modified via semigroup operations, i.e., the value of $(k_j, h_j)$ at any point is the semigroup element stored at leaf $j$ via the segment tree, applied to the initial values defined above.

To implement $\Add(\ell, r, \Delta)$, we call $\Apply(\add_{\tau, \Delta, 0, \Delta},\ell,r)$. This maintains \eqref{eq:ab_from_hk} for  $j$ with $k_j = 0$:
\[\add_{\tau, \Delta, 0, \Delta}((0,h_j)) = \Delta + h_j.\]
Similarly, for $j$ with $\alpha_j \neq 0$,
\[\add_{\tau, \Delta, 0, \Delta}((k_j,h_j)) = (k_j, (\tau+1-k_j)\Delta+h_j),\]
which is exactly the change we need to preserve the invariant \eqref{eq:ab_from_hk}. We remark that $\Add$ operations performed by $\BPM$ only use the $(t, s, \ell)$ semigroup parameters in the segment tree, and the $p$ parameter is only used to implement element composition in \eqref{eq:semigroup_compose}.
\end{proof}

\section{Omnipredictors in One Dimension from PAV}
\label{sec:1d}
In this section, we focus on a basic one-dimensional setting where the domain $\xset$ consists of scalars in $\R$.
When the family $\calC$ consists of non-decreasing functions (not necessarily linear), we show that running the standard PAV algorithm (see \Cref{ssec:pav}) directly gives an omnipredictor for all matching losses. 
This implies the existence of an omnipredictor with a very simple structure (a non-decreasing step function) as well as a very efficient standard algorithm for learning such an omnipredictor (PAV can be implemented in linear time \cite{pav-project}).

We note that this result is equivalent to a previous result of \cite{pav-proper} (\Cref{thm:pav-proper}) showing that the PAV solution simultaneously minimizes every proper loss among the family of non-decreasing predictors. We present a new and arguably simpler proof of both results via the notion of omnigap.

\subsection{Finite sample analysis}
We start from the easier case where the domain $\xset$ is a finite subset of $\R$, and the probability mass function of the distribution $\calP$ over $\xset\times \{0,1\}$ is fully given as input to the PAV algorithm. We will later consider general distributions over $\R\times \{0,1\}$ by treating the current $\calP$ as the empirical distribution over samples drawn i.i.d.\ from the general distribution.

\label{sec:1d-increasing}
\begin{theorem}[PAV solution is omnipredictor]
\label{thm:1d}
    Let $\xset = [n] = \{1,\ldots,n\}$ be a finite domain. Let $\calP$ be an arbitrary distribution over $\xset\times \{0,1\}$. Then the solution $p$ from running PAV on $\calP$ is an omnipredictor w.r.t.\ the class of non-decreasing functions and all matching losses. That is, for any non-decreasing $\sigma:\R\to [0,1]$ and any non-decreasing $c:\xset \to \R$,
    \[\E_{(x,y)\sim\calP}[\pl\sigma(p(x),y)] - \E_{(\vx,y)\sim\calP}[\ml\sigma(c(x),y)]\le 0.\]
\end{theorem}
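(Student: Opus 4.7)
The plan is to invoke Lemma~\ref{lm:omnigap-2} and prove the stronger statement that the PAV output $p$ satisfies $\og(p;\sigma,c)\le 0$ for every non-decreasing $\sigma$ and every non-decreasing $c:\xset\to\R$. After PAV terminates, $p$ is constant on each block $B$ of the final partition $\calB$, with value $p^\star_B = \E[y\mid x\in B]$; so $\sigma^{-1}(p(x))$ is constant on $B$ while $p(x)-y$ has conditional mean zero on $B$. The $\sigma^{-1}$ part of the omnigap integrand therefore collapses on every block, independently of $\sigma$, and writing $w_B = \Pr[x\in B]$ a short calculation reduces the omnigap to
\begin{equation*}
\og(p;\sigma,c) \;=\; \sum_{B\in\calB} w_B\cdot \Cov_{(x,y)\sim\calP}[y,\,c(x)\mid x\in B].
\end{equation*}

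Next I would show that this block-weighted sum of conditional covariances is $\le 0$, by induction over the PAV iterations and applying the same formula to every intermediate partition (each of which continues to assign the correct conditional mean $\E[y\mid x\in B]$ to every block $B$, as follows from \Cref{lm:pav-helper}). At initialization every block is a singleton, so $c(x)$ is deterministic given $x\in B$ and every conditional covariance is zero. For the inductive step, when PAV merges adjacent blocks $B_i,B_{i+1}$ with $p^\star_{B_i} > p^\star_{B_{i+1}}$ into $B' = B_i\cup B_{i+1}$, the law of total covariance yields
\begin{equation*}
w_{B'}\Cov[y,c\mid B'] \;=\; w_{B_i}\Cov[y,c\mid B_i] + w_{B_{i+1}}\Cov[y,c\mid B_{i+1}] + \frac{w_{B_i}w_{B_{i+1}}}{w_{B'}}\bigl(p^\star_{B_i}-p^\star_{B_{i+1}}\bigr)\bigl(\bar c_{B_i}-\bar c_{B_{i+1}}\bigr),
\end{equation*}
where $\bar c_B = \E[c(x)\mid x\in B]$. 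The merge condition gives $p^\star_{B_i} - p^\star_{B_{i+1}} > 0$, while the left-to-right ordering of PAV blocks together with the monotonicity of $c$ forces $\bar c_{B_i}\le \bar c_{B_{i+1}}$; hence the cross term is non-positive. The weighted covariance sum therefore weakly decreases at every merge, and so stays $\le 0$ throughout, in particular at termination.

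The step requiring the most care is the initial collapse of the $\sigma^{-1}$ contribution: it uses crucially that each block's assigned value is exactly the conditional mean of $y$ on that block, an invariant preserved throughout PAV by \Cref{lm:pav-helper}, which lets the $\sigma^{-1}(p(x))$ factor be pulled outside a conditional expectation that evaluates to zero. Once this reduction is in place, the rest of the argument is a purely combinatorial monotonicity computation about $\Cov[y,c(x)]$ on nested partitions, and \Cref{lm:omnigap-2} converts the resulting omnigap bound into the desired loss inequality uniformly in $\sigma$ and $c$.
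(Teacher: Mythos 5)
Your proposal is correct, and it reaches the key inequality by a genuinely different route than the paper. Both proofs reduce the theorem, via Lemma~\ref{lm:omnigap-2}, to showing $\og(p;\sigma,c)\le 0$, and both kill the $\sigma^{-1}(p(x))$ term using calibration of the PAV blocks. The divergence is in how the remaining term $\E[(y-p(x))c(x)]=\sum_{B}w_B\Cov[y,c(x)\mid x\in B]$ is controlled. The paper (Lemma~\ref{lm:1d}) proves a \emph{within-block} structural invariant — the prefix expectations $\E_{x\in B,\,x\le u}[y-p^\star_B]\ge 0$ for every $u\in B$, maintained inductively across merges — and then converts prefix positivity into $\E[(p(x)-y)\xi(x)\mid x\in B]\ge 0$ by an Abel-summation argument using monotonicity of $\xi$. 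You instead track a single global scalar, the block-weighted sum of conditional covariances, and show via the law of total covariance that each merge changes it by $\frac{w_{B_i}w_{B_{i+1}}}{w_{B'}}(p^\star_{B_i}-p^\star_{B_{i+1}})(\bar c_{B_i}-\bar c_{B_{i+1}})\le 0$, since the merge condition makes the first factor positive and monotonicity of $c$ plus the block ordering makes the second non-positive. Your argument is arguably more economical (no prefix-sum bookkeeping, no Abel summation), while the paper's yields the stronger per-block statement \eqref{eq:1d-lemma-goal-2} valid at every intermediate iteration. One small correction: the fact that each block's assigned value equals $\E[y\mid x\in B]$ throughout the algorithm is immediate from the definition of $p^\star_B$ in the PAV description, not a consequence of Lemma~\ref{lm:pav-helper} (which instead gives the ordering $p^\star_{B_i}\ge p^\star_{B'}\ge p^\star_{B_{i+1}}$, a fact your argument does not actually need). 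This misattribution does not affect correctness.
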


By \Cref{lm:omnigap-2}, \Cref{thm:1d} follows immediately from the following main helper claim showing that the PAV solution has a non-positive omnigap w.r.t.\ any non-decreasing function.

\begin{proposition}[PAV solution has non-positive omnigaps]
\label{prop:1d-omnigap}
In the setting of \Cref{thm:1d}, for any non-decreasing $\sigma:\R\to [0,1]$ and any non-decreasing $c:\xset\to \R$,
\[
\og(p;\sigma,c) \le 0.
\]
\end{proposition}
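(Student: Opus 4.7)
The plan is to split the proof into two steps: first, reduce the omnigap to a much simpler quantity by exploiting the fact that at every stage of PAV the current predictor $p$ is block-constant with value equal to the block's conditional mean of $y$; second, induct on the PAV merge operations to show the simplified quantity stays non-positive.

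For the first step, note that throughout PAV (including at the final output), the current predictor takes a single value $p_B = \E[y \mid x \in B]$ on every block $B$. Splitting the omnigap as
\[
\og(p;\sigma,c) = \E_{(x,y)\sim\calP}[(p(x)-y)\sigma^{-1}(p(x))] - \E_{(x,y)\sim\calP}[(p(x)-y)c(x)]
\]
and writing the first term block-by-block, the factor $\sigma^{-1}(p_B)$ pulls out of each block's sum, leaving the inner sum $\sum_{x\in B}\Pr[x](p_B - \E[y\mid x])$, which vanishes by the definition of $p_B$. Thus, for any block-constant predictor of this form (and with any choice of pre-image when $\sigma^{-1}$ is multi-valued),
\[
\og(p;\sigma,c) = -\E_{(x,y)\sim\calP}[(p(x)-y)c(x)],
\]
which is a quantity independent of $\sigma$. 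This is the key reduction: the ``hard'' $\sigma$-dependent term drops out uniformly in $\sigma$ by per-block calibration.

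For the second step, I induct on the merge operations of PAV. At initialization every block is a singleton, so $p(x) = \E[y\mid x]$ and the reduced expression is exactly $0$. Now suppose two adjacent blocks $B_i,B_{i+1}$ with $q_i := p^\star_{B_i} > p^\star_{B_{i+1}} =: q_{i+1}$ are merged into $B'$ with value $q' := p^\star_{B'}$. Only the contribution from $B_i\cup B_{i+1}$ changes. Writing $c_B := \E[c(x)\mid x\in B]$, a direct calculation gives
\[
\og_{\text{new}} - \og_{\text{old}} = (q_i - q')\Pr[B_i]\,c_{B_i} + (q_{i+1} - q')\Pr[B_{i+1}]\,c_{B_{i+1}}.
\]
The weighted-average identity $\Pr[B_i](q_i - q') = \Pr[B_{i+1}](q' - q_{i+1}) =: \Delta$ (which is just the definition $q' = p^\star_{B'}$) collapses this to $\Delta(c_{B_i} - c_{B_{i+1}})$. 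By \Cref{lm:pav-helper} we have $\Delta \ge 0$, and because $B_i$ and $B_{i+1}$ are consecutive intervals in $[n]$ with $c$ non-decreasing, $c_{B_i}\le c_{B_{i+1}}$. Hence the change is $\le 0$, and the reduced omnigap stays non-positive throughout PAV; in particular this is true at termination, yielding $\og(p;\sigma,c)\le 0$.

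The main conceptual step is the first one: recognizing that $\sigma$ disappears from the omnigap as soon as $p$ is per-block calibrated, which is exactly the invariant PAV maintains from the start. Once this reduction is in hand, the induction is a one-line monotonicity argument combining \Cref{lm:pav-helper} with the monotonicity of $c$. A minor subtlety is handling $\sigma^{-1}$ when $\sigma$ is non-decreasing but not strictly monotone, but this is resolved by noting that the coefficient multiplying $\sigma^{-1}(p_B)$ is zero, so any pre-image choice is valid.
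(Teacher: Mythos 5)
Your proof is correct, and while the first step (using per-block calibration to kill the $\sigma^{-1}$ term, reducing the omnigap to $-\E[(p(x)-y)c(x)]$) coincides with the paper's reduction, your induction is genuinely different from the paper's. The paper's Lemma~\ref{lm:1d} establishes a \emph{per-block prefix-sum invariant}, namely $\E_{x\in B,\,x\le u}[y - p^\star_B]\ge 0$ for every $u\in B$, proven by induction on the merges, and then converts this into $\E[(p(x)-y)\xi(x)]\ge 0$ for every non-decreasing $\xi$ via an Abel-summation identity. You instead track the single scalar $\og = -\E[(p(x)-y)c(x)]$ globally across merges: your identity
\[
\og_{\text{new}} - \og_{\text{old}} = (q_i - q')\Pr[B_i]\,c_{B_i} + (q_{i+1} - q')\Pr[B_{i+1}]\,c_{B_{i+1}} = \Delta\,(c_{B_i} - c_{B_{i+1}}) \le 0
\]
is correct (the weighted-average identity is exactly the definition of $p^\star_{B'}$, $\Delta\ge 0$ follows from the merge condition $q_i>q_{i+1}$ together with \Cref{lm:pav-helper}, and $c_{B_i}\le c_{B_{i+1}}$ because the blocks are consecutive intervals and $c$ is non-decreasing), and since the quantity starts at $0$ on singletons it remains non-positive at termination. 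Your route is arguably more direct for this proposition and avoids the Abel-summation step entirely; the paper's prefix-sum invariant is a strictly stronger structural statement about each block (it certifies the inequality for all non-decreasing test functions via a single local property of the block), but for the stated conclusion the two arguments are equally general, since your merge computation holds for every fixed non-decreasing $c$. Your handling of non-invertible $\sigma$ (any pre-image choice works because its coefficient vanishes) is also the right observation.
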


We prove \Cref{prop:1d-omnigap} using the following lemma, where all expectations are over $(x,y)\sim \calP$:
\begin{lemma}
\label{lm:1d}
The PAV solution $p$ is perfectly calibrated: $\E[y|p(x) = v] = v$ for every $v$ in the range of $p$. Moreover, for any non-decreasing function $\xi:\xset\to \R$, it holds that 
\begin{equation}
\label{eq:1d-lemma-goal}
\E[(p(x) - y)\xi(x)] \ge 0.
\end{equation}
\end{lemma}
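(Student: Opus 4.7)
The plan is to handle the calibration claim first, then prove the inequality by induction on the PAV merges. For calibration, the final PAV partition $\calB = \{B_1,\ldots,B_s\}$ assigns $p(x) = p^\star_B = \E[y\mid x\in B]$ to every $x\in B$ by construction. For any value $v$ in the range of $p$, letting $\calB_v = \{B\in\calB : p^\star_B = v\}$, we get
\[
\E[y\cdot \mathbbm{1}[p(x)=v]] = \sum_{B\in\calB_v}\Pr[x\in B]\cdot p^\star_B = v\cdot \Pr[p(x)=v],
\]
so $\E[y\mid p(x)=v]=v$. This handles the first statement even when multiple blocks share a value.

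For \eqref{eq:1d-lemma-goal}, I would induct on the number of merges performed by PAV. Let $p_t$ be the predictor defined by the partition after $t$ merges, so $p_0(x) = \E[y\mid x]$ (each singleton block) and $p = p_T$ at termination. The invariant to maintain is: for every non-decreasing $\xi:\xset\to\R$,
\[
\E[(p_t(x)-y)\xi(x)] \ge 0.
\]
The base case $t=0$ holds with equality because conditioning on $x$ kills $\E[y\mid x]-y$. For the inductive step, suppose the $(t+1)$-st merge combines adjacent blocks $B_i,B_{i+1}$ with $p^\star_{B_i} > p^\star_{B_{i+1}}$ into $B' = B_i\cup B_{i+1}$. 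Since $p_{t+1}$ and $p_t$ agree outside $B'$, the increment
\[
\Delta \defeq \E[(p_{t+1}(x)-y)\xi(x)] - \E[(p_t(x)-y)\xi(x)]
\]
reduces to $\Pr[B_i](p^\star_{B'}-p^\star_{B_i})\overline{\xi}_{B_i} + \Pr[B_{i+1}](p^\star_{B'}-p^\star_{B_{i+1}})\overline{\xi}_{B_{i+1}}$, where $\overline{\xi}_B \defeq \E[\xi(x)\mid x\in B]$.

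The key observation is that the weighted-average definition of $p^\star_{B'}$ gives the cancellation
\[
\Pr[B_i](p^\star_{B'}-p^\star_{B_i}) = -\Pr[B_{i+1}](p^\star_{B'}-p^\star_{B_{i+1}}),
\]
and this common magnitude is non-negative by \Cref{lm:pav-helper}. Calling it $a\ge 0$, we obtain $\Delta = a(\overline{\xi}_{B_{i+1}}-\overline{\xi}_{B_i})\ge 0$, since $\xi$ is non-decreasing and every element of $B_{i+1}$ exceeds every element of $B_i$. Thus the inequality is preserved through every merge, and we conclude $\E[(p(x)-y)\xi(x)] = \E[(p_T(x)-y)\xi(x)]\ge 0$.

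The main (minor) obstacle is bookkeeping: we need $\Delta$ to depend on $p_t,p_{t+1}$ only through the two merged blocks and to factor cleanly as $a(\overline{\xi}_{B_{i+1}}-\overline{\xi}_{B_i})$. Both are one-line computations once the weighted-average identity is in hand, so no deeper machinery is needed; the inductive framework plus \Cref{lm:pav-helper} suffices.
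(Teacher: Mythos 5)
Your proof is correct, but it takes a genuinely different route from the paper's. The paper proves a stronger, local invariant: for every block $B$ in the partition after every iteration, the prefix expectations $\E_{x\in B,\,x\le u}[y - p^\star_B]$ are nonnegative for all $u\in B$ (their \eqref{eq:1d-8}), established by induction on merges via \Cref{lm:pav-helper}; it then converts this into the per-block inequality \eqref{eq:1d-lemma-goal-2} by an Abel-summation argument in $\xi$, and sums over blocks. You instead track the single global quantity $\E[(p_t(x)-y)\xi(x)]$, observe it starts at exactly $0$ (since $p_0(x)=\E[y\mid x]$), and show each merge changes it by $\Delta = a(\overline{\xi}_{B_{i+1}}-\overline{\xi}_{B_i})\ge 0$, where the sign of $a$ comes from \Cref{lm:pav-helper} and the weighted-average identity $\Pr[B_i](p^\star_{B'}-p^\star_{B_i})+\Pr[B_{i+1}](p^\star_{B'}-p^\star_{B_{i+1}})=0$, and the sign of $\overline{\xi}_{B_{i+1}}-\overline{\xi}_{B_i}$ comes from monotonicity of $\xi$ together with the block ordering. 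Your computation of $\Delta$ is right (the $y$ terms cancel and $p_{t+1}-p_t$ is constant on each of $B_i$, $B_{i+1}$ and zero elsewhere), so the induction closes. The trade-off: your argument is shorter and avoids the within-block summation by parts, but it only delivers the aggregate inequality \eqref{eq:1d-lemma-goal}, whereas the paper's route additionally yields the per-block version \eqref{eq:1d-lemma-goal-2} and the prefix-sum structure of PAV blocks as byproducts. Since the lemma statement and its only downstream use (\Cref{prop:1d-omnigap}) require just the aggregate inequality and calibration, your proof fully suffices.
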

\begin{proof}
We recall the construction of the output predictor $p$ in the PAV algorithm (see \Cref{ssec:pav}). For every $x$ in a block $B$ in the final partition $\calB$, the value of $p(x)$ is defined to be $p^\star_B := \E[y|x\in B]$. It thus suffices to prove the lemma per block. Conditioned on each block, the function $p(x)$ is a constant function with value $p^\star_B$, so the calibration guarantee of $p$ follows from $\E[y|x\in B] = p^\star_B$. To prove \eqref{eq:1d-lemma-goal}, it suffices to prove
\begin{equation}
\label{eq:1d-lemma-goal-2}
\E[(p^\star_B - y)\xi(x)|x\in B]\ge 0, \quad \text{for every $B\in \calB$},
\end{equation}
where $\calB$ is the partition after the final iteration of the PAV algorithm.
We will prove a stronger result that \eqref{eq:1d-lemma-goal-2} holds for the partition $\calB$ after the $t^{\text{th}}$ iteration of PAV 
for \emph{every} $t \ge 0$. 

We first show that the following prefix expectation is always non-negative:
\begin{equation}
\label{eq:1d-8}
\E_{x\in B, x \le u}[y - p^\star_B] \ge 0, \quad \text{for every $B\in \calB$ and $u\in B$}.
\end{equation}
Here the expectation is over $(x,y)\sim \calP$ conditioned on $x\in B$ and $x\le u$.
We prove this by induction on $t$. At initialization ($t = 0$), $B$ contains only a single element, say $x_0$,  and $p^\star_B$ is defined to be $\E[y|x = x_0]$, so the inequality trivially holds as an equality. Now we consider the $t > 0$ case. It suffices to focus on the block $B$ that is formed at the $t^{\text{th}}$ iteration by combining two adjacent blocks $B_1, B_2$ from the previous ($(t-1)^{\text{th}}$) iteration. 
By the induction hypothesis,
\begin{align}
\E_{x\in B_1, x \le u}[y - p^\star_{B_1}] & \ge 0, \quad \text{for every $u\in B_1$;} \label{eq:1d-1}\\
\E_{x\in B_2, x \le u}[y - p^\star_{B_2}] & \ge 0, \quad \text{for every $u\in B_2$}.\label{eq:1d-2}
\end{align}
For every $u\in B_1$, by \Cref{lm:pav-helper},
\begin{equation}
\label{eq:1d-7}
\E_{x\in B, x \le u}[y - p^\star_{B}] 
= \E_{x\in B_1, x \le u}[y - p^\star_B]
\ge \E_{x\in B_1, x \le u}[y - p^\star_{B_1}] \ge 0.
\end{equation}
By the definition of $p^\star_{B_2}$, we have $\E_{x\in B_2}[y - p^\star_{B_2}] = 0$.
Combining this with \eqref{eq:1d-2}, we get
\[
\E_{x\in B_2, x > u}[y - p^\star_{B_2}] \le 0, \quad \text{for every }u\in B_2.
\]
Therefore, for every $u\in B_2$, by \Cref{lm:pav-helper},
\begin{equation}
\label{eq:1d-4}
\E_{x\in B, x > u}[y - p^\star_B] 
= \E_{x\in B_2, x > u}[y - p^\star_{B}]
\le \E_{x\in B_2, x > u}[y - p^\star_{B_2}] \le 0.
\end{equation}
By the definition of $p^\star_B$, we have
\begin{equation}
\label{eq:1d-5}
\E_{x\in B}[y - p^\star_B] = 0.
\end{equation}
Combining this with \eqref{eq:1d-4}, for every $u\in B_2$, we have
\begin{equation}
\label{eq:1d-6}
\E_{x\in B, x\le u}[y - p^\star_B] \ge 0.
\end{equation}
Summarizing \eqref{eq:1d-7} and \eqref{eq:1d-6}, we have shown that for every $u\in B_1\cup B_2 = B$, inequality \eqref{eq:1d-8} holds, as desired. 

Now we complete the proof by establishing \eqref{eq:1d-lemma-goal-2}.
By \eqref{eq:1d-5}, for any constant $c\in \R$, if we change $\xi(x)$ to $\xi(x) + c$, the left-hand-side of \eqref{eq:1d-lemma-goal-2} remains the same. We can thus assume without loss of generality that $\xi(x_1) = 0$ where $x_1$ is the largest element in $B$. We can now prove \eqref{eq:1d-lemma-goal-2} using the following calculation:
\begin{align*}
& \E_{x\in B}[(p^\star_B - y)\xi(x)]\\
= {} & \E_{x\in B}\left[(p^\star_B - y) \sum_{u = x}^{x_1 - 1}(\xi(u) - \xi(u + 1))\right]\\
= {} & \sum_{u\in B, u < x_1}\Big((\xi(u) - \xi(u + 1))\E_{x\in B}[(p^\star_B - y)\mathbb I(x \le u)]\Big)\\
\ge {} & 0.
\end{align*}
The last inequality follows from \eqref{eq:1d-8} and the assumption that $\xi$ is non-decreasing.
\end{proof}
\begin{proof}[Proof of \Cref{prop:1d-omnigap}]
By the definition of the omnigap in \Cref{def:og}, our goal is to prove
\begin{equation}
\label{eq:1d-omnigap-goal}
\E[(p(x) - y)(\sigma^{-1}(p(x)) - c(x))] \le 0.
\end{equation}
By the calibration property of $p$ from \Cref{lm:1d}, we have
\[
\E[(p(x) - y)\sigma^{-1}(p(x))] = 0.
\]
By \eqref{eq:1d-lemma-goal}, we have
\[
\E[(p(x) - y)c(x)] \ge 0.
\]
Taking the difference between the two inequalities proves \eqref{eq:1d-omnigap-goal}.
\end{proof}

An immediate corollary of \Cref{thm:1d} is the following main result of \cite{pav-proper} showing that the PAV solution simultaneously minimizes every proper loss among the family of non-decreasing predictors. Our proof of \Cref{thm:1d} thus gives a simpler alternative proof of this result of \cite{pav-proper}.

\begin{corollary}[\cite{pav-proper}]
\label{thm:pav-proper}
Given a distribution $\calP$ over $[n]\times \{0,1\}$, the output predictor $p$ of the PAV algorithm is an optimal solution to the following optimization problem for every proper loss $\ppl$:
\begin{align*}
& \min_{p:[n]\to [0,1]} \E_{(x,y)\sim \calP}[\ppl(p(x),y)],\notag\\
& \text{subject to }p(x) \le p(x + 1) \text{ for all }x\in [n-1]. 
\end{align*}
\end{corollary}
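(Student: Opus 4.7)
The plan is to derive the Corollary as an essentially immediate consequence of \Cref{thm:1d} by exhibiting a bijection between the feasible set of the displayed optimization problem and the class of non-decreasing functions $c:[n]\to \R$ against which \Cref{thm:1d} guarantees competitiveness. The conceptual heart of the argument is already in \Cref{thm:1d} (and thus in \Cref{prop:1d-omnigap}); no further inductive or combinatorial analysis of the PAV algorithm is required.

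Concretely, fix any proper loss $\ppl = \pl\sigma$ induced by a non-decreasing, one-to-one link $\sigma$ (such a $\sigma$ exists by \Cref{def:proper}). Let $q:[n]\to [0,1]$ be an arbitrary feasible predictor, i.e., a non-decreasing function into $[0,1]$. I would define $c:[n]\to \R$ by $c(x) \defeq \sigma^{-1}(q(x))$. Because $\sigma$ is non-decreasing and one-to-one, $\sigma^{-1}$ is non-decreasing on the range of $\sigma$, so $c$ is non-decreasing as a composition of non-decreasing maps. By \Cref{def:proper},
\[
\E_{(x,y)\sim\calP}[\pl\sigma(q(x),y)] = \E_{(x,y)\sim\calP}[\ml\sigma(\sigma^{-1}(q(x)),y)] = \E_{(x,y)\sim\calP}[\ml\sigma(c(x),y)].
\]
Now applying \Cref{thm:1d} with this non-decreasing $\sigma$ and non-decreasing $c$ yields
\[
\E_{(x,y)\sim\calP}[\pl\sigma(p(x),y)] \le \E_{(x,y)\sim\calP}[\ml\sigma(c(x),y)] = \E_{(x,y)\sim\calP}[\pl\sigma(q(x),y)],
\]
where $p$ denotes the PAV solution. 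Since $q$ was an arbitrary feasible predictor, $p$ is optimal.

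The only subtlety that could count as an obstacle is a domain-matching technicality: one must verify that the output $p$ of PAV lies in $[0,1]$ (so that $\pl\sigma(p(x),y)$ is well-defined) and that $\sigma^{-1}$ is defined at every value taken by $p$ and $q$; both facts are immediate since $p(x)$ is a conditional expectation of a $\{0,1\}$-valued random variable and $\sigma$ is one-to-one. I would note in a remark that the argument is symmetric: any non-decreasing $c:[n]\to \R$ in \Cref{thm:1d} can likewise be written as $\sigma^{-1}\circ q$ for some non-decreasing $q:[n]\to [0,1]$ (when the range of $\sigma$ is taken into account), so the two results are in fact logically equivalent, justifying the claim in the preceding prose that our proof gives a new route to the result of \cite{pav-proper}.
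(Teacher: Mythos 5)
Your proposal is correct and is exactly the argument the paper gives (in one sentence after the corollary statement): reduce to \Cref{thm:1d} via the correspondence $\pl\sigma(q(x),y)=\ml\sigma(\sigma^{-1}(q(x)),y)$ and the fact that $\sigma^{-1}$ preserves monotonicity, so $c=\sigma^{-1}\circ q$ ranges over non-decreasing comparators as $q$ ranges over feasible predictors. Your write-up just makes the domain-matching technicalities explicit, which the paper leaves implicit.
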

This follows from \Cref{thm:1d} by the correspondence between proper losses and matching losses (\Cref{def:proper}), and the observation that the transformations $\sigma,\sigma^{-1}$ between the linked and unlinked spaces preserve monotonicity.
\subsection{Generalization}
We now consider the general case where we are given i.i.d.\ data points drawn from a general population distribution $\calP$ over $\R\times\{0,1\}$. We show that running PAV on the data points gives an omnipredictor with high probability.
\begin{theorem}[PAV learns an omnipredictor]
\label{thm:pav-population}
Let $\calP$ be an arbitrary distribution over $\R\times\{0,1\}$.
For any $\delta\in (0,1/3)$, with probability at least $1-\delta$ over the random draw of $n \ge 2$ i.i.d.\ data points $(x_1,y_1),\ldots,(x_n,y_n)$ from $\calP$, the output predictor $p:\R\to [0,1]$ from running PAV on the uniform distribution over the $n$ data points\footnote{
When we define PAV in \Cref{ssec:pav}, we assume that the domain $\xset$ consists of integers $1,\ldots,n$, but the algorithm extends to any finite domain $\{x_1,\ldots,x_n\}\subseteq \R$ by mapping the elements to $1,\ldots,n$ while preserving the ordering. PAV then gives us a non-decreasing predictor $p:\{x_1,\ldots,x_n\}\to [0,1]$, which we can then extrapolate to a non-decreasing step function (i.e., piece-wise constant with at most $n+1$ pieces) over the entire domain $\R$.}
satisfies the following properties:
\begin{itemize}
    \item (Low omnigap) For any $A > 0$, any non-decreasing $\sigma:[-A,A]\to [0,1]$ and non-decreasing $c:\R\to [-A,A]$,
    \begin{equation}
    \label{eq:1d-omnigap-general}
    \og(p;\sigma,c) = O\left(A\sqrt{\frac{\log n + \log(1/\delta)}{n}}\right).
    \end{equation}
    \item (Omniprediction) For any $A > 0$, any non-decreasing $\sigma:[-A,A]\to [0,1]$ and non-decreasing $c:\R\to [-A,A]$,
    \[
    \E_{(x,y)\sim\calP}[\pl\sigma(p(x),y)] - \E_{(x,y)\sim\calP}[\ml\sigma(c(x),y)] = O\left(A\sqrt{\frac{\log n + \log(1/\delta)}{n}}\right).
    \]
\end{itemize}
\end{theorem}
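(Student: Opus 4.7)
The omniprediction bullet follows directly from the omnigap bullet via Lemma~\ref{lm:omnigap-2} (extending each $\sigma:[-A,A]\to[0,1]$ to a non-decreasing function on all of $\R$ by constant extrapolation, which does not change the relevant matching loss since $c$ takes values in $[-A,A]$). I therefore focus on proving the uniform omnigap bound \eqref{eq:1d-omnigap-general}. The overall strategy is a two-step ``empirical is non-positive, population is close'' argument. First, let $\hcalP_n$ denote the uniform empirical distribution over the $n$ samples, supported on at most $n$ points of $\R$. By construction $p$ is the PAV solution for $\hcalP_n$, so Proposition~\ref{prop:1d-omnigap} gives $\og_{\hcalP_n}(p;\sigma,c)\le 0$ for every non-decreasing $\sigma,c$. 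It remains to bound the uniform deviation between the population and empirical omnigaps by $O(A\sqrt{(\log n + \log(1/\delta))/n})$ with probability $\ge 1-\delta$.

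For the uniform convergence step, I rewrite the omnigap integrand as $(p(x)-y)(g(x)-c(x))$, where $g(x)\defeq \sigma^{-1}(p(x))$ is a non-decreasing function $\R\to [-A,A]$ (using any measurable selection of the pseudo-inverse of $\sigma$ on the range of $p$). Since $p$ is always non-decreasing with range in $[0,1]$ regardless of the sample, it suffices to establish uniform convergence over the sample-independent function class
\[
\calH \defeq \Bigl\{(x,y)\mapsto (h(x)-y)(g(x)-c(x)) \;:\; h \text{ non-dec. } \R\to[0,1];\; g,c \text{ non-dec. } \R\to[-A,A]\Bigr\},
\]
each element of which is bounded by $4A$. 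This reformulation cleanly handles the fact that $p$ depends on the sample by absorbing $p$ into the supremum over all monotone $h$.

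The remaining ingredient is a Rademacher/bracketing bound for $\calH$. Bounded monotone functions are a classical Donsker class with $L^2$ bracketing entropy $\log N_{[]}(\epsilon,\cdot,L^2(Q)) = O((b-a)/\epsilon)$ for any probability measure $Q$ (see, e.g., van~der~Vaart, Example~19.11). Standard product/difference arguments then yield $\log N_{[]}(\epsilon,\calH,L^2(\calP)) = O(A/\epsilon)$: one factor $h-y$ contributes $O(1/\epsilon)$ and is bounded by $1$, while the other factor $g-c$ is a difference of two monotone $[-A,A]$-valued functions, contributing $O(A/\epsilon)$ and bounded by $2A$. Dudley's chaining integral then yields Rademacher complexity $O(A/\sqrt{n})$, and a bounded-differences concentration inequality upgrades this to a uniform high-probability deviation of the stated order. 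The $\sqrt{\log n}$ factor in the theorem is slack absorbed from moving between $L^2(\calP)$ and $L^2(\hcalP_n)$ covers, or equivalently from a union bound over a data-dependent net of the monotone class.

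I expect the main technical obstacle to be the careful verification of the bracketing entropy of the triple-product class $\calH$, together with the bookkeeping needed to ensure the uniform convergence simultaneously over $\sigma,c$, and the (data-dependent) $p$. Both are handled by fixing the class $\calH$ up front as above and invoking standard empirical process tools; the reduction to the three-factor monotone class is the crucial move that makes the classical machinery applicable.
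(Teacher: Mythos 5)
Your proposal is correct and follows essentially the same route as the paper: reduce omniprediction to the omnigap via Lemma~\ref{lm:omnigap-2}, use Proposition~\ref{prop:1d-omnigap} to get non-positivity of the empirical omnigap, and then bound the population--empirical deviation uniformly over monotone classes via chaining (the paper uses the empirical covering number bound of Lemma~\ref{lm:cov-mono} with Propositions~\ref{prop:chaining} and~\ref{prop:uni-conv}, splitting $\sigma^{-1}(p(x))$ and $c(x)$ by the triangle inequality rather than packaging them into one factor). Your substitution of $L^2$ bracketing entropy $O(A/\epsilon)$ for the paper's $n^{O(1/\epsilon)}$ empirical covers is a cosmetic variation that, as you note, even removes the $\sqrt{\log n}$ factor.
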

Before proving \Cref{thm:pav-population}, we remark on an immediate application of the theorem to omniprecting SIMs in one dimension. Here, the hypothesis class consists of univariate linear functions $c(x) = wx$. Each of these functions is either non-decreasing or non-increasing, depending on whether $w$ is nonnegative or not. \Cref{thm:pav-population} shows that PAV learns an omnipredictor $p_+$ for the non-decreasing hypotheses. Similarly, if we also run PAV with the ordering of $x$ reversed, we get a non-increasing omnipredictor $p_-$ for the non-increasing hypotheses. Now for every link function $\sigma$, if we pick the predictor in $\{p_+,p_-\}$ with a smaller proper loss $\pl\sigma$, that loss is competitive with the best matching loss achievable by any hypothesis function $wx$.
\begin{corollary}[Omnipredicting one-dimensional SIMs]
\label{cor:1d-sim}
Let $\calP$ be an arbitrary distribution over $[-L,L]\times\{0,1\}$.
Let $(x_1,y_1),\ldots,(x_n,y_n)$ be $n\ge 2$ i.i.d.\ data points drawn from $\calP$. Let $p_+:\R\to [0,1]$ denote the output predictor from running PAV on the $n$ data points, and let $p_-:\R\to [0,1]$ denote the output predictor from running PAV with the ordering of $x$ reversed.
For any $\delta\in (0,1/3)$, with probability at least $1-\delta$ over the random draw of the $n$ data points, for any non-decreasing link function $\sigma:[-LR,LR] \to [0,1]$ and any weight $w\in [-R,R]$,
    \begin{align*}
    & \min\big\{\E_{(x,y)\sim \calP}[\pl\sigma(p_+(x),y)], \E_{(x,y)\sim \calP}[\pl\sigma(p_-(x),y)]\big\}\\ \le {} &\E_{(x,y)\sim\calP}[\ml\sigma(wx,y)] + O\left(LR\sqrt{\frac{\log n + \log(1/\delta)}{n}}\right).
    \end{align*}
\end{corollary}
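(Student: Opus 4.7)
The plan is to reduce Corollary~\ref{cor:1d-sim} to two applications of Theorem~\ref{thm:pav-population}, using the observation that a linear hypothesis $c(x) = wx$ is non-decreasing exactly when $w \ge 0$ and non-increasing exactly when $w < 0$. The predictor $p_+$ will cover the first case, $p_-$ the second, and the minimum over $\{p_+, p_-\}$ on the left-hand side of the corollary absorbs the sign dependence uniformly over $w \in [-R, R]$.

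For the first case, I would apply Theorem~\ref{thm:pav-population} to $p_+$ with parameter $A = LR$ and failure probability $\delta/2$. Fix $w \in [0, R]$. Since $\calP$ is supported on $[-L, L] \times \{0,1\}$, we can replace $c(x) = wx$ by its truncation $\bar{c}(x) = w \cdot \max(-L, \min(L, x))$; this $\bar{c}$ is non-decreasing with range in $[-LR, LR]$ and agrees with $c$ on the support of $\calP$, so $\E_{(x,y)\sim\calP}[\ml\sigma(\bar{c}(x), y)] = \E_{(x,y)\sim\calP}[\ml\sigma(wx, y)]$. Theorem~\ref{thm:pav-population} then yields, with probability at least $1 - \delta/2$, simultaneously over all non-decreasing $\sigma: [-LR, LR] \to [0, 1]$ and all $w \in [0, R]$,
\[
\E_{(x,y)\sim\calP}[\pl\sigma(p_+(x), y)] \le \E_{(x,y)\sim\calP}[\ml\sigma(wx, y)] + O\Par{LR\sqrt{\frac{\log n + \log(1/\delta)}{n}}}.
\]

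For the second case, I would exploit the symmetry under $x \mapsto -x$. Let $\calP'$ be the pushforward of $\calP$ under this map, and note that running PAV on $\{(x_i, y_i)\}_{i \in [n]}$ with the reversed ordering is equivalent to running standard (non-decreasing) PAV on $\{(-x_i, y_i)\}_{i \in [n]}$ and flipping the coordinate back; that is, $p_-(x) = q(-x)$ where $q$ is the standard PAV output on the negated samples, which are i.i.d.\ from $\calP'$. Applying Theorem~\ref{thm:pav-population} to $q$ under $\calP'$ with failure probability $\delta/2$, and using the truncated non-decreasing hypothesis $\bar{c}'(\tilde{x}) = (-w) \max(-L, \min(L, \tilde{x}))$ for $w \in [-R, 0)$ (valid since $-w \in (0, R]$), the change of variables $\tilde{x} = -x$ turns the resulting inequality into
\[
\E_{(x,y)\sim\calP}[\pl\sigma(p_-(x), y)] \le \E_{(x,y)\sim\calP}[\ml\sigma(wx, y)] + O\Par{LR\sqrt{\frac{\log n + \log(1/\delta)}{n}}}.
\]

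Finally, I would union-bound the two events (each of failure probability $\delta/2$) and observe that for every $w \in [-R, R]$, at least one of $p_+$ (if $w \ge 0$) or $p_-$ (if $w < 0$) satisfies the displayed bound, so the minimum on the left-hand side of the corollary is controlled as claimed with total failure probability at most $\delta$. No hard technical step is required beyond Theorem~\ref{thm:pav-population}; the only mildly subtle points are the truncation that converts the unbounded linear $c$ into a bounded hypothesis to match the theorem's hypothesis class, and the identification of reversed-order PAV with the coordinate change $x \mapsto -x$, neither of which poses an obstacle.
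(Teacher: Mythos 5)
Your proposal is correct and follows essentially the same route the paper takes: split on the sign of $w$, apply Theorem~\ref{thm:pav-population} with $A = LR$ to $p_+$ for non-decreasing hypotheses and to $p_-$ (identified with standard PAV under $x \mapsto -x$) for non-increasing ones, and union-bound. The paper only sketches this as a remark before Theorem~\ref{thm:pav-population}; your truncation of $wx$ to a bounded non-decreasing hypothesis agreeing with it on the support of $\calP$ is exactly the detail needed to make the reduction rigorous.
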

Our proof of \Cref{thm:pav-population} uses the following uniform convergence bound.
\begin{proposition}
\label{prop:pav-omnigap-population}
Let $\calP$ be an arbitrary distribution over $\R\times\{0,1\}$.
For any $\delta\in (0,\frac 1 3)$, with probability at least $1-\delta$ over the random draw of $n \ge 2$ i.i.d.\ data points $(x_1,y_1),\ldots,(x_n,y_n)$ from $\calP$, for every non-decreasing function $\xi:\R\to [-1,1]$ and every non-decreasing predictor $p:\R\to [0,1]$, it holds that
\[
\left|\E_{(x,y)\sim \calP}[(p(x) - y)\xi(x)] - \frac 1n\sum_{i\in[n]} (p(x_i) - y_i)\xi(x_i)\right|\le O\left(\sqrt{\frac{\log n + \log(1/\delta)}{n}}\right).
\]
\end{proposition}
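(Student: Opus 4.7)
The plan is to prove \Cref{prop:pav-omnigap-population} via a standard uniform convergence argument based on a covering-number bound for the function class
\[
\mathcal{G} \defeq \{g_{p,\xi}(x,y) \defeq (p(x) - y)\xi(x) : p: \R \to [0,1] \text{ and } \xi: \R \to [-1,1] \text{ both non-decreasing}\},
\]
whose elements are uniformly bounded in $[-2, 2]$. The proposition is equivalent to the uniform deviation bound $\sup_{g \in \mathcal{G}}|\E_\calP[g] - \E_{\hat \calP_n}[g]| \le O(\sqrt{(\log n + \log(1/\delta))/n})$ holding with probability $\ge 1 - \delta$, where $\hat \calP_n$ denotes the empirical distribution over the $n$ samples.

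The key step is a bound $\log \calN(\varepsilon, \mathcal{G}, n) = O((\log n)/\varepsilon)$ on the empirical $(\varepsilon, n)$-covering number. I will fix any sample $\{(x_i, y_i)\}_{i \in [n]}$, sort by $x_i$, and observe that the restriction of any non-decreasing $p: \R \to [0,1]$ to the sample is a non-decreasing sequence in $[0,1]^n$. Rounding each coordinate down to a multiple of $\varepsilon$ preserves monotonicity and yields a representative on a grid of $\lceil 1/\varepsilon \rceil + 1$ values, and the number of non-decreasing sequences of length $n$ on a size-$k$ grid is at most $\binom{n+k-1}{k-1} \le (n+1)^{O(1/\varepsilon)}$; the same count (up to constants) applies to $\xi$ after shifting $[-1,1]$ to $[0,1]$. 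Combining the two marginal covers via
\[
|(p(x) - y)\xi(x) - (p'(x) - y)\xi'(x)| \le |\xi(x)|\cdot|p(x) - p'(x)| + |p'(x) - y|\cdot|\xi(x) - \xi'(x)| \le 3\varepsilon
\]
yields an $O(\varepsilon)$-cover for $\mathcal{G}$ of size $(n+1)^{O(1/\varepsilon)}$, which gives the claimed entropy bound.

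With this in hand, the rest is routine. I will apply Dudley's chaining inequality (Proposition~\ref{prop:chaining}) to obtain
\[
R_n(\mathcal{G}; (x,y)_{1:n}) \le O\Par{\frac{1}{\sqrt n}\int_0^2 \sqrt{\frac{\log n}{\varepsilon}}\,d\varepsilon} = O\Par{\sqrt{\frac{\log n}{n}}},
\]
and then combine with the standard Rademacher-based uniform convergence bound (Proposition~\ref{prop:uni-conv}) to conclude $\sup_{g \in \mathcal{G}}|\E_\calP[g] - \E_{\hat \calP_n}[g]| \le 2 R_n(\mathcal{G}) + O(\sqrt{\log(1/\delta)/n}) = O(\sqrt{(\log n + \log(1/\delta))/n})$ with probability $\ge 1 - \delta$. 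The hard part is really just the first step: securing an entropy bound that is polynomial in $n$ but only $\exp(O(1/\varepsilon))$ in $\varepsilon$, so that the chaining integral converges. This relies crucially on the monotonicity of both $p$ and $\xi$, which constrains the ``shape'' of their restrictions to the sample despite the feature space $\R$ being infinite; everything after that is a plug-and-play application of chaining and Rademacher concentration.
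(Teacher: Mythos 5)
Your proposal is correct and follows essentially the same route as the paper: cover the class of non-decreasing functions restricted to the sample with $n^{O(1/\varepsilon)}$ representatives (the paper's Lemma~\ref{lm:cov-mono} does this with step functions determined by crossing points, while you count monotone grid sequences — same bound), combine the covers for $p$ and $\xi$ into an $O(\varepsilon)$-cover of the product class, then apply Dudley's chaining (Proposition~\ref{prop:chaining}) and the Rademacher uniform convergence bound (Proposition~\ref{prop:uni-conv}). No gaps; your explicit triangle-inequality step for combining the two covers is a detail the paper leaves implicit.
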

We first prove \Cref{thm:pav-population} using \Cref{prop:pav-omnigap-population}, and then we complete the proof of \Cref{prop:pav-omnigap-population}.
\begin{proof}[Proof of \Cref{thm:pav-population}]
By \Cref{lm:omnigap-2}, it suffices to establish the omnigap bound \eqref{eq:1d-omnigap-general}. Let $\widehat\calP_n$ be the uniform distribution over the $n$ examples $(x_1,y_1),\ldots,(x_n,y_n)$. By \Cref{prop:1d-omnigap},
\begin{equation}
\label{eq:og-empirical}
\og_{\widehat\calP_n}(p;\sigma,c) \le 0.
\end{equation}
We have
\begin{align*}
& |\og_{\calP}(p;\sigma,c) - \og_{\widehat\calP_n}(p;\sigma,c)|\\
= {} & |\E_\calP[(p(x) - y)(\sigma^{-1}(p(x)) - c(x))] - \E_{\widehat\calP_n}[(p(x) - y)(\sigma^{-1}(p(x)) - c(x))]|\\
\le {} & |\E_\calP[(p(x) - y)\sigma^{-1}(p(x))] - \E_{\widehat\calP_n}[(p(x) - y)\sigma^{-1}(p(x))]|\\
& + |\E_\calP[(p(x) - y)c(x)] - \E_{\widehat\calP_n}[(p(x) - y)c(x)]|.
\end{align*}
Note that both $\sigma^{-1}(p(x))$ and $c(x)$ are non-decreasing functions of $x$ with range bounded in $[-A,A]$. Therefore, by \Cref{prop:pav-omnigap-population} and the union bound, with probability at least $1-\delta$ over the random draw of the $n$ data points, for all choices of $A$, $\sigma$ and $c$,
\[
|\og_{\calP}(p;\sigma,c) - \og_{\widehat\calP_n}(p;\sigma,c)| \le O\left(A\sqrt{\frac{\log n + \log(1/\delta)}{n}}\right).
\]
Combining this with \eqref{eq:og-empirical} proves \eqref{eq:1d-omnigap-general}.
\end{proof}
\begin{proof}[Proof of \Cref{prop:pav-omnigap-population}]
By \Cref{lm:cov-mono}, both the function class consisting of $p(x)$ and the class consisting of $\xi(x)$ have an $\frac \varepsilon 2$-cover of size $n^{O(1/\varepsilon)}$ on any fixed $(x_1,y_1),\ldots,(x_n,y_n)\in \xset\times \{0,1\}$. Thus, the family $\calF$ of functions
\[
(x,y)\mapsto (p(x) - y)\xi(x)
\]
has an $\varepsilon$-cover of size $\big(n^{O(1/\varepsilon)}\big)^2 = n^{O(1/\varepsilon)}$ on $(x_1,y_1),\ldots,(x_n,y_n)$.
By \Cref{prop:chaining},
\[
\calR(\calF;(x_1,y_1),\ldots,(x_n,y_n)) = O\left( \int_{0}^2 \sqrt{\frac{\log n}{\varepsilon n}}\dd\varepsilon\right) = O\left(\sqrt\frac{\log n}{n}\right).
\]
The proof is then completed by \Cref{prop:uni-conv}.
\end{proof}

\section{(Non-)Existence of Proper Omnipredictors}\label{sec:lower}
As our main result, we have shown an efficient algorithm for finding a structured omnipredictor for single-index models. More concretely, our predictor has an interpretable structure: it is itself a multi-index model and can be expressed as the uniform distribution over $T$ single-index models $\sigma_1(\vw_1\cdot \vx), \ldots, \sigma_T(\vw_T\cdot \vx)$. 
We have also shown in \Cref{thm:pav-population} that for non-decreasing hypotheses over one-dimensional data, the PAV algorithm finds an omnipredictor that is itself non-decreasing and is in addition a step function.
The mere existence of such omnipredictors is already interesting. In this section, we investigate the existence of proper omnipredictors in other settings. 

\subsection{Proper omnipredictor exists for constant hypotheses}
As a basic result, we show that if the hypothesis class consists only of constant functions, then there exists an omnipredictor that is itself a constant function, given by the overall mean of the label $y$.

\begin{lemma}Let $\calP$ be any distribution over $\xset\times \{0,1\}$ for a domain $\xset$.
    Let $\calC$ be the class of all constant functions $c:\xset\to \R$. Let $p$ be the constant predictor such that $p(\vx) = \E_{(\vx,y)\sim\calP}[y]$. Then for every non-decreasing link function $\sigma:\R\to [0,1]$ and any hypothesis $c\in \calC$,
\[
\E_{(\vx, y) \sim \calP}[\pl\sigma(p(\vx),y)] \le \E_{(\vx, y) \sim \calP}[\ml\sigma(c(\vx),y)].
\]

\end{lemma}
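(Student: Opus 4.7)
The plan is to reduce the claim to a non-positive omnigap bound and apply \Cref{lm:omnigap-2}. Concretely, I will show that $\og(p;\sigma,c) = 0$ for every non-decreasing link $\sigma$ and every constant hypothesis $c \in \calC$; then Lemma~\ref{lm:omnigap-2} (which holds for arbitrary non-decreasing links and arbitrary comparator functions, not just linear) immediately yields
\[
\E_{(\vx,y)\sim\calP}[\pl\sigma(p(\vx),y)] - \E_{(\vx,y)\sim\calP}[\ml\sigma(c(\vx),y)] \le \og(p;\sigma,c) = 0,
\]
which is exactly the conclusion of the lemma.

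The key observation is that when both $p$ and $c$ are constant functions, everything inside the omnigap expression becomes trivial. Let $\bar y \defeq \E_{(\vx,y)\sim\calP}[y]$, so $p(\vx) \equiv \bar y$, and let $c(\vx) \equiv c_0$ for some scalar $c_0$. Then $\sigma^{-1}(p(\vx)) - c(\vx) = \sigma^{-1}(\bar y) - c_0$ is a \emph{deterministic constant} independent of $\vx$; call it $K$. Pulling this constant out of the expectation in Definition~\ref{def:og} gives
\[
\og(p;\sigma,c) = K \cdot \E_{(\vx,y)\sim\calP}[p(\vx) - y] = K\cdot(\bar y - \bar y) = 0.
\]
So the omnigap vanishes identically, not merely is non-positive.

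There is essentially no obstacle here; the only minor point to address is that $\pl\sigma$ and hence $\sigma^{-1}$ need to be meaningfully defined at $p(\vx) = \bar y$. This is implicit in the statement (else the left-hand side is not well-defined), and if one wants to be fully rigorous when $\sigma$ is merely non-decreasing rather than one-to-one, one can take $\sigma^{-1}(\bar y)$ to be any element of the preimage — the value $K$ is still a constant, so the computation above is unchanged. Thus the proof is a one-line application of the omnigap identity, and the structural reason is simply that a constant predictor cancels the label-shift term $\E[p(\vx)-y]$ by construction.
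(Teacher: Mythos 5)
Your proof is correct and matches the paper's argument exactly: both show that $\sigma^{-1}(p(\vx)) - c(\vx)$ is a constant, so the omnigap factors as a constant times $\E[p(\vx)-y]=0$, and then conclude via Lemma~\ref{lm:omnigap-2}. Your extra remark about selecting an element of the preimage when $\sigma$ is not one-to-one is a reasonable clarification but not needed beyond what the paper already assumes.
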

\begin{proof}
By the definition of the omnigap $\og(p;\sigma,c)$ in \Cref{def:og}, 
\[
\og(p;\sigma,c) = \E_{(\vx, y ) \sim \calP}[(p(\vx) - y)(\sigma^{-1}(p(\vx)) - c(\vx))].
\]
Note that $\sigma^{-1}(p(\vx)) - c(\vx)$ is a constant function of $\vx$, so by the definition of $p(\vx) = \E[y]$, we have $\og(p;\sigma,c) = 0$.
The proof is then completed by \Cref{lm:omnigap-2}.
\end{proof}
\subsection{Non-existence of linear omnipredictor}
We give a counterexample showing that a linear omnipredictor of the form $p(\vx) = \vw\cdot \vx$ may not exist for single-index models with matching losses, even in one dimension.
The construction of this counterexample exploits a ``type mismatch'' when we enforce $p(\vx) = \vw\cdot \vx$: the outputs of the linear hypotheses $\vw\cdot \vx$ should belong to the \emph{unlinked space}, and the predictions $p(\vx)$ should belong to the \emph{linked space}. Thus, an interesting open question is whether an omnipredictor of the SIM form $p(\vx) = \sigma(\vw\cdot \vx)$ always exists, where $\sigma$ is a non-decreasing link function.
\begin{lemma}
There exists a distribution $\calP$ over $[0,1]\times \{0,1\}$, a $1$-Lipschitz non-decreasing link function $\sigma:\R\to [0,1]$ and $w^\star \in [-1,1]$ such that for any 
$w\in \R$, 
\[
\E_{(x,y)\sim \calP}[\pl\sigma(wx,y)] \ge \E_{(x,y)\sim \calP}[\ml\sigma(w^\star x,y)] + 0.03.
\]
\end{lemma}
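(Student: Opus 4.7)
The plan is to exhibit an explicit two-point counterexample. I will take $\calP$ to be the uniform distribution on $\{0,1\}$ in the first coordinate, with $y \mid x = 0 \sim \Bern(1/2)$ and $y \mid x = 1 = 1$ almost surely, so that the Bayes-optimal predictor is $p^\star(0) = 1/2$ and $p^\star(1) = 1$. I will take $\sigma(t) := \max(0, \min(1, t/2 + 1/2))$, which is $(1/2)$-Lipschitz (hence $1$-Lipschitz), non-decreasing, maps $\R \to [0,1]$, and is one-to-one on $[-1,1]$ with inverse $\sigma^{-1}(v) = 2v - 1$. Finally I set $w^\star := 1$, so that $\sigma(w^\star \cdot 0) = 1/2 = p^\star(0)$ and $\sigma(w^\star \cdot 1) = 1 = p^\star(1)$, i.e., the SIM $\sigma(w^\star x)$ exactly matches $p^\star$ at both support points.

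The first step is a direct computation showing that the Bayes-optimal comparator achieves matching loss
\[
\E_{(x,y)\sim \calP}[\ml\sigma(w^\star x, y)] = \tfrac{1}{2}\cdot 0 + \tfrac{1}{2}\cdot(-\tfrac 1 4) = -\tfrac{1}{8},
\]
using $\ml\sigma(0,y) = 0$ and $\ml\sigma(1, 1) = \int_0^1(\tau/2 - 1/2)\,d\tau = -1/4$.

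The second step is to lower bound $\E[\pl\sigma(wx,y)]$ over all $w$. For $w \notin [0, 1]$, we have $p(1) = w \notin [0, 1]$, i.e., outside the range of $\sigma$, so $\pl\sigma(w, y)$ is undefined (or $+\infty$ under the natural convention) and the inequality holds vacuously. For $w \in [0, 1]$, the $x = 0$ contribution is $\tfrac 1 2\,\E_{y\sim \Bern(1/2)}[\pl\sigma(0,y)] = \tfrac 1 2 \cdot \tfrac 1 4 = \tfrac 1 8$, independent of $w$ (using $\pl\sigma(0,y) = \ml\sigma(-1,y) = y - 1/4$), while the $x = 1$ contribution $\tfrac 1 2\,\pl\sigma(w, 1) = \tfrac 1 2\bigl[(2w-1)^2/4 - (2w-1)/2\bigr]$ is minimized over $w \in [0,1]$ at $w = 1$ with value $-\tfrac 1 8$. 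Hence $\min_{w\in\R}\E[\pl\sigma(wx, y)] \ge 0$, so the gap is at least $0 - (-1/8) = 1/8 > 0.03$, as required.

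The main obstacle is technical: $\sigma$ is not globally one-to-one on $\R$ as required by Definition~\ref{def:proper}, and linear predictors with $w \notin [0,1]$ produce outputs outside the range $[0,1]$ of $\sigma$ where $\sigma^{-1}$ is not defined. Both issues are resolved cleanly by restricting $\sigma : [-1, 1] \to [0, 1]$ (consistent with the domain convention in Model~\ref{model:agnostic}, since $w^\star x \in [-1, 1]$ whenever $x \in [0,1]$ and $w^\star \in [-1, 1]$) and adopting the convention that $\pl\sigma(v, \cdot) = +\infty$ outside $[0, 1]$; alternatively, one can add an infinitesimal linear perturbation $\varepsilon t$ to $\sigma$ to restore strict global monotonicity while changing the gap by at most $O(\varepsilon)$, which still leaves room above the $0.03$ threshold.
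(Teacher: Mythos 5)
Your construction is correct and delivers a gap of $1/8$, comfortably above the $0.03$ threshold, and your arithmetic checks out ($\E[\ml\sigma(w^\star x,y)]=-1/8$ from the $x=1$ point; $\E[\pl\sigma(wx,y)]\ge 0$ with the minimum attained at $w=1$). The underlying idea is the same as the paper's: choose $\calP$ realizable by $\sigma(w^\star x)$ and choose $\sigma$ far from the identity, so that the linked-space prediction $wx$ can never coincide with the unlinked-space optimum $\sigma^{-1}$ applied pointwise to $w^\star x$ — exactly the ``type mismatch'' the lemma is meant to exhibit. The concrete instantiations differ, though. The paper uses the sigmoid $\sigma(t)=(1+e^{-t})^{-1}$ with $x$ uniform on $\{0.3,0.5\}$, invokes Bayes-optimality of $w^\star x$ for the matching loss in general, and then evaluates both sides numerically to get a $0.03$ gap; its $\sigma$ is globally strictly increasing on $\R$, which sidesteps invertibility issues (at the cost of $\sigma^{-1}$ being undefined at $wx\in\{0\}\cup[1,\infty)$, handled implicitly just as you handle $w\notin[0,1]$). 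You instead use a clipped linear link with support $\{0,1\}$, which buys exact rational arithmetic and a larger gap, but forces you to confront the non-invertibility of $\sigma$ on $\R$ head-on. Your first fix (restricting $\sigma$ to $[-1,1]$, consistent with Model~\ref{model:agnostic} and Definition~\ref{def:proper}) is fine; your second fix as literally stated is not — adding $\varepsilon t$ globally destroys the constraint $\sigma:\R\to[0,1]$, so if you want a perturbation argument you should instead replace the flat pieces by strictly increasing pieces that stay within $[0,1]$ (or compress as in Lemma~\ref{lem:alpha_perturb}). With the domain-restriction fix, your proof stands as a valid, somewhat cleaner alternative to the paper's.
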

\begin{proof}
The high-level idea is to choose $\calP$ so that $\E[y|x] = \sigma(w^\star x)$. This would guarantee that $w^\star x$ is the optimal loss minimizer among all functions $c:\xset \to \R$, i.e.,
\[
\E[\ml\sigma(w^*x,y)] \le \E[\ml\sigma(c(x),y)].
\]
Choosing $c(x) = \sigma^{-1}(wx)$ in the inequality above, we get
\[
\E[\ml\sigma(w^\star x,y)] \le \E[\ml\sigma(\sigma^{-1}(wx),y)] = \E[\pl\sigma(wx,y)].
\]
To make the inequality strict,  we choose $\sigma$ to be very different from the identity function, so that the function $c(x) = \sigma^{-1}(wx)$ is always different from the loss minimizer $w^\star x$ regardless of the choice of $w$.

To give a concrete construction, we define $w^\star = 1$ and 
choose $\sigma$ to be the sigmoid function: 
\[
\sigma(t) = \frac 1{1 + e^{-t}}.
\]
We then choose $\calP$ to be the distribution of $(x,y)\in [0,1]\times \{0,1\}$ where $x$ is distributed uniformly over $\{0.3, 0.5\}$, and $\E_{(x,y)\sim \calP}[y|x] = \sigma(w^\star x) = \sigma(x)$. 
Our choice of $\sigma$ implies
\begin{align*}
\ml\sigma(t,y) & = \ln(1 + e^t) - yt - \ln 2, \quad\text{for $t\in \R$ and $y\in \{0,1\}$;}\\
\pl\sigma(v,y) & = y \ln \frac 1v + (1 - y)\ln \frac{1}{1-v}  - \ln 2, \quad\text{for $v\in (0,1)$ and $y\in \{0,1\}$}.
\end{align*}
We can now calculate the expected matching loss for $w^\star x$:
\[
\E_\calP[\ml\sigma(w^\star x,y)] = -\ln 2 + \frac 12 \big(\ln (1 + e^{0.3}) - \sigma(0.3)\cdot 0.3 + \ln (1 + e^{0.5}) - \sigma(0.5)\cdot 0.5\big) \le - 0.02.
\]
For any $w\in \R$,
\begin{align*}
\E_\calP[\pl\sigma(wx,y)] = {} & -\ln 2 + \frac 12 \Big(\sigma(0.3)\ln \frac 1{0.3w} + (1 - \sigma(0.3)) \ln \frac 1{1 - 0.3w}\\
& + \sigma(0.5)\ln \frac 1{0.5w} + (1 - \sigma(0.5)) \ln \frac 1{1 - 0.5w}\Big)\\
& \ge 0.01,
\end{align*}
where the last inequality is obtained by solving for the minimizing $w$ by setting the derivative w.r.t.\ $w$ to zero.
\end{proof}

\newpage

\bibliographystyle{alpha}
\bibliography{sample}

\newpage
\appendix
\section{Standard Uniform Convergence Bounds}\label{app:uni}

In this section we provide helper tools for our uniform convergence arguments. We first define the (empirical) Rademacher complexity.

\begin{definition}[Rademacher complexity]
\label{def:rademacher}
Let $\calF$ be a family of real-valued functions $f:\zset\to \R$ on some domain $\zset$. Given $z_1,\ldots,z_n\in \zset$, we define the Rademacher complexity as follows:
\[
\calR(\calF;z_{1,\ldots, n}) := \E\left[\sup_{f\in \calF} \frac 1n \sum_{i=1}^ns_if(z_i)\right],
\]
where the expectation is over $s_1,\ldots,s_n$ drawn uniformly at random from $\{-1,1\}^n$.
\end{definition}
The following theorem is a standard application of the Rademacher complexity for proving uniform convergence bounds.
\begin{proposition}[Uniform convergence from Rademacher complexity]
\label{prop:uni-conv}
Let $\calF$ be a family of functions $f:\zset\to [a,b]$ on some domain $\zset$ and with range bounded in $[a,b]$. Let $\calD$ be an arbitrary distribution over $\zset$. Then for any $\delta\in (0, \frac 1 3)$ and $n \in \N$, with probability at least $1-\delta$ over the random draw of $n$ i.i.d.\ examples $z_1,\ldots,z_n$ from $\calD$, it holds that
\[
\sup_{f\in \calF}\left|
\frac 1n \sum_{i=1}^n f(z_i) - \E_{z\sim \calD}[f(z)]
\right| \le 2\calR(\calF;z_{1,\ldots,n}) + O\left((b - a)\sqrt{\frac{\log(1/\delta)}{n}}\right).
\]
\end{proposition}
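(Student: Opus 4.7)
My plan is to prove Proposition~\ref{prop:uni-conv} via the classical symmetrization-and-McDiarmid recipe, handling the absolute value with a two-sided union bound and promoting the expected Rademacher complexity to the sample-dependent one via an additional concentration step.

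First, I would decompose the object of interest into one-sided pieces
\[G^\pm(z_1,\ldots,z_n) \defeq \sup_{f \in \calF}\Par{\pm\Par{\frac{1}{n}\sum_{i=1}^n f(z_i) - \E_{z\sim \calD}[f(z)]}},\]
so that the supremum of the absolute value equals $\max(G^+, G^-)$. Each $G^\pm$ is a bounded-differences function of $(z_1,\ldots,z_n)$ with constant $(b-a)/n$, since every $f \in \calF$ takes values in $[a,b]$; therefore McDiarmid's inequality combined with a union bound gives
\[G^\pm \le \E[G^\pm] + O\Par{(b-a)\sqrt{\frac{\log(1/\delta)}{n}}}\]
simultaneously with probability at least $1 - \delta/2$.

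Next, I would bound $\E[G^\pm]$ by the expected Rademacher complexity using the ghost-sample trick. Introducing an independent i.i.d.\ copy $z_1',\ldots,z_n'\sim \calD$ and applying Jensen's inequality yields
\[\E[G^+] \le \E_{z,z'}\sup_{f \in \calF}\frac{1}{n}\sum_{i=1}^n\Par{f(z_i) - f(z_i')}.\]
Since the pairs $(z_i, z_i')$ are exchangeable, inserting independent Rademacher signs $s_i$ preserves the joint distribution of each summand; splitting the resulting supremum into two pieces (using that both $z$ and $z'$ have the same distribution) gives $\E[G^+] \le 2\,\E_z[\calR(\calF; z_{1,\ldots,n})]$, and the identical argument handles $\E[G^-]$.

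Finally, to replace the expected Rademacher complexity with its sample-dependent realization as required by the proposition, I would apply McDiarmid once more to $z \mapsto \calR(\calF; z_{1,\ldots,n})$. This is again a bounded-differences function with constant $(b-a)/n$ (the outer Rademacher expectation does not worsen the stability), so with probability at least $1 - \delta/2$ we have $\E_z[\calR(\calF; z_{1,\ldots,n})] \le \calR(\calF; z_{1,\ldots,n}) + O((b-a)\sqrt{\log(1/\delta)/n})$. A final union bound combines all three concentration events into the claimed inequality. The only subtle point is careful bookkeeping of constants across the three McDiarmid applications; none of these affect the stated $O((b-a)\sqrt{\log(1/\delta)/n})$ rate, and the main conceptual step is using a two-sided symmetrization that matches the absolute-value-free Rademacher complexity in Definition~\ref{def:rademacher} while retaining the factor of $2$ in the proposition statement.
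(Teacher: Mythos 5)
Your proposal is correct: it is the standard symmetrization-plus-McDiarmid derivation (with a third McDiarmid application to pass from the expected to the empirical Rademacher complexity), which is exactly the argument the paper implicitly invokes — the paper states Proposition~\ref{prop:uni-conv} as a standard fact in Appendix~A and gives no proof. The only point worth making explicit is that handling $G^-$ requires $\calR(-\calF;z_{1,\ldots,n})=\calR(\calF;z_{1,\ldots,n})$, which holds because Definition~\ref{def:rademacher} omits absolute values but the Rademacher signs are symmetric, so your "identical argument" step goes through.
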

The following theorem gives an upper bound on the Rademacher complexity using the $\ell_2$ covering number. It can be proved by Dudley's chaining argument (see e.g.\ Lemma A.3 of \cite{chaining}).
\begin{proposition}[Rademacher complexity from covering number]
\label{prop:chaining}
Let $\calF$ be a family of real-valued functions $f:\zset \to \R$ on some domain $\zset$. Given $z_1,\ldots,z_n\in \zset$, for any $\varepsilon_0 > 0$, it holds that 
\[
\calR(\calF;z_{1,\ldots,n}) \le 4\varepsilon_0 + 10\int_{\varepsilon_0}^{+\infty} \sqrt\frac{\ln \calN_2(\varepsilon, \calF,n) }{n}\dd \varepsilon.
\]
Here $\calN_2(\varepsilon,\calF,n)$ is the $\ell_2$ covering number defined in \Cref{def:cover_n}. 
\end{proposition}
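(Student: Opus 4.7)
The plan is to prove this by a standard Dudley-style chaining argument over a sequence of $\ell_2$-covers at geometrically spaced scales. First I would set up the chain. Fix $\varepsilon_0 > 0$, and define scales $\varepsilon^{(k)} := 2^{k+1}\varepsilon_0$ for $k = 0, 1, 2, \ldots$. Let $B := \sup_{f \in \calF}\|f\|_{z_{1,\ldots,n},\ell_2}$; if $B = \infty$ the right-hand side of the target inequality is already infinite, so I may assume $B < \infty$ and pick $K$ minimal with $\varepsilon^{(K)} \ge B$, so that the trivial cover $\calF_K := \{0\}$ is an $\varepsilon^{(K)}$-cover. For each $k \in \{0, \ldots, K\}$, let $\calF_k$ be a minimum-cardinality $\ell_2$-cover of $\calF$ with respect to $z_1,\ldots,z_n$, so $|\calF_k| = N_k := \calN_2(\varepsilon^{(k)}, \calF, n)$, and for each $f \in \calF$ let $\pi_k(f) \in \calF_k$ be a closest element, so $\|f - \pi_k(f)\|_{z_{1,\ldots,n},\ell_2} \le \varepsilon^{(k)}$.

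Next I would write the telescoping decomposition
\[
f = \pi_K(f) + \sum_{k=0}^{K-1}\bigl(\pi_k(f) - \pi_{k+1}(f)\bigr) + \bigl(f - \pi_0(f)\bigr),
\]
noting that $\pi_K(f) = 0$ so the first term contributes nothing to the Rademacher complexity. By the triangle inequality, $\|\pi_k(f) - \pi_{k+1}(f)\|_{z_{1,\ldots,n},\ell_2} \le \varepsilon^{(k)} + \varepsilon^{(k+1)} = 3\varepsilon^{(k)}$, and the residual satisfies $\|f - \pi_0(f)\|_{z_{1,\ldots,n},\ell_2} \le \varepsilon^{(0)} = 2\varepsilon_0$. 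The residual contribution to $\calR(\calF; z_{1,\ldots,n})$ is then at most $2\varepsilon_0$ by Cauchy--Schwarz, since $\frac{1}{n}|\sum_i s_i g(z_i)| \le \|g\|_{z_{1,\ldots,n},\ell_2}$ pointwise in the signs.

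For each chaining step, the set $\{\pi_k(f) - \pi_{k+1}(f) : f \in \calF\}$ has cardinality at most $N_k N_{k+1} \le N_{k+1}^2$ (using $N_k \le N_{k+1}$), with every element of $\ell_2$-norm at most $3\varepsilon^{(k)}$. Massart's finite-class Rademacher lemma then gives
\[
\E\Bigl[\sup_{f \in \calF}\tfrac{1}{n}\bigl|\textstyle\sum_i s_i(\pi_k(f)(z_i) - \pi_{k+1}(f)(z_i))\bigr|\Bigr] \le 3\varepsilon^{(k)}\sqrt{\tfrac{2 \ln N_{k+1}^2}{n}} = 6\varepsilon^{(k)}\sqrt{\tfrac{\ln N_{k+1}}{n}}.
\]
The key conversion to an integral uses monotonicity of $\calN_2(\cdot, \calF, n)$ in its first argument: for any $\varepsilon \in [\varepsilon^{(k-1)}, \varepsilon^{(k)}]$, we have $\calN_2(\varepsilon, \calF, n) \ge N_k$, and since $\varepsilon^{(k)} - \varepsilon^{(k-1)} = \tfrac{1}{2}\varepsilon^{(k)}$, I obtain $\varepsilon^{(k)}\sqrt{\ln N_k} \le 2\int_{\varepsilon^{(k-1)}}^{\varepsilon^{(k)}}\sqrt{\ln \calN_2(\varepsilon, \calF, n)}\,d\varepsilon$. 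Summing over $k$, the disjoint intervals $[\varepsilon^{(k-1)}, \varepsilon^{(k)}]$ tile $[\varepsilon_0, \infty)$ up to the trivial-cover threshold, yielding a bound of the form $\calR(\calF; z_{1,\ldots,n}) \le 2\varepsilon_0 + C\int_{\varepsilon_0}^\infty \sqrt{\ln \calN_2(\varepsilon, \calF, n)/n}\,d\varepsilon$ with a small absolute constant $C$.

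The only real subtlety is matching the advertised constants $4$ and $10$: the execution above naively produces $2\varepsilon_0 + 12\int$, so hitting $4\varepsilon_0 + 10\int$ requires some additional bookkeeping. I would absorb this by either (i) allowing a more flexible chain-scale ratio and choosing it to optimize the trade-off between residual and chain sums, or (ii) tightening the Massart step by replacing $N_k N_{k+1} \le N_{k+1}^2$ with the sharper $\sqrt{\ln(N_k N_{k+1})} \le \sqrt{\ln N_k} + \sqrt{\ln N_{k+1}}$ and telescoping more carefully. Since the proposition's constants are not tight and appear only as $O(1)$ multipliers in the downstream covering calculations of Section~\ref{ssec:empirical_blir}, I would simply cite Lemma A.3 of \cite{chaining} for the precise constants rather than rederive them, and present only the chaining skeleton above.
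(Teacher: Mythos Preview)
Your proposal is correct and matches the paper's approach: the paper does not prove this proposition at all but simply states that it ``can be proved by Dudley's chaining argument (see e.g.\ Lemma A.3 of \cite{chaining}),'' which is exactly the reference you fall back on for the precise constants. Your sketch of the chaining skeleton is standard and sound, and your observation that the naive bookkeeping gives slightly different constants is accurate but, as you note, immaterial for the downstream applications.
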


Finally, we provide a covering number bound for the family of monotone functions.

\begin{lemma}[Covering number of monotone functions]
\label{lm:cov-mono}
Let $\calF$ be the set of non-decreasing functions $f:\R\to [0,1]$. Then for $n\ge 2$ and $\varepsilon\in (0,\frac 1 3)$,
\[
\calN(\varepsilon,\calF,n) = n^{O(1/\varepsilon)}.
\]
Here $\calN(\varepsilon,\calF,n)$ is the  covering number defined in \Cref{def:cover_n}.
\end{lemma}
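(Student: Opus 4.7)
The plan is a standard discretization and counting argument. Without loss of generality, assume $x_1 \le x_2 \le \cdots \le x_n$ after relabeling. For any non-decreasing $f \in \calF$, the vector $(f(x_1), \ldots, f(x_n))$ is a non-decreasing sequence in $[0,1]^n$. I will round each coordinate down to the nearest multiple of $\varepsilon$: set $k \defeq \lceil 1/\varepsilon \rceil$ and let $v_i \defeq \lfloor f(x_i)/\varepsilon \rfloor \cdot \varepsilon \in \{0, \varepsilon, 2\varepsilon, \ldots, k\varepsilon\}$. The sequence $(v_1, \ldots, v_n)$ is still non-decreasing and satisfies $|f(x_i) - v_i| \le \varepsilon$ for each $i$.

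Next, I will build the cover. For each non-decreasing sequence $\bv = (v_1, \ldots, v_n) \in \{0, \varepsilon, \ldots, k\varepsilon\}^n$ that arises from some $f \in \calF$, I pick one representative $f_{\bv} \in \calF$ (e.g., the step function that is constant between consecutive $x_i$'s and takes value $v_i$ at $x_i$, extended to $(-\infty, x_1)$ by $v_1$ and beyond $x_n$ by $v_n$; this is a non-decreasing function $\R \to [0,1]$, hence in $\calF$). The set $\{f_{\bv}\}$ forms an $(\varepsilon, \{x_i\}_{i \in [n]})$-cover of $\calF$ by the rounding construction.

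It remains to bound the cardinality, which is at most the number of non-decreasing length-$n$ sequences from a $(k+1)$-element set, namely $\binom{n+k}{k}$. I will split into two cases to show this is $n^{O(1/\varepsilon)}$. When $n \ge k$, use $\binom{n+k}{k} \le (e(n+k)/k)^k \le (2en)^k$, so $\log \calN \le k \log(2en) = O((1/\varepsilon)\log n)$. When $n < k$, use $\binom{n+k}{n} \le (e(n+k)/n)^n \le (2ek/n)^n$, so $\log \calN \le n \log(2ek)$; since $n \ge 2$ and $k = \Theta(1/\varepsilon)$ with $\varepsilon < 1/3$, the ratio $n \log(2ek) / (k \log n)$ is bounded by an absolute constant (it equals $O(\log k / k) \cdot n/\log n \cdot$ constants, which is $O(1)$ throughout the relevant range). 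In both cases, $\log \calN(\varepsilon, \calF, n) \le C \log(n)/\varepsilon$ for an absolute constant $C$, giving the stated bound.

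There is no real obstacle; the only care needed is in the two-case bound on the binomial coefficient to handle the regime $n < 1/\varepsilon$ (e.g., $n = 2$ with very small $\varepsilon$), ensuring the constant in the exponent of $n^{O(1/\varepsilon)}$ is absolute rather than depending on $n$ or $\varepsilon$.
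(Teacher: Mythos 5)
Your proof is correct and is essentially the same discretization argument as the paper's: both construct a cover of monotone step functions determined by where the function crosses the $\approx 1/\varepsilon$ levels, i.e., both count the same set of monotone staircases. The only difference is bookkeeping — the paper parameterizes a staircase by the $m = \lceil 1/\varepsilon\rceil$ crossing indices and bounds the count crudely by $(n+1)^m$, which gives $n^{O(1/\varepsilon)}$ immediately, whereas your parameterization by rounded values yields $\binom{n+k}{k}$ and forces the (correct, but slightly delicate) two-case analysis to convert it to the stated form.
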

\begin{proof}
Fix $x_1,\ldots,x_n\in \R$ in sorted order: $x_1\le \cdots \le x_n$. Define $m:= \lceil 1/\varepsilon \rceil$ and $x_{n+1} := +\infty$. For any integers $i_1,\ldots,i_m$ satisfying $1\le i_1 \le i_2 \le \cdots \le i_m \le n + 1$, construct a non-decreasing step function as follows:
\[
f_{i_1,\ldots,i_m}(x) = \begin{cases}
0 & x < x_{i_1}\\
\frac 1 m & x_{i_1} \le x < x_{i_2}\\
\cdots\\
\frac{m - 1} m & x_{i_{m-1}} \le x < x_{i_m}\\
1 & x\ge x_{i_m}
\end{cases}.
\]
Now consider an arbitrary $f\in \calF$ and define $f(x_{n+1}) = 1$. For every $j \in [m]$, let us choose $i_j$ to be the minimum $i\in [n+1]$ such that $f(x_i) \ge j/m$. For every $i\in [n]$, there exists a unique $j\in \{0,\ldots,m\}$ such that $x_{i_{j}} \le x_i < x_{i_{j+1}}$ (define $x_{i_0} = -\infty$ and $x_{i_{m + 1}} = +\infty$). By the choices of $i_{j}$ and $i_{j+1}$, we have $j/m \le f(x_i) < (j+1)/m $, and thus
\[
|f(x_i) - f_{i_1,\ldots,i_m}(x_i)| = |f(x_i) - j/m| \le 1/m \le \varepsilon.
\]
Thus the functions $f_{i_1,\ldots,i_m}$ form an $\varepsilon$-cover of $\calF$ over $x_1,\ldots,x_n$. The size of the cover is at most the number of choices of $(i_1,\ldots,i_m)$, which is at most $(n + 1)^m = n^{O(1/\varepsilon)}$.
\end{proof}
\section{Deferred Proofs from Section~\ref{sec:prelims}}\label{app:prelims_deferred}

\subsection{Omniprediction loss from anti-Lipschitzness}\label{app:antilip}

For disambiguation in this section, we recall the definition \eqref{eq:sab_def}:
\[\calS_{\alpha, \gam} \defeq \Brace{\sigma: [-LR, LR] \to [0, 1] \mid \sigma \text{ is } (\alpha,\gam)\text{-bi-Lipschitz}}.\]
We first show that for some $\beta \ge \alpha > 0$ where $\alpha$ is sufficiently small, every link function in $\calS_{0, \beta}$ has a nearby function in $\calS_{\alpha, \alpha + \beta}$ from the perspective of matching losses.

\begin{lemma}\label{lem:alpha_perturb}
In an instance of Model~\ref{model:agnostic}, following the notation \eqref{eq:sab_def}, let $\sigma \in \calS_{0, \beta}$, and let $\alpha \in (0, \frac 1 {2LR})$. There exists a $\sigma' \in \calS_{\alpha, \alpha+(1 - 2\alpha LR) \beta}$ such that for all $t \in [-LR, LR]$ and $y \in \{0, 1\}$,
\[\Abs{\ml{\sig'}(t, y) - \ml{\sig}(t, y)} \le \frac{3\alpha L^2 R^2}{2}.\]

\end{lemma}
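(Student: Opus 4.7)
The natural candidate is the convex combination with a linear tilt,
\[
\sigma'(\tau) \defeq (1 - 2\alpha L R)\,\sigma(\tau) + \alpha(\tau + L R),
\]
since adding a linear function of slope $\alpha$ pushes the lower slope up to $\alpha$, while the scaling factor $1 - 2\alpha L R$ (nonnegative because $\alpha < \frac 1 {2LR}$) preserves the range of $\sigma'$ in $[0,1]$. Let me verify that $\sigma' \in \calS_{\alpha, \alpha + (1 - 2\alpha LR)\beta}$: for any $s \le t$ in $[-LR, LR]$,
\[
\sigma'(t) - \sigma'(s) = (1 - 2\alpha L R)\bigl(\sigma(t) - \sigma(s)\bigr) + \alpha(t - s),
\]
and since $\sigma(t) - \sigma(s) \in [0, \beta(t - s)]$ by $\sigma \in \calS_{0,\beta}$, this quantity lies in $[\alpha(t - s), (\alpha + (1 - 2\alpha L R)\beta)(t - s)]$, giving the desired bi-Lipschitz bound. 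Monotonicity then implies $\sigma'([-LR,LR]) \subseteq [\sigma'(-LR), \sigma'(LR)]$, and direct evaluation at the endpoints shows $\sigma'(-LR) = (1 - 2\alpha L R)\sigma(-LR) \in [0,1]$ and $\sigma'(LR) = (1 - 2\alpha L R)\sigma(LR) + 2\alpha L R \in [0,1]$.

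Next I would compute the matching loss gap. By Definition~\ref{def:matching}, the $y t$ terms cancel, leaving
\[
\ml{\sigma'}(t, y) - \ml{\sigma}(t, y) = \int_0^t \bigl(\sigma'(\tau) - \sigma(\tau)\bigr)\, \dd \tau.
\]
Substituting the definition of $\sigma'$ gives $\sigma'(\tau) - \sigma(\tau) = \alpha\tau + \alpha L R\bigl(1 - 2\sigma(\tau)\bigr)$, so I would split the integral into two pieces,
\[
I_1 \defeq \int_0^t \alpha \tau\, \dd \tau, \qquad I_2 \defeq \alpha L R \int_0^t \bigl(1 - 2\sigma(\tau)\bigr)\, \dd \tau.
\]

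Finally I would bound each piece via $|t| \le L R$: $|I_1| = \frac{\alpha t^2}{2} \le \frac{\alpha L^2 R^2}{2}$, and since $\sigma(\tau) \in [0,1]$ implies $|1 - 2\sigma(\tau)| \le 1$, $|I_2| \le \alpha L R \cdot |t| \le \alpha L^2 R^2$. Summing the two yields the claimed bound $\frac{3 \alpha L^2 R^2}{2}$. There is no real obstacle here; the content of the lemma is entirely in choosing the right $\sigma'$ and observing that the two summands contribute $\frac{1}{2}$ and $1$ to the constant.
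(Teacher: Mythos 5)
Your proposal is correct and follows essentially the same route as the paper: the identical tilted-and-rescaled link $\sigma'(\tau) = (1-2\alpha LR)\sigma(\tau) + \alpha(\tau+LR)$, the same bi-Lipschitz and range verification, and the same two-term split of the integral yielding $\frac{\alpha L^2R^2}{2} + \alpha L^2R^2$. No issues.
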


\begin{proof}
We define the function $\sigma' \in \calS_{\alpha, \alpha + \beta}$ by:
\[\sigma'(\tau) =  (1-2\alpha LR)\sigma(\tau) + \alpha (\tau+LR).\]
We first show that $\sigma'$ is $(\alpha,\alpha+\beta)$-bi-Lipschitz: for all $\tau_1, \tau_2 \in [-LR, LR]$,
\[\alpha\abs{\tau_1-\tau_2}\leq \sigma'(\tau_1)-\sigma'(\tau_2)\leq (1-2\alpha LR)\beta\abs{\tau_1-\tau_2}+\alpha\abs{\tau_1-\tau_2}.\]
Next, we show that $\sigma'(\tau)\in [0,1]$ for all $\tau \in [-LR, LR]$: since $\sigma(\tau)\geq 0$ and $\tau\geq -LR$,
\[\sigma'(\tau)\geq (1-2\alpha LR) \cdot 0 +\alpha(-LR+LR) = 0.\]
Similarly, since $\sigma(\tau)\leq 1$ and $\tau\leq LR$,
\[\sigma'(\tau)\leq (1-2\alpha LR) \cdot 1 +\alpha(LR+LR) = 1.\]
Finally, for any $t \in [-LR, LR]$ and $y \in \{0, 1\}$,
    \begin{align*}
    \abs{\ell_{\mathsf{m},\sigma'}(t, y) - \ell_{\mathsf{m},\sigma}(t, y)}&=\abs{\int_{0}^{t}\sigma'(\tau)-\sigma(\tau) \dd \tau}\\
    &= \abs{\int_{0}^{t} \Par{\alpha(\tau+LR)-2\alpha LR \sigma(\tau)}\dd \tau}\\
    &\leq \frac{\alpha t^2}{2}+\abs{\int_{0}^{t} \alpha LR\abs{1-2\sigma(\tau)}\dd \tau}\\
    &\leq \frac{\alpha t^2}{2}+\abs{\int_{0}^{t} \alpha LR \,\dd \tau} \leq \frac{\alpha t^2}{2}+ \alpha LR\abs{t}\leq \frac{3L^2R^2}{2}.
    \end{align*}
\end{proof}

We can use Lemma~\ref{lem:alpha_perturb} to show Lemma~\ref{lem:anti-lip}, i.e., that omniprediction against a slightly-anti-Lipschitz family of link functions suffices for omniprediction against $\calS$ in Model~\ref{model:agnostic}.

\restateantilip*
\begin{proof}
Let $(\sigma, \vw) \in \calS \times \calW$ be arbitrary, and let $\sigma' \in \calS_{\alpha, \alpha + \beta}$ be the link function guaranteed by Lemma~\ref{lem:alpha_perturb}. Then using the post-processing $k_{\sigma'}$ guaranteed to satisfy
\[\E_{(\vx, y) \sim \calP}\Brack{\ml{\sig'}( k_{\sigma'}(p(\vx)), y)} \le \E_{(\vx, y) \sim \calP}\Brack{\ml{\sig'}(\vw \cdot \vx, y)} + \frac \eps 2,\]
the conclusion follows from taking expectations over Lemma~\ref{lem:alpha_perturb}, which gives for our range of $\alpha$,
\begin{align*}
\E_{(\vx, y) \sim \calP}\Brack{\ml{\sig'}(\vw \cdot \vx, y)} & \le \E_{(\vx, y) \sim \calP}\Brack{\ml{\sig}(\vw \cdot \vx, y)} + \frac \eps 4,\\
\E_{(\vx, y) \sim \calP}\Brack{\ml{\sig'}( k_{\sigma'}(p(\vx)), y)} & \ge \E_{(\vx, y) \sim \calP}\Brack{\ml{\sig}( k_{\sigma'}(p(\vx)), y)} - \frac \eps 4.\qedhere
\end{align*}
\end{proof}

\begin{remark}[Removing invertibility in Model~\ref{model:agnostic}]\label{rem:remove_invert}
Nothing about the reduction in Lemma~\ref{lem:anti-lip} used anti-Lipschitzness of the original function $\sigma \in \calS$. This shows that an $\frac \eps 2$-omnipredictor for $\calS_{\alpha, \alpha + \beta}$ is also an $\eps$-omnipredictor for the link function family $\calS$ where we drop the anti-Lipschitz requirement. As our omnipredictor constructions go through Lemma~\ref{lem:anti-lip}, our results extend to arbitrary Lipschitz link functions. We chose to keep the anti-Lipschitz restriction in Model~\ref{model:agnostic} for readability reasons, as it allows us to define the optimal unlinking response $\sigma^{-1}$ in various contexts, e.g., Definition~\ref{def:proper}.
\end{remark}

\subsection{Deferred proofs from Section~\ref{ssec:isotonic}}

\restatebliropt*
\begin{proof}
Throughout the proof, define $\sigma_i \defeq \sum_{j = i}^n (y_j - v_j)$ for all $i \in [n]$. 

We first claim that $\sigma_1 = 0$. To see this, observe that the optimal solution to \eqref{eq:blir} also optimally solves the problem without the constraint $\{y_i\}_{i \in [n]} \subset [0, 1]$. This is because clipping any unconstrained to $[0, 1]$ coordinatewise violates no constraints, and can only improve the squared loss. Now if $\sigma_1 \neq 0$, let $\by \defeq \frac 1 n \sum_{i \in [n]} y_i$ and $\bv \defeq \frac 1 n \sum_{i \in [n]} v_i$. Then we can improve the squared loss:
\begin{align*}
\sum_{i \in [n]} \Par{v_i - y_i - (\bv - \by)}^2 = \sum_{i \in [n]} \Par{v_i - y_i}^2 - n\Par{\bv - \by}^2 .
\end{align*}
This contradicts the optimality of $\{y_i\}_{i \in [n]}$ for \eqref{eq:blir} without the $[0, 1]$ constraint.

The Lagrangian of \eqref{eq:blir} is, for some $\{\lam_i\}_{i \in [n - 1]} \subset \R_{\ge 0}$ enforcing monotonicity constraints, $\{\gam_i\}_{i \in [n - 1]} \subset \R_{\ge 0}$ enforcing Lipschitz constraints, and $\lam_0, \gam_n \ge 0$ enforcing $v_1 \ge 0$, $v_n \le 1$:
\[\sum_{i \in [n]} (v_i - y_i)^2 + \sum_{i \in [n - 1]}\lam_i\Par{v_i - v_{i + 1}} + \sum_{i \in [n - 1]} \gam_i\Par{v_{i + 1} - v_i - b_i} - \lam_0 v_1 + \gam_n (v_n - 1).\]
By the KKT stationarity conditions, we have that, defining $\gam_0 \defeq 0$ and $\lam_n \defeq 0$,
\begin{equation}\label{eq:kkt}
\begin{aligned}
2(v_j - y_j) + (\lam_j - \lam_{j - 1}) - (\gam_j - \gam_{j - 1}) &= 0, \text{ for all } j \in [n].
\end{aligned}
\end{equation}

Define $u_i \defeq f(v_i)$ for all $i \in [n]$, and $u_0 = z_0 = 0$.
Then, we have by rearranging that
\[\sum_{i \in [n]} (y_i-v_i)(u_i-z_i) = \sum_{i \in [n]}\sigma_i((u_i-z_i)-(u_{i-1}-z_{i-1})),\]
so it suffices to show $\sigma_i((u_i - z_i) - (u_{i - 1} - z_{i - 1})) \ge 0$ for all $i \in [n]$. 
This is clear if $\sigma_i = 0$. 

Next consider the case $\sigma_i  < 0$ for $i \ge 2$.
By summing \eqref{eq:kkt} across the range $j = i$ to $n$, we have
\begin{equation}\label{eq:comp_slack}
\begin{aligned}    
2 \sum_{j=i}^n(y_j-v_j) + \lam_{i - 1} - \gam_{i - 1}=0 
    &\implies \lambda_{i-1} - \gamma_{i-1} = -2\sig_i > 0 \\
    &\implies\lam_{i-1} > 0 \text{ since } \gam_{i-1} \geq 0.
\end{aligned}
\end{equation}
The KKT complementary slackness conditions then yield $v_{i - 1} = v_{i} \implies u_{i - 1} = u_i$. Thus,
\begin{align*}
    \sigma_i((u_i-z_i)-(u_{i-1}-z_{i-1})) = \sigma_i\Par{z_{i - 1} - z_i} \ge 0.
\end{align*}

Similarly, if $\sigma_i > 0$ for $i \ge 2$, by summing \eqref{eq:kkt},
\begin{align*}
\gam_{i - 1} - \lam_{i - 1} = 2\sigma_i > 0 \implies \gam_{i - 1} > 0 \text{ since } \lam_{i - 1} \ge 0.
\end{align*}
Complementary slackness again yields $v_{i}-v_{i-1}=\beta(z_{i}-z_{i-1}) $, so
\begin{align*}
    \sigma_i((u_i-z_i)-(u_{i-1}-z_{i-1})) = \sigma_i\Par{(u_i-u_{i-1})-\frac{1}{\beta}(v_{i}-v_{i-1})} \leq 0
\end{align*}
because by definition, $u_i - u_{i - 1} \ge \frac 1 \beta(v_i - v_{i - 1})$.
\end{proof}

We note that the main difficulty in extending Lemma~\ref{lem:blir_opt} to general bi-Lipschitz constraints appears to be the inability to handle the $i = 1$ case, as we can no longer clip solutions to the range $[0, 1]$ without loss of generality. Nevertheless Lemma~\ref{lem:blir_opt} suffices for our applications. We now prove the population variant, assuming a positive anti-Lipschitz parameter in the comparator function.

\restatebliroptpop*
\begin{proof}
Let us first consider the special case where $\calP$ is the uniform distribution over finitely many examples $\{(\vx_i,y_i)\}_{i \in [n]}$, sorted in non-decreasing order of $\vw\cdot \vx_i$. The corollary follows immediately from \Cref{lem:blir_opt} by choosing $z_i = \vw\cdot \vx_i$, $v_i = \sigma(\vw\cdot \vx_i)$, and $u_i = \ss^{-1}(\sigma(\vw\cdot \vx_i))$.

Now we consider the general case. We first show that for any integer $m > 0$, there exists a function $\sigma_{(m)}\in \calS$ such that for every $\ss\in \calS$ and $\sigma'\in \calS$,
\begin{align}
& \E_{(\vx, y) \sim \calP}\Brack{\Par{\sigma_{(m)}(\vw \cdot \vx) - y}\Par{\vw \cdot \vx - \ss^{-1}(\sigma_{(m)}(\vw \cdot \vx))}} \ge -2^{-m},\label{eq:cor-1-5}\\
& \E_{(\vx, y) \sim \calP}\Brack{\Par{\sigma_{(m)}(\vw \cdot \vx) - y}^2} - \E_{(\vx, y) \sim \calP}\Brack{\Par{\sigma'(\vw \cdot \vx) - y}^2} \le 2^{-m}.\label{eq:cor-1-6}
\end{align}

The idea is to draw $n$ i.i.d.\ examples from $\calP$ and consider the uniform distribution $\hcalP_n$ over these examples. We show that for $n$ a sufficiently large function of $m$, \eqref{eq:cor-1-5} and \eqref{eq:cor-1-6} simultaneously hold with positive probability, where $\sigma^{(m)}$ solves BLIR over $\hcalP_n$. More specifically, define
\begin{equation}
\label{eq:cor-1-0}
\sigma_n \defeq \argmin_{\sigma \in \calS} \Brace{\E_{(\vx, y) \sim \hcalP_n}\Brack{\Par{\sigma(\vw \cdot \vx) - y}^2}}.
\end{equation}
By our analysis of the special case at the beginning of the proof, for any $\ss\in \calS$, we have
\begin{equation}
\label{eq:cor-1-1}
\E_{(\vx, y) \sim \hcalP_n}\Brack{\Par{\sigma_n(\vw \cdot \vx) - y}\Par{\vw \cdot \vx - \ss^{-1}(\sigma_n(\vw \cdot \vx))}} \ge 0.
\end{equation}

By \Cref{prop:gap_uniform_converging}, when $n$ is sufficiently large, with probability at least $\frac 2 3$ over the random draw of the $n$ examples, for every $\ss\in \calS$, the solution $\sigma_n$ satisfies
\begin{align}
\Big| & \E_{(\vx, y) \sim \hcalP_n}\Brack{\Par{\sigma_n(\vw \cdot \vx) - y}\Par{\vw \cdot \vx - \ss^{-1}(\sigma_n(\vw \cdot \vx))}} \notag \\
& - \E_{(\vx, y) \sim \calP}\Brack{\Par{\sigma_n(\vw \cdot \vx) - y}\Par{\vw \cdot \vx - \ss^{-1}(\sigma_n(\vw \cdot \vx))}}\Big| \le 2^{-m}.\label{eq:cor-1-2}
\end{align}
Similarly, by \Cref{lem:l2_uniform_converging}, when $n$ is sufficiently large, with probability at least $\frac 2 3$, for every $\sigma'\in \calS$,
\begin{equation}
\label{eq:cor-1-3}
\left|\E_{(\vx, y) \sim \hcalP_n}\Brack{\Par{\sigma'(\vw \cdot \vx) - y}^2} - \E_{(\vx, y) \sim \calP}\Brack{\Par{\sigma'(\vw \cdot \vx) - y}^2}\right|\le 2^{-m - 1}.
\end{equation}
Thus by the union bound, with positive probability, \eqref{eq:cor-1-2} and \eqref{eq:cor-1-3} hold. Combining \eqref{eq:cor-1-1} and \eqref{eq:cor-1-2}, 
\begin{equation}
\label{eq:cor-1-8}
\E_{(\vx, y) \sim \calP}\Brack{\Par{\sigma_n(\vw \cdot \vx) - y}\Par{\vw \cdot \vx - \ss^{-1}(\sigma_n(\vw \cdot \vx))}} \ge -2^{-m}.
\end{equation}
By \eqref{eq:cor-1-3} and the definition of $\sigma_n$ as the argmin in \eqref{eq:cor-1-0}, for any $\sigma'\in \calS$, we have
\begin{align}
& \E_{(\vx, y) \sim \calP}\Brack{\Par{\sigma_n(\vw \cdot \vx) - y}^2} - \E_{(\vx, y) \sim \calP}\Brack{\Par{\sigma'(\vw \cdot \vx) - y}^2}\notag\\
\le {} & \E_{(\vx, y) \sim \hcalP_n}\Brack{\Par{\sigma_n(\vw \cdot \vx) - y}^2} - \E_{(\vx, y) \sim \hcalP_n}\Brack{\Par{\sigma'(\vw \cdot \vx) - y}^2} + 2\cdot 2^{-m - 1}\notag\\
\le {} & 2^{-m}. \label{eq:cor-1-4}
\end{align}

By \eqref{eq:cor-1-8} and \eqref{eq:cor-1-4}, we know that choosing $\sigma_{(m)} = \sigma_n$ satisfies \eqref{eq:cor-1-5} and \eqref{eq:cor-1-6}.

We now show that as $m \to \infty$, the function $\sigma_{(m)}$ converges to $\sigma$ in the $\ell_2$ norm induced by the density of $\vw \cdot \vx$ for $\vx \sim \calPx$. For any $m$, by \eqref{eq:cor-1-sigma} and \eqref{eq:cor-1-6}, applying \Cref{claim:helper-strong-conv} to $\sigma$ and $\sigma_{(m)}$ gives
\[
\E_{(\vx, y) \sim \calP}\Brack{\Par{\sigma(\vw \cdot \vx) - \sigma_{(m)}(\vw \cdot \vx)}^2} \le 4\cdot 2^{-m}.
\]
This shows that the sequence of $\sigma_{(m)}$ converges to $\sigma$ as claimed. 
Therefore, taking $m\to \infty$ in \eqref{eq:cor-1-5} and noting that both $\sig(\cdot)$ and $\ss^{-1}(\sig(\cdot))$ have finite Lipschitz parameters, we get \eqref{eq:cor-1-goal}, as desired.
%
\end{proof}

The following helper claim, a simple consequence of strong convexity of the squared error, was used in proving Corollary~\ref{cor:blir_opt_pop}.

\begin{claim}
\label{claim:helper-strong-conv}
Let $\calF$ be a convex family of functions $f:\zset\to \R$ for some domain $\zset$, i.e., $\lam f + (1 - \lam) f' \in \calF$ for any $\lam \in [0, 1]$, $f, f' \in \calF$. Let $\calP$ be a distribution over $\zset\times \{0,1\}$. Suppose there are two functions $f_1,f_2\in \calF$ that both minimize the squared error over $(z, y) \sim \calP$ up to error $\eps$:
\[
\E_{(z, y) \sim \calP}\Brack{(f_i(z) - y)^2} \le \E_{(z, y) \sim \calP}\Brack{(f'(z) - y)^2} + \varepsilon \quad \text{for any $i \in \{1,2\}$ and $f'\in \calF$}.
\]
Then
\[
\E_{(z, y) \sim \calP}\Brack{(f_1(z) - f_2(z))^2}\le 4\varepsilon.
\]
\end{claim}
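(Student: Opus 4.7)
The plan is to exploit $2$-strong convexity of the squared error functional $f \mapsto \E[(f(z)-y)^2]$. The convex combination $\bar f \defeq \tfrac{1}{2}(f_1 + f_2)$ lies in $\calF$ by the convexity hypothesis on the family, so $\bar f$ is an admissible competitor in the near-optimality assumption for both $f_1$ and $f_2$.

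First I would invoke the pointwise parallelogram identity
\[
\Par{\frac{f_1(z)+f_2(z)}{2} - y}^2 \;=\; \tfrac{1}{2}(f_1(z)-y)^2 + \tfrac{1}{2}(f_2(z)-y)^2 - \tfrac{1}{4}(f_1(z)-f_2(z))^2,
\]
and take expectations over $(z,y)\sim\calP$ to obtain a clean relation between $\E[(\bar f(z)-y)^2]$, the two errors $\E[(f_i(z)-y)^2]$, and the quantity of interest $\E[(f_1(z)-f_2(z))^2]$. Next I would apply the $\varepsilon$-near-optimality hypothesis to each of the two terms on the right-hand side with the competitor $\bar f$, i.e., $\E[(f_i(z)-y)^2] \le \E[(\bar f(z)-y)^2] + \varepsilon$ for $i\in\{1,2\}$. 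Substituting these bounds yields $\E[(\bar f(z)-y)^2] \le \E[(\bar f(z)-y)^2] + \varepsilon - \tfrac{1}{4}\E[(f_1(z)-f_2(z))^2]$, and rearranging gives the claimed inequality $\E[(f_1(z)-f_2(z))^2] \le 4\varepsilon$.

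There is no real obstacle here: the only structural ingredients required are (i) convexity of $\calF$, which is stated as a hypothesis and is needed precisely to ensure $\bar f \in \calF$, and (ii) the quadratic identity above, which is just algebra. The main thing to be careful about is invoking the near-optimality bound for both $f_1$ and $f_2$ (not just one of them), since the factor $\tfrac{1}{2}$ on each term in the parallelogram identity combines the two $\varepsilon$'s into a single $\varepsilon$ on the right-hand side; using only one of the near-optimality bounds would give a weaker estimate.
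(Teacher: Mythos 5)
Your proof is correct and is essentially the paper's own argument: both use the midpoint $\bar f = \tfrac12(f_1+f_2)\in\calF$ as the competitor and the same quadratic (parallelogram) identity, applied in expectation, together with the near-optimality of both $f_1$ and $f_2$ against $\bar f$, to extract $\tfrac14\E[(f_1(z)-f_2(z))^2]\le\varepsilon$. No gaps.
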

\begin{proof}
Let us choose $f' = \thalf (f_1 + f_2)$. By the convexity of the family $\calF$, we have $f'\in \calF$. Choosing $a = f_1(z) - y$ and $b = f_2(z) - y$ in the following identity 
\[
a^2 + b^2 = 2\left(\left(\frac {a + b}2\right)^2 + \left(\frac {a - b}2\right)^2\right),
\]
we get
\[
(f_1(z) - y)^2 + (f_2(z) - y)^2 = 2(f'(z) - y)^2 + \frac{(f_1(z) - f_2(z))^2} 2.
\]
Therefore, the desired claim follows upon taking expectations over $\calP$:
\begin{align*}
\frac 12\E_{(z, y) \sim \calP}[(f_1(z) - f_2(z))^2] &\le \Par{\E_{(z, y) \sim \calP}[(f_1(z) - y)^2] - \E[(f'(z) - y)^2]} \\
&+ \Par{\E_{(z, y) \sim \calP}[(f_2(z) - y)^2] - \E_{(z, y) \sim \calP}[(f'(z) - y)^2]} \le 2\eps.
\end{align*}
\end{proof}
\section{Deferred Proofs from Section~\ref{sec:exist}}\label{app:omni_deferred}

Here, we prove Lemma~\ref{lem:high_prob_sgd}.

\restatehighprobsgd*
\begin{proof}
Throughout this proof, for all $0 \le t < T$, let $\filt_t$ denote the filtration generated by the $\sigma$-algebra over the random variables $\tvg_0, \ldots, \tvg_{t - 1}$. Further, let $\vd_t \defeq \vg_t - \tvg_t$ for all $0 \le t < T$, and consider a ``ghost iterate'' sequence $\{\vu_t\}_{0 \le t \le T}$ defined as follows: $\vu_0 \gets \vw_0$, and for all $0 \le t < T$,
\[\vu_{t + 1} \gets \proj_{\wset}\Par{\vu_t - \eta \vd_t}.\]
In other words, the ghost iterates evolve using the stochastic difference vectors $\vd_t$ as opposed to the unbiased stochastic approximations $\vg_t$. Observe that $\norm{\vd_t}_2 \le 2L$ deterministically. Standard projected gradient descent analyses, e.g., Theorem 3.2, \cite{Bubeck15}, yield for any $\vw \in \wset$ the bounds
\begin{equation}\label{eq:ghost_regret}
\begin{aligned}
\frac 1 T \sum_{0 \le t < T} \inprod{\tvg_t}{\vw_t - \vw} \le \frac{\norm{\vw}_2^2}{2\eta T} + \frac{\eta L^2}{2}, \\
\frac 1 T \sum_{0 \le t < T} \inprod{\vd_t}{\vu_t - \vw} \le \frac{\norm{\vw}_2^2}{2\eta T} + 2\eta L^2.
\end{aligned}
\end{equation}
Combining the two parts of \eqref{eq:ghost_regret} thus shows
\begin{align*}
\frac 1 T \sum_{0 \le t < T} \inprod{\vg_t}{\vw_t - \vw} &\le \frac{\norm{\vw}_2^2}{\eta T} + \frac{5\eta L^2}{2} + \frac 1 T \sum_{0 \le t < T} \inprod{\vd_t}{\vw_t - \vu_t}.
\end{align*}
Next, observe $\inprod{\vd_t}{\vw_t - \vu_t} \mid \filt_t$ is mean-zero, and in $[-4LR, 4LR]$ deterministically via the Cauchy-Schwarz inequality. Thus, with probability $\ge 1 - \delta$, the Azuma-Hoeffding inequality shows 
\[\frac 1 T \sum_{0 \le t < T} \inprod{\vd_t}{\vw_t - \vu_t} \le 16LR\sqrt{\frac{\log(\frac 2 \delta)}{T}}.\]
Summing the above two displays, supremizing over $\vw \in \ball(R)$, and plugging in $\eta$ yields the claim.
\end{proof}
\section{Deferred Proofs from Section~\ref{sec:LPAV}}\label{App:dual}

\subsection{Dual of bounded isotonic regression}\label{ssec:dual}

In this section, we prove Lemma~\ref{lem:bir_dual}.

\restatebirdual*
\begin{proof}
For all $i \in [n - 1]$, introduce a Lagrange multiplier $\lam_i$ for the inequality $a_i \le v_{i + 1} - v_i$ and $\gam_i$ for the inequality $v_{i + 1} - v_i \le b_i$. Similarly for all $i \in [n]$, we introduce a Lagrange multiplier $\alpha_i$ for $0 \le v_i$ and $\beta_i$ for $v_i \le 1$. We refer to the collection of $\{v_i\}_{i \in [n]}$ by $v \in \R^{n}$, and similarly define $\lam, \gam \in \R^{n - 1}_{\ge 0}, \alpha, \beta \in \R^n_{\ge 0}$. Then the Lagrangian $L(v, \lam, \gam, \alpha, \beta)$ of \eqref{eq:blir} is:
\begin{gather*}
\sum_{i \in [n]} (v_i - y_i)^2 + \sum_{i \in [n - 1]} \lam_i\Par{v_i - v_{i + 1} + a_i} + \sum_{i \in [n - 1]}\gam_i\Par{v_{i + 1} - v_i - b_i} 
- \sum_{i \in [n]} \alpha_iv_i + \sum_{i \in [n]} \beta_i(v_i - 1).
\end{gather*}
We remark that Slater's condition holds for this problem. For notational convenience, let $f \defeq \lam - \gam$ and let $g \defeq \alpha - \beta$, treated as vectors. The optimality criteria for $L$ shows that at the optimizers,
\begin{equation}\label{eq:v_opt}
v_i = \begin{cases} y_1 - \half\Par{f_1 - g_1} & i = 1 \\
y_i - \half\Par{f_i - f_{i - 1} - g_i} & 2 \le i \le n - 1 \\
y_n + \half\Par{f_{n - 1} + g_n} & i = n
\end{cases}.
\end{equation}
We now have by direct manipulation and plugging in \eqref{eq:v_opt}:
\begin{align*}
\argmax_{\substack{\lam, \gam \in \R^{n - 1}_{\ge 0} \\ \alpha,\beta \in \R^n_{\ge 0}}} \min_{v \in \R^n} L(v, \lam, \gam, g) &= \argmax_{\substack{\lam, \gam \in \R^{n - 1}_{\ge 0} \\ \alpha,\beta \in \R^n_{\ge 0}}} \min_{v \in \R^n} v_1(f_1 - g_1) - v_n(f_{n - 1} + g_n) + \sum_{i \in [n]} v_i(v_i - 2y_i) \\
&+ \sum_{i = 2}^{n - 1} v_i(f_i - f_{i - 1} - g_i) + \sum_{i \in [n - 1]}(\gam_i b_i - \lam_i a_i) - \sum_{i \in [n]} \beta_i \\
&= \argmax_{\substack{\lam, \gam \in \R^{n - 1}_{\ge 0} \\ \alpha,\beta \in \R^n_{\ge 0}}} -\Par{y_1 - \half(f_1 - g_1)}^2 +\Par{y_n + \half(f_{n - 1} + g_n)}^2 \\
&- \sum_{i = 2}^{n - 1}\Par{y_i - \half\Par{f_i - f_{i - 1} - g_i}}^2 - \sum_{i \in [n - 1]}(\gam_i b_i - \lam_i a_i) - \sum_{i \in [n]} \beta_i \\
&= \argmin_{\substack{\lam, \gam \in \R^{n - 1}_{\ge 0} \\ \alpha,\beta \in \R^n_{\ge 0}}} \Par{y_1 - \half(f_1 - g_1)}^2 - \Par{y_n + \half(f_{n - 1} + g_n)}^2 \\
&+ \sum_{i = 2}^{n - 1}\Par{y_i - \half\Par{f_i - f_{i - 1} - g_i}}^2 + \sum_{i \in [n - 1]}(\gam_i b_i - \lam_i a_i) + \sum_{i \in [n]} \beta_i \\
&= \argmin_{\substack{\lam, \gam \in \R^{n - 1}_{\ge 0} \\ \alpha,\beta \in \R^n_{\ge 0}}} \sum_{i \in [n - 1]} (y_{i + 1} - y_i - a_i) f_i + \sum_{i \in [n]} y_i g_i  \\
&+ \frac 1 4(f_1 - g_1)^2 + \frac 1 4(f_{n - 1} + g_n)^2 + \frac 1 4\sum_{i = 2}^{n - 1} (f_i - f_{i - 1} - g_i)^2 \\
&+ \sum_{i \in [n - 1]} \gam_i(b_i - a_i) + \sum_{i \in [n]} \beta_i.
\end{align*}
Observe that $f = \lam - \gam$ and $g = \alpha - \beta$ can be arbitrarily chosen in the above expression (i.e., have no sign constraints), except they must obey $\gam \ge \max(0, -f)$ and $\beta \ge \max(0, -g)$ coordinatewise. Moreover, the coefficients of all $\gam_i$, $\beta_i$ are nonnegative in the above display, so we should set 
\[\gamma_i = \max\Par{0, - f_i} \text{ for all } i \in [n - 1],\; \beta_i = \max\Par{0, -g_i} \text{ for all } i \in [n].\]
Therefore, we can rewrite our desired optimization problem as computing the optimal solution to:
\begin{gather*}
\min_{\substack{f \in \R^{n - 1} \\ g \in \R^n}} \sum_{i \in [n - 1]} \sum_{i \in [n - 1]}\Par{\Par{y_{i + 1} - y_i - \frac{b_i + a_i} 2} f_i + \frac{b_i - a_i} 2|f_i|}
+ \sum_{i \in [n]} \Par{\Par{y_i - \half} g_i + \half |g_i|}\\
+ \frac 1 4(f_1 - g_1)^2 + \frac 1 4(f_{n - 1} + g_n)^2 + \frac 1 4\sum_{i = 2}^{n - 1} (f_i - f_{i - 1} - g_i)^2.
\end{gather*}
Upon doubling coefficients, we have the claim by letting $c_i \defeq 2(y_{i + 1} - y_i) - (b_i + a_i)$, $d_i \defeq b_i - a_i$ for all $i \in [n - 1]$, and $e_i \defeq 2y_i - 1$ for all $i \in [n]$.
Finally, we note that given the optimizer to the stated problem, we can recover the optimizer of \eqref{eq:blir} in $O(n)$ time via \eqref{eq:v_opt}.
\end{proof}
\subsection{Modified segment tree}\label{ssec:segtree}

In this section, we prove Lemma~\ref{lem:SegmentTree}.

\restatesegtree*
\begin{proof}
A standard implementation of a segment tree providing the $\Access$ and $\Apply$ functionality is given in Lemma 8, \cite{HuJTY24}. The coordinates of $v$ are stored in the leaves of a complete binary search tree, such that each leaf and internal node stores a semigroup operation.

To modify this tree to allow for insertion and deletion, we use an AVL tree (a type of self-balancing binary search tree, with height $O(\log(n))$), to represent the array, where every node is augmented with its subtree size. This allows us to find the $j^{\text{th}}$ element for any index $j \in [s]$ in $O(\log(n))$ time, as well as the index of any element of interest by walking back up the tree. Deletion is straightforward, as removing a leaf node does not change any other node's group element. 

For insertion, we begin by placing an additional vertex in the $j^{\text{th}}$ index, where we use subtree sizes to find the correct location in $O(\log(n))$ time. We propagate all semigroup operations along the root-to-leaf path to leaf $j$ to their off-path children, as done in Algorithm 1, \cite{HuJTY24}, so that at the end of this step, the entire root-to-leaf path stores the identity at each node, and leaf $j$ contains the semigroup element $u$. The only thing left to is to handle self-balancing rotations.

We give an example of such a self-balancing rotation in Figure~\ref{fig:mesh1}; such rotations only occur $O(\log(n))$ times per insertion. Before the rotation, we propagate the semigroup elements in the rotated nodes $x$ and $y$ into their children, and set them to the identity element $e$. It is straightforward to verify that after rotation, all semigroup operations on leaves are preserved.

\begin{figure}[ht!]
    \centering
    \includegraphics[width=0.5\textwidth]{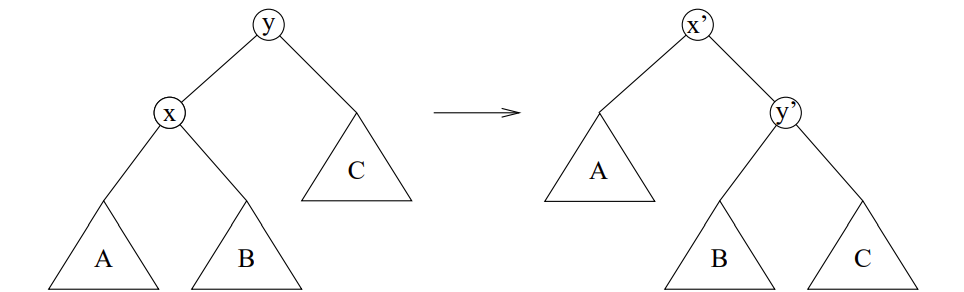}
    \caption{Self-balancing rotation in an AVL tree}
    \label{fig:mesh1}
\end{figure}

\end{proof}

\end{document}